\title{Your Policy Regularizer is Secretly an Adversary}
\author{\name Rob Brekelmans \email brekelma@usc.edu \\
      \addr University of Southern California \\
      Information Sciences Institute \\
      \AND
      \name Tim Genewein \email timgen@deepmind.com \\
      \name Jordi Grau-Moya \email  \\
      \name Grégoire Delétang \email  \\
\name Markus Kunesch \email  \\
\name Shane Legg \email \\
\name Pedro Ortega \email 
pedro.ortega@gmail.com \\
      \addr DeepMind}
\newcommand{\omitindices}{}
\newcommand{\revhl}[1]{\textcolor{Black}{#1}}
\newcommand{\header}{-.22cm}
\newcommand{\headerv}{\vspace*{-.175cm}}
\newcommand{\mudomain}{\mathcal{M}}
\newcommand{\taunotation}{\tau(\pi)}
\newcommand{\rprimedomain}{\mathbb{R}^{\mathcal{A} \times \mathcal{S} } }
\newcommand{\rprimedomainconj}{\mathbb{R}^{\mathcal{X}} }
\newcommand{\mudomainfn}{\mathbb{R}_+^{\mathcal{A} \times \mathcal{S}}}
\newcommand{\mudomainconj}{\mathbb{R}_+^{\mathcal{X}}}
\newcommand{\tick}{^{\prime}}
\newcommand{\obj}{\mathcal{RL}}
\newcommand{\primalvar}{\mu}
\newcommand{\dualvar}{\Delta r}
\newcommand{\pir}{\pi_{\pr}}
\newcommand{\mur}{\mu_{\pr}}
\newcommand{\modifyr}{r\tick}
\newcommand{\perturbr}{\Delta r}
\newcommand{\mr}{\modifyr}
\newcommand{\pr}{\perturbr}
\newcommand{\propt}{\prpi}
\newcommand{\mropt}{\mr_{\pi}}
\newcommand{\prnew}{\Delta \tilde{r}}
\newcommand{\lambdaplus}{\lambda(a,s)}
\newcommand{\lambdaopt}{\lambda_*(a,s)}
\newcommand{\mroptpi}{\mr_{\pi_*}}
\newcommand{\proptpi}{\pr_{\pi_*}}
\newcommand{\mrpiopt}{\mr_{\pi_*}}
\newcommand{\proptv}{\pr_{V_*}}
\newcommand{\prpi}{\pr_{\pi}}
\newcommand{\prpiopt}{\pr_{\pi_*}}
\newcommand{\prv}{\pr_{V}}
\newcommand{\prmu}{\pr_{\mu}}
\newcommand{\prrobust}{\pr} 
\newcommand{\prinit}{\pr} 
\newcommand{\prrob}{\prrobust} 
\newcommand{\mrrobust}{r_*\tick}
\newcommand{\piopt}{\pi_*}
\newcommand{\bullets}[1]{{#1}}
\newcommand{\highlight}{blue}
\newcommand{\simplex}{\Delta^{|\mathcal{A}|}}
\newcommand{\feasibleset}{\mathcal{R}_{\pi}}
\newcommand{\discussiondual}{Q}
\newcommand{\myconst}{\psi_{\pr}(s;\beta)}
\newcommand{\psipi}{\myconst}
\newcommand{\psipr}{\psi_{\pr}(s;\beta)}
\newcommand{\psipiopt}{\psi_{\pr_{\pi_*}}(s;\beta)}
\newcommand{\psipropt}{\psi_{\pr_{\pi_*}}(s;\beta)}
\newcommand{\psiq}{\psi_{Q}(s;\beta)}
\newcommand{\psipiq}{\psiq}
\newcommand{\psiqopt}{\psi_{{Q_*}}(s;\beta)}
\newcommand{\alphaconjn}{\frac{1}{\beta}\Omega^{*(\alpha)}_{\pi_0, \beta}}
\newcommand{\alphaconjmu}{\frac{1}{\beta}\Omega^{*(\alpha)}_{\pi_0, \beta}}
\newcommand{\alphaconj}{\frac{1}{\beta}\Omega^{*(\alpha)}_{\pi_0, \beta}}
\newcommand{\alphaconjm}{\frac{1}{\beta}\Omega^{*(\alpha)}_{\mu_0, \beta}}
\newcommand{\omegamu}{\frac{1}{\beta}\Omega^{(\alpha)}_{\pi_0}(\mu)}
\newcommand{\klconjpii}{\frac{1}{\beta}\Omega^{*}_{\pi_0, \beta}}
\newcommand{\alphanmu}{\omegamu}
\newcommand{\cmark}{\ding{51}}%
\newcommand{\xmark}{\ding{55}}%
\newcommand{\tcminus}[1]{\textcolor{FireBrick}{#1}}
\newcommand{\tcplus}[1]{\textcolor{DarkGreen}{#1}} 
\newcommand{\ptReg}{\dot{\Omega}}
\newcommand{\mysec}[1]{\hyperref[sec:#1]{Sec.~\ref*{sec:#1}}}
\newcommand{\myfig}[1]{\hyperref[fig:#1]{Fig.~\ref*{fig:#1}}}
\newcommand{\myapp}[1]{\hyperref[app:#1]{App.~\ref*{app:#1}}}
\newcommand{\myprop}[1]{\hyperref[prop:#1]{Prop.~\ref*{prop:#1}}}
\newcommand{\mythm}[1]{\hyperref[thm:#1]{Thm.~\ref*{thm:#1}}}
\newcommand{\mylem}[1]{\hyperref[lemma:#1]{Lemma~\ref*{lemma:#1}}}
\newcommand{\mylemma}[1]{\hyperref[lemma:#1]{Lemma~\ref*{lemma:#1}}}
\newcommand{\mycor}[1]{\hyperref[cor:#1]{Cor.~\ref*{cor:#1}}}
\newcommand{\mytab}[1]{\hyperref[table:#1]{Table~\ref*{table:#1}}}
\newcommand{\expof}[1]{ \exp \big\{ {#1} \big\} }
\newcommand{\brangle}{\big \rangle}
\newcommand{\blangle}{\big \langle}
\DeclareMathOperator*{\argmax}{arg\,max}
\newcommand{\transitionv}{{\mathbb{E}^{s\tick}_{a,s} \big[ V(s\tick) \big] }}
\newcommand{\transitionvstar}{{\mathbb{E}^{s\tick}_{a,s} \big[ V_*(s\tick) \big] }}
\newcommand{\transitionvinds}{{\mathbb{E}^{s\tick}_{a,s} \big[ V \big] }}
\newcommand{\transitionvstarinds}{{\mathbb{E}^{s\tick}_{a,s} \big[ V_* \big] }}
\newacronym{VI}{vi}{variational inference}
\newacronym{GVI}{gvi}{generalized variational inference}
\newacronym{MDP}{mdp}{Markov Decision Processes}
\newacronym{KL}{kl}{Kullback-Leibler}
\newcommand{\kl}{\textsc{kl} }
\newcommand{\KL}{D_{\textsc{KL}}}
\newacronym{LP}{lp}{linear programming}
\newacronym{RL}{rl}{reinforcement learning}
\newacronym{REPS}{reps}{Relative Entropy Policy Search}
\newcommand{\rulesep}{\unskip\ \vrule\ }
\newcommand{\nocontentsline}[3]{}
\newcommand{\tocless}[2]{\bgroup\let\addcontentsline=\nocontentsline#1{#2}\egroup}
\begin{document}
\maketitle
\doparttoc 
\faketableofcontents




\begin{abstract}

Policy regularization methods such as maximum entropy regularization are widely used in reinforcement learning to improve the robustness of a learned policy. In this paper, \revhl{we 
unify and extend recent work showing that this robustness arises from hedging against worst-case perturbations of the reward function}, which are chosen from a limited set by an implicit adversary.
Using convex duality, we characterize the robust set of adversarial reward perturbations under \textsc{kl}- and $\alpha$-divergence regularization, which includes Shannon and Tsallis entropy regularization as special cases. Importantly,
generalization guarantees can be given within this robust set.   We provide detailed discussion of the worst-case reward perturbations, and present intuitive empirical examples to illustrate this robustness and its relationship with generalization. Finally, we discuss how our analysis complements 
previous results on adversarial reward robustness and path consistency optimality conditions.
\end{abstract}

\headerv
\section{Introduction}\label{sec:intro}
\headerv

Regularization plays a crucial role in various settings across \gls{RL}, 
such as trust-region methods \citep{peters2010relative, schulman2015trust, schulman2017proximal, basserrano2021logistic}, offline learning \citep{levine2020offline, nachum2019dualdice, nachum2019algaedice, nachum2020reinforcement}, multi-task learning \citep{teh2017distral, igl2020multitask}, and soft 
$Q$-learning or actor-critic methods
\citep{fox2016taming, nachum2017bridging, haarnoja17a, haarnoja2018soft, grau2018soft}. 
Various justifications have been given for policy regularization, 
such as improved optimization \citep{ahmed2019understanding},  
connections with probabilistic inference \citep{levine2018reinforcement, 
kappen2012optimal, rawlik2013stochastic, wang2021variational}, 
 and robustness to perturbations in the environmental rewards or dynamics \citep{derman2021twice, eysenbach2021maximum, husain2021regularized}.



In this work, we use convex duality to 
analyze the reward robustness which naturally arises from policy regularization in \gls{RL}.
In particular, we interpret regularized reward maximization as a two-player game between the agent and an imagined adversary that modifies the reward function.  For a policy $\pi(a|s)$ regularized with a convex function $\Omega(\pi) = \mathbb{E}_{\pi}[\ptReg(\pi)]$ 
and regularization strength $1/\beta$,
we 
investigate statements of the form 
\begin{align}
\hspace*{-.3cm} \max \limits_{\pi(a|s)} \text{\small $(1-\gamma)$} 
\mathbb{E}_{\taunotation}
\left[ \sum \limits_{t=0}^\infty \gamma^t \bigg( r(a_t, s_t) -  {\frac{1}{\beta}} \ptReg \big(\pi(a_t | s_t)\big) \bigg) \right] = \max \limits_{\pi(a|s)} \min \limits_{r\tick(a,s) \in \feasibleset}  \text{\small $(1-\gamma)$} \mathbb{E}_{\taunotation} \left[ \sum \limits_{t=0}^\infty \gamma^t r\tick(a_t,s_t) \right],
\label{eq:reward_perturbation_stmt} 
\end{align}
\normalsize
where 
$r\tick(a,s)$ 
indicates a modified reward function chosen
from an appropriate robust set $\feasibleset$ (see \myfig{related}-\ref{fig:feasible_set_main}). 
\cref{eq:reward_perturbation_stmt} suggests that an agent may translate uncertainty in its estimate of the reward function into regularization of a learned policy, which is particularly relevant in applications such as inverse \gls{RL} \citep{ng2000algorithms, arora2021survey} or learning from human preferences \citep{christiano2017deep}.

\begin{figure}[t]
\centering
\vspace*{-1.1cm}
\includegraphics[width=.35\textwidth]{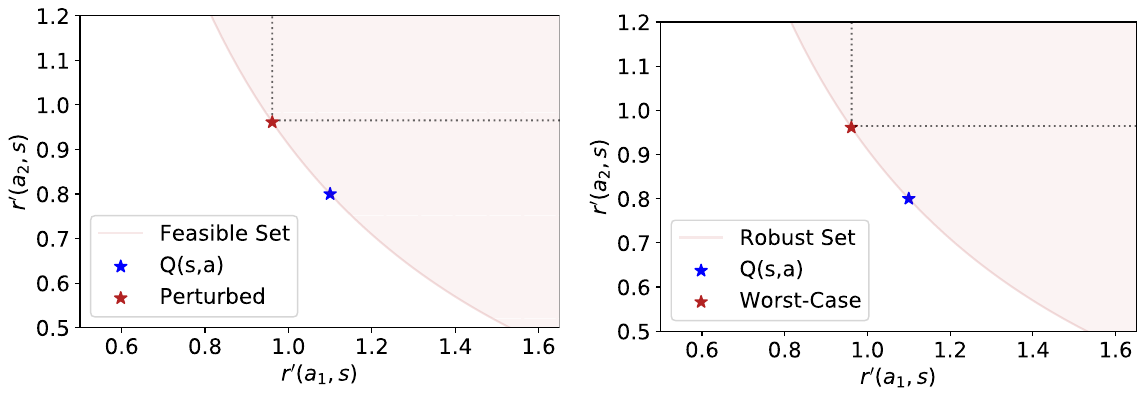}
\vspace*{-.05cm}
\caption{
\textbf{Robust set} $\feasibleset$ (red region) of perturbed reward functions to which a stochastic policy generalizes, in the sense of \cref{eq:generalization}.
Red star indicates the worst-case perturbed reward $\mroptpi = r - \proptpi$ (\myprop{optimal_perturbations}) chosen by the adversary.
The robust set also characterizes the set of reward perturbations $\pr(a,s)$ that are feasible for the adversary, which differs based on the choice of regularization function, regularization strength $\beta$, and reference distribution $\pi_0$ (see \mysec{visualizations_feasible} and \myfig{feasible_set_main}).  We show the robust set for the optimal single-step policy with value estimates $Q(a,s) = r(a,s)$ and \textsc{kl} divergence regularization to a uniform $\pi_0$, with $\beta = 1$.
Our robust set is larger and has a 
qualitatively 
different shape 
compared to the 
robust set of \citet{derman2021twice} (dotted lines, see \mysec{related}). 
}\label{fig:related}
\vspace*{-.3cm}
\end{figure}

This reward robustness further implies that
regularized policies achieve a form of `zero-shot' generalization to new environments where the reward is adversarially chosen.   
In particular, for any given $\pi(a|s)$ and a modified reward $\mr \in \feasibleset$ 
within the corresponding robust set,
we obtain the following performance guarantee 
\begin{equation}
 \hspace*{-.2cm} 
     \mathbb{E}
     _{\taunotation} 
     \left[ \sum \limits_{t=0}^\infty \gamma^t r\tick(a_t,s_t) \right] \geq 
     \mathbb{E}
    _{\taunotation} 
     \left[ \sum \limits_{t=0}^\infty \gamma^t \bigg( r(a_t, s_t) - {\frac{1}{\beta}} \ptReg 
     \big(\pi_t \big) \bigg) \right] .
    \label{eq:generalization}
\end{equation}
\normalsize
\cref{eq:generalization} states that the expected modified reward under $\pi(a|s)$, with $\mr \in \feasibleset$ as in \myfig{related}, will be greater than the value of the regularized objective with the original, unmodified reward.  
It is in this particular sense that we 
make claims about robustness and zero-shot generalization throughout the paper.   





Our analysis unifies 
recent work exploring similar interpretations \citep{ortega2014adversarial, husain2021regularized, eysenbach2021maximum, derman2021twice} 
as summarized in \mysec{discussion} and \mytab{novelty}.  
Our contributions include
\setlength\itemsep{0.025em}
\begin{itemize}
\item A thorough analysis of the robustness associated with \kl and $\alpha$-divergence policy regularization, which includes popular Shannon entropy regularization
as a special case.  Our derivations for the $\alpha$-divergence generalize the Tsallis entropy \gls{RL} framework of \citet{lee2019tsallis}.
\item We derive the worst-case reward perturbations $\prpi = r - \mr_{\pi}$ corresponding to any stochastic policy $\pi$ and a fixed regularization scheme (\myprop{optimal_perturbations}). 

\item For the optimal regularized policy in a given environment, we show that the corresponding worst-case reward perturbations 
match the advantage function for \textit{any} $\alpha$-divergence.   
We relate this finding to the path consistency optimality condition, which has been used to construct learning objectives in \citep{nachum2017bridging, chow2018path}, and a game-theoretic indifference condition, 
which occurs at a Nash equilibrium between the agent and adversary \citep{ortega2014adversarial}.   
\item  We visualize the set  $\feasibleset$ of adversarially perturbed rewards against which a regularized policy is robust in \myfig{related}-\ref{fig:feasible_set_main}, with details in \myprop{feasible}.
Our use of divergence instead of entropy regularization to analyze the robust set 
clarifies several unexpected conclusions from previous work.   In particular, 
similar plots in 
\citet{eysenbach2021maximum} 
suggest that
MaxEnt \textsc{rl} is not robust to the reward function of the training environment, and that increased regularization strength may hurt robustness.  Our analysis in \mysec{entropy_vs_div} and  \myapp{eysenbach} establishes the expected, \textcolor{black}{opposite} results.


\item We perform experiments for a sequential grid-world task in \mysec{visualizations} where, in contrast to previous work, we explicitly visualize the reward robustness and adversarial strategies resulting from our theory.
We use the path consistency or indifference conditions to certify optimality of the policy.
\end{itemize}

\begin{figure*}[t]
\vspace*{-1cm}
\centering
\resizebox{\textwidth}{!}{%
\begin{tabular}{lccccc}
    &  \text{ \citet{ortega2014adversarial}} & \text{\citet{eysenbach2021maximum}} &  \text{ \citet{husain2021regularized}}  & \citet{derman2021twice} & Ours \\ \toprule
    \bullets{Multi-Step Analysis} & \tcminus{\xmark} & \tcplus{\cmark} & \tcplus{\cmark} & \tcplus{\cmark} & \tcplus{\cmark} \\ 
    \bullets{Worst-Case $\pr(a,s)$} & \tcminus{policy form} & \tcminus{policy form} & \tcminus{value form} & \tcminus{policy} (via dual \textsc{lp} \cref{eq:dual_lp}) & \tcplus{policy $\&$ value forms}  \\ 
  \bullets{Robust Set} & \tcminus{\xmark} & 
    \tcplus{\cmark}  \tcminus{(see our \myapp{eysenbach})} & \tcminus{\xmark} & \tcplus{\cmark (flexible specification)} & \tcplus{\cmark}  \\ 
  \bullets{Divergence Used} & \tcminus{KL ($\alpha = 1$)} & \tcminus{\text{ Shannon entropy (\mysec{entropy_vs_div})}} & \tcplus{any convex $\Omega$} & \tcplus{derived from robust set}  & 
  \tcplus{any convex $\Omega$, $\alpha$-Div examples}  
  \\ 
    \bullets{$\mu(a,s)$ or $\pi(a|s)$ Reg.?} & 
     $\pi(a|s)$ & 
    $\pi(a|s)$ & \tcplus{Both}  & $\pi(a|s)$ & \tcplus{Both} \\
    \bullets{Indifference} & \tcplus{\cmark} & \tcminus{\xmark}  & \tcminus{\xmark}  & \tcminus{\xmark} & \tcplus{\cmark} \\ 
    \bullets{Path Consistency} & \tcminus{\xmark} & \tcminus{\xmark}  & \tcminus{\xmark}  &\tcminus{\xmark} & \tcplus{\cmark}   \\ 
    \bottomrule
\end{tabular}%
}
\vspace*{-.3cm}
\captionof{table}{Comparison to related work. }\label{table:novelty}
\headerv
\end{figure*}
\headerv
\headerv
\section{Preliminaries}\label{sec:prelim}
\headerv

In this section, we review
\gls{LP} formulations of 
discounted \gls{MDP} and extensions to convex policy regularization.

\newcommand{\myset}{X}
\paragraph{Notation} \revhl{For a finite set $\mathcal{\myset}$, let $\mathbb{R}^{\mathcal{\myset}}$ denote the space of real-valued functions over $\mathcal{\myset}$, with $\mathbb{R}_+^{\mathcal{\myset}}$ indicating restriction to non-negative functions.  We let $\Delta^{|\mathcal{\myset}|}$ denote the probability simplex with dimension equal to the cardinality of $\mathcal{\myset}$.
For $ \mu, q \in \mathbb{R}^{\mathcal{\myset}}$, 
$\langle \mu , q \rangle = \sum_{x \in \mathcal{\myset}} \mu(x) q(x)$ indicates the inner product in Euclidean space. }


\headerv
\subsection{Convex Conjugate Function}\label{sec:conj_intro}
\headerv
We begin by reviewing the convex conjugate function, also known as the Legendre-Fenchel transform, which will play a crucial role throughout our paper.
For a convex function $\Omega(\primalvar)$ which, in our context, has domain $\primalvar \in \mudomainconj$,
the conjugate function $\Omega^{*}$ is defined via the optimization
\begin{align}
    \Omega^{*}( \dualvar ) = \sup \limits_{\primalvar \in 
    \mudomainconj
    } \, \big \langle \primalvar, \dualvar \big \rangle - \Omega(\primalvar), \label{eq:conjstar}
\end{align} 
where $\dualvar \in \rprimedomainconj$.
 The conjugate operation is an involution for proper, lower semi-continuous, convex $\Omega$ \citep{boyd2004convex},
 so that $(\Omega^{*})^{*} = \Omega$ and $\Omega^{*}$ is also convex.  
 We can thus represent $\Omega(\primalvar)$ via a conjugate optimization
 \begin{align}
     \Omega(\primalvar) = \sup_{\dualvar \in 
     \rprimedomainconj}
     \big \langle \primalvar, \dualvar \big \rangle - \Omega^{*}(\dualvar). \label{eq:conjomega}
 \end{align}
  Differentiating with respect to the optimization variable in \cref{eq:conjstar} or (\ref{eq:conjomega}) suggests the optimality conditions 
  \begin{align}
     \primalvar_{\dualvar} = \nabla \Omega^{*}( \dualvar )   \quad \qquad \dualvar_{\primalvar} = \nabla \Omega(\primalvar) \, .     \label{eq:optimal_policy_as_grad}
 \end{align}
 Note that the above conditions also imply relationships of the form $\primalvar_{\dualvar} = (\nabla \Omega)^{-1}(\dualvar)$.   This dual correspondence between 
 values of $\primalvar$ and $\dualvar$ will form the basis of our adversarial interpretation in \mysec{adversarial_perturb}.
 

\headerv
\subsection{Divergence Functions}\label{sec:divergence_intro}
\headerv
We are interested in the conjugate duality associated with 
policy regularization, which is often expressed using a
statistical divergence $\Omega(\mu)$ over a joint density $\mu(a,s) = \mu(s) \pi(a|s)$ (see \mysec{mdp_background}).  
In particular, we consider
the family of $\alpha$-divergences \citep{amari2016information, cichocki2010families}, which includes both the forward and reverse \textsc{kl} divergences as special cases.    
In the following, we consider extended divergences that accept unnormalized density functions as input \citep{zhu1995information} 
so that we may
 analyze function space dualities and evaluate Lagrangian relaxations without
projection
 onto the probability simplex.





\paragraph{KL Divergence}
The `forward' \textsc{kl} divergence to a reference policy $\pi_0(a|s)$ is commonly used for policy regularization in \gls{RL}.  
Extending the input domain to unnormalized measures, we write the divergence as
\begin{align}
        &\Omega_{\pi_0}(\mu) = \mathbb{E}_{\mu(s)}\Big[D_{KL}[\pi:\pi_0]\Big] = \sum \limits_{s \in \mathcal{S}} \mu(s) \sum \limits_{a \in \mathcal{A}}  \left(\pi(a|s)  \log \frac{\pi(a|s)}{\pi_0(a|s)} -  
      \pi(a|s) 
        + 
      \pi_0(a|s)\right). \label{eq:kl_def} 
\end{align}
\normalsize
Using a uniform reference  $\pi_0(a|s) = 1$ $\forall 
\, (a,s)$, we recover the Shannon entropy up to an additive constant.

\paragraph{$\alpha$-Divergence} The $\alpha$-divergence $\mathbb{E}_{\mu(s)}\big[D_{\alpha}[\pi_0:\pi]\big]$ over possibly unnormalized measures
is defined as 
\begin{align}
\hspace*{-.1cm} 
\Omega^{(\alpha)}_{\pi_0}(\mu)& =
\frac{1}{\alpha(1-\alpha)} 
\sum \limits_{s \in \mathcal{S}}
\mu(s)
    \bigg( (1-\alpha) 
    \sum \limits_{a \in \mathcal{A}}
    \pi_0(a|s) 
    + \alpha 
    \sum \limits_{a \in \mathcal{A}}
    \pi(a|s) 
    - \sum \limits_{a \in \mathcal{A}} 
    \pi_0(a|s)^{1-\alpha} \pi(a|s)^{\alpha} \bigg)  
    \label{eq:alpha_def} 
\end{align}
\normalsize
Taking the limiting behavior, we recover the `forward' \textsc{kl} divergence $D_{KL}[\pi:\pi_0]$ as $\alpha \rightarrow 1$ or the `reverse' \textsc{kl} divergence $D_{KL}[\pi_0:\pi]$ as $\alpha \rightarrow 0$.

To provide intuition for 
the $\alpha$-divergence, we define the deformed $\alpha$-logarithm as in \citet{lee2019tsallis}, which matches Tsallis's $q$-logarithm \citep{tsallis2009introduction} for $\alpha = 2-q$.  Its inverse is the $\alpha$-exponential, with
\begin{align}
    \log_{\alpha}(u) &=  \frac{1}{\alpha-1} \left( u^{\alpha-1}-1 \right) \, , 
    \qquad \qquad 
    \exp_{\alpha}(u) = [1 + (\alpha-1) \, u ]_+^{\frac{1}{\alpha-1}} \, . \label{eq:qexp}
\end{align}
\textcolor{black}{where $[\cdot]_+ = \max(\cdot,0)$ ensures fractional powers can be taken and suggests that $\exp_{\alpha}(u)=0$ for $u \leq 1/(1-\alpha)$.}
Using the $\alpha$-logarithm, we can rewrite the $\alpha$-divergence 
similarly to
the \kl divergence in 
\cref{eq:kl_def}
\resizebox{\columnwidth}{!}{\parbox{\columnwidth}{
\begin{align}
  \Omega^{(\alpha)}_{\pi_0}(\mu) &= \frac{1}{\alpha} \sum \limits_{s \in \mathcal{S}} \mu(s) \bigg( \sum \limits_{a \in \mathcal{A}} \pi(a|s) \log_{\alpha}\frac{\pi(a|s)}{\pi_0(a|s)} - \pi(a|s) + \pi_0(a|s) \bigg) \nonumber.
\end{align}
}}
\normalsize
For a uniform reference $\pi_0$, the $\alpha$-divergence 
differs from the Tsallis entropy by only the $1/\alpha$ factor and an additive constant (see \myapp{tsallis}).

\begin{figure*}[t]
\vspace*{-1cm}
\resizebox{\textwidth}{!}{%
\centering
        \begin{tabular}{llll}
    \toprule
 \multicolumn{1}{c}{  Divergence } & 
  Conjugate & 
\multicolumn{1}{c}{Conjugate Expression} 
 & Optimizing Argument ($\pi_{\pr}$ or $\mu_{\pr}$)
 \\
        \midrule
    $\frac{1}{\beta} \KL[\pi:\pi_0]$ & $\frac{1}{\beta} \Omega^{*}_{\pi_0,\beta}(\pr)$ 
    & 
         $\frac{1}{\beta} \sum \limits_{a} 
         \pi_0(a|s) \expof{\beta \cdot \pr(a,s)} - 
         \frac{1}{\beta} $
         & $ \pi_0(a|s) \expof{\beta \cdot \pr(a,s)}$
         \\[1.5ex]
     $\frac{1}{\beta} \KL[\mu:\mu_0]$ & $\frac{1}{\beta} \Omega^{*}_{\mu_0,\beta}(\pr)$  
      &
     $\frac{1}{\beta} \sum \limits_{a,s}
         \mu_0(a,s) \expof{\beta \cdot \pr(a,s)} - 
        \frac{1}{\beta} $
          & $ \mu_0(a,s) \expof{\beta \cdot \pr(a,s)}$ 
         \\[1.5ex] \midrule
    $\frac{1}{\beta} D_{\alpha}[\pi_0:\pi]$ & $\frac{1}{\beta} \Omega^{*(\alpha)}_{\pi_0,\beta}(\pr)$  
     &
   {\footnotesize $\frac{1}{\beta} \frac{1}{\alpha}  \sum \limits_{a} \pi_0(a|s) \exp_{\alpha}\big\{ \beta \cdot \big( \pr(a,s) - \myconst \big) \big\}^{\alpha} - \frac{1}{\beta} \frac{1}{\alpha} +   \psi_{\pr}(s;\beta)  $}
    &
        $\pi_0(a|s) \exp_{\alpha} \big\{\beta \cdot \left(  \pr(a,s) - \psi_{\pr}(s;\beta)  \right) \big\}$
        \\[1.5ex]
      $\frac{1}{\beta} D_{\alpha}[\mu_0:\mu]$ & $\frac{1}{\beta} \Omega^{*(\alpha)}_{\mu_0,\beta}(\pr)
     $ & 
    $\frac{1}{\beta} \frac{1}{\alpha} \sum \limits_{a,s} \mu_0(a,s) \exp_{\alpha}\big\{ \beta \cdot \pr(a,s) \big\}^{\alpha} - \frac{1}{\beta} \frac{1}{\alpha}$ 
     & $\mu_0(a,s) \exp_{\alpha} \big\{ \beta \cdot \pr(a,s) \big\} $ 
        \\   \bottomrule
    \end{tabular}%
    }
     \captionof{table}{Conjugate Function expressions for \textsc{kl} and $\alpha$-divergence regularization of either the policy $\pi(a|s)$ or occupancy $\mu(a,s)$.  See \myapp{conj1}-\ref{app:conj4} for derivations.  The final column shows the optimizing argument in the definition of the conjugate function $\frac{1}{\beta}\Omega^{*}(\pr)$, for example 
     $\mu_{\pr} \coloneqq \argmax_{\mu} \langle \mu, \pr \rangle -\frac{1}{\beta}\Omega_{\mu_0}(\mu)$.
    Note that each conjugate expression for $\pi(a|s)$ regularization also contains an outer expectation over $\mu(s)$.}
    \label{table:conj_table}
\end{figure*}



\headerv
\subsection{Unregularized MDPs}\label{sec:mdp_background}
\headerv

A discounted \gls{MDP} is a tuple $\{ \mathcal{S}, \mathcal{A}, P, \nu_0, r, \gamma \}$ consisting of a state space $\mathcal{S}$, action space $\mathcal{A}$, transition dynamics $P(s\tick|s,a)$ for $s, s\tick \in \mathcal{S}$, $a \in \mathcal{A}$, initial state distribution $\nu_0(s) \in \Delta^{|\mathcal{S}|}$ in the probability simplex, and reward function $r(a,s):\mathcal{S} \times \mathcal{A} \mapsto \mathbb{R}$.   We also use a discount factor $\gamma\in (0,1)$ 
(\citet{puterman2014markov} Sec 6).

We consider an agent that seeks to maximize the expected discounted reward by acting according to a decision policy 
$\pi(a|s) \in \Delta^{|\mathcal{A}|}$  for each $ s \in \mathcal{S}$.  
The expected reward is calculated over trajectories ${\tau \sim \pi(\tau):= \nu_0(s_0) \prod \pi(a_t | s_t) P(s_{t+1} | s_t, a_t)}$, which begin from an initial $s_0 \sim \nu_0(s)$ and evolve according to the policy $\pi(a|s)$ and \gls{MDP} dynamics $P(s\tick|s,a)$  
\begin{align}
  \hspace*{-.2cm}
  \obj(r) := 
   \max \limits_{\pi(a|s)} \, (1-\gamma) \, \mathbb{E}_{\tau \sim \pi(\tau)} \bigg[ \sum \limits_{t=0}^{\infty} \gamma^t \, r(s_t, a_t)  \bigg] \, . \label{eq:policy_only} 
\end{align}
\normalsize
We assume that the policy is stationary and Markovian, and thus independent of both the timestep and trajectory history.

\paragraph{Linear Programming Formulation}
We will focus on 
a linear programming (\gls{LP}) form for the objective in \cref{eq:policy_only}, which is common in the literature on convex duality.
With optimization over the discounted 
state-action occupancy measure, 
${\mu(a,s) \coloneqq (1-\gamma)} \mathbb{E}_{\tau\sim \pi(\tau)} \left[ \sum_{t=0}^\infty \gamma^t \, \mathbb{I}(a_t = a, s_t = s)\right]$, 
we rewrite 
the objective as
\ifx\omitindices\undefined
\small
\begin{align}
   \hspace*{-.23cm}  
   \obj(r) \coloneqq
   \max \limits_{\mu(a,s)} \big \langle {\mu(a,s)}, {r(a,s)} \big \rangle \qquad \text{subject to } \quad \mu(a,s) &\geq 0 \qquad \forall (a,s) \in \mathcal{A} \times \mathcal{S},   \quad  \label{eq:primal_lp} \\
    \sum_{a} \mu(a,s)& = (1-\gamma) \nu_0(s) + \gamma \sum_{a\tick, s\tick} P(s|a\tick, s\tick) \mu(a\tick,s\tick) \qquad \forall s \in \mathcal{S}, \nonumber 
\end{align}
\else
\begin{align}
   \hspace*{-.23cm}  
   \obj(r) \coloneqq
   \max \limits_{\mu} \big \langle {\mu, r} \big \rangle \qquad \text{subject to } \quad \mu(a,s) &\geq 0 \qquad \forall (a,s) \in \mathcal{A} \times \mathcal{S},   \quad  \label{eq:primal_lp} \\
    \sum_{a} \mu(a,s)& = (1-\gamma) \nu_0(s) + \gamma \sum_{a\tick, s\tick} P(s|a\tick, s\tick) \mu(a\tick,s\tick) \qquad \forall s \in \mathcal{S} \nonumber.
\end{align}
\fi
\normalsize
We refer to the constraints in the second line of \cref{eq:primal_lp} as the \textit{Bellman flow constraints}, which force $\mu(a,s)$ to respect the \gls{MDP} dynamics.
We denote the set of feasible $\mu$ as $\mathcal{M} \subset \mudomainfn$.    For normalized $\nu_0(s)$ and $P(s|a\tick, s\tick)$,   we show in \myapp{optimal_policy} that $\mu(a,s) \in \mathcal{M}$ implies $\mu(a,s)$
is normalized. 


It can be shown that any feasible $\mu(a,s) \in \mathcal{M}$  induces a stationary $\pi(a|s) = \mu(a,s) / 
\mu(s)$, 
where  $\mu(s) \coloneqq \sum_{a\tick} \mu(a\tick,s)$ and $\pi(a|s) \in \Delta^{|\mathcal{A}|}$ is normalized by definition.
Conversely, any stationary policy $\pi(a|s)$ induces a unique state-action visitation distribution $\mu(a,s)$ (\citet{syed2008apprenticeship},
\citet{feinberg2012handbook} Sec. 6.3).
 Along with the definition of $\mu(a,s)$ above, 
this result demonstrates
 the equivalence of the optimizations in \cref{eq:policy_only} and \cref{eq:primal_lp}.  We will proceed with the \gls{LP} notation from \cref{eq:primal_lp} and assume $\mu(s)$ is induced by
$\pi(a|s)$ whenever the two appear together in an expression.

Importantly, the flow constraints in \cref{eq:primal_lp} lead to a dual optimization which reflects the familiar Bellman equations \citep{bellman1957markovian}. 
To see this, we introduce Lagrange multipliers 
\ifx\omitindices\undefined
$V(s)$ for each flow constraint and $\lambdaplus \geq 0$ for the nonnegativity constraints.
\else
$V \in \mathbb{R}^{\mathcal{S}}$ for each flow constraint and $\lambdaplus \in \mathbb{R}_+^{\mathcal{A} \times \mathcal{S}}$ for the nonnegativity constraints.
\fi
  Summing over $s \in \mathcal{S}$, and eliminating $\mu(a,s)$ by setting $d/d\mu(a,s) = 0$ yields the \textit{dual} \gls{LP}
\ifx\omitindices\undefined
\begin{align}
 \hspace*{-.2cm} \obj^{*}(r)&\coloneqq
  \min \limits_{V(s), \lambdaplus} \,
  (1-\gamma) \big \langle \nu_0(s), V(s) \big \rangle  
 \quad \text{  subject to } \,\,\, V(s) = r(a,s) + \gamma \transitionv + \lambdaplus 
 \label{eq:dual_lp} , 
\end{align}
\else
\begin{align}
 \hspace*{-.2cm} \obj^{*}(r)&\coloneqq
  \min \limits_{V, \lambda} \,
  (1-\gamma) \big \langle \nu_0, V \big \rangle  
 \quad \text{  subject to } \,\,\, V(s) = r(a,s) + \gamma \transitionv + \lambdaplus 
\quad \forall (a,s) \in \mathcal{A} \times \mathcal{S}  
 \label{eq:dual_lp} , 
\end{align}
\fi
\normalsize
where we have used $\transitionv$ as shorthand for $\mathbb{E}_{P(s\tick|a,s)}[ V(s\tick)]$ and reindexed the transition tuple from $(s\tick, a\tick, s)$ to $(s,a, s\tick)$ compared to \cref{eq:primal_lp}.   Note that the constraint applies for all $(a,s) \in \mathcal{A} \times \mathcal{S}$ and that $\lambdaplus \geq 0$.  By complementary slackness, 
we know that $\lambdaplus =0$ for $(a,s)$ such that $\mu(a,s) > 0$.




\headerv
\subsection{Regularized MDPs}\label{sec:mdp_regularized}
\headerv
We now consider regularizing the objective in \cref{eq:primal_lp} using a convex penalty function $\Omega(\mu)$ with coefficient $1/\beta$.   
We primarily focus on regularization using a conditional divergence $\Omega_{\pi_0}(\mu) \coloneqq \mathbb{E}_{\mu(s)\pi(a|s)}[\ptReg(\pi)]$ between the policy and a normalized reference distribution $\pi_0(a|s)$, as in \mysec{divergence_intro} and \citep{ortega2013thermodynamics,fox2016taming,haarnoja17a,haarnoja2018soft}.   
We also use the notation $\Omega_{\mu_0}(\mu) = \mathbb{E}_{\mu(a,s)}[\ptReg(\mu)]$ to indicate regularization of the full state-action occupancy measure to a normalized reference $\mu_0(a,s)$,  which appears, for example, in \gls{REPS} \citep{peters2010relative, belousov2019entropic}.   
The regularized objective $\obj_{\Omega, \beta}(r)$ is then defined as
\ifx\omitindices\undefined
\begin{equation}
\obj_{\Omega, \beta}(r) :=  \max \limits_{\mu(a,s) \in \mathcal{M} } 
   \blangle \mu(a,s), r(a,s) \brangle - \dfrac{1}{\beta}   \Omega_{\pi_0}(\mu) 
  \label{eq:primal_reg}
\end{equation}
\else
\begin{equation}
\obj_{\Omega, \beta}(r) :=  \max \limits_{\mu \in \mathcal{M} } 
   \blangle \mu, r \brangle - \dfrac{1}{\beta}   \Omega_{\pi_0}(\mu) 
  \label{eq:primal_reg}
\end{equation}
\fi
where $\Omega_{\pi_0}(\mu)$ contains an expectation under $\mu(a,s)$ as in \cref{eq:kl_def}-(\ref{eq:alpha_def}).
We can also derive a dual version of the regularized \gls{LP}, 
by first writing the Lagrangian relaxation of \cref{eq:primal_reg}
\ifx\omitindices\undefined
\begin{align}
  \max \limits_{\mu(a,s)} \min \limits_{V(s), \lambdaplus}  \, 
     (1-\gamma) \big\langle \nu_0(s), V(s) \big  \rangle 
     + &\langle \mu(a,s), r(a,s) + \gamma \transitionv -V(s) + \lambdaplus \rangle - \frac{1}{\beta} \Omega_{\pi_0}(\mu)  \label{eq:dual_lagrangian} 
\end{align}
\else
\begin{align}
  \max \limits_{\mu} \min \limits_{V, \lambda}  \, 
     (1-\gamma) \big\langle \nu_0, V \big  \rangle 
     + &\, \langle \mu, r + \gamma \transitionvinds -V + \lambda \rangle - \frac{1}{\beta} \Omega_{\pi_0}(\mu) . \label{eq:dual_lagrangian} 
\end{align}
\fi
\normalsize
Swapping the order of optimization 
under 
strong duality, we can recognize the 
maximization over ${\mu(a,s)}$ as a conjugate function $\frac{1}{\beta}\Omega^{*}_{\pi_0, \beta}$, as in \cref{eq:conjstar}, leading to a regularized dual optimization
\ifx\omitindices\undefined
 \begin{align}
 \hspace*{-.2cm} &  \obj_{\Omega, \beta}^{*}(r) = 
 \min \limits_{V(s), \lambdaplus}
 \,  (1-\gamma) \big\langle \nu_0(s), V(s) \big  \rangle  + \frac{1}{\beta}\Omega^{*}_{\pi_0, \beta} \bigg(r(a,s) +  \gamma \transitionv -V(s) + \lambdaplus \bigg)\label{eq:dual_reg} 
\end{align}
\else
 \begin{align}
 \hspace*{-.2cm} &  \obj_{\Omega, \beta}^{*}(r) = 
 \min \limits_{V, \lambda}
 \,  (1-\gamma) \big\langle \nu_0, V \big  \rangle  + \frac{1}{\beta}\Omega^{*}_{\pi_0, \beta} \Big(r  +  \gamma \transitionvinds -V + \lambda \Big)\label{eq:dual_reg} 
\end{align}
\fi
which involves optimization over dual variables $V(s)$ only and is unconstrained, in contrast to \cref{eq:dual_lp}. 
Dual objectives of this form 
appear 
in \citep{nachum2020reinforcement, belousov2019entropic, basserrano2021logistic, neu2017unified}.
\textcolor{black}{We emphasize the need to include the Lagrange multiplier $\lambdaplus$, with $\lambdaplus > 0$ when the optimal policy has $\piopt(a|s) = 0$, since an important motivation for $\alpha$-divergence regularization is to encourage sparsity in the policy (see \cref{eq:qexp}, \citet{lee2018sparse, lee2019tsallis, chow2018path}).}

\paragraph{Soft Value Aggregation}
In iterative algorithms such as (regularized) modified policy iteration \citep{puterman1978modified, scherrer15a}, it is useful to consider the \textit{regularized Bellman optimality operator} \citep{geist2019theory}.   For given estimates of the state-action value $Q(a,s) \coloneqq r(a,s) + \gamma \transitionv$, the operator $\mathcal{T}^{*}_{\Omega_{\pi_0, \beta}}$ updates $V(s)$ as
\ifx\omitindices\undefined
\begin{align} 
\hspace*{-.2cm} V(s)  \leftarrow \frac{1}{\beta} \Omega^{*}_{\pi_0,\beta}(Q) = \max \limits_{\pi \in \Delta^{|\mathcal{A}|}} \blangle \pi(a|s), Q(a,s) \brangle - \frac{1}{\beta} \Omega_{\pi_0}(\pi) \label{eq:soft_value}
\end{align}
\else
\begin{align} 
\hspace*{-.2cm} V(s)  \leftarrow \frac{1}{\beta} \Omega^{*}_{\pi_0,\beta}(Q) = \max \limits_{\pi \in \Delta^{|\mathcal{A}|}} \blangle \pi, Q \brangle - \frac{1}{\beta} \Omega_{\pi_0}(\pi) \label{eq:soft_value} .
\end{align}
\fi
\normalsize
Note that this conjugate optimization is performed in each state $s \in \mathcal{S}$ and explicitly constrains each $\pi(a|s)$ to be normalized.
Although we proceed with the notation of \cref{eq:primal_reg} and \cref{eq:dual_reg}, our later developments 
are compatible with the
`soft-value aggregation' perspective above.  See \myapp{soft_value} for detailed discussion.
\headerv
\section{Adversarial Interpretation}\label{sec:adversarial_perturb}
\headerv
\normalsize
In this section, we interpret regularization as implicitly providing robustness to adversarial perturbations of the reward function.  To derive our adversarial interpretation, recall from \cref{eq:conjomega} that conjugate duality yields an alternative representation of the regularizer 
\ifx\omitindices\undefined
\begin{equation}
\hspace*{-.2cm}
     \omegamu = \max \limits_{\prinit(a,s)} \langle \mu(a,s), \prinit(a,s) \rangle - \alphaconj(\prinit) . \label{eq:conjomega2} 
 \end{equation}
 \else
 \begin{equation}
\hspace*{-.2cm}
     \omegamu = \max \limits_{\prinit \in \rprimedomain} \, \langle \mu, \prinit \rangle - \alphaconj(\prinit) . \label{eq:conjomega2} 
 \end{equation}
 \fi
 \normalsize
 
Using this conjugate optimization to expand the regularization term in the primal objective of \cref{eq:primal_reg}, 
\ifx\omitindices\undefined
\begin{equation}
\hspace*{-.2cm} 
\obj_{\Omega, \beta}(r) = \max \limits_{\mu \in \mathcal{M}}  \min \limits_{\textcolor{\highlight}{\prinit(a,s)}} 
  \langle {\mu(a,s)}, {r(a,s) - \textcolor{\highlight}{\prinit(a,s)}} \rangle  + \alphaconj \big( \textcolor{\highlight}{\prinit} \big) . 
\label{eq:pedro_form} 
\end{equation}
\else
\begin{equation}
\hspace*{-.2cm} 
\obj_{\Omega, \beta}(r) = \max  \limits_{\mu \in \mathcal{M}}  \min \limits_{\textcolor{\highlight}{\prinit \in \rprimedomain}} 
  \, \langle {\mu}, {r - \textcolor{\highlight}{\prinit}} \rangle  + \alphaconj \big( \textcolor{\highlight}{\prinit} \big) . 
\label{eq:pedro_form} 
\end{equation}
\fi
\normalsize
We interpret 
\cref{eq:pedro_form}
as a two-player minimax game between an agent and an implicit adversary, where the agent chooses an occupancy measure $\mu(a,s) \in \mathcal{M}$ or its corresponding policy $\pi(a|s)$,
and 
the adversary chooses reward perturbations $\prinit(a,s)$ 
subject to the convex
conjugate $\alphaconj(\prinit)$ as a penalty function \citep{ortega2014adversarial}.

To understand the limitations this penalty imposes on the adversary, we transform the optimization over $\pr$ 
in \cref{eq:pedro_form} 
to a constrained optimization in \mysec{generalization}.  This allows us to characterize the feasible set of
reward perturbations
available to the adversary or, equivalently, the set of modified rewards $\mr(a,s) \in \feasibleset$ to which a particular stochastic policy is robust.
In \mysec{any_policy} and \ref{sec:optimal_policy}, we 
interpret the worst-case adversarial perturbations corresponding to an arbitrary stochastic policy and the optimal policy, respectively.



\headerv
\subsection{Robust Set of Modified Rewards}\label{sec:feasible}\label{sec:generalization}
\headerv

In order to link our adversarial interpretation to robustness and zero-shot generalization as in 
\cref{eq:reward_perturbation_stmt}-(\ref{eq:generalization}),
we characterize the feasible set of reward perturbations in the following proposition.
We state our proposition for policy regularization, and discuss differences for $\mu(a,s)$ regularization in \myapp{mu_feasible}.
\begin{restatable}[]{proposition}{feasible}\label{prop:feasible} 
Assume a normalized policy $\pi(a|s)$ for the agent is given, with $\sum_a \pi(a|s) = 1 \, \forall s \in \mathcal{S}$.   Under $\alpha$-divergence policy regularization to a normalized reference $\pi_0(a|s)$, 
\revhl{the optimization over $\pr(a,s)$ in \cref{eq:pedro_form} can be written in the following constrained form}
\ifx\omitindices\undefined
\begin{align}
\min \limits_{\pr(a,s) \in \mathcal{R}^{\Delta}_{\pi}} \blangle \mu(a,s), r(a,s) - \pr(a,s)\brangle \qquad \text{where} \quad
\mathcal{R}^{\Delta}_{\pi} \coloneqq \left\{ \prrob(a,s) \in  \textcolor{red}{\mathcal{F}} \, \bigg| \, 
\Omega^{*(\alpha)}_{\pi_0,\beta}(\prrob) \leq 0 \right\} ,
\label{eq:alpha_feasible} 
\end{align}
\else
\begin{align}
\min \limits_{\pr \in \mathcal{R}^{\Delta}_{\pi}} \blangle \mu, r - \pr \brangle \qquad \text{where} \quad
\mathcal{R}^{\Delta}_{\pi} \coloneqq \left\{ \prrob \in \rprimedomain  \, \bigg| \, 
\Omega^{*(\alpha)}_{\pi_0,\beta}(\prrob) \leq 0 \right\} ,
\label{eq:alpha_feasible} 
\end{align}
\fi
We refer to $\mathcal{R}^{\Delta}_{\pi} \subset \rprimedomain$ as the feasible set of reward perturbations available to the adversary.   This translates to a robust set $\feasibleset$ of modified rewards $\mr(a,s) = r(a,s) - \prrob(a,s)$ 
for the given policy.
These sets depend on the $\alpha$-divergence and regularization strength $\beta$ via the conjugate function.

For \textsc{kl} divergence regularization, the constraint is
\begin{align}
   \sum\limits_{a \in \mathcal{A}} \pi_0(a|s) \expof{\beta \cdot \prrob(a,s)} \leq 1 \, . \label{eq:kl_feasible}
\end{align}
\end{restatable}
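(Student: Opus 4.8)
The plan is to fix the given normalized policy $\pi$ together with its induced occupancy $\mu(a,s)=\mu(s)\pi(a|s)$, and to show that the \emph{unconstrained} inner minimization over $\Delta r$ in \cref{eq:pedro_form}, which carries the conjugate $\frac{1}{\beta}\Omega^{*(\alpha)}_{\pi_0,\beta}(\Delta r)$ as an additive penalty, attains the same value as the linearly-\emph{constrained} minimization in \cref{eq:alpha_feasible}. Because the policy conjugate contains an outer expectation over $\mu(s)$ (see the caption of \cref{table:conj_table}), both the objective $\langle\mu,r-\Delta r\rangle$ and the penalty decouple across states, so I would argue state-by-state for an arbitrary $s$ with $\mu(s)>0$ (states with $\mu(s)=0$ affect neither side). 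After factoring out $\mu(s)$, the per-state penalty problem is $\max_{\Delta r(\cdot,s)}\sum_a \pi(a|s)\,\Delta r(a,s)-\frac{1}{\beta}\Omega^{*(\alpha)}_{\pi_0,\beta}(\Delta r)$.

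First I would identify its minimizer. Setting the gradient in $\Delta r(a,s)$ to zero recovers exactly the dual correspondence of \cref{eq:optimal_policy_as_grad}, namely $\mu(a,s)=\frac{\mu(s)}{\beta}\nabla_{\Delta r(a,s)}\Omega^{*(\alpha)}_{\pi_0,\beta}$, whose solution is the optimizing-argument entry in the last column of \cref{table:conj_table}. For the \textsc{kl} case this reads $\pi(a|s)=\pi_0(a|s)\exp\{\beta\,\Delta r(a,s)\}$, i.e. $\Delta r^{*}(a,s)=\frac{1}{\beta}\log\frac{\pi(a|s)}{\pi_0(a|s)}$.

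The crux, and the step I expect to be the main obstacle, is to show that at this minimizer the conjugate vanishes, $\Omega^{*(\alpha)}_{\pi_0,\beta}(\Delta r^{*})=0$, so the penalty term drops out and the penalty value collapses to the purely linear $\langle\mu,r-\Delta r^{*}\rangle$. This is precisely where the normalization $\sum_a\pi(a|s)=1$ is essential: substituting the optimizer back into the conjugate expression and collapsing the action sum with $\sum_a\pi(a|s)=1$ forces the conjugate to zero (for \textsc{kl}, $\sum_a\pi_0(a|s)e^{\beta\Delta r^{*}}=\sum_a\pi(a|s)=1$, hence $\frac{1}{\beta}\Omega^{*}=\frac{1}{\beta}(1-1)=0$). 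For general $\alpha$ the same cancellation should hold, but one must additionally track the per-state normalizer $\psi_{\Delta r}(s;\beta)$ appearing in the $\alpha$-conjugate; I would invoke the conjugate derivations referenced in \cref{table:conj_table} to confirm that $\psi_{\Delta r}(s;\beta)$ is exactly the quantity rendering the optimizing argument normalized, which in turn makes the constraint tight.

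Finally I would close the equivalence from the constrained side by Lagrangian duality. Dualizing $\Omega^{*(\alpha)}_{\pi_0,\beta}(\Delta r)\le 0$ with a multiplier $\eta(s)\ge 0$ yields the stationarity condition $\mu(a,s)=\eta(s)\,\nabla_{\Delta r(a,s)}\Omega^{*(\alpha)}_{\pi_0,\beta}$, whereas optimality of the penalty objective gave $\mu(a,s)=\frac{\mu(s)}{\beta}\,\nabla_{\Delta r(a,s)}\Omega^{*(\alpha)}_{\pi_0,\beta}$; these coincide exactly when $\eta(s)=\mu(s)/\beta$, and imposing the now-active constraint together with $\sum_a\pi(a|s)=1$ confirms this is the positive, hence admissible, optimal multiplier. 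The two problems therefore share the minimizer $\Delta r^{*}$ and, since the penalty vanishes there, the same value; as the linear objective is maximized on the boundary $\Omega^{*}=0$ of the convex sublevel set $\{\Omega^{*}\le 0\}$, the constrained minimum in \cref{eq:alpha_feasible} is attained. Specializing to the \textsc{kl} row of \cref{table:conj_table}, where $\frac{1}{\beta}\Omega^{*}_{\pi_0,\beta}(\Delta r)=\frac{1}{\beta}\sum_a\pi_0(a|s)e^{\beta\Delta r(a,s)}-\frac{1}{\beta}$, the constraint $\Omega^{*}_{\pi_0,\beta}(\Delta r)\le 0$ rearranges directly to \cref{eq:kl_feasible}.
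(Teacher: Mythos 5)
Your proposal is correct, and it rests on the same keystone as the paper's proof: showing that the conjugate vanishes at the worst-case perturbations, $\alphaconj(\prpi)=0$, for normalized $\pi$ and $\pi_0$ (\mylemma{conjugatezero} in the appendix), with the per-state normalizer $\psi_{\pr}(s;\beta)$ doing exactly the work you anticipate in the $\alpha$-divergence case --- the paper carries out that substitution explicitly, whereas you leave it at the level of ``the same cancellation should hold,'' so that computation is the one piece you would still need to write out. Where you genuinely diverge is in how the penalized and constrained forms are glued together. The paper argues two-sidedly: it uses monotonicity of the conjugate in $\pr$ (\mylemma{increasing}, borrowed from \citet{husain2021regularized}) to show a $\geq 0$ constraint would strictly change the value, and then a first-order convexity inequality $\alphaconj(\prnew) \geq \alphaconj(\prpi) + \langle \prnew - \prpi, \nabla \alphaconj(\prpi)\rangle$ together with $\mu_{\pi} = \nabla\alphaconj(\prpi)$ to show the $\leq 0$ constraint introduces no new minima. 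You instead run a KKT argument: dualize the constraint with $\eta(s)\geq 0$, match stationarity of the constrained problem against stationarity of the penalized one to identify $\eta(s)=\mu(s)/\beta$, and invoke complementary slackness with the active constraint. These coincide at their core (the paper's $\leq$ direction is essentially the convexity inequality underlying KKT sufficiency), but your route buys a cleaner unification --- the positivity of the identified multiplier simultaneously explains why the inequality points in the $\leq$ direction, making the monotonicity lemma unnecessary --- at the modest cost of needing a constraint qualification (Slater holds trivially here, e.g.\ take $\pr$ uniformly very negative, but you should say so), and your state-by-state decomposition with $\mu(s)$ factored out is a sound restructuring of the paper's joint-$\mu(a,s)$ treatment provided you note, as you do, that states with $\mu(s)=0$ carry no weight on either side.
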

\vspace*{-.2cm}
See \myapp{pf_feasible} for proof, and \mytab{conj_table} for the convex conjugate function $\alphaconj(\prrob)$ associated with various regularization schemes.
\revhl{The proof proceeds by evaluating the conjugate function at the minimizing argument $\prpi$ in \cref{eq:pedro_form} (see \mysec{any_policy}), with $\Omega^{*(\alpha)}_{\pi_0,\beta}(\propt)=0 \, \forall \alpha$ for normalized $\pi(a|s)$ and $\pi_0(a|s)$.  
The constraint 
then follows from the fact that $\Omega^{*(\alpha)}_{\pi_0,\beta}(\propt)$ is convex and increasing in $\pr$ \citep{husain2021regularized}.}
We visualize the robust set for a two-dimensional action space in \myfig{feasible_set_main}, with additional discussion in \mysec{visualizations_feasible}. 
As in \cref{eq:generalization},
we can provide `zero-shot' performance guarantees using this set of modified rewards.
\ifx\omitindices\undefined
For any perturbed reward in the robust set $\mr(a,s) \in \feasibleset$, 
we have $\langle \mu(a,s), \mr(a,s) \rangle \geq \langle \mu(a,s), r(a,s) \rangle - \omegamu$, 
\else
For any perturbed reward in the robust set $\mr \in \feasibleset$, 
we have $\langle \mu, \mr \rangle \geq \langle \mu, r \rangle - \omegamu$,
\fi
so that the policy achieves an expected modified reward which is at least as large as the regularized objective. 
However, notice that this 
form of robustness is sensitive to the exact value of the 
regularized objective function.
Although entropy regularization and divergence regularization with a uniform reference
induce the same optimal $\mu(a,s)$,
we highlight crucial differences in their reward robustness interpretations in \mysec{entropy_vs_div}.

\headerv
\subsection{Worst-Case Perturbations: Policy Form}
\label{sec:any_policy}
\headerv
From the feasible set in \myprop{feasible}, how should the adversary select its reward perturbations?
In the following proposition,
we use the optimality conditions in \cref{eq:optimal_policy_as_grad}
to solve for the \textit{worst-case} reward perturbations $\prpi(a,s)$ which minimize \cref{eq:pedro_form} for an fixed but arbitrary stochastic policy $\pi(a|s)$.   


\begin{restatable}[]{proposition}{optimalperturbations}
\label{prop:optimal_perturbations} 
For a given policy $\pi(a|s)$ or state-action occupancy $\mu(a,s)$, 
the worst-case adversarial reward perturbations $\prpi$ or $\prmu$ associated with a 
convex function $\Omega(\mu)$ and regularization strength $1/\beta$ 
are
\ifx\omitindices\undefined
\begin{align}
    \hspace*{-.23cm} \prpi(a,s) = \nabla_{\mu} \frac{1}{\beta} \Omega(\mu) \, .
    \label{eq:prop1}
\end{align}
\else
\begin{align}
    \hspace*{-.23cm} \prpi = \nabla_{\mu} \frac{1}{\beta} \Omega(\mu) \, .
    \label{eq:prop1}
\end{align}
\fi
\end{restatable}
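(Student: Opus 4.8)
The plan is to obtain \cref{eq:prop1} by reading off the minimizer of the inner optimization in \cref{eq:pedro_form} directly from the conjugate duality established in \mysec{conj_intro}. First I would isolate the inner minimization over $\pr$: since the term $\langle \mu, r \rangle$ is independent of $\pr$, minimizing the objective in \cref{eq:pedro_form} over $\pr$ is equivalent to solving $-\max_{\pr} \big( \langle \mu, \pr \rangle - \alphaconj(\pr) \big)$, and the bracketed maximization is precisely the conjugate representation of the regularizer stated in \cref{eq:conjomega2}, whose optimal value is $\omegamu$. Hence the worst-case perturbation $\prpi$ (or $\prmu$) that the adversary selects coincides with the maximizing argument of \cref{eq:conjomega2}, i.e.\ the value of $\pr$ attaining $\Omega(\mu) = \max_{\pr} \langle \mu, \pr\rangle - \Omega^*(\pr)$.

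Next I would invoke the optimality conditions \cref{eq:optimal_policy_as_grad}. The maximizing argument of the conjugate representation \cref{eq:conjomega} satisfies $\dualvar_{\primalvar} = \nabla \Omega(\primalvar)$; applied with $\primalvar = \mu$, $\dualvar = \pr$, and the scaling $\tfrac{1}{\beta}$, this reads $\prpi = \nabla_{\mu} \tfrac{1}{\beta}\Omega(\mu)$, which is exactly \cref{eq:prop1}. Because $\Omega$ is proper, lower semi-continuous, and convex, $\Omega^*$ is convex and the inner problem is the minimization of a convex function, so this stationary point is the unique global minimizer; this both establishes the claim and justifies calling $\prpi$ the worst-case perturbation.

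The step I expect to be the main obstacle is making the gradient $\nabla_{\mu}$ precise in the two regularization regimes. For occupancy regularization $\Omega_{\mu_0}(\mu)$ the gradient is taken directly in $\mu(a,s)$ and reproduces the optimizing arguments listed in \mytab{conj_table}. For conditional (policy) regularization $\Omega_{\pi_0}(\mu) = \mathbb{E}_{\mu(s)}[\ptReg(\pi)]$, however, the regularizer depends on $\mu$ both through the marginal $\mu(s) = \sum_{a} \mu(a,s)$ and through $\pi(a|s) = \mu(a,s)/\mu(s)$, so $\nabla_{\mu}$ requires the chain rule. I would verify that the contributions from the $\mu(s)$-dependence cancel under the normalization $\sum_a \pi(a|s) = 1$ and $\sum_a \pi_0(a|s) = 1$, leaving a clean per-state expression that matches the explicit policy-form perturbations derived in \mysec{any_policy}. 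As a consistency check, I would confirm the resulting gradients against the \textsc{kl}- and $\alpha$-divergence rows of \mytab{conj_table}.
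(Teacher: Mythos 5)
Your proposal is correct and takes essentially the same route as the paper: its proof of \myprop{optimal_perturbations} likewise identifies the inner minimization over $\pr$ in \cref{eq:pedro_form} with the conjugate representation of $\frac{1}{\beta}\Omega(\mu)$ and reads off $\prpi = \nabla_{\mu} \frac{1}{\beta}\Omega(\mu)$ directly from the first-order optimality conditions in \cref{eq:optimal_policy_as_grad}. The chain-rule bookkeeping through $\mu(s)$ and $\pi(a|s) = \mu(a,s)/\mu(s)$ that you flag as the main obstacle is exactly what the paper defers to its appendix derivations for the \textsc{kl} and $\alpha$-divergence cases, so your plan fills in the same details the paper's terse proof points to.
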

See \myapp{prop2} for proof. 
We now provide example closed form expressions for the worst-case reward perturbations under common regularization schemes.
We emphasize that the same stochastic policy $\pi(a|s)$ or joint occupancy measure $\mu(a,s)$ can be associated with different adversarial perturbations depending on the choice of $\alpha$-divergence and strength $\beta$.\footnote{One exception is that a policy with $a\tick$ s.t. $\pi(a\tick|s)=0$ can only be represented using \textsc{kl} regularization if $\pi_0(a\tick|s) = 0$.}
\headerv
\paragraph{KL Divergence} For \kl divergence policy regularization, the worst-case reward perturbations 
are
\begin{align}
    \prpi(a,s) = \frac{1}{\beta} \log \frac{\pi(a|s)}{\pi_0(a|s)} \, , \label{eq:kl_perturbations}
\end{align}
\normalsize
which 
corresponds to the pointwise regularization $\prpi(a,s)=\ptReg_{\pi_0}(\pi(a|s)$ for each state-action pair, with $\Omega_{\pi_0}(\mu)= \mathbb{E}_{\mu(a,s)}[\ptReg_{\pi_0}(\pi(a|s)]$.   
{See \myapp{conj1}}.
We show an analogous result in \myapp{conj2}
for 
state-action occupancy regularization $\KL[\mu:\mu_0]$, where $\prmu(a,s) = \frac{1}{\beta} \log 
 \frac{\mu(a,s)}{\mu_0(a,s)}= \ptReg_{\mu_0}(\mu(a,s))$.
 

\paragraph{$\alpha$-Divergence} For \kl divergence regularization, the worst-case reward perturbations had a similar expression for conditional and joint regularization.   However, we observe notable differences 
for the $\alpha$-divergence in general.  For policy regularization to a reference $\pi_0$,
\begin{align}
    \hspace*{-.23cm} \prpi(a,s) = \frac{1}{\beta} \log_{\alpha} \frac{\pi(a|s)}{\pi_0(a|s)} + 
    \psipr ,
    \label{eq:optimal_perturbations}
\end{align}
\normalsize
where we define $\psipr$ as
\small
\begin{equation}
    \psipr \coloneqq \frac{1}{\beta}\frac{1}{\alpha} \left( \sum \limits_{a \in \mathcal{A}} \pi_0(a|s)- \sum \limits_{a \in \mathcal{A}} \pi_0(a|s)^{1-\alpha} \pi(a|s)^\alpha \right).
    \label{eq:pr_normalizer}
\end{equation}
\normalsize
As we discuss in \myapp{conj3}, $\psipr$ plays the role of a normalization constant for the optimizing argument $\pir(a|s)$ in the definition of $\alphaconjn(\pr)$ (see \cref{eq:conjstar}, \mytab{conj_table}).  This term arises from differentiating $\Omega^{(\alpha)}_{\pi_0}(\mu)$ with respect to $\mu(a,s)$ instead of from an explicit constraint.
Assuming the given $\pi(a|s)$ and reference $\pi_0(a|s)$ are normalized, note that $\psipr = \frac{1}{\beta}(1-\alpha) D_{\alpha}[\pi_0:\pi]$.   
With normalization, we also observe that $\psipr = 0$ for \kl divergence regularization ($\alpha=1$), which confirms \cref{eq:kl_perturbations} is a special case of \cref{eq:optimal_perturbations}.

For any given state-action occupancy measure $\mu(a,s)$ and joint $\alpha$-divergence regularization 
to a reference $\mu_0(a|s)$, the worst-case perturbations become
\begin{align}
    \prmu(a,s) = \frac{1}{\beta} \log_{\alpha}\frac{\mu(a,s)}{\mu_0(a,s)}, \label{eq:pr_mu}
\end{align}
\normalsize
with detailed derivations in \myapp{mu_results}.   
In contrast to \cref{eq:optimal_perturbations}, this expression lacks an explicit normalization constant, as this constraint is enforced by the Lagrange multipliers $V(s)$ and $\mu(a,s) \in \mathcal{M}$ (\myapp{optimal_policy}).

\headerv
\subsection{Worst-Case Perturbations: Value Form}
\label{sec:value_form}


In the previous section, we analyzed the implicit adversary corresponding to \textit{any} stochastic policy $\pi(a|s)$ for a given  $\Omega, \pi_0,$ and $\beta$. We now take a dual perspective, where the adversary is given access to a set of dual variables $V(s)$ across states $s \in \mathcal{S}$ and selects reward perturbations $\prv(a,s)$.    We will eventually show in \mysec{optimal_policy} that these perturbations match the policy-form perturbations at optimality.

Our starting point is Theorem 3 of \citet{husain2021regularized}, which arises from taking the convex conjugate $(-\obj_{\Omega,\beta}(r))^*$ of the \textit{entire} regularized objective $\obj_{\Omega,\beta}(r)$, which is concave in $\mu(a,s)$.  See \myapp{husain}.
\begin{restatable}[{\citet{husain2021regularized}}]{theorem}{husain}
\label{thm:husain}
The optimal value of the regularized objective $\obj_{\Omega, \beta}(r)$ in \cref{eq:primal_reg}, or its dual $\obj^{*}_{\Omega, \beta}(r)$ in \cref{eq:dual_reg}, 
is equal to
\ifx\omitindices\undefined
\begin{align}
   \hspace*{-.3cm} \inf \limits_{V(s), \lambdaplus} \inf \limits_{\textcolor{\highlight}{\prv(a,s)}}   
   & (1-\gamma) \blangle \nu_0(s), V(s) \brangle 
    + \frac{1}{\beta} \Omega^{*(\alpha)}_{\pi_0,\beta}\left( 
    \textcolor{\highlight}{\prv}
    \right)  \quad
    \label{eq:husain_constrain} \\
    \text{subject to }\quad
    V(s) &= r(a,s) + \gamma \transitionv - \textcolor{\highlight}{\prv(a,s)} + \lambdaplus \quad \forall (a,s) \in \mathcal{A} \times \mathcal{S}.
    \nonumber
\end{align}
\else
\begin{align}
   \hspace*{-.3cm} \inf \limits_{V, \lambda} \inf \limits_{\textcolor{\highlight}{\prv}}   
   \, & \, (1-\gamma) \blangle \nu_0, V \brangle 
    + \frac{1}{\beta} \Omega^{*(\alpha)}_{\pi_0,\beta}\left( 
    \textcolor{\highlight}{\prv}
    \right)  
    \quad
    \label{eq:husain_constrain} \\
    \text{subject to }\,\,\,
    V(s) &= r(a,s) + \gamma \transitionv - \textcolor{\highlight}{\prv(a,s)} + \lambdaplus \quad \forall (a,s) \in \mathcal{A} \times \mathcal{S}.
    \nonumber
\end{align}
\fi

\normalsize
\end{restatable}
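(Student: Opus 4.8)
The plan is to obtain the stated expression directly from the regularized dual LP in \cref{eq:dual_reg}, which has already been derived via Lagrangian duality, by promoting the reward perturbation $\prv$ to an explicit optimization variable. Recall that \cref{eq:dual_reg} reads $\obj^{*}_{\Omega,\beta}(r) = \min_{V,\lambdaplus} (1-\gamma)\blangle \nu_0, V \brangle + \alphaconj\big(r + \gamma\transitionvinds - V + \lambdaplus\big)$, where the conjugate is evaluated at the single combined argument $r + \gamma\transitionvinds - V + \lambdaplus$. The key observation is that this argument is exactly the quantity the theorem names $\prv(a,s)$: setting $\prv(a,s) := r(a,s) + \gamma\transitionv - V(s) + \lambdaplus$ is algebraically equivalent to imposing the linear constraint $V(s) = r(a,s) + \gamma\transitionv - \prv(a,s) + \lambdaplus$ appearing in \cref{eq:husain_constrain}.

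Next I would argue that introducing $\prv$ as a variable tied to $(V,\lambda)$ through this equality leaves the optimal value unchanged. For any fixed $V$ and $\lambdaplus$ the constraint determines $\prv$ uniquely, so the inner $\inf_{\prv}$ collapses onto a single feasible point and the objective reduces to $(1-\gamma)\blangle\nu_0,V\brangle + \alphaconj(\prv)$, matching \cref{eq:dual_reg} term by term. Minimizing over $V$ and $\lambdaplus$ on both sides then yields the claimed equality with $\obj^{*}_{\Omega,\beta}(r)$, and hence with the primal value $\obj_{\Omega,\beta}(r)$ of \cref{eq:primal_reg} by the strong duality already invoked for \cref{eq:dual_reg}. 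To match the description of \mythm{husain} as arising from conjugating the entire objective, I would alternatively derive the identity from scratch: start from the Lagrangian relaxation \cref{eq:dual_lagrangian}, recognize the inner maximization over $\mu$ as the conjugate $\alphaconj$ by definition \cref{eq:conjstar}, and then perform the same renaming of the conjugate argument as $\prv$.

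The main obstacle in either route is justifying the exchange of $\max_{\mu}$ and $\min_{V,\lambdaplus}$ that underlies \cref{eq:dual_reg}. This requires verifying strong duality, which follows from concavity of $\blangle\mu, r\brangle - \frac{1}{\beta}\Omega_{\pi_0}(\mu)$ in $\mu$, affineness of the Bellman-flow and nonnegativity constraints defining $\mathcal{M}$, and nonemptiness of the feasible set (a Slater-type condition). Care is also needed to ensure the conjugate $\alphaconj$ is finite and well-defined on all of $\rprimedomain$, which is precisely why we work with the unnormalized extension of the $\alpha$-divergence from \mysec{divergence_intro}, and why we retain the multiplier $\lambdaplus \geq 0$ so that states where the optimal policy is sparse (where $\exp_{\alpha}$ saturates at zero) are handled correctly.
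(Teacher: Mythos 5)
Your proof is correct, but it takes a genuinely different route from the paper's. You treat \mythm{husain} as a change of variables in the already-derived regularized dual \cref{eq:dual_reg}: for each fixed $(V, \lambda)$ the equality constraint pins down $\prv(a,s) = r(a,s) + \gamma \transitionv - V(s) + \lambdaplus$ uniquely, the inner infimum collapses to that point, and the constrained program in \cref{eq:husain_constrain} is term-by-term identical to \cref{eq:dual_reg}; equality with the primal $\obj_{\Omega,\beta}(r)$ then rides on the strong duality already invoked when passing from \cref{eq:dual_lagrangian} to \cref{eq:dual_reg}, and since the theorem itself references \cref{eq:dual_reg}, this reliance is legitimate rather than circular. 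The paper's proof in \myapp{husain}, following \citet{husain2021regularized}, instead re-derives the statement from the primal by conjugating the \emph{entire} objective: it writes $\obj_{\Omega,\beta}(r) = -\big((-\obj_{\Omega,\beta})^{*}\big)^{*}$, expands both conjugates to reach $\inf_{r\tick} \sup_{\mu \in \mathcal{M}} \, \langle \mu, r\tick \rangle + \frac{1}{\beta}\Omega^{*}\big(\beta (r - r\tick)\big)$ after a minimax swap, and only then dualizes this \emph{unregularized} LP in the modified reward $r\tick$ to introduce $V(s)$ and $\lambdaplus$, setting $\prv = r - r\tick$. Your route is shorter and matches the paper's own remark immediately after the theorem that substituting the constraint recovers \cref{eq:dual_reg}; what the biconjugation route buys is the intermediate adversarial game over $(\mu, r\tick)$ --- the conceptual core of the value-form perturbations --- and independence from the Lagrangian duality step of \mysec{mdp_regularized}, at the cost of needing its own inf/sup exchange (step $(6)$ in \myapp{husain}). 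Both arguments therefore hinge on a minimax-exchange justification somewhere, and your closing discussion of concavity in $\mu$, affineness of the flow and nonnegativity constraints, feasibility, and the role of the unnormalized divergence and the multiplier $\lambdaplus$ for sparse policies covers exactly what is needed to make your version rigorous.
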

Rearranging the equality constraint to solve for $\prv(a,s)$ and substituting into the objective, this optimization recovers the regularized dual problem in \cref{eq:dual_reg}.   
We can also compare \cref{eq:husain_constrain} to the \textit{unregularized} dual problem in \cref{eq:dual_lp}, which does not include an adversarial cost and whose constraint 
$V(s) = r(a,s) + \gamma \transitionv + \lambdaplus$ implies an \textit{unmodified} reward, or $\prv(a,s) = 0$. 
Similarly to \mysec{any_policy}, the adversary incorporates the effect of policy regularization via the reward perturbations $\prv(a,s)$.
\subsection{Policy Form $=$ Value Form at Optimality}\label{sec:optimal_policy}
In the following proposition, we provide a link between the policy and value forms of the adversarial reward perturbations, showing that $\proptpi(a,s)=\proptv(a,s)$ for the optimal policy $\piopt(a|s)$ and value $V_*(s)$.
As in \citet{eysenbach2021maximum}, the uniqueness of the optimal policy 
implies that its robustness may be associated with an environmental reward $r(a,s)$ for a given
regularized \gls{MDP}.
 \begin{restatable}{proposition}{advantage} 
\label{prop:advantage} For the optimal policy $\pi_*(a|s)$ and value function $V_*(s)$ corresponding to $\alpha$-divergence policy regularization with strength $\beta$, 
the policy and value forms of the worst-case adversarial reward perturbations match, $\proptpi=\proptv$, and are related to the advantage function via
\normalsize
\begin{align}
\hspace*{-.2cm}
\proptpi(a,s)
&= Q_*(a,s) - V_*(s) + \lambdaopt ,\label{eq:advantage}
\end{align}
where we define $Q_*(a,s) \coloneqq r(a,s) + \gamma \transitionvstar$ and recall $\lambdaopt \pi_*(a|s) = 0$ by complementary slackness.
Note that $V_*(s)$ depends on the regularization scheme via the conjugate function $\alphaconj(\prv)$ in \cref{eq:husain_constrain}.   
\end{restatable}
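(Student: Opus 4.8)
The plan is to read off the value form directly from \mythm{husain} and then reduce the substantive claim $\proptpi = \proptv$ to the inverse conjugate correspondence of \cref{eq:optimal_policy_as_grad}. Rearranging the equality constraint in \cref{eq:husain_constrain} gives $\proptv(a,s) = Q_*(a,s) - V_*(s) + \lambdaopt$ once $V_*(s)$ is the minimizer of the dual objective, which is exactly the right-hand side of \cref{eq:advantage}. Hence the advantage interpretation is essentially definitional, and all the work lies in showing $\proptpi = \proptv$.

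First I would recall that the optimal policy is the maximizing argument of the soft value aggregation in \cref{eq:soft_value} with $Q = Q_*$, so that $V_*(s) = \frac{1}{\beta}\Omega^{*(\alpha)}_{\pi_0,\beta}(Q_*)$ and the optimizing argument equals $\piopt(a|s)$. By the conjugate optimality condition $\mu_{\dualvar} = \nabla\Omega^*(\dualvar)$ of \cref{eq:optimal_policy_as_grad}, this reads $\nabla\Omega^{*(\alpha)}_{\pi_0,\beta}(Q_*) = \piopt$ (understood per state and inducing $\mu_*$). The key observation is that the optimizing argument of the policy conjugate in \mytab{conj_table} depends on its input only through $\dualvar - \psi_{\dualvar}$, so it is invariant to adding any state-dependent constant to $\dualvar$; subtracting $V_*(s)$, together with the term $\lambdaopt$ (which vanishes on the support of $\piopt$ by complementary slackness), therefore leaves the optimizing argument unchanged, giving $\nabla\Omega^{*(\alpha)}_{\pi_0,\beta}(\proptv) = \piopt$ as well.

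Having shown that the conjugate map sends $\proptv$ to the occupancy $\mu_*$ induced by $\piopt$, I would apply the inverse direction $\dualvar_{\primalvar} = \nabla\Omega(\primalvar)$ of \cref{eq:optimal_policy_as_grad}: since $\nabla\Omega^{*(\alpha)}_{\pi_0,\beta}$ sends $\proptv$ to $\mu_*$, the original gradient map sends $\mu_*$ back to $\proptv$, i.e. $\proptv = \nabla_{\mu}\frac{1}{\beta}\Omega(\mu)\big|_{\mu_*}$. By \myprop{optimal_perturbations} the right-hand side is exactly $\proptpi$, which closes the argument. Equivalently one can verify this by brute force: invert the closed form \cref{eq:optimal_perturbations} to write $\piopt = \pi_0 \exp_\alpha\{\beta(\proptpi - \psipropt)\}$, compare with the soft-value form $\piopt = \pi_0\exp_\alpha\{\beta(Q_* - \psiqopt)\}$, and match the two exponents.

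I expect the main obstacle to be the bookkeeping around the normalization constant $\psipr$ and its interaction with $V_*(s)$ and the multiplier $\lambdaopt$. Because the conjugate map is invariant to state-dependent shifts of its argument, pinning down $\proptv$ rather than some $\proptv + c(s)$ requires the selection provided by the normalization $\Omega^{*(\alpha)}_{\pi_0,\beta}(\proptpi)=0$ from the proof of \myprop{feasible}; concretely, the shift-invariance step must establish $\psiqopt = V_*(s) + \psipropt$ on the support of $\piopt$, and this is the one place where the explicit $\alpha$-exponential form of \cref{eq:qexp} and the constraint $\sum_a \piopt(a|s) = 1$ genuinely enter. The handling of actions outside the support, where $\lambdaopt > 0$ and $\piopt(a|s) = 0$, then needs complementary slackness $\lambdaopt \piopt(a|s) = 0$ to ensure the equality of perturbations holds pointwise rather than only on the support.
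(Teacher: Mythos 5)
Your proposal is correct and takes essentially the same route as the paper: the paper likewise combines \myprop{optimal_perturbations} ($\prpi = \nabla_{\mu}\frac{1}{\beta}\Omega(\mu)$) with the KKT/conjugate stationarity condition $\nabla_{\mu}\frac{1}{\beta}\Omega(\mu_*) = r(a,s) + \gamma \transitionvstar - V_*(s) + \lambdaopt$ (\myapp{advantage_pf}, \myapp{optimal_policy}), and your ``brute force'' variant---matching the $\alpha$-exponents of the two expressions for $\piopt$---is literally the main-text proof, with the identity $\psiqopt = V_*(s) + \psipropt$ that you identify as the crux being exactly \cref{eq:value_and_normalizers} established in \myapp{advantage_confirm}. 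The one imprecision, which you flag yourself, is that shift-invariance of the optimizing argument holds for the normalized soft-value conjugate of \cref{eq:soft_value} (where $\psiq$ is a free Lagrange multiplier) but not for the \mytab{conj_table} conjugate (where $\psipr$ is pinned by \cref{eq:psi_pr}), and resolving this via $\psiqopt = V_*(s) + \psipropt$ is precisely the paper's bookkeeping.
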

\normalsize
 \begin{proof}
See \myapp{advantage_pf}.  We consider the optimal policy in an \gls{MDP} with $\alpha$-divergence policy regularization $\frac{1}{\beta}\Omega^{(\alpha)}_{\pi_0}(\mu)$, which is derived via similar derivations as \citet{lee2019tsallis} or by eliminating $\mu(a,s)$ in \cref{eq:dual_lagrangian}.
\small
\begin{align}
      &\piopt(a|s)=  
          \, \pi_0(a|s) \exp_{\alpha}\big\{ \beta \cdot \big( 
    Q_*(a,s)
      - V_*(s) + \lambdaplus
      - \psipropt  \big) \big\}. \label{eq:optimal_policy} 
\end{align}
\normalsize
We prove \myprop{advantage} by plugging 
this optimal policy into the worst-case reward perturbations from \cref{eq:optimal_perturbations}, $\proptpi(a,s) = \frac{1}{\beta}\log_{\alpha}\frac{\piopt(a|s)}{\pi_0(a|s)} + \psipropt$.
We can also use \cref{eq:advantage} to verify 
$\piopt(a|s)$ is normalized,
since 
$\psi_{\pr_{\piopt}}$
ensures normalization for
the policy corresponding to $\prpiopt$.
In \myapp{advantage_confirm}, we also show $\psiqopt = V_*(s) + \psipropt$, where $\psiqopt$ is a Lagrange multiplier enforcing normalization in \cref{eq:soft_value}.
\end{proof}

\paragraph{Path Consistency Condition}
\vspace*{\header}
The equivalence between $\proptpi(a,s)$ and $\proptv(a,s)$ at optimality matches the \textit{path consistency} conditions from \citep{nachum2017bridging, chow2018path} and suggests generalizations to general $\alpha$-divergence regularization.
Indeed, combining \cref{eq:optimal_perturbations} and (\ref{eq:advantage}) and rearranging,
\small
\begin{align}
 r(a,s) + \gamma \transitionvstar - \frac{1}{\beta} \log_{\alpha}\frac{\pi_*(a|s)}{\pi_0(a|s)} & - \psipiopt  =  V_*(s) - \lambdaopt \label{eq:path}
\end{align}
\normalsize
for all $s \in \mathcal{S}$ and $a \in \mathcal{A}$.
This is a natural result, since path consistency is obtained using the \textsc{kkt} optimality condition involving the gradient with respect to $\mu$ of the Lagrangian relaxation in \cref{eq:dual_lagrangian}.   Similarly, we have seen in \myprop{optimal_perturbations} that 
\ifx\omitindices\undefined
$\prpi(a,s) = \nabla_{\mu} \omegamu$.   
\else
$\prpi = \nabla_{\mu} \omegamu$.  
\fi
See \myapp{path_consistency}.

Path consistency conditions were previously derived for the Shannon entropy \citep{nachum2017bridging} and Tsallis entropy with $\alpha=2$ \citep{chow2018path}, but our expression in \cref{eq:path} provides a generalization to $\alpha$-divergences with arbitrary reference policies.  We provide more detailed discussion in \myapp{indifference_all}.

\headerv
\paragraph{Indifference Condition} 
As \citet{ortega2014adversarial} discuss for the single step case, the saddle point of the minmax optimization in \cref{eq:pedro_form} reflects an \textit{indifference} condition which is a well-known property of Nash equilibria in game theory \citep{osborne1994course}.  
Consider $Q(a,s) = r(a,s) + \gamma \transitionv$ to be the agent's estimated payoff for each action in a particular state.  For the optimal policy, value, and worst-case reward perturbations, \cref{eq:path} shows that the pointwise modified reward $Q_*(a,s) - \proptpi(a,s) = V_*(s)$ is equal to a constant.\footnote{This holds for actions with $\piopt(a|s)>0$ and $\small \lambdaplus = 0$.   
Note that we treat $Q(a,s)$ as the reward in the sequential case.}   
Against the optimal strategy of the adversary, the agent becomes indifferent between the actions in its mixed strategy.
The value or conjugate function $V_*(s) = \klconjpii(Q_*)$ (see \myapp{soft_value}) is known as the \textit{certainty equivalent} \citep{fishburn1988nonlinear, ortega2013thermodynamics}, 
which measures the total expected utility for an agent starting in state $s$, in a two-player game 
against an adversary defined by the regularizer $\Omega$ with strength $\beta$.
We empirically confirm the indifference condition in \myfig{perturb_opt} and 
\ref{fig:pyco_indifference}.

\headerv
\section{Experiments}\label{sec:visualizations}
\headerv
In this section, we visualize the robust set and worst-case reward perturbations associated with policy regularization, using intuitive examples to highlight theoretical properties of our adversarial interpretation.  

\begin{figure*}[!t]
\vspace*{-1.6cm}
\centering
\begin{center}
\[\arraycolsep=.003\textwidth
\begin{array}{cccccc} 
 \begin{subfigure}{0.07\textwidth} \vspace*{-.24cm} \includegraphics[width=\textwidth]
 {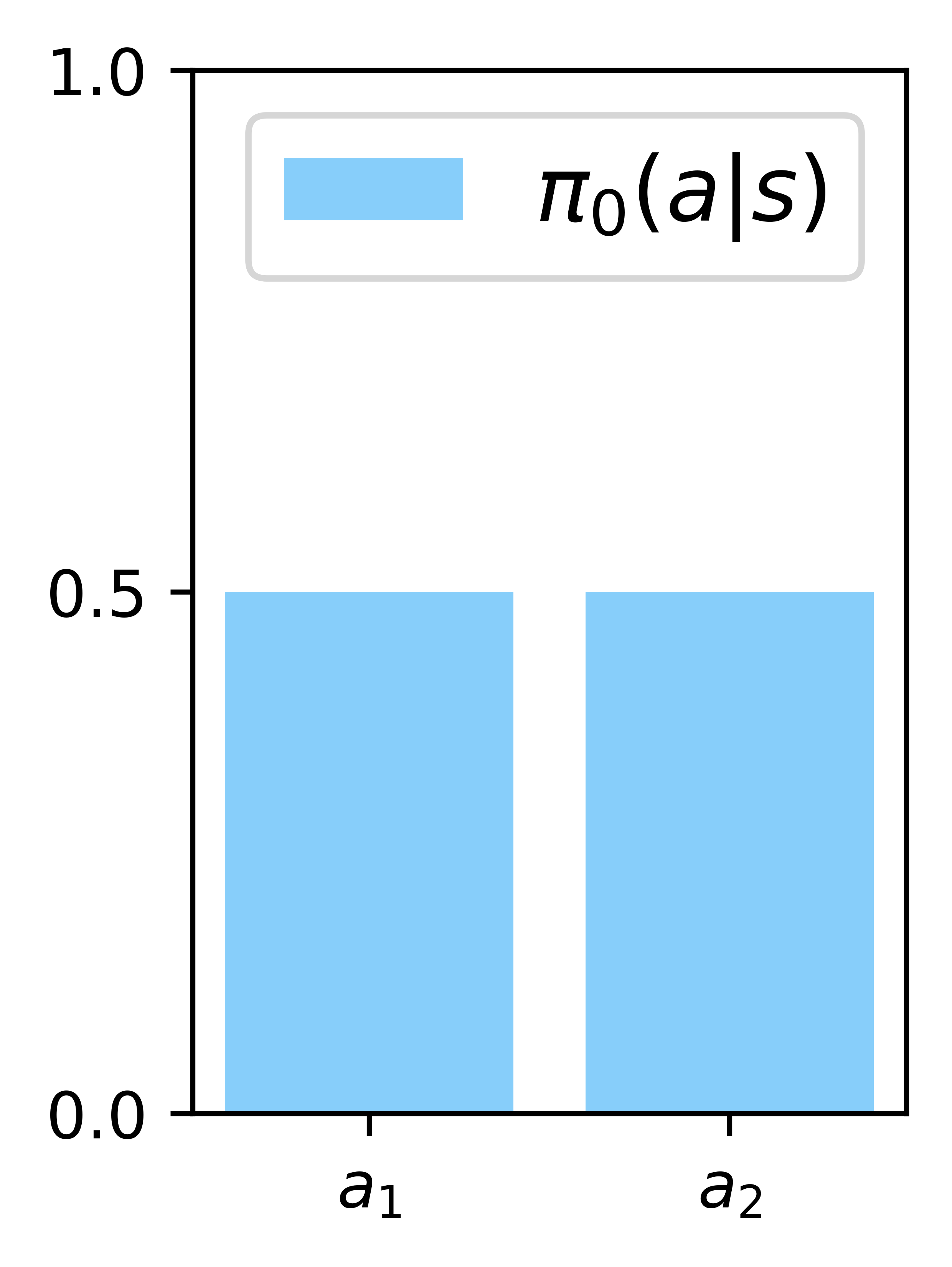} \end{subfigure}& 
\begin{subfigure}{0.175\textwidth}\includegraphics[width=\textwidth,  clip]
{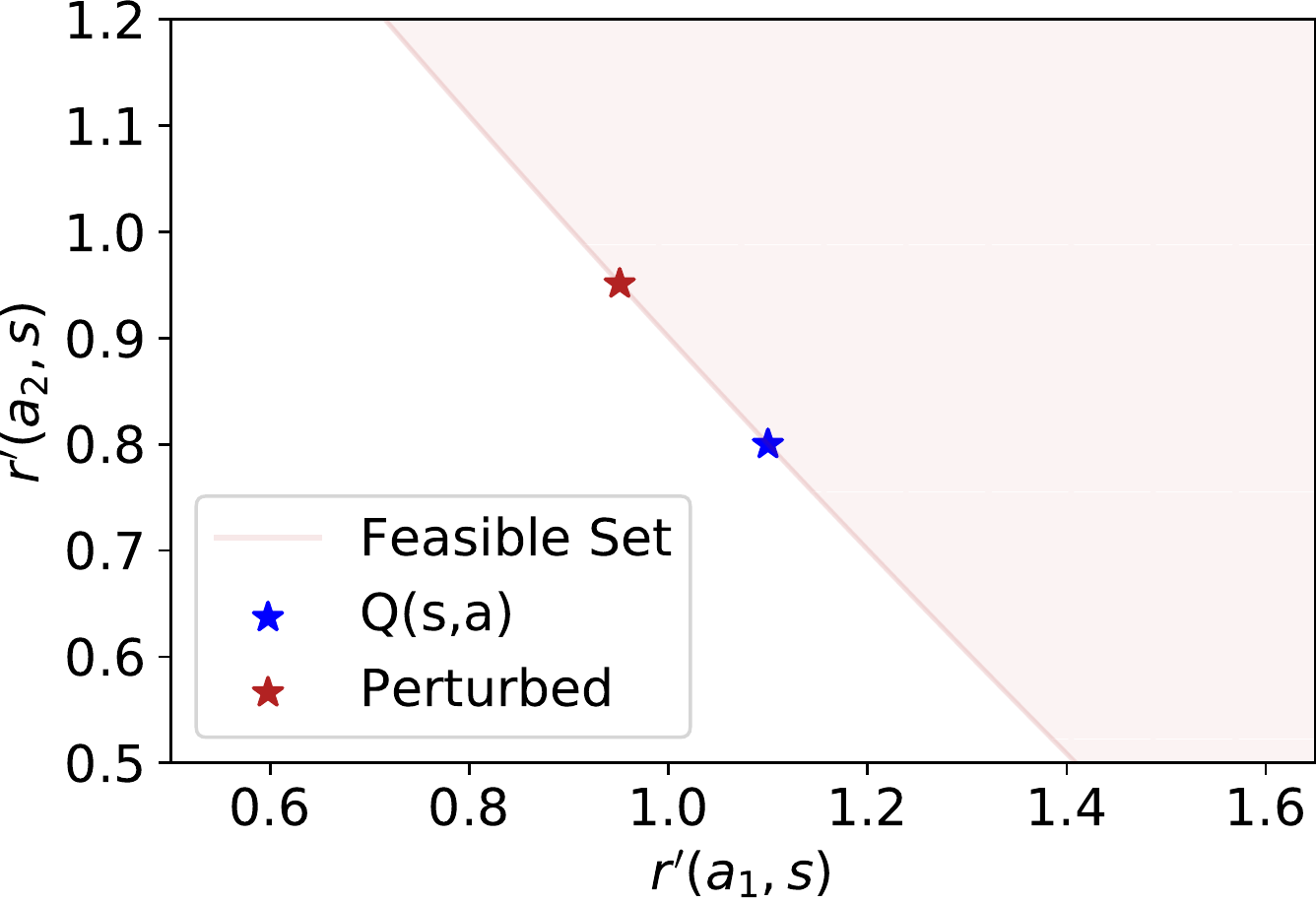}
\caption{ \centering \small $D_{KL}, \beta = 0.1$}\end{subfigure}& 
\begin{subfigure}{0.175\textwidth}\includegraphics[width=\textwidth,  clip]
{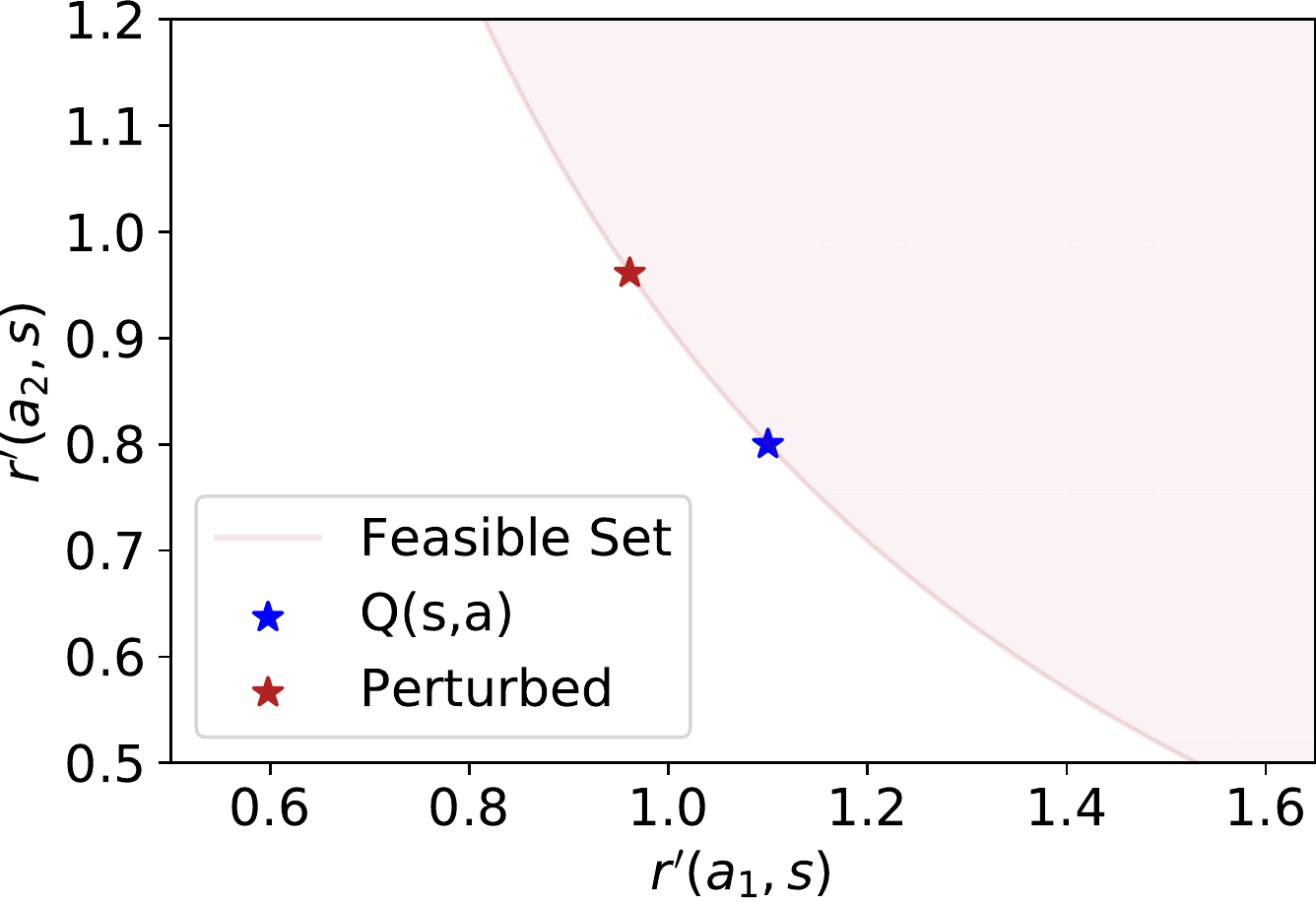}
\caption{ \centering  $D_{KL}, \beta = 1$}\end{subfigure}& 
\begin{subfigure}{0.175\textwidth}\includegraphics[width=\textwidth,  clip]
{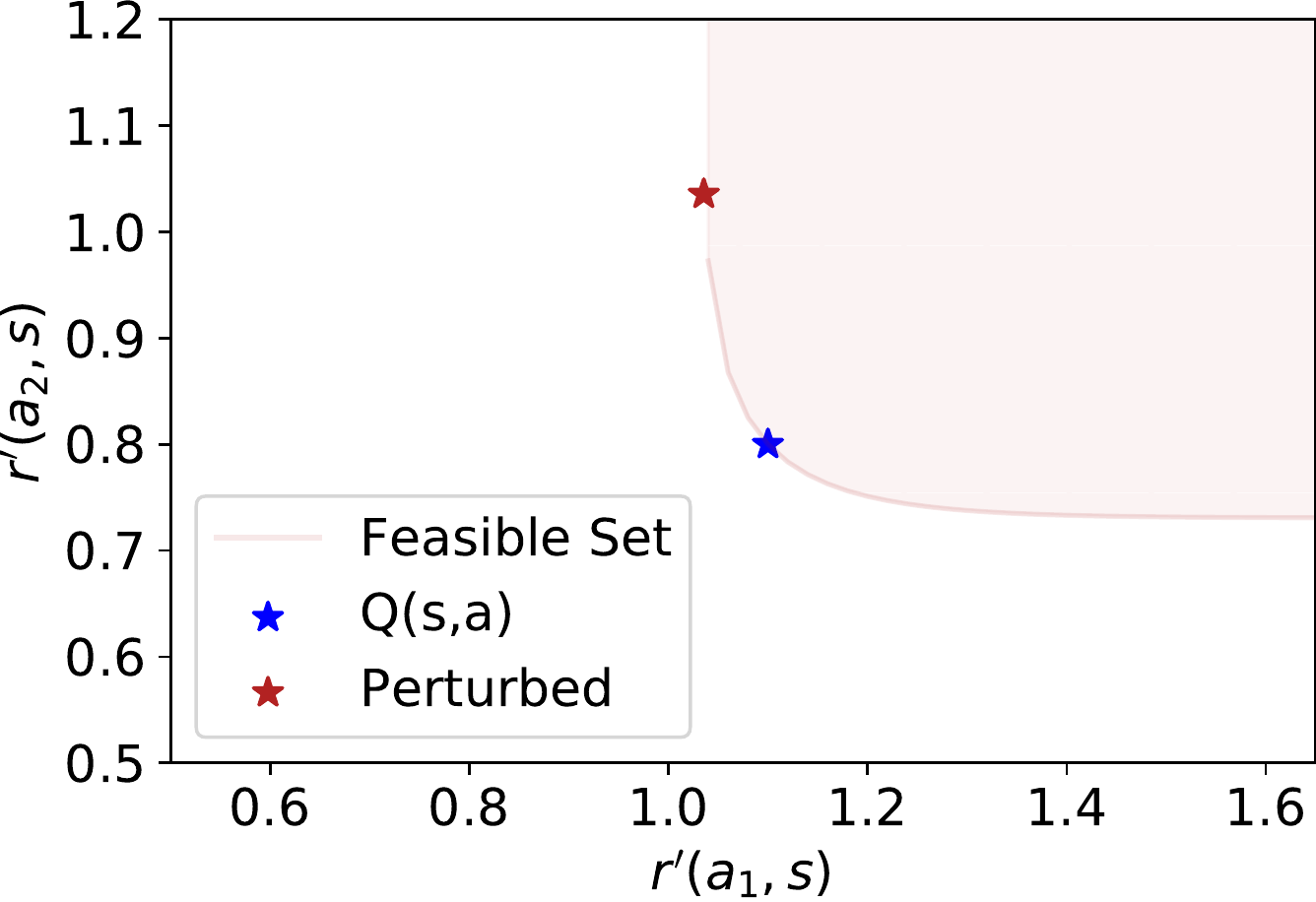}
\caption{ \centering \small $D_{KL}, \beta = 10$}\end{subfigure}& \rulesep
\begin{subfigure}{0.175\textwidth}\includegraphics[width=\textwidth, trim= 0 0 0 0, clip]
{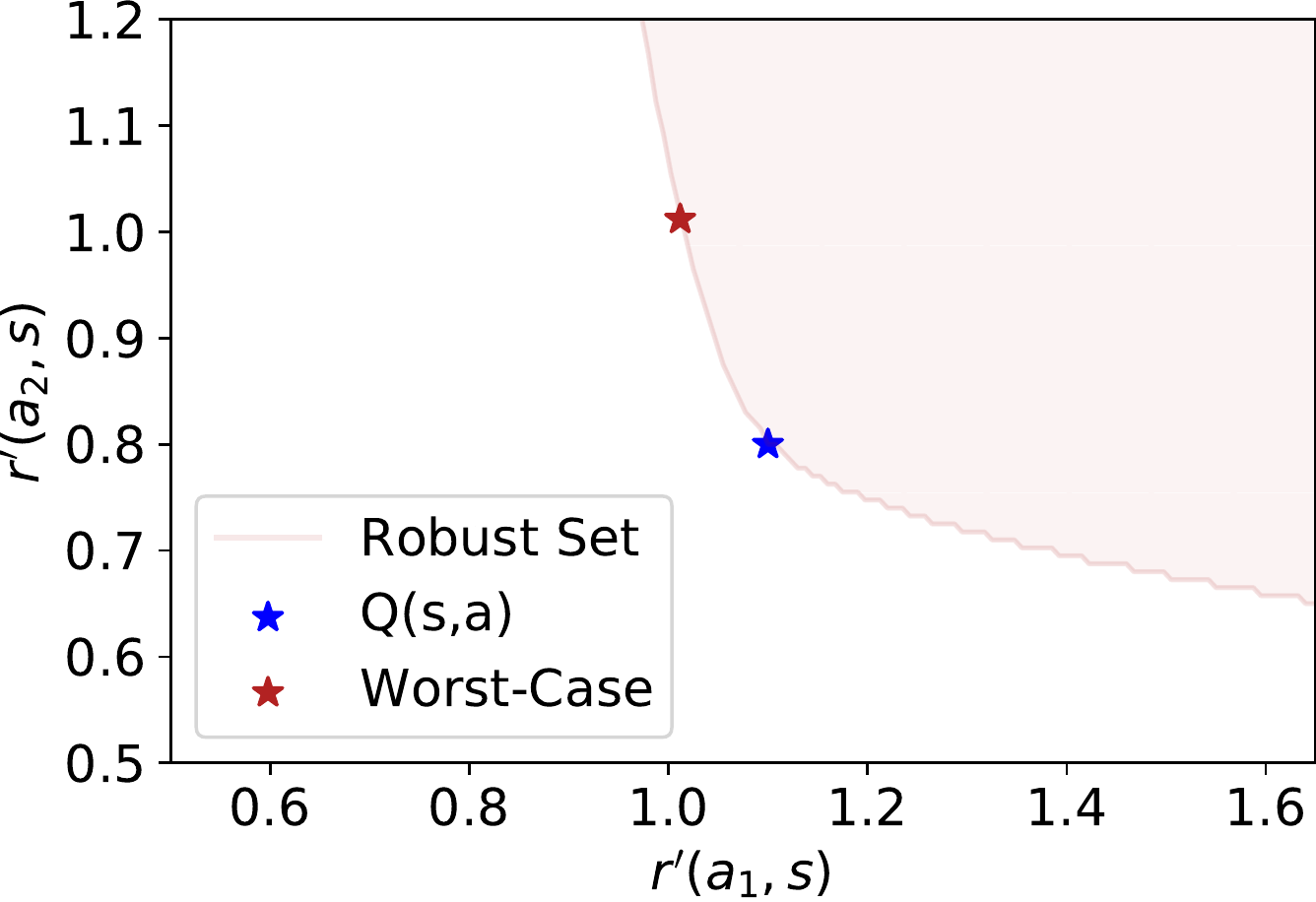}
\caption{\centering \small $\alpha = -1, \beta = 10$}\end{subfigure}&
\begin{subfigure}{0.175\textwidth}\includegraphics[width=\textwidth, trim= 0 0 0 0, clip]
{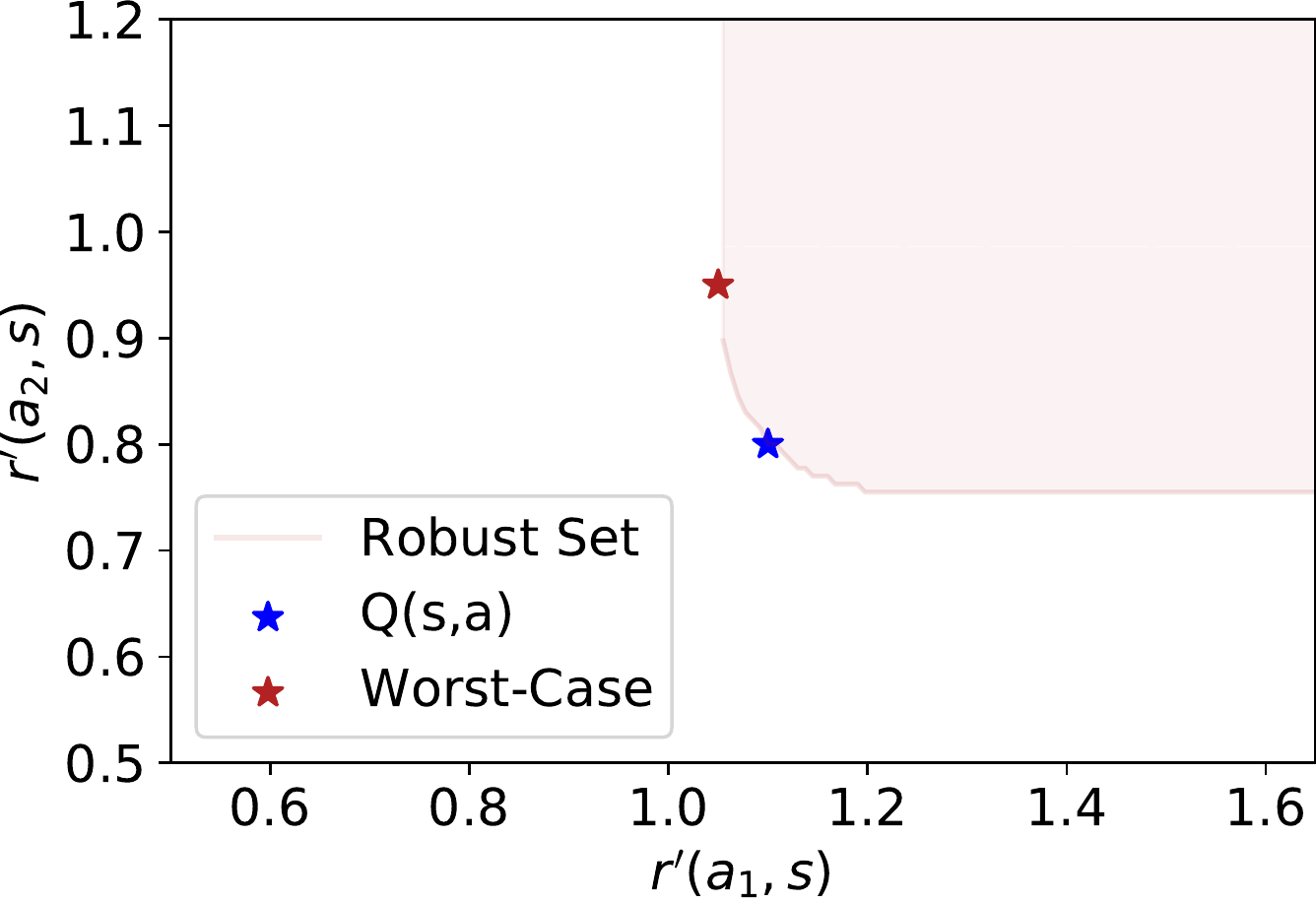}
\caption{ \centering \small $\alpha = 3, \beta = 10$}\end{subfigure}
\\ 
\begin{subfigure}{0.07\textwidth}\vspace*{-.24cm} \includegraphics[width=\textwidth]{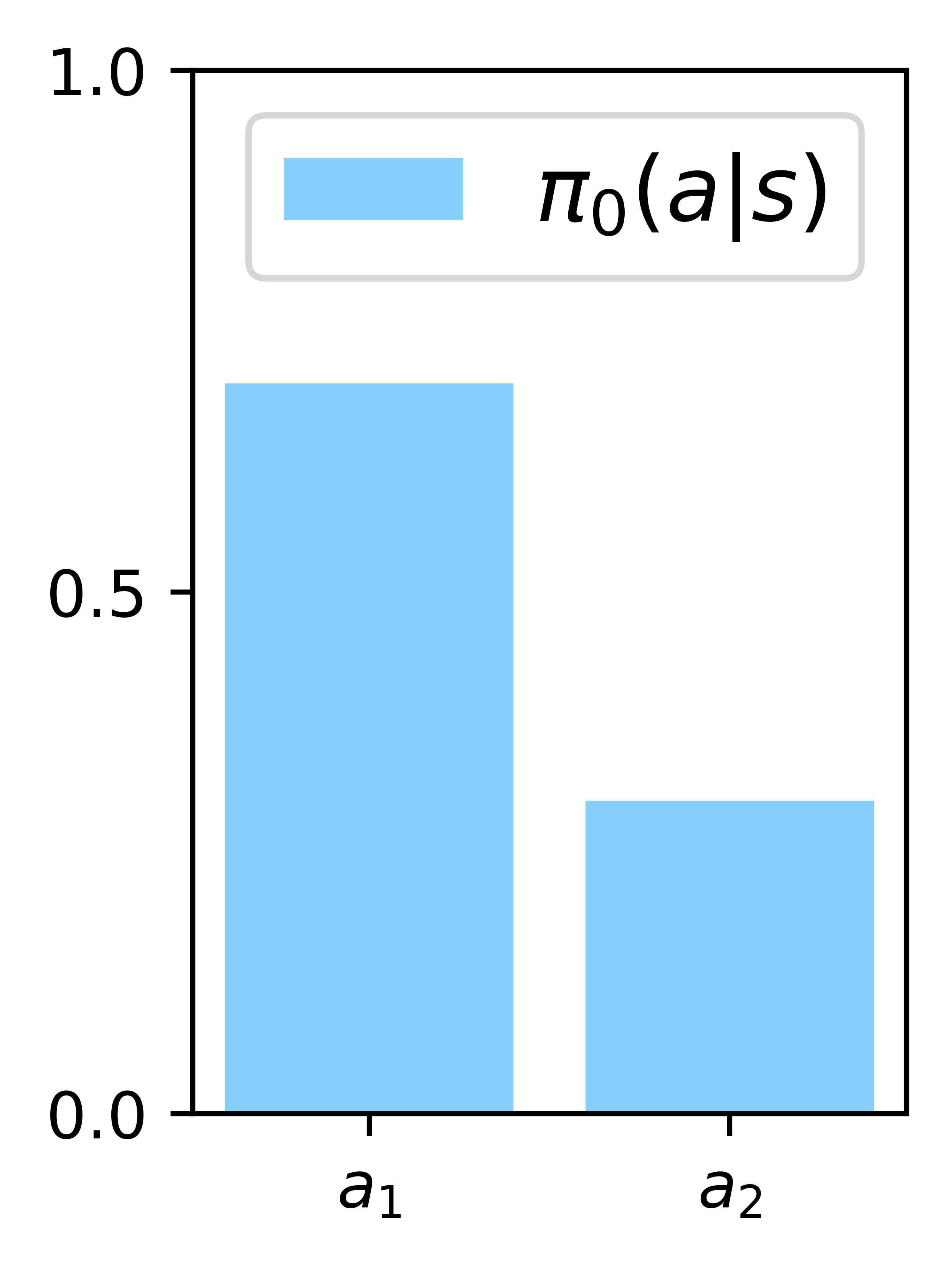}\end{subfigure}& 
\begin{subfigure}{0.175\textwidth}\includegraphics[width=\textwidth,  clip]
{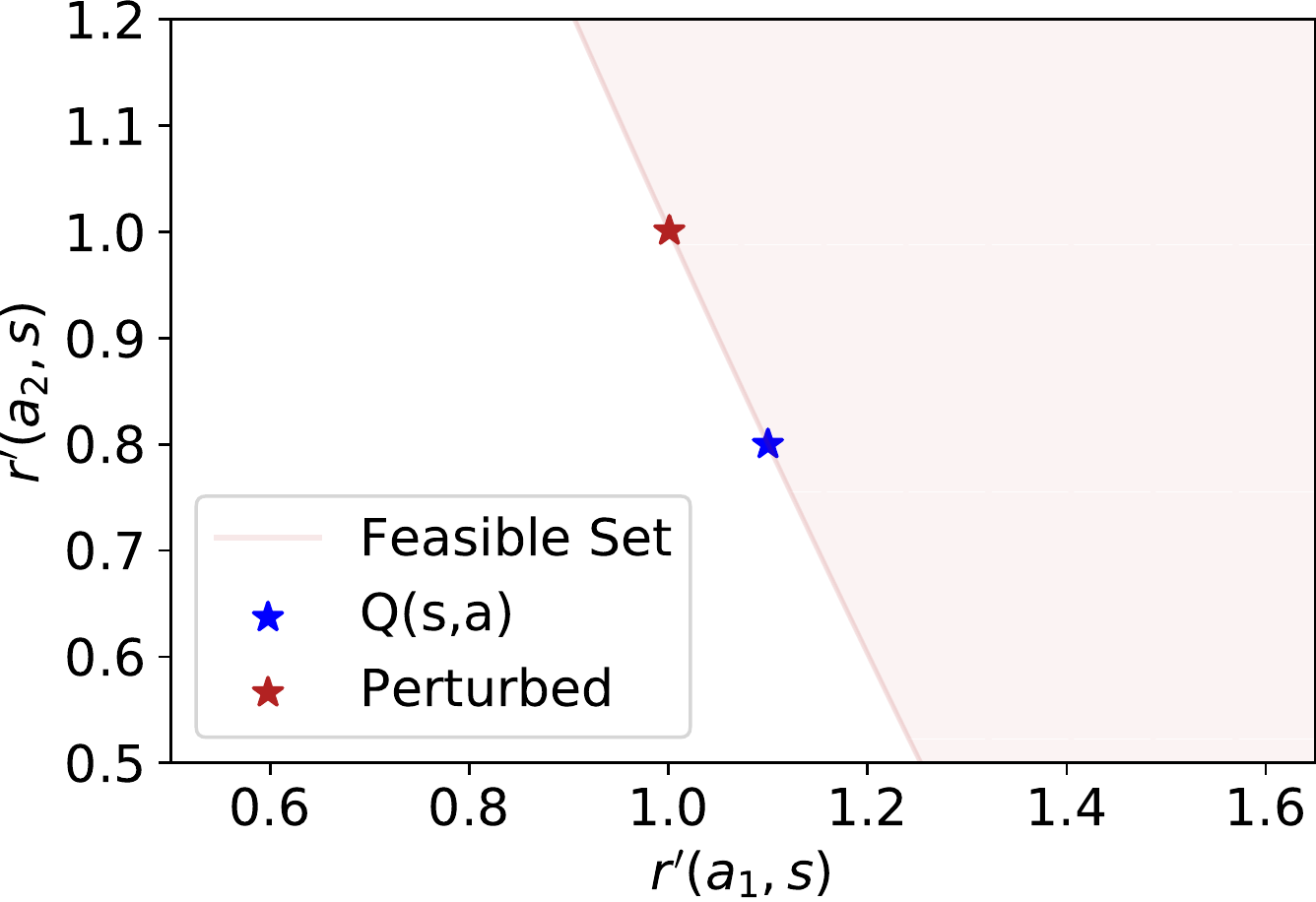}
\caption{ \centering \small $D_{KL}, \beta = 0.1$}\end{subfigure}& 
\begin{subfigure}{0.175\textwidth}\includegraphics[width=\textwidth,  clip]
{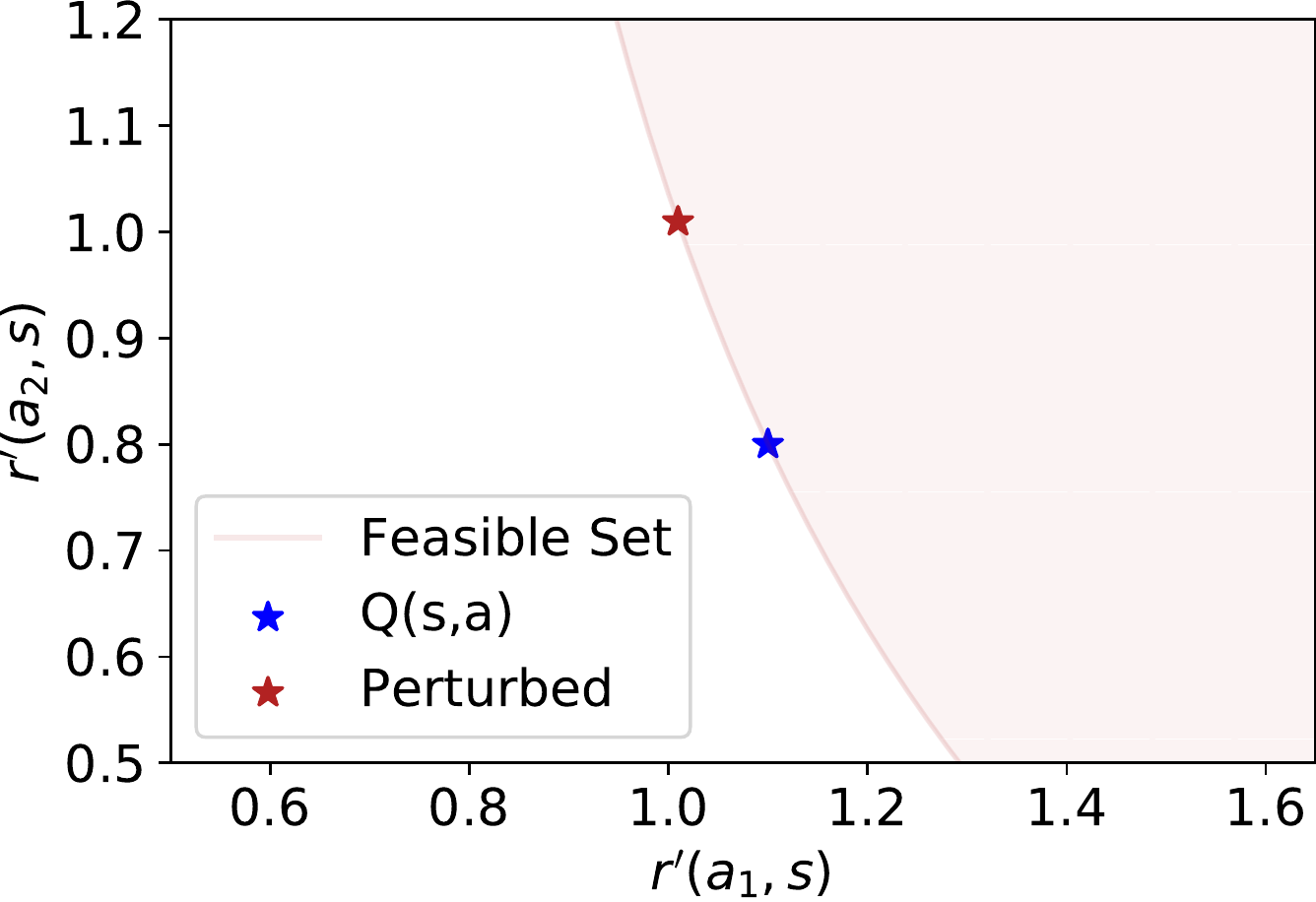}
\caption{ \centering \small $D_{KL}, \beta = 1$}\end{subfigure}& 
\begin{subfigure}{0.175\textwidth}\includegraphics[width=\textwidth,  clip]
{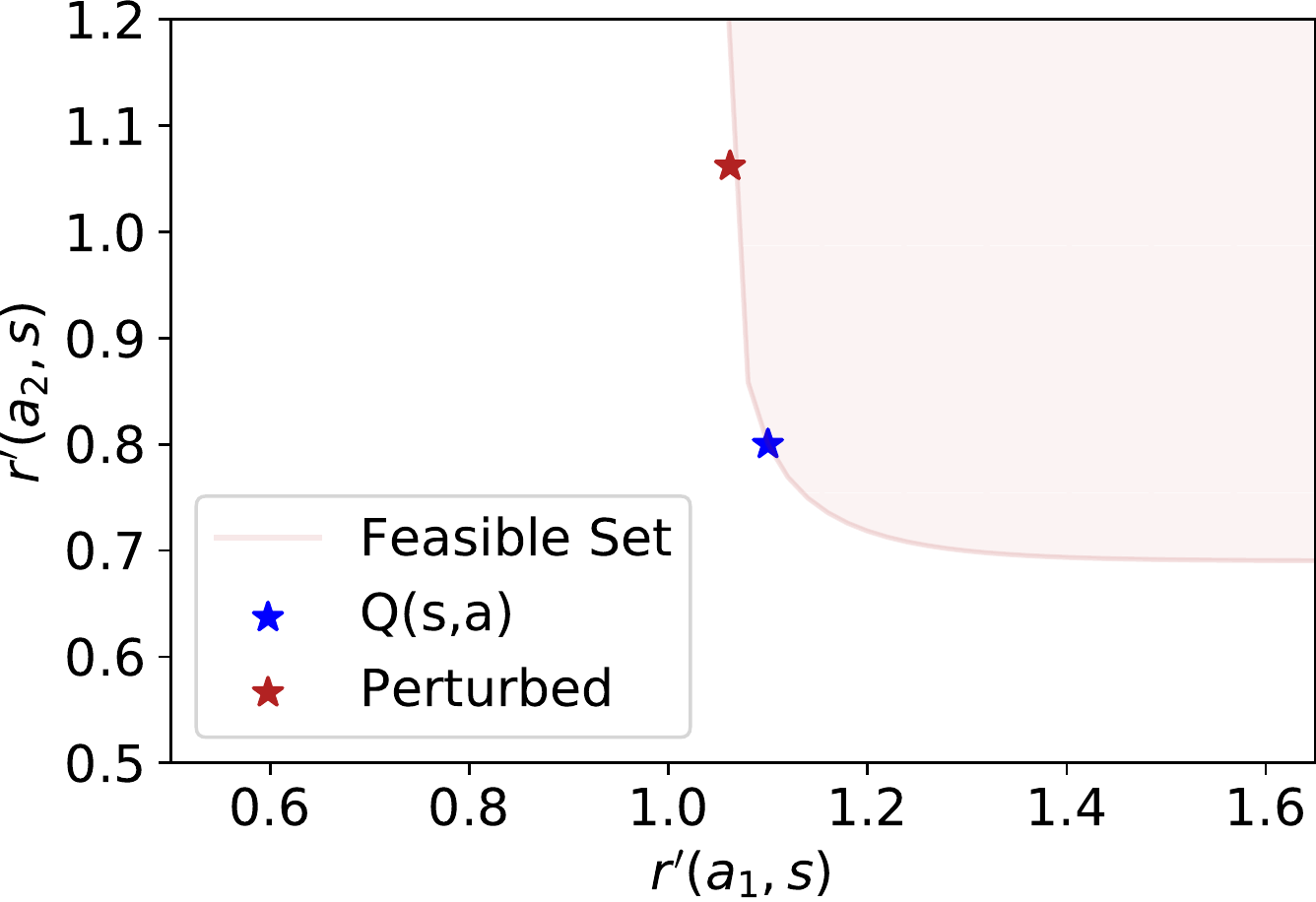}
\caption{\centering \small $D_{KL}, \beta = 10$}\end{subfigure}& \rulesep 
\begin{subfigure}{0.175\textwidth}\includegraphics[width=\textwidth,  clip]
{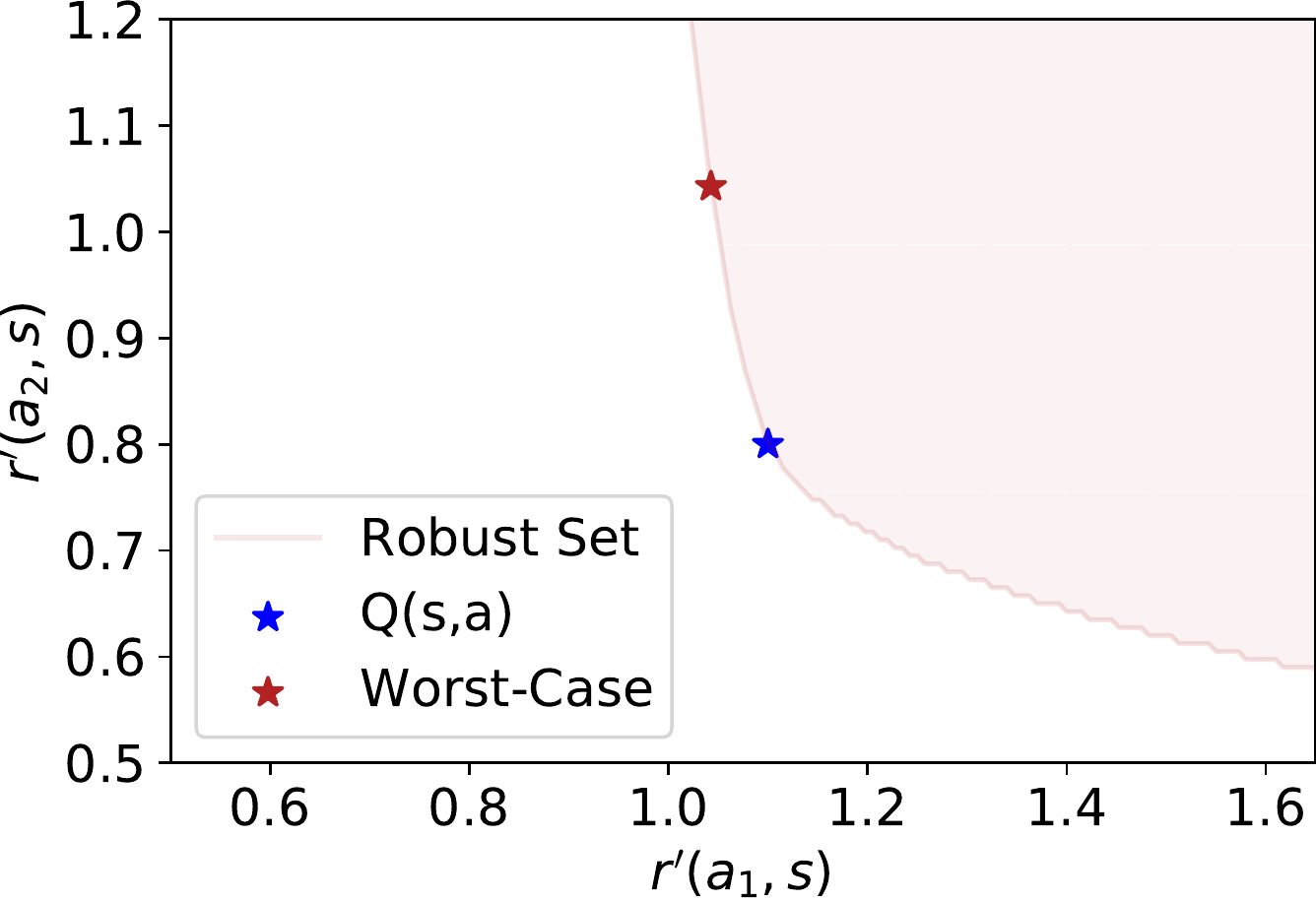}
\caption{\centering \small $\alpha = -1, \beta = 10$}\end{subfigure}&
\begin{subfigure}{0.175\textwidth}\includegraphics[width=\textwidth,  clip] 
{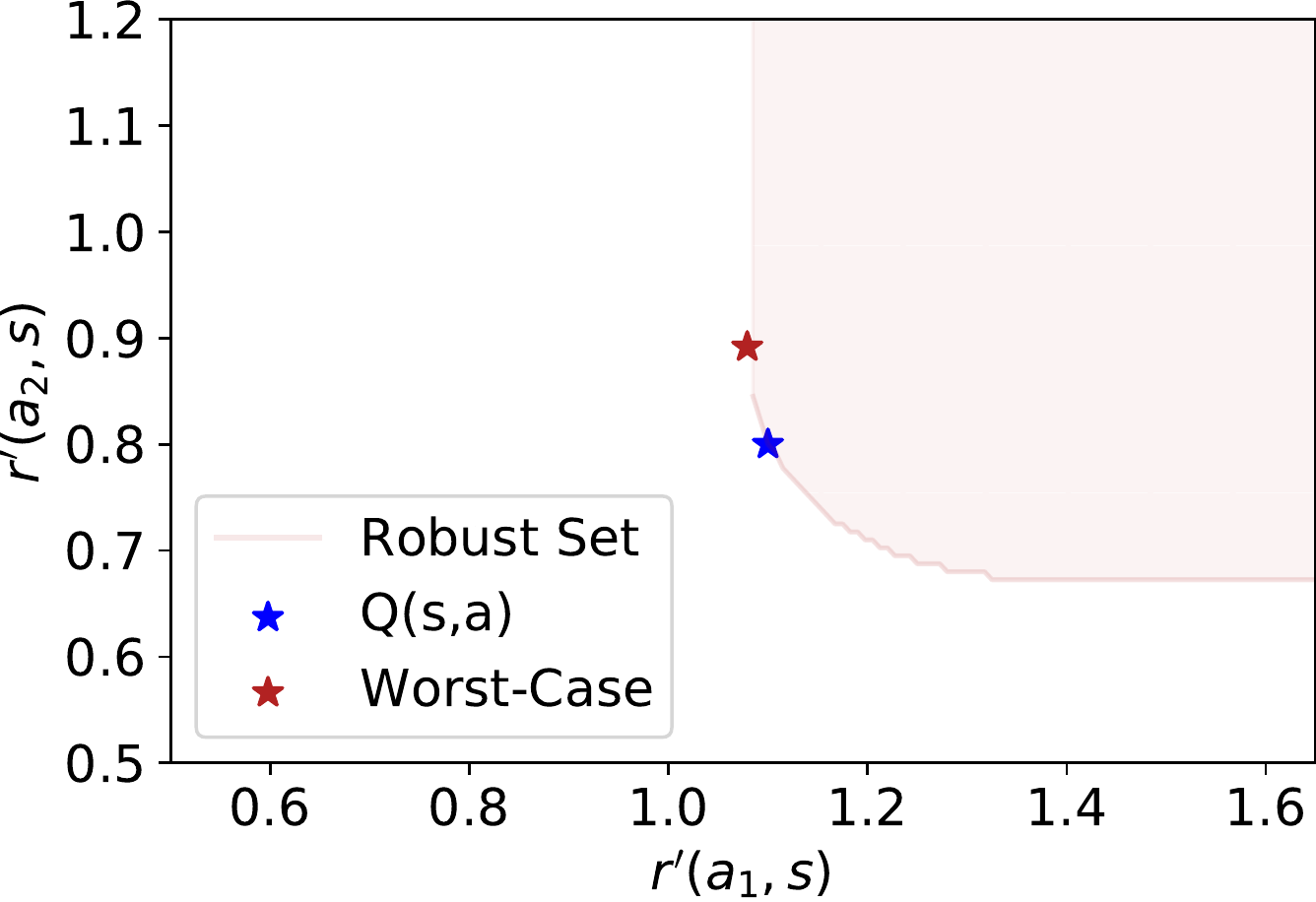}
\caption{ \centering \small $\alpha = 3, \beta = 10$}\end{subfigure}
\end{array}
\] 
\end{center}
\vspace*{-.38cm}
\caption{\textbf{Robust Set} (red region) of perturbed reward functions to which a stochastic policy generalizes, in the sense that the policy is guaranteed to achieve an expected modified reward greater than or equal to the value of the regularized objective (\cref{eq:generalization}).  
The robust set characterizes the perturbed rewards which are feasible for the adversary. Red stars indicate the worst-case perturbed reward $\mr_{\piopt} = r - \pr_{\piopt}$ (\myprop{optimal_perturbations}).
We show robust sets 
 for the optimal $\pi_*(a|s)$ with fixed $Q(a,s)=r(a,s)$ values (blue star), where the optimal policy differs based on the regularization parameters $\alpha, \beta, \pi_0$ (see \cref{eq:optimal_policy}).  The robust set is more restricted with decreasing regularization strength (increasing $\beta$), implying decreased generalization.
Importantly, the slope of the robust set boundary can be linked to the action probabilities under the policy (see \mysec{visualizations_feasible}). 
}\label{fig:feasible_set_main}
\vspace*{-.1cm}
\end{figure*}

\headerv
\subsection{Visualizing the Robust Set}\label{sec:visualizations_feasible}
\headerv

In \myfig{feasible_set_main}, we visualize the robust set of perturbed rewards for the optimal policy in a two-dimensional action space for the \textsc{kl} or $\alpha$-divergence, various $\beta$, and a uniform or non-uniform prior policy $\pi_0$.
Since the optimal policy can be easily calculated in the single-step case, we consider fixed $Q_*(a,s) = r(a,s) = \{1.1, 0.8\}$ and show the robustness of the optimal $\pi_*(a|s)$, which differs based on the choice of regularization scheme using \cref{eq:optimal_policy}.
We determine the feasible set of $\prrobust$ using the constraint in \myprop{feasible} (see \myapp{feasible_plot} for details), and plot the modified reward $\mr_{\piopt}(a,s) = Q_*(a,s) - \proptpi(a,s)$ for each action.  

Inspecting the constraint for the adversary 
in \cref{eq:kl_feasible}, note that both reward increases $\prrobust(a,s) < 0$ and reward decreases $\prrobust(a,s) > 0$ contribute non-negative terms at each action, which either up- or down-weight the reference policy $\pi_0(a|s)$.  The constraint on their summation forces the adversary to trade off between perturbations of different actions in a particular state.   
Further, since the constraints in \myprop{feasible} integrate over the action space, the rewards for \textit{all} actions in a particular state must be perturbed together.  
While it is clear that increasing the reward in both actions preserves the inequality in \cref{eq:generalization},  
\cref{fig:feasible_set_main} also includes regions where one reward decreases.

For high regularization strength ($\beta=0.1$), we observe that the boundary of the feasible set is nearly linear, with the slope $-\frac{\pi_0(a_1|s)}{\pi_0(a_2|s)}$ based on the ratio of 
action probabilities in a policy that matches the prior.  
The boundary steepens for lower regularization strength.  {We can use the indifference condition to provide further geometric insight.   First, drawing a line from the origin with slope $1$ will intersect the feasible set at the worst-case modified reward (red star) in each panel, with $\mrrobust(a_1,s) = \mrrobust(a_2,s)$.  At this point, the slope of the tangent line yields the ratio of action probabilities in the regularized policy, as we saw for the $\beta = 0.1$ case.  With decreasing regularization as $\beta \rightarrow \infty$, the slope approaches $0$ or $-\infty$ for a nearly deterministic policy and a rectangular feasible region.} 

Finally, we show the $\alpha$-divergence robust set with $\alpha \in \{-1, 3\}$ and $\beta = 10$ in \cref{fig:feasible_set_main} (d)-(e) and (i)-(j), with further visualizations in \myapp{additional_feasible}.  
Compared to the \textsc{kl} divergence, we find a wider robust set boundary for $\alpha =-1$.   For $\alpha = 3$ and $\beta=10$, the boundary is more strict and we observe much smaller reward perturbations as the optimal policy becomes deterministic ($\pi(a_1|s) = 1$) for both reference distributions.    
However, in contrast to the unregularized deterministic policy, the reward perturbations $\proptpi(a,s) \neq 0$ are nonzero.
We provide a worked example in \myapp{deterministic_alpha}, and note that 
indifference does not hold in this case, $\mr_{\piopt}(a_1,s) \neq \mr_{\piopt}(a_2,s)$, due to the Lagrange multiplier $\lambda_*(a_2,s)>0$. 
\newcommand{\WidthS}{.2\textwidth}
\newcommand{\WidthM}{.22\textwidth}
\newcommand{\WidthMM}{.22\textwidth}
\newcommand{\WidthMMM}{.23\textwidth}
\newcommand{\HeightS}{22mm}
\newcommand{\HeightSS}{22mm}
\newcommand{\HeightM}{22mm}

\begin{figure*}[t]
\vspace*{-1.65cm}
\centering
\subcaptionbox{Optimal Policy \label{fig:perturb_opt_a}}{
\begin{minipage}{.485\textwidth}
\[\arraycolsep=.0025\textwidth
\begin{array}{cccc} 
  & \multicolumn{3}{c}{\text{(i) }\beta = 1.0} \\[.25ex]
\multirow{2}{*}[.25cm]{ \includegraphics[width=\WidthM,height=\HeightSS, trim=0 0 0 0, clip]{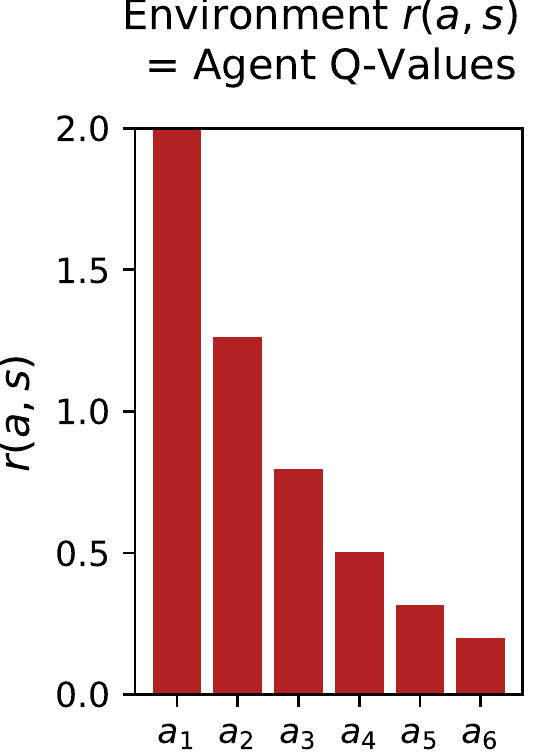} } & \rulesep
\includegraphics[width=\WidthS, height=\HeightS, trim=0 0 0 0, clip]{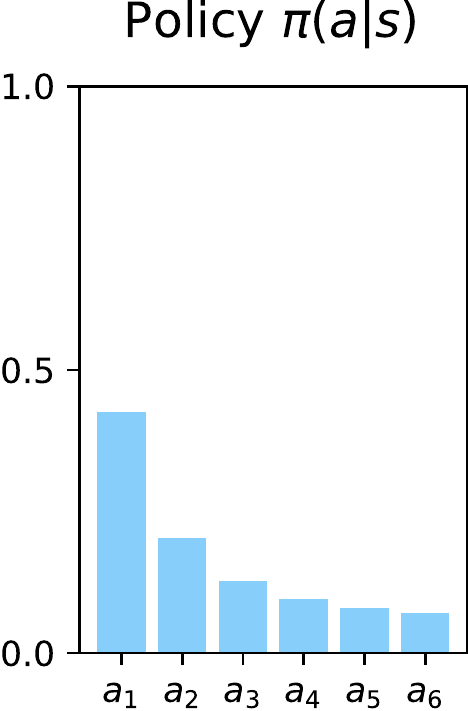} & 
\includegraphics[width=\WidthM, height=\HeightM, trim=0 0 0 0, clip]{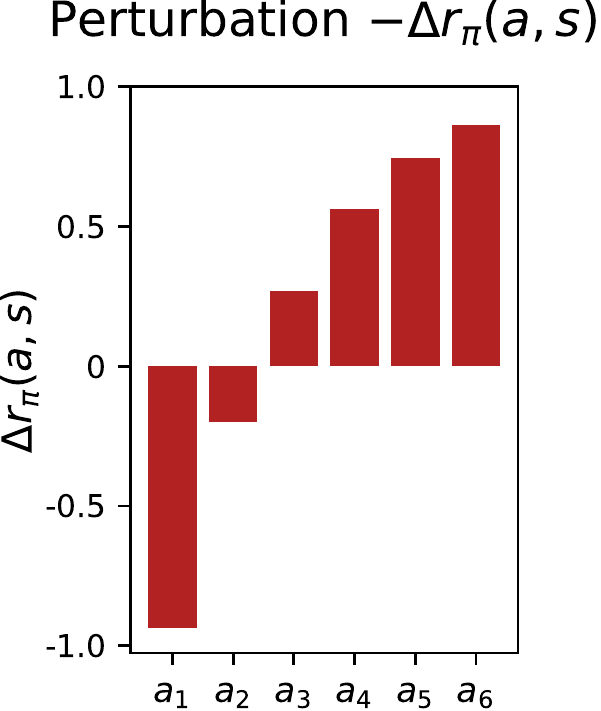} & 
\includegraphics[width=\WidthM, height=\HeightM, trim=0 0 0 0, clip]{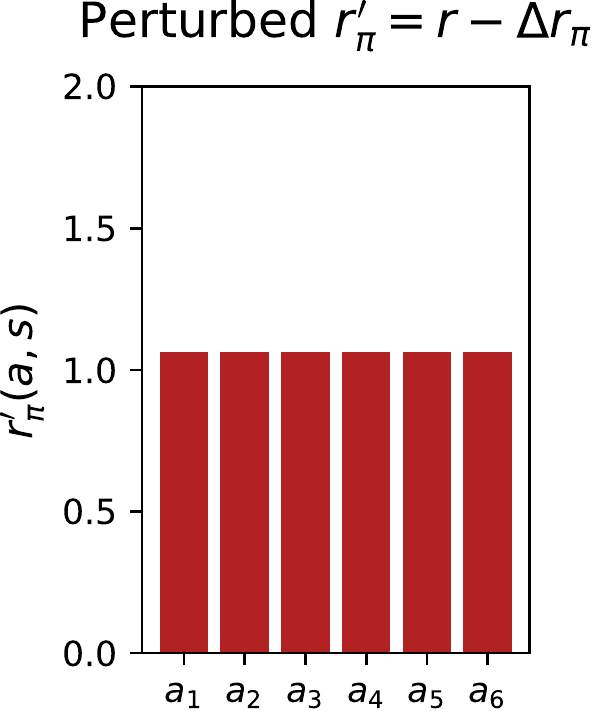} \\[.25ex]
 & \multicolumn{3}{c}{\text{(ii) } \beta = 10} \\[.25ex]
&\rulesep 
\includegraphics[width=\WidthS, height=\HeightS, trim=0 0 0 0, clip]{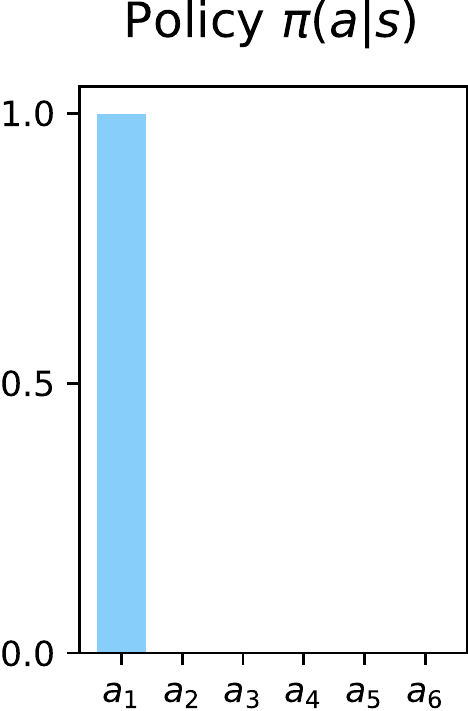} & 
\includegraphics[width=\WidthM, height=\HeightM, trim=0 0 0 0, clip]{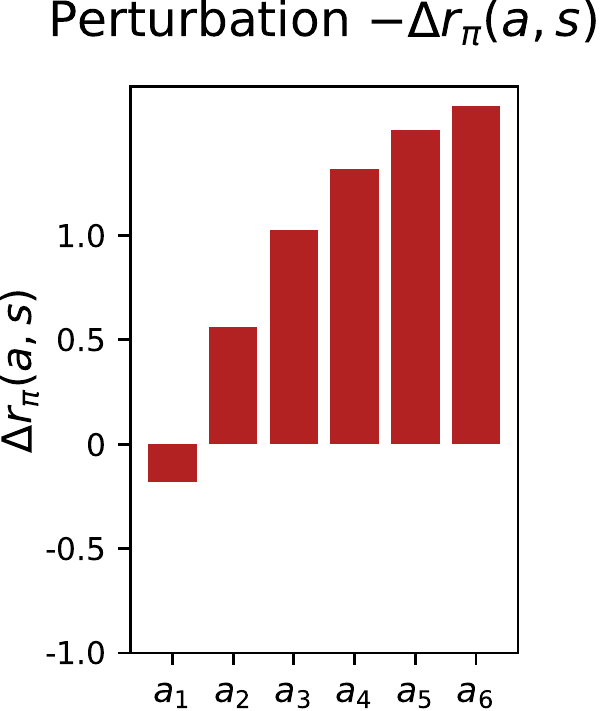} & 
\includegraphics[width=\WidthM, height=\HeightM, trim=0 0 0 0, clip]{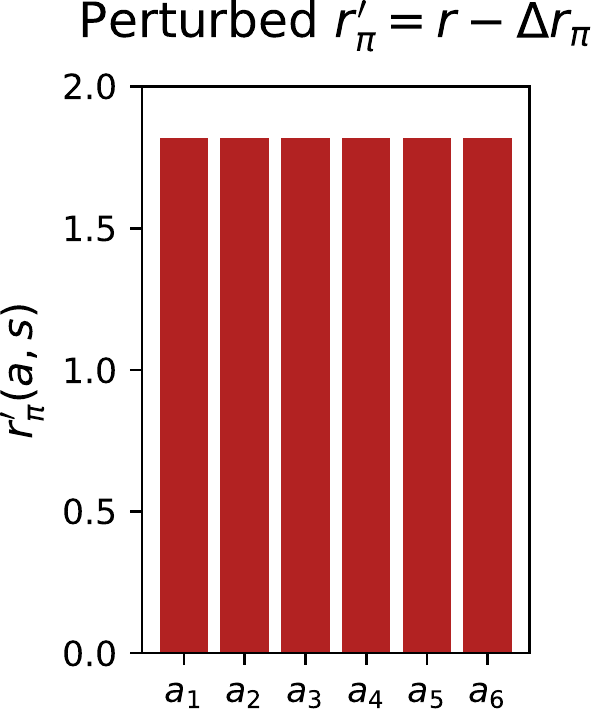} 
\end{array}
\]
\end{minipage}\hspace{.02\textwidth} 
}%
\subcaptionbox{Suboptimal Policy \label{fig:perturb_opt_a}}{
\begin{minipage}{.485\textwidth}
\[\arraycolsep=.005\textwidth
\begin{array}{cccc} 
 & \multicolumn{3}{c}{\text{(i) } \beta = 1.0} \\[.25ex]
 \multirow{2}{*}[.075cm]{
\includegraphics[width=\WidthM, height=\HeightSS, trim=0 0 0 0, clip]{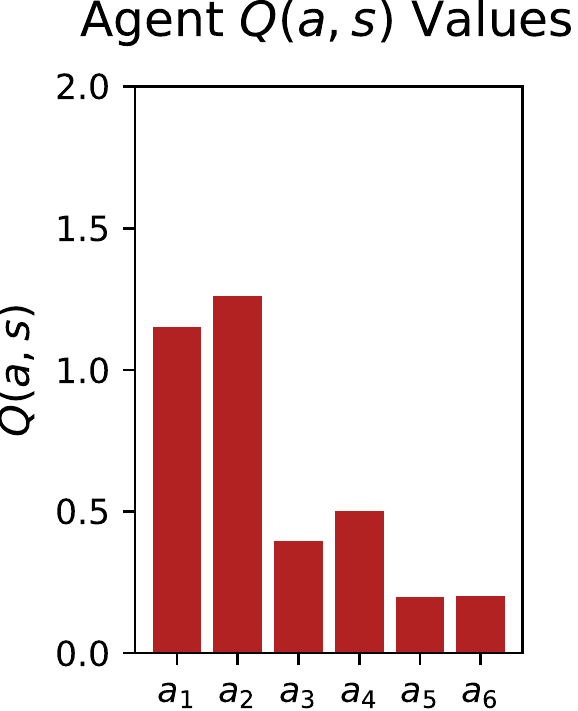}}  & \rulesep
\includegraphics[width=\WidthS, height=\HeightS, trim=0 0 0 0, clip]{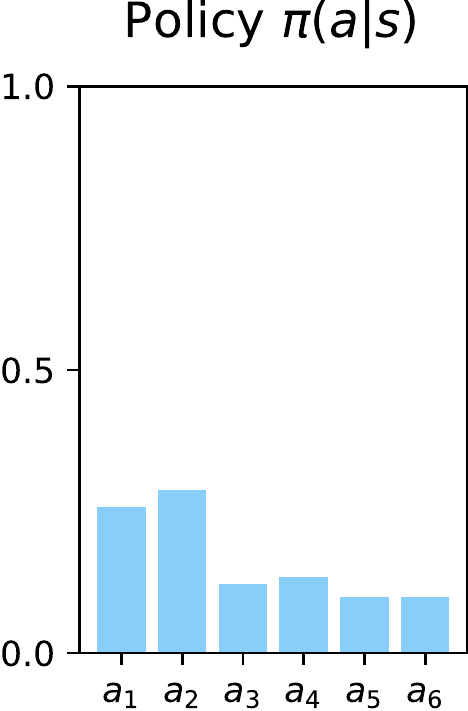} & 
\includegraphics[width=\WidthM, height=\HeightM, trim=0 0 0 0, clip]{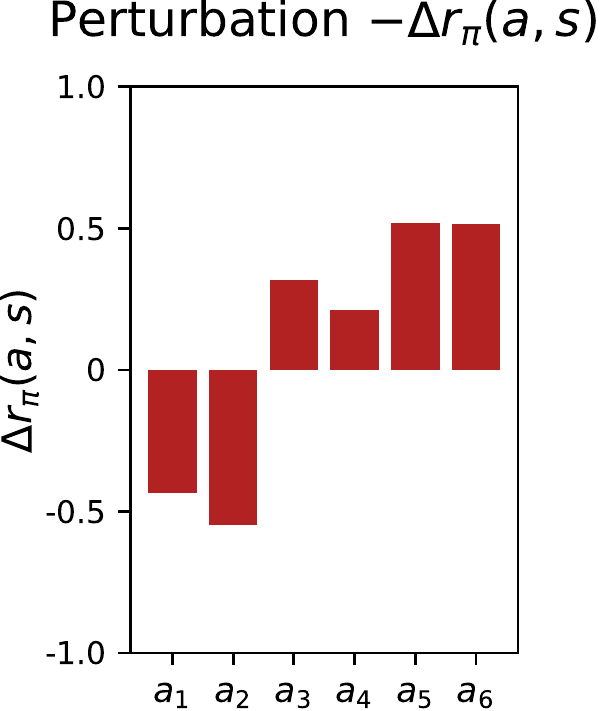} & 
\includegraphics[width=\WidthM, height=\HeightM, trim=0 0 0 0, clip]{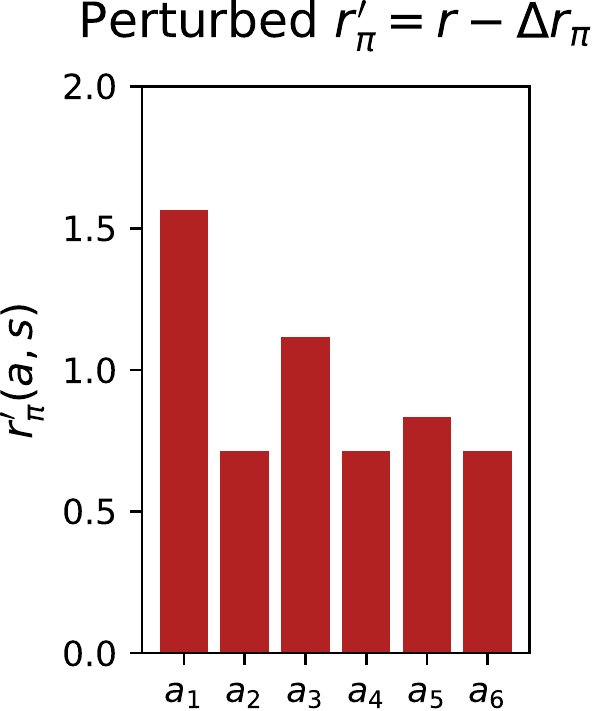} \\[.25ex]
 &  \multicolumn{3}{c}{\text{(ii) } \beta = 10} \\[.25ex]
 &\rulesep
\includegraphics[width=\WidthS, height=\HeightS, trim=0 0 0 0, clip]{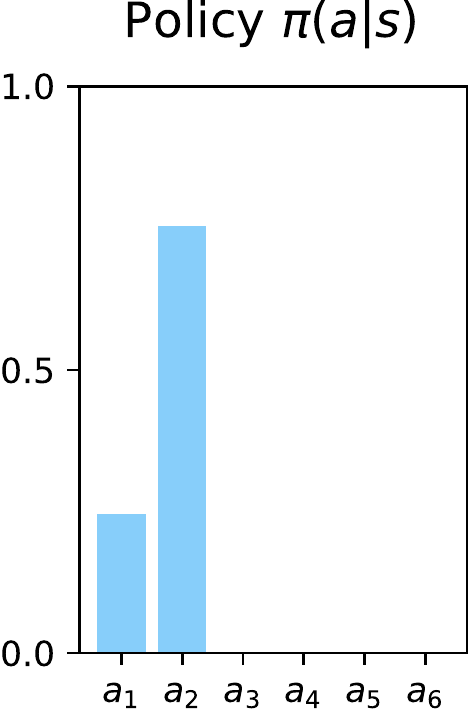} & 
\includegraphics[width=\WidthM, height=\HeightM, trim=0 0 0 0, clip]{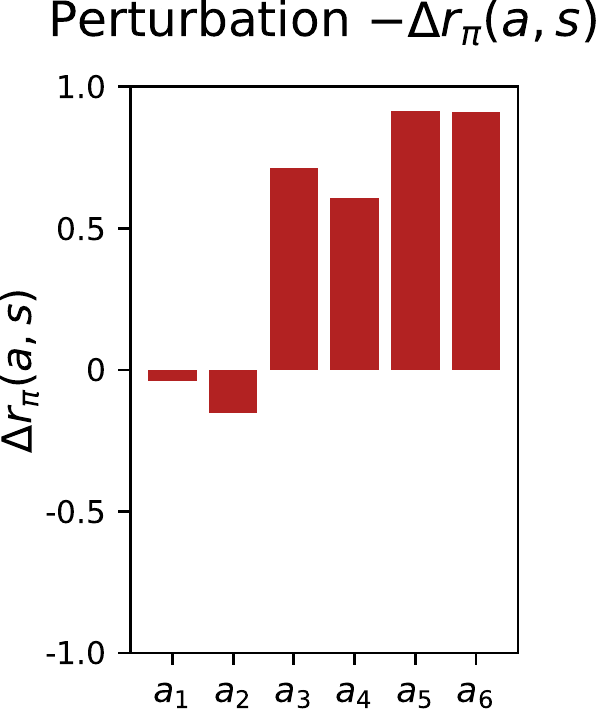} & 
\includegraphics[width=\WidthM, height=\HeightM, trim=0 0 0 0, clip]{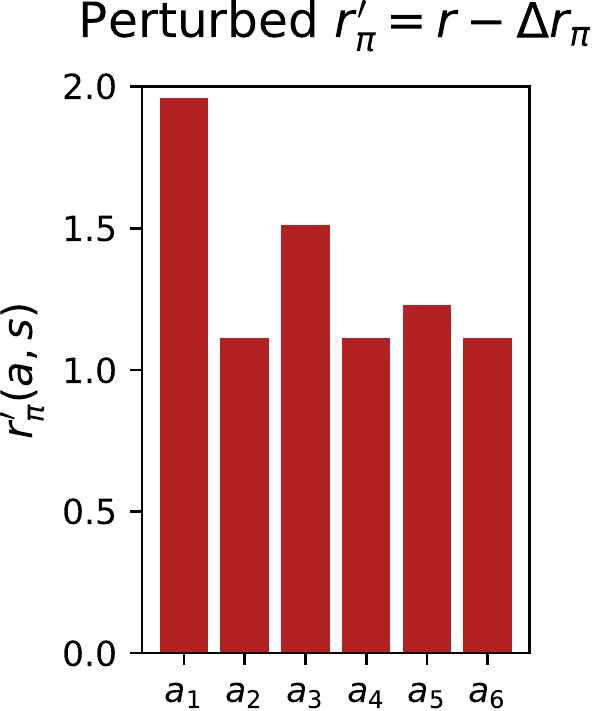} \\[.25ex]
\end{array}
\]
\end{minipage}
}
\vspace*{-0.15cm}
\caption{ \textbf{Single-Step Reward Perturbations} for \textsc{kl}~regularization to uniform reference policy $\pi_0(a|s)$.  $Q$-values in left columns are used for each $\beta$ in columns 2-4.  We report the worst-case $-\proptpi(a,s)$ (\cref{eq:optimal_perturbations}), so negative values correspond to reward decreases. 
\textbf{(a)} Optimal policy ($Q_*(a,s) = r(a,s)$) using the environment reward, where the perturbed $r^{\prime}(a,s) = c \, \, \forall a$ reflects the indifference condition. \textbf{(b)} Suboptimal policy where indifference does not hold.  In all cases, actions with high $Q(a,s)$ are robust to reward decreases. }\label{fig:perturb_opt}
\vspace*{-0.1cm}
\end{figure*}

\subsection{Visualizing the Worst-Case Reward Perturbations}\label{sec:visualizations_worst_case}
\headerv
In this section, we consider \textsc{kl} divergence regularization to a uniform reference policy, which is equivalent to Shannon entropy regularization but more appropriate for analysis, as we discuss in \mysec{entropy_vs_div}. 
\headerv
\paragraph{Single Step Case}
In \cref{fig:perturb_opt}, we plot the \textit{negative} worst-case reward perturbations $-\pr_{\pi}(a,s)$ and modified reward for a single step decision-making case.   For the optimal policy in \cref{fig:perturb_opt}(a), the perturbations match the advantage function as in \cref{eq:advantage} and the perturbed reward for all actions matches the value function $V_*(s)$. 
While we have shown in \mysec{any_policy} that any stochastic policy may be given an adversarial interpretation, we see in \cref{fig:perturb_opt}(b) that the indifference condition does not hold for suboptimal policies.  

The nearly-deterministic policy in \myfig{perturb_opt}(a)(ii) also provides intuition for the unregularized case as {$\beta \rightarrow \infty$}. 
Although we saw in \mysec{value_form} that $\pr_{\piopt}(a,s) = 0 \,\, \forall a$ 
in the unregularized case,  \cref{eq:dual_lp} and (\ref{eq:advantage}) suggest that $\lambdaplus = V_*(s) - Q_*(a,s)$ plays a similar role to the (negative) reward perturbations in \myfig{perturb_opt}(a)(ii), with $\lambda(a_1,s)=0$ and $\lambda(a,s)>0$ for all other actions.

\headerv
\paragraph{Sequential Setting}
In \cref{fig:pyco}(a), we consider a grid world where the agent receives $+5$ for picking up the reward pill, $-1$ for stepping in water, and zero reward otherwise.  We train an agent 
using tabular $Q$-learning and a discount factor $\gamma = 0.99$. 
We visualize the worst-case reward perturbations $\prpi(a,s) = \frac{1}{\beta}\log \frac{\pi(a|s)}{\pi_0(a|s)}$ 
in each state-action pair for policies trained with various regularization strengths in \cref{fig:pyco}(b)-(d).  While it is well-known that there exists a unique optimal policy for a given regularized \gls{MDP}, 
our results 
additionally display the adversarial strategies and resulting Nash equilibria which can be associated with a regularization scheme specified by $\Omega$, $\pi_0$, $\alpha$, and $\beta$ in a given \gls{MDP}.

Each policy implicitly hedges against an adversary that perturbs the rewards according to the values and colormap shown.  For example, inspecting the state to the left of the goal state in panel \cref{fig:pyco}(b)-(c), we see that the adversary reduces the immediate reward for moving right (in red, $\proptpi > 0$). 
 Simultaneously, the adversary raises the reward for moving up or down towards the water (in blue).
This is in line with the constraints on the feasible set, which imply that the adversary must balance reward decreases with reward increases in each state. 
In \myapp{optimal_policy_confirm} \cref{fig:pyco_indifference}, we certify the optimality of each policy using the path consistency conditions, which also confirms that the adversarial perturbations have rendered the agent indifferent across actions in each state.

Although we observe that the agent with high regularization in \cref{fig:pyco}(b) is robust to a strong adversary, the value 
of the regularized objective is also lower in this case.   As expected, lower regularization strength reduces robustness to negative reward perturbations.
With low regularization in \cref{fig:pyco}(d), 
the behavior of the agent barely deviates from the deterministic policy in the face of the weaker adversary.





\begin{figure*}[t]
\vspace*{-1.2cm}
\begin{subfigure}{.2\textwidth}
\includegraphics[width=\textwidth]{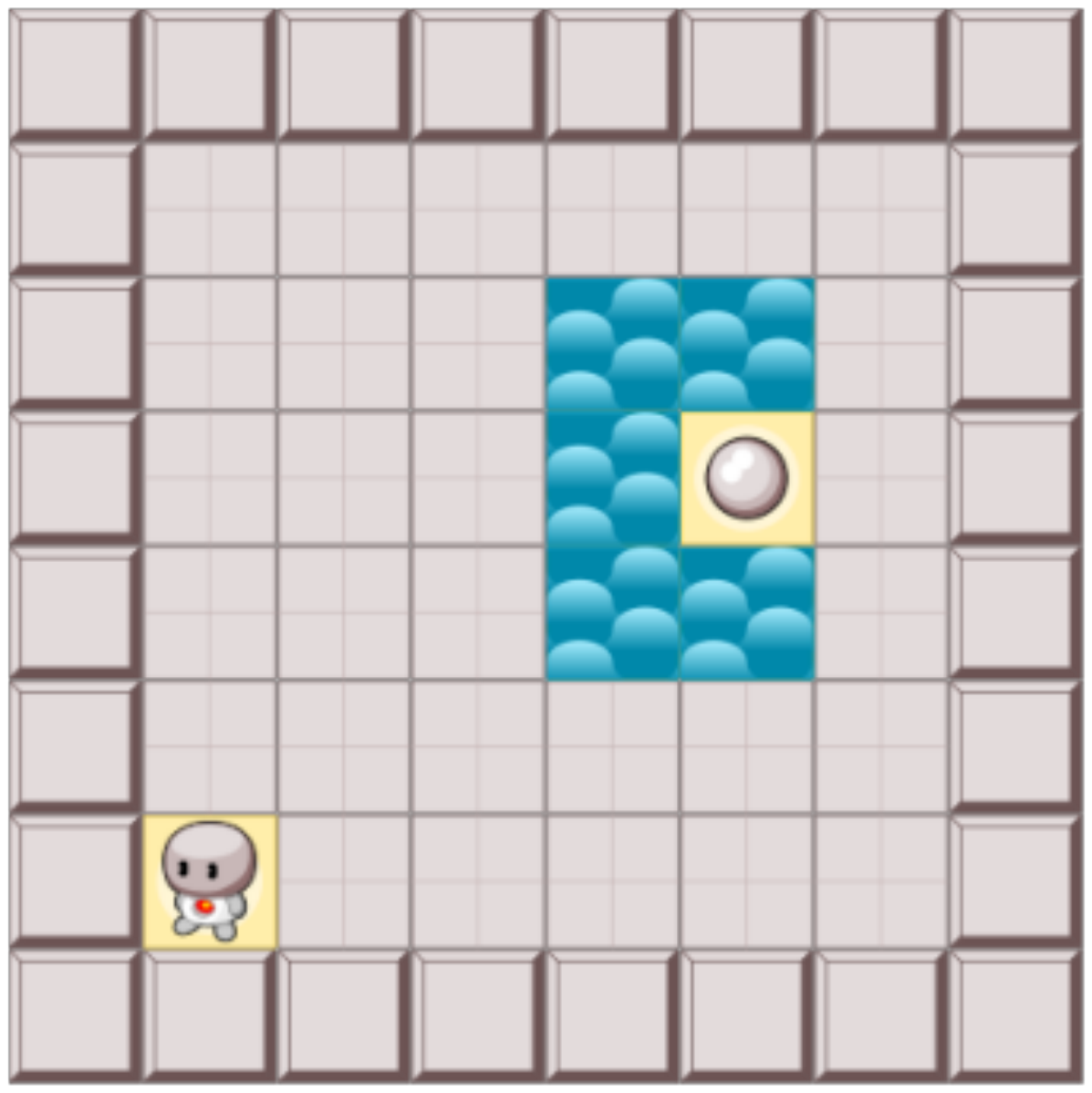} 
\caption{Environment \\
\small (uniform $\nu_0(s)$, \\
$r(a,s) = -1$ for water, \\
$r(a,s) = 5$ for goal) \label{fig:pycoa}}
\end{subfigure}
\begin{subfigure}{.26\textwidth}
\includegraphics[width=\textwidth]{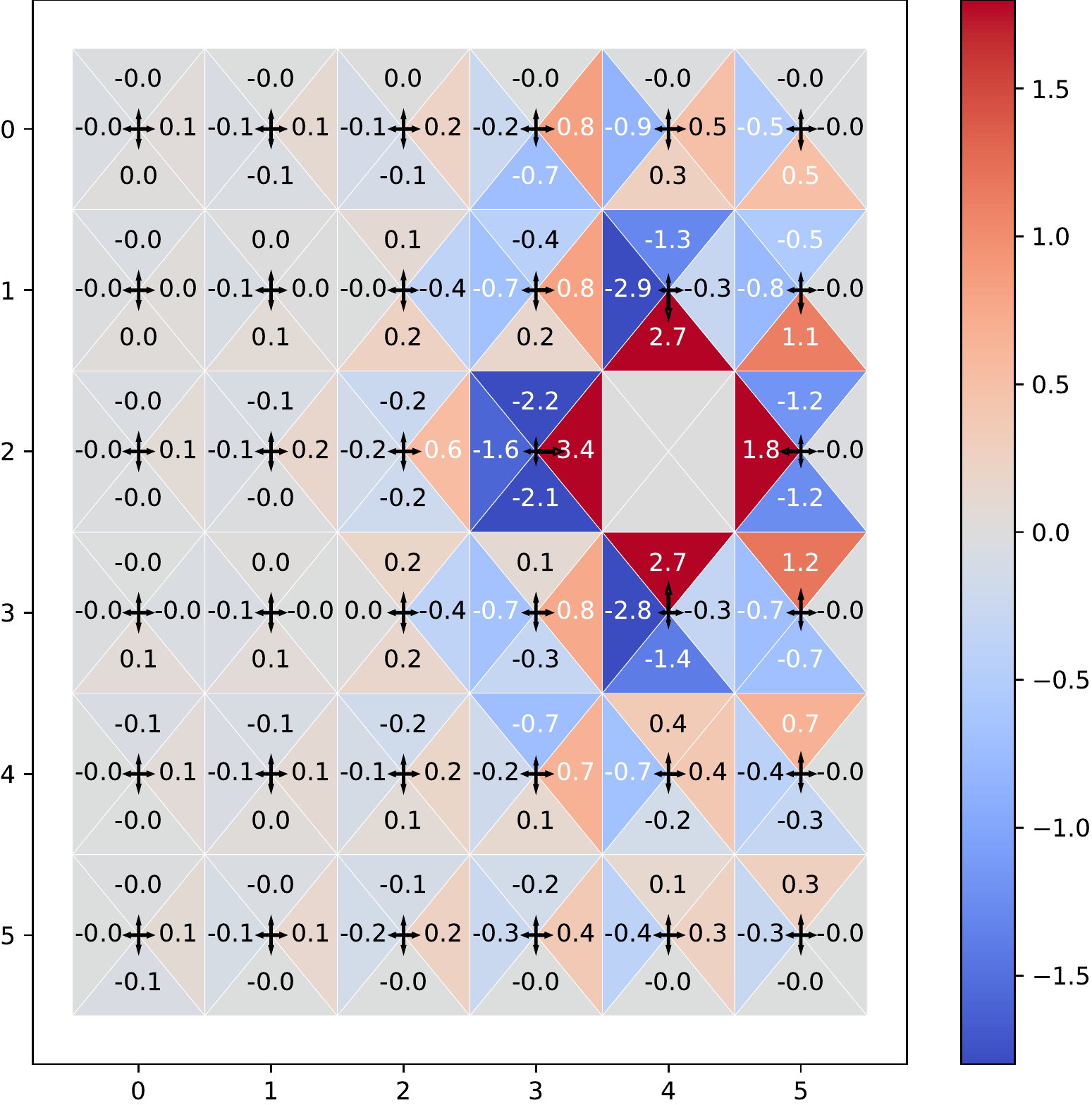}
\caption{$\beta = 0.2$ (High Reg.)}\label{fig:pyco_beta02m}
\end{subfigure}
\begin{subfigure}{.26\textwidth}
\includegraphics[width=\textwidth]{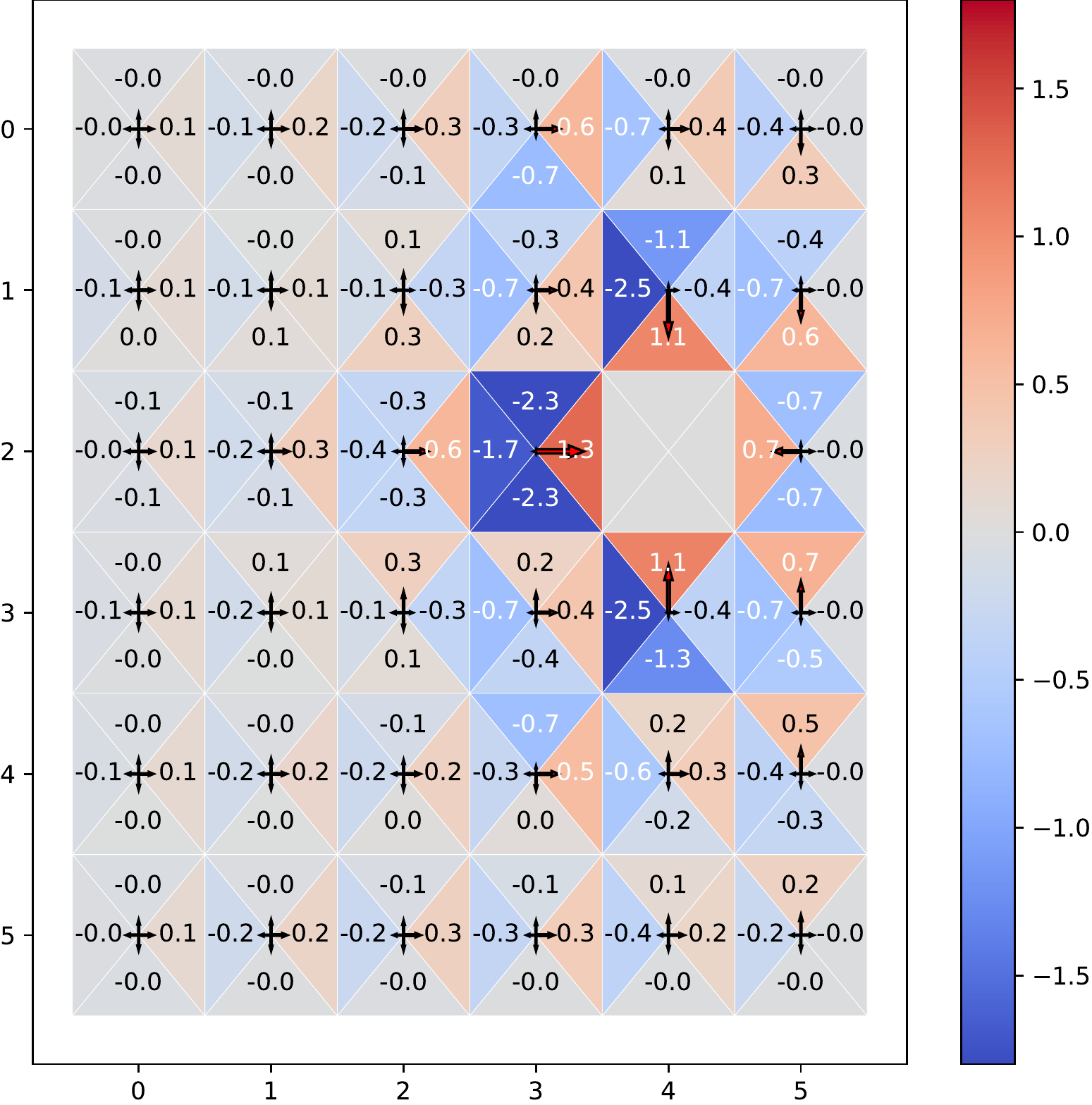}
\caption{$\beta = 1$}\label{fig:pyco_beta1m}
\end{subfigure}
\begin{subfigure}{.26\textwidth}
\includegraphics[width=\textwidth]{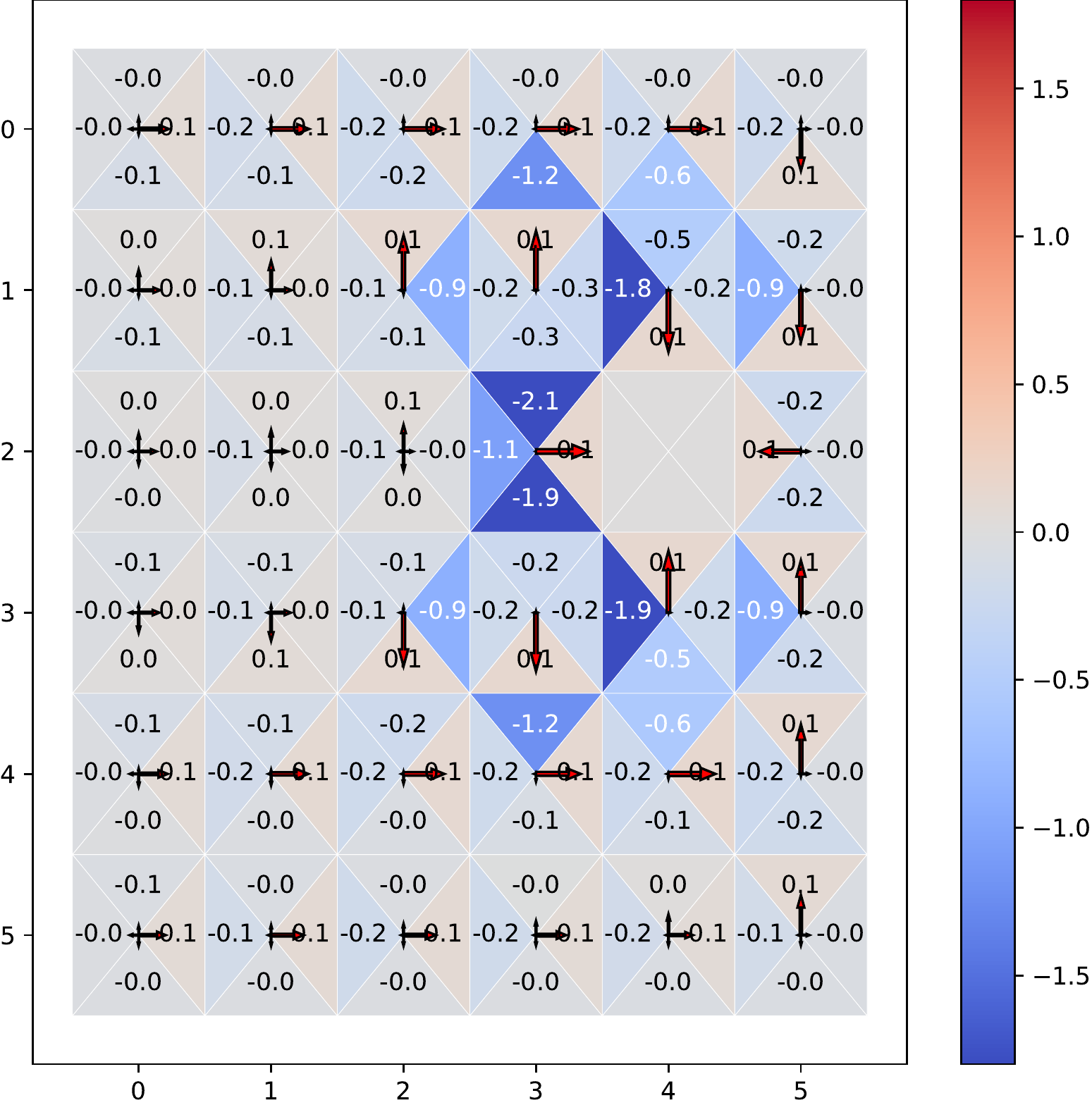}
\caption{$\beta = 10$ (Low Reg.)}\label{fig:pyco_beta10m}
\end{subfigure} 
\caption{\label{fig:pyco} \textbf{Grid-World Reward Perturbations.} \textbf{(a)} Sequential task. \textbf{(b)-(d)} Policies trained with Shannon entropy regularization of different strength. Action probabilities are indicated via relative arrow lengths; the goal-state is gray and without annotations. Colors indicate worst-case adversarial reward perturbations $\prpi(a,s) = \frac{1}{\beta}\log \frac{\pi(a|s)}{ \pi_0(a|s)}$ for each state and action (up, down, left, right) against which the policy is 
robust. 
Red (or positive $\prpi(a,s)$) implies that the policy is robust to reward decreases (up to the value shown) imposed by the adversary. These decreases are balanced by adversarial reward increases (blue) for other actions in the same state. 
We confirm the optimality of each policy using path consistency in \myapp{additional_results} \cref{fig:pyco_indifference}.}
\vspace*{-.1cm}
\end{figure*}

\headerv
\section{Discussion}\label{sec:entropy_vs_div} \label{sec:discussion}
\headerv
Our analysis in \mysec{adversarial_perturb}
unifies and extends several previous works analyzing the reward robustness of regularized policies \citep{ortega2014adversarial, eysenbach2021maximum, husain2021regularized}, as summarized in \mytab{novelty}.   \revhl{We highlight differences in the analysis of 
entropy-regularized policies 
in \mysec{entropy_vs_div}, and provide additional discussion of the closely-related work of \citet{derman2021twice} in \mysec{related}.
}

\headerv
 \subsection{Comparison with Entropy Regularization }\label{sec:entropy_vs_div}
 \headerv
As argued in \mysec{adversarial_perturb}, the worst-case reward perturbations preserve the value of the regularized objective function.  Thus, we should expect our robustness conclusions to depend on the exact form of the regularizer.



When regularizing with the Tsallis or Shannon ($\alpha = 1$) entropy, the worst-case reward perturbations become 
\small
\begin{align}
    \hspace*{-.2cm} \prpi(a,s) = \frac{1}{\beta} \log_{\alpha} \pi(a|s) + \frac{1}{\beta} \frac{1}{\alpha} \Big(1 - 
    \sum_{a \in \mathcal{A}} \pi(a|s)^{\alpha} \Big) \,. \label{eq:ptwise_perturb_ent}
\end{align}
\normalsize
See \myapp{entropy_nonnegative}, we also show that for $0 < \alpha \leq 1$,
these perturbations cannot decrease the reward, with $-\prpi(a,s) \geq 0$ and $\mr_{\pi}(a,s) \geq r(a,s)$.
In the rest of this section, we
argue that this property leads to several unsatisfying conclusions in previous work \citep{lee2019tsallis, eysenbach2021maximum}, which are resolved by using the \textsc{kl} and $\alpha$-divergence for analysis instead of the corresponding entropic quantities.\footnote{Entropy regularization corresponds to divergence regularization with the uniform reference distribution $\pi_0(a|s)$.}

First, 
this means that a Shannon entropy-regularized policy is only
`robust' to \textit{increases} in the reward function.   However, for useful generalization, we might hope that a policy still performs well when the reward function decreases in at least some states. 
 Including the reference distribution via divergence regularization resolves this issue,
 and we observe in \myfig{feasible_set_main} and \myfig{pyco} that the adversary 
 chooses reward decreases in some actions and increases in others.  For example, for the \textsc{kl} divergence, $\prpiopt(a,s) = \frac{1}{\beta}\log \frac{\piopt(a|s)}{\pi_0(a|s)} = Q_*(a,s) - V_*(s)$ implies robustness to reward decreases when $\piopt(a|s) > \pi_0(a|s)$ or $Q_*(a,s) > V_*(s)$.
 


Similarly, \citet{lee2019tsallis} note that for any $\alpha$,
\small
\begin{align*}
    \frac{1}{\beta}\Omega^{*(H_{\alpha})}_{\beta}(\discussiondual) = 
    \max \limits_{\pi \in \Delta^{|\mathcal{A}|}} \langle \pi, \discussiondual \rangle + \frac{1}{\beta} H_{\alpha}(\pi) \geq \discussiondual(a_{\max}, s) 
\end{align*}
\normalsize
\normalsize
where $a_{\max} = \argmax_a \discussiondual(a,s)$ and the Tsallis entropy $H_{\alpha}(\pi)$ equals the Shannon entropy for $\alpha = 1$. 
This soft value aggregation yields a result that is \textit{larger} than any particular $Q$-value. 
By contrast, for the $\alpha$-divergence, we show in \myapp{qmax} that for fixed $\beta$ and $\alpha > 0$,
 \small
\begin{equation}
\discussiondual(a_{\max},s) + \frac{1}{\beta}\frac{1}{\alpha} \log_{2-\alpha} \pi(a_{\max}| s) \leq \alphaconj(\discussiondual) \leq  \discussiondual(a_{\max},s) 
.
\end{equation}
\normalsize
This provides a more natural interpretation of the Bellman optimality operator $V(s) \leftarrow \alphaconj(Q)$ as a soft maximum operation.   As a function of $\beta$, we see in \myapp{alpha_results} and \ref{app:qmax} that the conjugate ranges between $\mathbb{E}_{\pi_0}[\discussiondual(a_{\max},s)] \leq \alphaconj(Q) \leq \discussiondual(a_{\max},s)$.

Finally, using entropy instead of divergence regularization also affects interpretations of the feasible set.  
\citet{eysenbach2021maximum} consider the same constraint as in \cref{eq:kl_feasible}, but without the reference $\pi_0(a|s)$
\small
\begin{align}
    \sum \limits_{a \in \mathcal{A}} \expof{\beta \cdot \pr(a,s)} \leq 1  \, \quad \forall s \in \mathcal{S} \, . \label{eq:ent_constraint}
\end{align}
\normalsize
This constraint suggests that the original reward function ($\pr = 0$) is not feasible for the adversary.  
More surprisingly, \citet{eysenbach2021maximum} App. A8 argues that \textit{increasing} regularization strength (with lower $\beta$) may lead to \textit{less} robust policies based on the constraint in \cref{eq:ent_constraint}.    In \myapp{eysenbach}, we discuss how including $\pi_0(a|s)$ in the constraint via divergence regularization (\myprop{feasible}) avoids this conclusion.   As expected, 
\cref{fig:feasible_set_main} shows that increasing regularization strength leads to more robust policies.

\headerv
\subsection{Related Algorithms}\label{sec:related}
\headerv
Several recent works provide algorithmic insights which build upon convex duality 
and complement or extend our analysis.  
\revhl{\citet{derman2021twice} 
derive practical iterative algorithms based on a general equivalence between robustness and regularization, which can be used to enforce
robustness to \textit{both} reward perturbations (through policy regularization) and changes in environment dynamics (through value regularization).
 For policy regularization, \citet{derman2021twice} translate
  the specification of a desired robust set 
  into
  a regularizer using the convex conjugate of the set indicator function.   In particular, \citet{derman2021twice} associate \textsc{kl} divergence or (scaled) Tsallis entropy policy regularization with the robust set $\mathcal{R}^{\Delta}_{\pi} \coloneqq \{ 
\prrob \, | \,
\prrob(a,s) \in [\frac{1}{\beta} \frac{1}{\alpha} \log_{\alpha} \frac{\pi(a|s)}{\pi_0(a|s)}, \infty) \, \, \forall \,  (a,s) \in  \mathcal{A} \times \mathcal{S}\}$.
Our analysis proceeds in the opposite direction, from regularization to robustness, 
using the conjugate of the divergence.    While the worst-case perturbations 
result in the same modified objective, 
our approach
yields a larger robust set with qualitatively different shape (see \myfig{related}).}

\citet{zahavy2021reward} analyze a general `meta-algorithm' which alternates between updates of the occupancy measure $\mu(a,s)$ and modified reward $\mr(a,s)$ in online fashion.
This approach highlights the fact that the modified reward $\mr_{\pi}$ or worst-case perturbations $\prpi$ change as the policy or occupancy measure is optimized.  
The results of \citet{zahavy2021reward} and \citet{husain2021regularized} hold for general convex \gls{MDP}s, which encompass common exploration and imitation learning objectives beyond the policy regularization setting
we consider.

As discussed in \mysec{optimal_policy}, path consistency conditions have been used to derive practical learning objectives in \citep{nachum2017bridging, chow2018path}.   These algorithms might be extended to general $\alpha$-divergence regularization via \cref{eq:path}, which involves an arbitrary reference policy $\pi_0(a|s)$ that can be learned adaptively as in \citep{teh2017distral, grau2018soft}.

Finally, previous work has used dual optimizations similar to \cref{eq:dual_reg} to derive alternative Bellman error losses \citep{dai2018sbeed, belousov2019entropic, nachum2020reinforcement, basserrano2021logistic}, highlighting how convex duality can be used to bridge between policy regularization and Bellman error aggregation \citep{belousov2019entropic, husain2021regularized}.

\headerv
\section{Conclusion}\label{sec:conclusion}
\headerv

In this work, we analyzed the robustness of convex-regularized \gls{RL} policies to worst-case perturbations of the reward function,
which implies generalization to adversarially chosen reward functions from within a particular robust set. 
We have characterized this robust set of reward functions for \textsc{kl} and $\alpha$-divergence regularization, provided a unified discussion of existing works on reward robustness, and clarified apparent differences in robustness arising from entropy versus divergence regularization.
{Our advantage function interpretation of the worst-case reward perturbations provides a complementary perspective on how $Q$-values appear as dual variables in convex programming forms of regularized \gls{MDP}s.  Compared to a deterministic, unregularized policy, 
a stochastic, regularized policy 
places probability mass on a wider set of actions and 
requires state-action value adjustments via the advantage function or adversarial reward perturbations. }
Conversely, a regularized agent, acting based on given $Q$-value estimates, implicitly hedges against the anticipated perturbations of an appropriate adversary.
\bibliographystyle{tmlr}
\bibliography{main}
\vspace*{-2cm}
\appendix
\onecolumn

\addcontentsline{toc}{section}{Appendix} 
\part{Appendix} 
\parttoc
\section{Implications of Conjugate Duality Optimality Conditions}\label{app:implications}
In this section, we show several closely-related results which are derived from the conjugate optimality conditions.   We provide additional commentary in later Appendix sections which more closely follow the sequence of the main text.

First, recall from \cref{sec:conj_intro} the definition of the conjugate optimizations for functions over $\mathcal{X} \coloneqq \mathcal{A} \times \mathcal{S}$.  We restrict $\mu \in \mudomainfn$ to be a nonnegative function over $\mathcal{X}$, so that
\begin{equation}
    \begin{aligned}
        \frac{1}{\beta} \Omega^{*}(\dualvar) = \sup_{\primalvar \in \mudomainfn} \big \langle \primalvar, \dualvar \big \rangle - \frac{1}{\beta} \Omega(\primalvar), 
    \end{aligned} \hspace*{.05\textwidth}
     \begin{aligned}
    \frac{1}{\beta} \Omega(\primalvar) = \sup_{\dualvar \in \rprimedomain} \big \langle \primalvar, \dualvar \big \rangle - \frac{1}{\beta} \Omega^{*}(\dualvar), 
 \end{aligned}
 \label{eq:conj_app}
\end{equation}
and the implied optimality conditions are
\ifx\omitindices\undefined
\begin{equation}
    \begin{aligned}
    \dualvar(a,s) = \frac{1}{\beta} \nabla_{\primalvar} \Omega(\primalvar) = \bigg( \nabla_{\dualvar} \frac{1}{\beta} \Omega^{*} \bigg)^{-1}(\primalvar) 
    \end{aligned} \hspace*{.05\textwidth}
     \begin{aligned}
        \primalvar(a,s) = \frac{1}{\beta} \nabla_{\dualvar} \Omega^*(\dualvar) = \bigg( \nabla_{\primalvar} \frac{1}{\beta} \Omega \bigg)^{-1}(\dualvar) \, .
 \end{aligned}
\label{eq:conj_optimality_app}
\end{equation}
\else
\begin{equation}
    \begin{aligned}
    \dualvar = \frac{1}{\beta} \nabla_{\primalvar} \Omega(\primalvar) = \bigg( \nabla_{\dualvar} \frac{1}{\beta} \Omega^{*} \bigg)^{-1}(\primalvar) 
    \end{aligned} \hspace*{.05\textwidth}
     \begin{aligned}
        \primalvar = \frac{1}{\beta} \nabla_{\dualvar} \Omega^*(\dualvar) = \bigg( \nabla_{\primalvar} \frac{1}{\beta} \Omega \bigg)^{-1}(\dualvar) \, .
 \end{aligned}
\label{eq:conj_optimality_app}
\end{equation}
\fi

\subsection{Proof of \myprop{optimal_perturbations} : Policy Form Worst-Case Reward Perturbations }\label{app:prop2}
\optimalperturbations*
\begin{proof}
The reward perturbations are defined via conjugate optimization for $\Omega(\primalvar)$ in \cref{eq:conj_app}, 
where $\pr \in \rprimedomain$.
The proposition follows directly from the optimality condition in \cref{eq:conj_optimality_app}, and we focus on the \ifx\omitindices\undefined
$\dualvar(a,s) = \frac{1}{\beta} \nabla_{\primalvar} \Omega(\primalvar)$ 
\else
$\dualvar = \frac{1}{\beta} \nabla_{\primalvar} \Omega(\primalvar)$ 
\fi
condition for convenience.
\end{proof}
In \myapp{conjugates}, we derive the explicit forms for the worst-case reward perturbations for \textsc{kl} and $\alpha$-divergence regularization from \mysec{any_policy} of the main text.  See \myapp{conjugates} \cref{table:table_refs} for references to particular derivations.

Note that we do not consider further 
constraints on $\mu$ in the conjugate optimization.    Instead, we view the Bellman flow constraints $\mu(a,s) \in \mathcal{M}$ (and normalization constraint $\mu(a,s) \in \Delta^{|\mathcal{A}| \times |\mathcal{S}|}$) as arising from the overall (regularized) \gls{MDP} optimization in \cref{eq:primal_lp} or (\ref{eq:primal_reg}), as we discuss in the next subsection.

\subsection{Optimal Policy in a Regularized MDP}\label{app:optimal_policy}

In \cref{lemma:flow} below, we show that the Bellman flow constraints \cref{eq:primal_lp}, which are enforced by the optimal Lagrange multipliers $V_*(s)$, ensure that the optimal $\mu_*(a,s)$ is normalized.  This suggests that an explicit normalization constraint is not required.    In \myprop{optimal_policy}, we then proceed to derive the optimal policy in a regularized \textsc{mdp} using the conjugate optimality conditions in \cref{eq:conj_optimality_app}.

\begin{restatable}[Flow Constraints Ensure Normalization]{lemma}{flow}
\label{lemma:flow} 
Assume that the initial state distribution $\nu_0(s)$ and transition dynamics $P(s\tick|a,s)$  are normalized, with $\sum_s \nu_0(s) = 1$ and $\sum_{s\tick} P(s\tick|a,s) = 1$.   If a state-occupancy measure satisfies the Bellman flow constraints $\mu(a,s) \in \mathcal{M}$, then it is a normalized distribution $\mu(a,s) \in \Delta^{|\mathcal{A}| \times |\mathcal{S}|}$.
\end{restatable}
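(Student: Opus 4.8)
The plan is to sum the Bellman flow constraint over all states $s \in \mathcal{S}$ and exploit the assumed normalization of $\nu_0$ and $P$ to collapse everything into a single scalar fixed-point equation for the total mass $Z \coloneqq \sum_{a,s} \mu(a,s)$, whose unique solution is $Z = 1$. Writing $\mu(s) \coloneqq \sum_a \mu(a,s)$, the flow constraint reads $\mu(s) = (1-\gamma)\nu_0(s) + \gamma \sum_{a',s'} P(s|a',s')\,\mu(a',s')$ for each $s$, and summing the left-hand side over $s$ immediately gives $Z$.

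The key step is the right-hand side. The first term yields $(1-\gamma)\sum_s \nu_0(s) = (1-\gamma)$ by normalization of the initial distribution. For the second term I would interchange the order of summation (legitimate since $\mathcal{S}$ and $\mathcal{A}$ are finite) to obtain $\gamma \sum_{a',s'} \mu(a',s')\big(\sum_s P(s|a',s')\big)$, and then apply $\sum_s P(s|a',s') = 1$ to collapse the inner sum, leaving exactly $\gamma \sum_{a',s'} \mu(a',s') = \gamma Z$. Combining the two terms produces $Z = (1-\gamma) + \gamma Z$, hence $(1-\gamma)Z = (1-\gamma)$, and since $\gamma \in (0,1)$ we may divide by the nonzero factor $1-\gamma$ to conclude $Z = 1$. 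Together with the nonnegativity constraint $\mu(a,s) \geq 0$ inherited from $\mathcal{M} \subset \mudomainfn$, this shows $\mu \in \Delta^{|\mathcal{A}| \times |\mathcal{S}|}$.

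There is no genuine obstacle here: the two points that warrant care are the swap of summation order, which is immediate for finite state and action spaces, and the use of $\gamma < 1$ to cancel the common factor $1-\gamma$. The latter is essential rather than cosmetic, since at $\gamma = 1$ the equation degenerates to $Z = 1 + Z$, which is precisely why the discount factor is assumed strictly less than one.
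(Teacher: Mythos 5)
Your proof is correct and follows essentially the same route as the paper: summing the flow constraints over all states, using normalization of $\nu_0(s)$ and $P(s\tick|a,s)$ after interchanging sums, and cancelling the factor $1-\gamma$ to conclude $\sum_{a,s}\mu(a,s)=1$. Your explicit remarks on the finite-sum interchange, the role of $\gamma<1$, and the nonnegativity inherited from $\mathcal{M}$ are minor elaborations of steps the paper leaves implicit.
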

\begin{proof}
Starting from the Bellman flow constraints $\sum_{a} \mu(a,s) = (1-\gamma) \nu_0(s) + \gamma \sum_{a\tick,s\tick} P(s|a\tick,s\tick) \mu(a\tick, s\tick)$,
we consider taking the summation over states $s \in \mathcal{S}$,
\scriptsize
\begin{align}
       \sum \limits_{a, s} \mu(a,s) &= (1-\gamma) \sum \limits_{s} \nu_0(s) + \gamma \sum \limits_{s} \sum \limits_{a\tick,s\tick} P(s|a\tick,s\tick) \mu(a\tick, s\tick) \overset{(1)}{=} (1-\gamma) + \gamma \sum \limits_{s} P(s|a\tick,s\tick) \sum \limits_{a\tick,s\tick}  \mu(a\tick, s\tick) \overset{(2)}{=} (1-\gamma) + \gamma \sum \limits_{a\tick,s\tick}  \mu(a\tick, s\tick) \nonumber
\end{align}
\normalsize
where $(1)$ uses the normalization assumption on $\nu_0(s)$ and the distributive law, and $(2)$ uses the normalization assumption on $P(s|a\tick,s\tick)$.  Finally, we rearrange the first and last equality to obtain
\begin{align}
  (1-\gamma)  \sum \limits_{a, s} \mu(a,s)  = (1-\gamma) \qquad \implies \qquad \sum \limits_{a, s} \mu(a,s) = 1
\end{align}
\normalsize
which shows that $\mu(a,s)$ is normalized as a joint distribution over $a \in \mathcal{A},s \in \mathcal{S}$, as desired.
\end{proof}

\begin{restatable}[Optimal Policy in Regularized MDP]{proposition}{optimalpolicy}
\label{prop:optimal_policy} 
Given the optimal value function $V_*(s)$ and Lagrange multipliers $\lambdaopt$, the optimal policy in the regularized \textsc{mdp} is given by
\small
\ifx\omitindices\undefined
\begin{align*}
    \mu_*(a,s) &= \frac{1}{\beta} \nabla \Omega^*\left( r(a,s) + \transitionvstar - V_*(s) + \lambdaopt  \right) = \Big( \nabla_{\primalvar} \frac{1}{\beta} \Omega \Big)^{-1}\Big( r(a,s) + \transitionvstar - V_*(s) + \lambdaopt\Big) .
\end{align*}
\else
\begin{align*}
    \mu_* &= \frac{1}{\beta} \nabla \Omega^*\left( r + \transitionvstarinds - V_* + \lambda_*  \right) = \Big( \nabla_{\primalvar} \frac{1}{\beta} \Omega \Big)^{-1}\Big( r + \transitionvstarinds - V_* + \lambda_* \Big) .
\end{align*}
\fi
\normalsize
This matches the conjugate conditions in \cref{eq:conj_optimality_app} using the arguments $\pr(a,s) \leftarrow r(a,s) + \transitionvstar - V_*(s) + \lambdaopt$.  
\end{restatable}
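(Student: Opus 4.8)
The plan is to obtain $\mu_*$ as a stationarity (\textsc{kkt}) condition of the Lagrangian relaxation in \cref{eq:dual_lagrangian}, and then to recognize the resulting equation as a direct instance of the conjugate optimality conditions in \cref{eq:conj_optimality_app}. First I would invoke strong duality (the same assumption already used to pass from \cref{eq:dual_lagrangian} to the regularized dual \cref{eq:dual_reg}), so that a saddle point $(\mu_*, V_*, \lambda_*)$ is attained and the order of $\max_{\mu}$ and $\min_{V,\lambda}$ may be exchanged. At this saddle point I would hold $V_*$ and $\lambda_*$ fixed at their optimal values and isolate the $\mu$-dependent part of \cref{eq:dual_lagrangian}, namely $\langle \mu, r + \gamma \transitionvstarinds - V_* + \lambda_* \rangle - \frac{1}{\beta} \Omega_{\pi_0}(\mu)$, since the term $(1-\gamma)\langle \nu_0, V_* \rangle$ carries no $\mu$-dependence.

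Next I would differentiate this expression with respect to $\mu(a,s)$ and set the gradient to zero. As the inner product is linear in $\mu$, its gradient is just the coefficient $r(a,s) + \gamma \transitionvstar - V_*(s) + \lambda_*(a,s)$, while the regularizer contributes $-\frac{1}{\beta}\nabla_{\mu}\Omega(\mu)$. The stationarity condition therefore reads $\frac{1}{\beta}\nabla_{\mu}\Omega(\mu_*) = r + \gamma \transitionvstarinds - V_* + \lambda_*$. Identifying the right-hand side with the perturbation argument $\pr \leftarrow r + \gamma \transitionvstarinds - V_* + \lambda_*$, this is exactly the left optimality condition in \cref{eq:conj_optimality_app}. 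Inverting it via the involution property of the conjugate (stated in \mysec{conj_intro}) then yields $\mu_* = \frac{1}{\beta}\nabla\Omega^*(\pr) = \big(\nabla_{\mu}\frac{1}{\beta}\Omega\big)^{-1}(\pr)$, which is the claim.

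The main subtlety I would address is that no explicit normalization or simplex constraint on $\mu$ enters the stationarity computation: normalization is instead guaranteed by the Bellman flow constraints through the multipliers $V_*$, as established in \mylemma{flow}, while nonnegativity is enforced by $\lambda_* \geq 0$ together with complementary slackness. I would also take care with the transition term, since $\gamma \transitionvstar = \gamma \sum_{s\tick} P(s\tick|a,s) V_*(s\tick)$ is a fixed function of $(a,s)$ once $V_*$ is held constant and hence contributes to the coefficient of $\mu(a,s)$ without introducing extra $\mu$-dependence. The only work beyond bookkeeping is confirming the saddle point exists and that differentiating through $\Omega_{\pi_0}$ (an expectation of the pointwise regularizer $\ptReg$) produces the stated gradient; both follow from convexity of $\Omega$ and the conjugate machinery already in place, so I expect the hardest part to be the careful justification of treating $V_*$ and $\lambda_*$ as constants when differentiating at the saddle point, rather than any difficult calculation.
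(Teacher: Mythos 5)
Your proposal is correct and takes essentially the same route as the paper's own proof: both isolate the $\mu$-dependent terms of the regularized Lagrangian at the optimal $(V_*, \lambda_*)$, derive the stationarity condition $\frac{1}{\beta}\nabla_{\mu}\Omega(\mu_*) = r(a,s) + \gamma \transitionvstar - V_*(s) + \lambdaopt$, and invert it via the conjugate optimality conditions in \cref{eq:conj_optimality_app}, with normalization delegated to the flow constraints through \mylemma{flow} and nonnegativity to $\lambdaopt$. Your explicit appeal to strong duality and the saddle point merely makes precise what the paper leaves implicit when it treats the inner maximization as the conjugate $\frac{1}{\beta}\Omega^{*}_{\pi_0,\beta}(r + \gamma \transitionvinds - V + \lambda)$ and then fixes $V_*, \lambdaopt$ from the dual.
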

\begin{proof}
In \mysec{mdp_regularized}, we moved from the regularized primal optimization (\cref{eq:primal_reg}) to the dual optimization (\cref{eq:dual_reg}) via the regularized Lagrangian
\small
\ifx\omitindices\undefined
\begin{align*}
  \min \limits_{V(s), \lambdaplus} \max \limits_{\mu(a,s)}  \, 
     (1-\gamma) \big\langle \nu_0(s), V(s) \big  \rangle + \langle \mu(a,s), r(a,s) + \gamma \transitionv -V(s) + \lambdaplus \rangle - \frac{1}{\beta} \Omega_{\pi_0}(\mu) 
\end{align*}
\else
\normalsize
\begin{align*}
  \min \limits_{V, \lambda} \max \limits_{\mu}  \, 
     (1-\gamma) \big\langle \nu_0, V \big  \rangle + \langle \mu, r + \gamma \transitionvinds -V + \lambda \rangle - \frac{1}{\beta} \Omega_{\pi_0}(\mu) 
\end{align*}
\fi
\normalsize
Note that the Lagrange multipliers $\lambdaplus$ enforce $\mu(a,s) \geq 0$ while $V(s)$ enforces the flow constraints and thus, by \cref{lemma:flow}, normalization of $\mu(a,s)$.   We recognized the final two terms as a conjugate optimization
\ifx\omitindices\undefined
\scriptsize
\begin{align}
   \hspace*{-.25cm} \frac{1}{\beta} \Omega^*_{\pi_0,\beta}\Big( r(a,s) + \gamma \transitionv -V(s) + \lambdaplus \Big) = \max \limits_{\mu(a,s)} \blangle \mu(a,s), r(a,s) + \gamma \transitionv -V(s) + \lambdaplus \brangle - \frac{1}{\beta} \Omega_{\pi_0}(\mu) \label{eq:conj_opt_mdp}
\end{align}
\normalsize
\else
\begin{align}
   \hspace*{-.25cm} \frac{1}{\beta} \Omega^*_{\pi_0,\beta}\Big( r + \gamma \transitionvinds -V + \lambda \Big) = \max \limits_{\mu} \blangle \mu, r + \gamma \transitionvinds -V + \lambda \brangle - \frac{1}{\beta} \Omega_{\pi_0}(\mu) \label{eq:conj_opt_mdp}
\end{align}
\fi
to yield a dual optimization over $V(s)$ and $\lambdaplus$ only in \cref{eq:dual_reg}.   After solving the dual optimization for the optimal $V_*(s)$, $\lambda_*(a,s)$, we can recover the optimal policy in the \textsc{mdp} using the optimizing argument of \cref{eq:conj_opt_mdp}.   Differentiating \cref{eq:conj_opt_mdp} and solving for $\mu$ yields 
\ifx\omitindices\undefined
$\nabla_{\mu} \frac{1}{\beta} \Omega(\mu) = r(a,s) + \gamma \transitionv -V(s) + \lambdaplus$ 
\else
$\nabla_{\mu} \frac{1}{\beta} \Omega(\mu) = r + \gamma \transitionvinds -V + \lambda$ 
\fi
which we invert to obtain \myprop{optimal_policy}.   The other equality follows from the conjugate optimality conditions in \cref{eq:conj_optimality_app}. 
\end{proof}

For $\alpha$-divergence regularization, the optimal policy or state-action occupancy is given by the `optimizing argument' column of \cref{table:conj_table}, up to reparameterization of $\pr(a,s) \leftarrow r(a,s) + \transitionvstar - V_*(s) + \lambdaopt$ as the dual variable.
In this case, note that the argument to the conjugate function accounts for the flow and nonnegativity constraints via $V_*(s)$ and $\lambdaopt$.
In particular, we have
\begin{align}
 &\text{Policy Reg., \myapp{conj3} \cref{eq:opt_pi_alpha}} \nonumber \\
   &\phantom{\text{policy reg}} \mu_*(a,s) = \mu(s) \pi_0(a|s) \exp_{\alpha} \left\{ \beta \cdot \left( r(a,s) + \gamma \transitionvstar - V_*(s) + \lambdaplus - \psipiopt \right) \right\}  \,\,\, \label{eq:opt_policy_app} \\[1.25ex]
 &\text{Occupa}\text{ncy  Reg., \myapp{conj4} \cref{eq:opt_mu_alpha}}& \nonumber    \\ 
    &\phantom{\text{policy reg}} \mu_*(a,s) = \mu_0(a,s) \exp_{\alpha} \left\{\beta \cdot \left( r(a,s) + \gamma \transitionvstar - V_*(s) + \lambdaplus \right) \right\}  
   \label{eq:opt_policy_mu_app}
\end{align}
\normalsize
where $\psipiopt = \frac{1}{\beta} \frac{1}{\alpha}( 1 - \sum_a \pi_0(a|s)^{1-\alpha} \pi_*(a|s)^{\alpha})$ appears from differentiating $\nabla \frac{1}{\beta} \Omega_{\pi_0}(\mu)$ as in \cref{eq:pr_normalizer} or \myapp{conj3}.   
This means that the optimal policy is only available in self-consistent fashion, with the normalization constant inside the $\exp_{\alpha}$,
which can complicate practical applications \citep{lee2019tsallis, chow2018path}.


\subsection{Proof of \myprop{advantage}:  Policy Form Worst-Case Perturbations match Value Form at Optimality} \label{app:advantage_pf}
The substitution $\pr(a,s) \leftarrow r(a,s) + \transitionvstar - V_*(s) + \lambdaopt$ above already anticipates the result in \myprop{advantage}, which links the reward perturbations for the optimal policy $\pr_{\pi_{*}}$ or state-action occupancy $\pr_{\mu_{*}}$ to the advantage function.   See the proof of \mythm{husain} in \myapp{husain} for additional context in relation to the value-form reward perturbations $\prv(a,s)$.


\advantage*
\begin{proof}
The result follows by combining \myprop{optimal_perturbations}, which states that 
\ifx\omitindices\undefined
$\prpi(a,s)=\nabla_{\mu} \frac{1}{\beta} \Omega(\mu)$, 
\else
$\prpi =\nabla_{\mu} \frac{1}{\beta} \Omega(\mu)$,
\fi
and \myprop{optimal_policy}, which implies $\nabla_{\mu} \frac{1}{\beta} \Omega(\mu_*) = r(a,s) + \gamma \transitionvstar -V_*(s) + \lambdaopt$ as a condition for optimality of $\{\mu_*(a,s),V_*(s),\lambdaopt\}$.
Thus, for the optimal policy $\piopt(a|s)$ and Lagrange multipliers  $\{V_*(s), \lambdaopt \}$, we have $\pr_{\pi_*}(a,s) =r(a,s) + \gamma \transitionvstar -V_*(s) + \lambdaopt$ and similarly for $\pr_{\mu_*}(a,s)$.

We can confirm this using the expression for the optimal policy in \cref{eq:opt_policy_app} and the worst-case reward perturbations in \mysec{any_policy}.  For example, recalling $\mu_*(a,s) = \mu(s) \pi_*(a|s)$, we can write the $\alpha$-divergence policy regularization case as $    \pr_{\pi_*}(a,s) = \frac{1}{\beta} \log_{\alpha} \frac{\pi_*(a|s)}{\pi_0(a|s)} + \psipiopt =  r(a,s) + \gamma \transitionvstar -V_*(s) + \lambdaplus  \, \pm \cancel{\psipiopt} .$
\end{proof}

\subsection{Path Consistency and KKT Conditions}\label{app:path_consistency}
Finally, note that the \textsc{kkt} optimality conditions \citep{boyd2004convex} include the condition that we have used in the proof of \myprop{advantage}.   At optimality, we have
\begin{align}
r(a,s) +  \gamma \transitionvstar - V_*(s) + \lambdaopt - \nabla \frac{1}{\beta}\Omega(\mu^*) = 0 .  \label{eq:kkt_path0}
\end{align}
This \textsc{kkt} condition is used to derive path consistency objectives in \citet{nachum2017bridging, chow2018path}.

For general $\alpha$-divergence policy regularization, we substitute $\nabla \frac{1}{\beta}\Omega_{\pi_0}^{(\alpha)}(\mu_*) = \pr_{\piopt}(a,s) = \frac{1}{\beta} \log_{\alpha} \frac{\pi_*(a|s)}{\pi_0(a|s)} + \psipiopt$ using \cref{eq:optimal_perturbations} (see \myapp{conj3} for detailed derivations).  This leads to the condition
\begin{align}
r(a,s) +  \gamma \transitionvstar - V_*(s) + \lambdaopt - \frac{1}{\beta} \log_{\alpha} \frac{\pi_*(a|s)}{\pi_0(a|s)} - \psipiopt = 0 , \label{eq:kkt_path0}
\end{align}
which matches \cref{eq:path}.
We compare our $\alpha$-divergence path consistency conditions to previous work in \myapp{indifference_all}.

\subsection{Modified Rewards and Duality Gap for Suboptimal Policies}\label{app:mr_suboptimal}
We can also use the conjugate duality of state-action occupancy measures and reward functions ($r(a,s)$ or $\mr(a,s)$) to express the optimality gap for a suboptimal $\mu(a,s)$.   Consider the regularized primal objective as a (constrained) conjugate optimization,
\ifx\omitindices\undefined
\begin{align}
    \obj_{\Omega, \beta}(r) :=  \frac{1}{\beta} \Omega^{*}(r) &= \max \limits_{\mu(a,s) \in \mathcal{M} } 
   \blangle \mu(a,s), r(a,s) \brangle - \dfrac{1}{\beta}   \Omega(\mu) \\
   &\geq \blangle \mu_{\mr}(a,s), r(a,s) \brangle - \dfrac{1}{\beta} \Omega(\mu_{\mr}) \label{eq:ineq} 
\end{align}
\else
\begin{align}
    \obj_{\Omega, \beta}(r) :=  \frac{1}{\beta} \Omega^{*}(r) &= \max \limits_{\mu \in \mathcal{M} } 
   \blangle \mu, r \brangle - \dfrac{1}{\beta}   \Omega(\mu) \\
   &\geq \blangle \mu_{\mr}, r \brangle - \dfrac{1}{\beta} \Omega(\mu_{\mr}) \label{eq:ineq} 
\end{align}
\fi
where the inequality follows from the fact that any feasible 
\ifx\omitindices\undefined
$\mu_{\mr}(a,s) \in \mathcal{M}$ 
provides a lower bound on the objective.   We use the notation $\mu_{\mr}(a,s)$ to anticipate the fact that, assuming appropriate domain considerations, we would like to associate this occupancy measure with a modified reward function $\mr(a,s)$ using the conjugate optimality conditions in \cref{eq:conj_optimality_app} with $\mr$ as the dual variable.   
\else
$\mu_{\mr} \in \mathcal{M}$
provides a lower bound on the objective.   We use the notation $\mu_{\mr}$ to anticipate the fact that, assuming appropriate domain considerations, we would like to associate this occupancy measure with a modified reward function $\mr$ using the conjugate optimality conditions in \cref{eq:conj_optimality_app} (with $\mr$ as the dual variable). 
\fi
In particular, for a given $\Omega$, we use the fact that 
\ifx\omitindices\undefined
$\mu_{\mr}(a,s) = \frac{1}{\beta}\nabla \Omega^{*}(\mr)$ 
\else
$\mu_{\mr} = \frac{1}{\beta}\nabla \Omega^{*}(\mr)$ 
\fi
to recognize the conjugate duality gap as a Bregman divergence.  Rearranging \cref{eq:ineq},
\ifx\omitindices\undefined
\small
\begin{align}
    \frac{1}{\beta} \Omega^{*}(r) - \blangle \mu_{\mr}(a,s), r(a,s) \brangle + \dfrac{1}{\beta} \Omega(\mu_{\mr}) &\geq 0 \\
    \frac{1}{\beta} \Omega^{*}(r) - \blangle \mu_{\mr}(a,s), r(a,s) \brangle + \blangle \mu_{\mr}(a,s), \mr(a,s) \brangle - \dfrac{1}{\beta} \Omega^*(\mr) &\geq 0 \\
    \frac{1}{\beta} \Omega^{*}(r) - \dfrac{1}{\beta} \Omega^*(\mr)  - \blangle r(a,s)- \mr(a,s),  \underbrace{ \frac{1}{\beta}\nabla \Omega^{*}(\mr)}_{\mu_{\mr(a,s)}} \brangle &\geq 0 \\
    D_{\Omega^{*}}[r : \mr ] &\geq 0
\end{align}
\else
\begin{align}
    \frac{1}{\beta} \Omega^{*}(r) - \blangle \mu_{\mr}, r \brangle + \dfrac{1}{\beta} \Omega(\mu_{\mr}) &\geq 0 \\
    \frac{1}{\beta} \Omega^{*}(r) - \blangle \mu_{\mr}, r \brangle + \blangle \mu_{\mr}, \mr \brangle - \dfrac{1}{\beta} \Omega^*(\mr) &\geq 0 \\
    \frac{1}{\beta} \Omega^{*}(r) - \dfrac{1}{\beta} \Omega^*(\mr)  - \blangle r- \mr,  \underbrace{ \frac{1}{\beta}\nabla \Omega^{*}(\mr)}_{\mu_{\mr}} \brangle &\geq 0 \\
    D_{\Omega^{*}}[r : \mr ] &\geq 0
\end{align}
\fi
\normalsize
where the last line follows from the definition of the Bregman divergence \citep{amari2016information}.
For example, using the \textsc{kl} divergence $\Omega(\mu) = D_{KL}[\mu : \mu_0]$, one can confirm that the Bregman divergence generated by $\Omega^{*}$ is also a \textsc{kl} divergence, $D_{KL}[\mu_{\mr} : \mu_{r^{*}}]$ \citep{belousovblog, banerjee2005clustering}. 

\section{Convex Conjugate Derivations}\label{app:conjugates}
In this section, we derive the convex conjugate associated with \textsc{kl} and $\alpha$-divergence regularization of the policy $\pi(a|s)$ or state-action occupancy $\mu(a,s)$. We summarize these results in \mytab{conj_table}, with 
equation references in \mytab{table_refs}.
In both cases,  we treat the regularizer $\frac{1}{\beta}\Omega(\mu)$ as a function of $\mu(a,s)$ and optimize over all states jointly,
\begin{align}
  \frac{1}{\beta} \Omega^{*}(\cdot) = \sup_{\primalvar \in \mudomainfn} \big \langle \primalvar, \cdot \big \rangle - \frac{1}{\beta} \Omega(\primalvar).
\end{align}
\normalsize
\ifx\omitindices\undefined
 These conjugate derivations can be used to reason about the optimal policy via $\frac{1}{\beta}\Omega^{*}\big( r(a,s) + \gamma \transitionv - V(s) - \lambdaplus \big)$, as argued in \myapp{optimal_policy}, or the worst case reward perturbations using $\frac{1}{\beta}\Omega^{*}( \pr )$.   We use $\pr$ as the argument or dual variable throughout this section.
\else
 These conjugate derivations can be used to reason about the optimal policy via $\frac{1}{\beta}\Omega^{*}\big( r + \gamma \transitionvinds - V - \lambda \big)$, as argued in \myapp{optimal_policy}, or the worst case reward perturbations using $\frac{1}{\beta}\Omega^{*}( \pr )$.   We use $\pr$ as the argument or dual variable throughout this section.
\fi

In \myapp{soft_value}, we derive alternative conjugate functions which optimize over the policy in each state, where $\pi(a|s) \in \Delta^{|\mathcal{A}|}$ is constrained to be a normalized probability distribution.   This conjugate arises in considering soft value aggregation or regularized iterative algorithms as in \mysec{mdp_regularized}.   
See \mytab{table_refs_q} for equation references.

 \begin{figure*}[h]
 \begin{minipage}{.52\textwidth}
 \centering
 \begin{tabular}{lccc}
    \toprule
    Divergence $\Omega$ & \text{ $\frac{1}{\beta}\Omega^*(\pr)$} 
    &  $\pr_{\mu}(a,s)$ & $\mu_{\pr}(a,s)$ \\ \midrule
    $\frac{1}{\beta} D_{KL}[\pi:\pi_0]$  & 
    \cref{eq:kl_pi_conj_closed_form} & \cref{eq:mu_for_kl_pi} & \cref{eq:opt_arg_kl_pi} \\
    $\frac{1}{\beta} D_{KL}[\mu:\mu_0]$ &
    \cref{eq:conj_mu_kl} & \cref{eq:worst_case_kl_mu} & \cref{eq:opt_pol_kl_mu}\\
    $\frac{1}{\beta} D_{\alpha}[\pi_0:\pi]$ & 
    \cref{eq:alpha_pi_conj_closed_form} 
    & \cref{eq:alpha_worst_case_reward_perturb} &  \cref{eq:opt_pi_alpha}  \\ 
    $\frac{1}{\beta} D_{\alpha}[\mu_0:\mu]$ &  
    \cref{eq:conj_mu_result}  & \cref{eq:alpha_pr_mu} &  \cref{eq:opt_mu_alpha} \\ \bottomrule
\end{tabular}
\captionof{table}{\small Equations for $\pr$ or `\textsc{mdp} Optimality' Conjugate \\
\centering ($\mu$ Optimization, No Normalization Constraint) } \label{table:table_refs}
\end{minipage}%
\begin{minipage}{.47\textwidth}
 \centering
 \vspace*{.97cm}
 \begin{tabular}{lccc}
    \toprule
    Divergence $\Omega$ 
    & \text{ $\frac{1}{\beta}\Omega^*(Q)$}
    &  $Q_{\pi}(a,s)$ & $\pi_{Q}(a,s)$ \\ \midrule
    $\frac{1}{\beta} D_{KL}[\pi:\pi_0]$  
    &  \cref{eq:kl_bellman} & \cref{eq:kl_q_arg_qopt} & \cref{eq:kl_pi_arg_qopt} \\
    $\frac{1}{\beta} D_{\alpha}[\pi_0:\pi]$ 
    &  \cref{eq:alpha_bellman} & \cref{eq:alpha_pi_arg_qopt} 
    &   \cref{eq:alpha_q_arg_qopt}  \\ \bottomrule
\end{tabular}
\captionof{table}{ \small Equations for `Soft Value' $V_*(s)$ Conjugates \\
\centering ($\pi$ Optimization, Normalization Constraint) } \label{table:table_refs_q}
\end{minipage}
\vspace*{-.3cm}
\end{figure*}
\subsection{KL Divergence Policy Regularization: $\frac{1}{\beta} \Omega^{*}_{\pi_0,\beta}(\pr)$} \label{app:conj1}
The conjugate function for \kl divergence from the policy $\pi(a|s)$ to a reference $\pi_0(a|s)$ has the following closed form
\begin{align}
    \frac{1}{\beta} \Omega^{*}_{\pi_0,\beta}(\pr) &=  \frac{1}{\beta} \sum \limits_{s} \mu(s)  \left(  \sum \limits_{s} \pi_0(a|s) \expof{\beta \cdot \pr(a,s)}  - 1 \right)
    \, . \label{eq:kl_pi_conj_closed_form}
\end{align}
\begin{proof}
We start from the optimization in \cref{eq:conjstar} or (\ref{eq:conj_app}), using conditional \textsc{kl} divergence regularization $\Omega_{\pi_0}(\mu) = \mathbb{E}_{\mu(s)}[D_{KL}[\pi:\pi_0]]$ as in \cref{eq:kl_def}.
\ifx\omitindices\undefined
\begin{align}
    \frac{1}{\beta} \Omega_{\pi_0,\beta}^{*}(\pr) &= \max \limits_{\mu(a,s)} \blangle \mu(a,s), \pr(a,s) \brangle - \frac{1}{\beta} \blangle \mu(a,s) \log \frac{\mu(a,s)}{\mu(s) \pi_0(a|s)} \brangle + \frac{1}{\beta} \sum \limits_{a,s} \mu(a,s) - \frac{1}{\beta} \sum \limits_{a,s} \mu(s) \pi_0(a|s)  \label{eq:full_conj_kl} \\
    \implies &\pr(a,s) = \nabla_{\mu} \left( \frac{1}{\beta} \blangle \mu(a,s) \log \frac{\mu(a,s)}{\mu(s) \pi_0(a|s)} \brangle + \frac{1}{\beta} \sum \limits_{a,s} \mu(a,s) - \frac{1}{\beta} \sum \limits_{a,s} \mu(s) \pi_0(a|s)  \right) \label{eq:optimality_condition_kl}
\end{align}
\else
\begin{align}
    \frac{1}{\beta} \Omega_{\pi_0,\beta}^{*}(\pr) &= \max \limits_{\mu} \blangle \mu, \pr \brangle - \frac{1}{\beta} \sum \limits_{a,s} \mu(a,s) \log \frac{\mu(a,s)}{\mu(s) \pi_0(a|s)} + \frac{1}{\beta} \sum \limits_{a,s} \mu(a,s) - \frac{1}{\beta} \sum \limits_{a,s} \mu(s) \pi_0(a|s)  \label{eq:full_conj_kl} \\
    \implies &\pr = \nabla_{\mu} \left( \frac{1}{\beta} \sum \limits_{a,s}  \mu(a,s) \log \frac{\mu(a,s)}{\mu(s) \pi_0(a|s)} + \frac{1}{\beta} \sum \limits_{a,s} \mu(a,s) - \frac{1}{\beta} \sum \limits_{a,s} \mu(s) \pi_0(a|s)  \right) \label{eq:optimality_condition_kl}
\end{align}
\fi
\paragraph{Worst-Case Reward Perturbations $\pr_{\pi}(a|s)$}  We can recognize \cref{eq:optimality_condition_kl} as an instance of \myprop{optimal_perturbations}.   
Noting that the marginal $\mu(s)$ depends on $\mu(a,s)$, we make use of the identity
$\sum_{s\tick} \frac{\partial}{\partial \mu(a,s)}\mu(s\tick) = \sum_{s\tick, a\tick} \frac{\partial}{\partial \mu(a,s)} \mu(a\tick, s\tick) = \sum_{s\tick, a\tick} \delta(a\tick,s\tick = a,s) = 1$ as in \citep{neu2017unified, lee2019tsallis}.   Differentiating, we obtain
\small
\begin{align}
    \pr(a,s) &=
    \frac{1}{\beta} \log \frac{\mu(a,s)}{\mu(s) \pi_0(a|s)} - \frac{1}{\beta} \underbrace{ \sum \limits_{a,s} \frac{\partial \mu(a,s)}{\partial \mu(a^\prime, s^\prime)} }_{1}
    + \frac{1}{\beta}  \sum \limits_{a,s} \frac{\mu(a,s)}{\mu(s)} \underbrace{ 
    \frac{\partial \sum_{a^{\prime\prime}} \mu(a^{\prime\prime},s)}{\partial \mu(a^\prime, s^\prime)}}
    _{\delta(s=s^{\prime})} 
    + \frac{1}{\beta} -  \frac{1}{\beta}  \sum \limits_{a,s}  
    \underbrace{
    \frac{\partial \sum_{a^{\prime\prime}} \mu(a^{\prime\prime},s)}{\partial \mu(a^\prime, s^\prime)}
    }_{\delta(s=s^{\prime})} \pi_0(a|s) \nonumber \\
    &= \frac{1}{\beta} \log \frac{\mu(a,s)}{\mu(s) \pi_0(a|s)} +  \frac{1}{\beta}  \sum \limits_{a} \frac{\mu(a,s)}{\mu(s)}  
    - \frac{1}{\beta} \sum \limits_{a} \pi_0(a|s) \nonumber \\
     &= 
     \frac{1}{\beta} \log \frac{\mu(a,s)}{\mu(s) \pi_0(a|s)} \, . \label{eq:mu_for_kl_pi}
\end{align}
\normalsize
In the last line, we 
assume $\sum_a \pi_0(a|s) = 1$ and note that $\sum_a \frac{\mu(a,s)}{\mu(s)} = \frac{\sum_a \mu(a,s)}{\mu(s)} = \frac{\mu(s)}{\mu(s)} =1 $. 

\paragraph{Optimizing Argument $\pir(a|s)$}  
We derive the conjugate function by  
solving for the optimizing argument $\mu(a,s)$ in terms of $\pr(a,s)$ and substituting back into \cref{eq:full_conj_kl}.  
Defining $\pi_{\pr}(a|s) = \frac{\mur(a,s)}{\mur(s)}$ as the policy induced by the optimizing argument $\mur(a,s)$ in \cref{eq:mu_for_kl_pi}, we can solve for $\pi_{\pr}$ to obtain
\begin{align}
    \pi_{\pr}(a|s) = \pi_0(a|s) \expof{\beta \cdot \pr(a,s)} \label{eq:opt_arg_kl_pi}
\end{align}
\paragraph{Conjugate Function $\frac{1}{\beta} \Omega^{*}_{\pi_0,\beta}(\pr)$}  We plug this back into the conjugate optimization \cref{eq:full_conj_kl}, with $ \mu_{\pr}(a,s) = \mu(s) \pi_{\pr}(a,s)$. 
Assuming $\pi_0(a|s)$ is normalized, we also have $\sum_{a,s} \mu(s) \pi_0(a|s)=1$ and 
\scriptsize
\begin{align}
   \hspace*{-.1cm} \frac{1}{\beta} \Omega_{\pi_0,\beta}^{*}(\pr) &=  \sum \limits_{a,s} \mu(s) \pi_{\pr}(a|s) \cdot \pr(a,s)  - \frac{1}{\beta} \Big( 
\sum \limits_{a,s} \mu(s)\pi_{\pr}(a|s) \cdot \cancel{\log} \cancel{\frac{\pi_0}{\pi_0}} \cancel{\exp}\{ \beta  \pr(a,s) \} - \mu(s)\pi_0(a|s) \expof{\beta \pr(a,s)} + \mu(s)\pi_0(a|s)
\Big) \nonumber 
\\
&= \frac{1}{\beta} \sum \limits_{s} \mu(s) \left( \sum \limits_{s} \pi_0(a|s) \expof{\beta \cdot \pr(a,s)}  - 1 \right)
\label{eq:final_kl_conj_pi}
\end{align}
\normalsize
as desired.  Note that the form of the conjugate function also depends on the regularization strength $\beta$.

Finally, we verify that our other conjugate optimality condition 
\ifx\omitindices\undefined
$\prpi(a,s) = \big( \nabla_{\dualvar} \frac{1}{\beta} \Omega_{\pi_0,\beta}^{*}\big)^{-1}(\mur)$, or $\mur(a,s) = \nabla_{\pr} \frac{1}{\beta} \Omega_{\pi_0,\beta}^{*}(\prpi)$ holds for this conjugate function.  Indeed, differentiating with respect to $\pr(a,s)$ above, we see that $\nabla_{\pr} \frac{1}{\beta} \Omega_{\pi_0,\beta}^{*}(\pr) = \mu(s) \pi_0(a|s) \exp\{ \beta \cdot \pr(a,s) \}$ matches $\mur(a,s)= \mu(s)\pir(a|s)$ via \cref{eq:opt_arg_kl_pi}.
\else
$\prpi = \big( \nabla_{\dualvar} \frac{1}{\beta} \Omega_{\pi_0,\beta}^{*}\big)^{-1}(\mur)$, or $\mur = \nabla_{\pr} \frac{1}{\beta} \Omega_{\pi_0,\beta}^{*}(\prpi)$ holds for this conjugate function.  Indeed, differentiating with respect to $\pr(a,s)$ above, we see that $\frac{\partial}{\partial \pr(a,s)} \frac{1}{\beta} \Omega_{\pi_0,\beta}^{*}(\pr) = \mu(s) \pi_0(a|s) \exp\{ \beta \cdot \pr(a,s) \}$ matches $\mur(a,s)= \mu(s)\pir(a|s)$ via \cref{eq:opt_arg_kl_pi}.
\fi
\normalsize
\end{proof}
Although our regularization $\Omega_{\pi_0}(\mu) = \mathbb{E}_{\mu(s)}[D_{KL}[\pi:\pi_0]]$ applies at each $\pi_0(a|s)$, we saw that performing the conjugate optimization over $\mu(a,s)$ led to an expression for a policy $\pir(a|s) = \mur(a,s)/\mu(s)$ that is {normalized} \textit{by construction} $\sum_a \pir(a|s) =  \frac{\sum_a \mur(a,s)}{\mu(s)}  =1$.  
Conversely, for a given normalized $\pi(a|s)$, the above conjugate conditions yield $\prpi(a,s)$ such that \cref{eq:opt_arg_kl_pi} is also normalized.

\headerv
\subsection{KL Divergence Occupancy Regularization: $\frac{1}{\beta} \Omega^{*}_{\mu_0,\beta}(\pr)$} \label{app:conj2}
\headerv
Nearly identical derivations as \myapp{conj1} apply when regularizing the divergence $\Omega_{\mu_0}(\mu) = D_{KL}[\mu:\mu_0]$ between the joint state-action occupancy $\mu(a,s)$ and a reference $\mu_0(a,s)$.  This leads to the following results
\begin{align}
   & \text{\textbf{Worst-Case Perturbations:}}   && &  \pr_{\mu}(a,s) &=  \frac{1}{\beta} \log \frac{\mu(a,s)}{\mu_0(a,s)} \label{eq:worst_case_kl_mu}  && \\[1.5ex]
   & \text{\textbf{Optimizing Argument:}}   && & \mu_{\pr}(a,s) &= \mu_0(a,s) \exp \left\{ \beta \cdot \pr(a,s) \right\} . \label{eq:opt_pol_kl_mu} && \\[1.5ex]
      & \text{\textbf{Conjugate Function:}}    && & \frac{1}{\beta} \Omega^{*}_{\mu_0,\beta}(\pr) &=  \frac{1}{\beta} \sum \limits_{a,s} \mu_0(a,s)  \expof{\beta \cdot \pr(a,s)}  - \frac{1}{\beta} \sum \limits_{a,s} \mu_0(a,s)  \label{eq:conj_mu_kl} && 
  \end{align}
Such regularization schemes appear in \gls{REPS} \citep{peters2010relative}, while \citet{basserrano2021logistic} consider both policy and occupancy regularization. 




\headerv
\subsection{$\alpha$-Divergence Policy Regularization: $\frac{1}{\beta} \Omega^{*(\alpha)}_{\pi_0,\beta}(\pr)$ } \label{app:conj3}
\headerv
The conjugate for $\alpha$-divergence regularization of the policy $\pi(a|s)$ to a reference $\pi_0(a|s)$ takes the form
\small
\begin{align}
    \frac{1}{\beta} \Omega^{*(\alpha)}_{\pi_0,\beta}(\pr) &= \frac{1}{\beta}\frac{1}{\alpha} \sum \limits_{a,s}  \mu(s) \bigg( \pi_0(a|s) \bigg[ 1 + \beta (\alpha-1) \big( \pr(a,s) - \myconst \big) \bigg]^{\frac{\alpha}{\alpha-1}}  -1 \bigg)
    + \sum \limits_{s} \mu(s) \, \myconst \label{eq:alpha_pi_conj_closed_form}   .
\end{align}
\normalsize
where $\psi_{\pr}(s;\beta)$ is a normalization constant for the optimizing argument $\pi_{\pr}(a|s)$ corresponding to $\pr(a,s)$.

We provide explicit derivations of the conjugate function instead of leveraging $f$-divergence duality \citep{belousov2019entropic, nachum2020reinforcement} in order to account for the effect of optimization over the joint distribution $\mu(a,s)$.   We will see in \myapp{alpha_conj_norm} see that the conjugate in \cref{eq:alpha_pi_conj_closed_form} takes a similar to form as the conjugate with restriction to normalized $\pi(a|s) \in \Delta^{|\mathcal{A}|}$, where this constraint is not captured using $f$-divergence function space duality.

\begin{proof}  We begin by writing the $\alpha$-divergence $\Omega^{(\alpha)}_{\pi_0}(\mu) = \mathbb{E}_{\mu(s)}[D_{\alpha}[\pi_0:\pi]]$ as a function of the occupancy measure $\mu$, with $\pi(a|s) = \frac{\mu(a,s)}{\mu(s)}$.   As in \myprop{optimal_perturbations}, the conjugate optimization implies an optimality condition for $\pr(a,s)$.
\ifx\omitindices\undefined
\scriptsize
\begin{align}
    \hspace*{-.1cm} \frac{1}{\beta} \Omega^{*(\alpha)}_{\pi_0,\beta}(\pr) &= \max \limits_{\mu(a,s)} \blangle \mu(a,s), \pr(a,s) \brangle - \frac{1}{\beta}\frac{1}{\alpha (1-\alpha)} \left( (1-\alpha) \sum \limits_{a,s} \mu(s) \pi_0(a|s) + \alpha \sum \limits_{a,s} \mu(a,s) - \sum \limits_{a,s} \mu(s) \pi_0(a|s)^{1-\alpha} \left(\frac{\mu(a,s)}{\mu(s)}\right)^{\alpha} \right) \label{eq:conj_opt_pi_alpha} \\
    \implies &\pr(a,s) = \nabla_{\mu}  \frac{1}{\beta}\frac{1}{\alpha (1-\alpha)} \left( (1-\alpha) \sum \limits_{a,s} \mu(s) \pi_0(a|s) + \alpha \sum \limits_{a,s} \mu(a,s) - \sum \limits_{a,s} \mu(s) \pi_0(a|s)^{1-\alpha} \left(\frac{\mu(a,s)}{\mu(s)}\right)^{\alpha}  \right) \label{eq:optimality_cond_alpha}
\end{align}
\normalsize
\else
\footnotesize
\begin{align}
    \hspace*{-.1cm} \frac{1}{\beta} \Omega^{*(\alpha)}_{\pi_0,\beta}(\pr) &= \max \limits_{\mu} \blangle \mu, \pr \brangle - \frac{1}{\beta}\frac{1}{\alpha (1-\alpha)} \left( (1-\alpha) \sum \limits_{a,s} \mu(s) \pi_0(a|s) + \alpha \sum \limits_{a,s} \mu(a,s) - \sum \limits_{a,s} \mu(s) \pi_0(a|s)^{1-\alpha} \left(\frac{\mu(a,s)}{\mu(s)}\right)^{\alpha} \right) \label{eq:conj_opt_pi_alpha} \\
    \implies &\pr = \nabla_{\mu}  \frac{1}{\beta}\frac{1}{\alpha (1-\alpha)} \left( (1-\alpha) \sum \limits_{a,s} \mu(s) \pi_0(a|s) + \alpha \sum \limits_{a,s} \mu(a,s) - \sum \limits_{a,s} \mu(s) \pi_0(a|s)^{1-\alpha} \left(\frac{\mu(a,s)}{\mu(s)}\right)^{\alpha}  \right) \label{eq:optimality_cond_alpha}
\end{align}
\normalsize
\fi

\paragraph{Worst-Case Reward Perturbations $\pr_{\pi}(a|s)$}  
We now differentiate with respect to $\mu(a,s)$, using similar derivations as in \citet{lee2019tsallis}.   While we have already written \cref{eq:optimality_cond_alpha} using $\mu(a,s) = \mu(s) \pi(a|s)$, we again emphasize that $\mu(s)$ depends on $\mu(a,s)$.
Differentiating, we obtain
\footnotesize
\begin{align}
\pr(a,s) &= \nabla_{\mu} \alphanmu   \label{eq:pr_equals_grad} \\
&= \frac{1}{\beta}\frac{1}{\alpha(1-\alpha)}  \sum \limits_{a\tick,s\tick}  \frac{\partial}{\partial\mu(a,s)} \bigg( (1-\alpha) \mu(s\tick)\pi_0(a\tick|s\tick) + \alpha \mu(a\tick, s\tick) - \mu(s\tick)^{1-\alpha} \pi_0(a\tick|s\tick)^{1-\alpha} \mu(a\tick,s\tick)^{\alpha} \bigg)  \nonumber \\
&=  \frac{1}{\beta}\frac{1}{\alpha(1-\alpha)} \bigg( 
(1-\alpha) \sum \limits_{s\tick} \underbrace{\frac{\partial \sum_{a\tick} \mu(a\tick, s\tick)}{\partial\mu(a,s)}}_{\delta(s = s\tick)} \sum \limits_{a\tick} \pi_0(a\tick|s)  + \alpha 
\sum \limits_{a\tick,s\tick} \underbrace{\frac{\partial \mu(a\tick,s\tick)}{\partial \mu(a,s)}}_{\delta(a\tick,s\tick = a,s)}  - 
 \nonumber \\ 
&\phantom{==} -\sum \limits_{a\tick,s\tick}  \alpha \mu(a\tick,s\tick)^{\alpha-1} \underbrace{\frac{\partial \mu(a\tick,s\tick)}{\partial \mu(a,s)}}_{\delta(a\tick,s\tick = a,s)} \mu(s\tick)^{1-\alpha} \pi_0(a\tick|s\tick)^{1-\alpha}- (1-\alpha) \sum \limits_{a\tick,s\tick} \mu(s\tick)^{-\alpha} \underbrace{\frac{\partial \sum_{a\tick} \mu(a\tick, s\tick)}{\partial \mu(a,s)}}_{\delta(s = s\tick)} \pi_0(a\tick|s\tick)^{1-\alpha} \mu(a\tick,s\tick)^{\alpha} \bigg) \nonumber \\
&= \frac{1}{\beta}\frac{1}{\alpha(1-\alpha)} \bigg( (1-\alpha) {\sum_{a} \pi_0(a|s)} + \alpha - \alpha \left(\frac{\mu(a,s)}{\mu(s) \pi_0(a|s)} \right)^{\alpha-1} - (1-\alpha) \sum \limits_{a\tick}   \pi_0(a|s)^{1-\alpha}\left(\frac{\mu(a,s)}{\mu(s)} \right)^{\alpha}  \bigg)   \nonumber \\
&\overset{(1)}{=} \frac{1}{\beta}\frac{1}{\alpha} 
+ \frac{1}{\beta}\frac{1}{1-\alpha}
   - \frac{1}{\beta}\frac{1}{1-\alpha} \left(\frac{\pi(a|s)}{\pi_0(a|s)} \right)^{\alpha-1} - \frac{1}{\beta}\frac{1}{\alpha} \sum \limits_{a} \pi_0(a|s)^{1-\alpha} \pi(a|s)^{\alpha}  \nonumber \\ 
   &= \frac{1}{\beta} \, 
 \frac{1}{\alpha-1}  \left( \Big(\frac{\pi(a|s)}{\pi_0(a|s)} \Big)^{\alpha-1} - 1 \right) 
   + \frac{1}{\beta}  \frac{1}{\alpha} \left( 
    1 - \sum \limits_{a} \pi_0(a|s)^{1-\alpha} \pi(a|s)^{\alpha}
   \right) \label{eq:worst_case_reward_perturb}
\end{align} 
\normalsize
where we have rewritten $(1)$ in terms of the policy $\pi(a|s) = \frac{\mu(a,s)}{\mu(s)}$ and assumed $\pi_0(a|s)$ is normalized.   

Letting $\pir(a|s)$ indicate the policy which is in dual correspondence with $\pr(a,s)$, we would eventually like to invert the equality in \cref{eq:worst_case_reward_perturb} to solve for $\pi(a|s)$ in each $(a,s)$.   However,  the final term depends on a sum over all actions.  
To handle this, we define 
    \begin{align}
        \psi_{\pr}(s;\beta) = \frac{1}{\beta}  \frac{1}{\alpha} \big( 
    \sum \limits_{a} \pi_0(a|s) - \sum \limits_{a} \pi_0(a|s)^{1-\alpha} \pir(a|s)^{\alpha} \big) . \label{eq:psi_pr}
    \end{align}
 Since $\pir(a|s) = \frac{\mur(a,s)}{\mu(s)}$ is normalized by construction, the constant $\psipr$ with respect to actions
has appeared naturally when optimizing with respect to $\mu(a,s)$.  In \myapp{alpha_conj_norm}-\ref{app:advantage_confirm},
we will relate this quantity to the Lagrange multiplier used to enforce normalization when optimizing over $\pi(a|s) \in \Delta^{|\mathcal{A}|}$. 

Finally, we use \cref{eq:worst_case_reward_perturb} to write $\prpi(a,s)$ as
\begin{align}
  \prpi(a,s)  &= \frac{1}{\beta}\log_{\alpha}\frac{\pi(a|s)}{\pi_0(a|s)} + \myconst \, . \label{eq:alpha_worst_case_reward_perturb}
\end{align}

\paragraph{Optimizing Argument $\pir(a|s)$}  Solving for the policy in \cref{eq:alpha_worst_case_reward_perturb} and denoting this as $\pir(a|s)$, 
\small
\begin{align}
 \hspace*{-.2cm}  \pir(a|s) &= \pi_0(a|s) \exp_{\alpha}\big \{ \beta \cdot \big( \pr(a,s) - \myconst \big) \big\}  = \pi_0(a|s) \big[1+\beta(\alpha-1) \left( \pr(a,s) - \myconst \right) \big]_+^{\frac{1}{\alpha-1}}. 
 \label{eq:opt_pi_alpha} 
\end{align}
\normalsize
Note that $\prpi(a|s)$ is defined in self-consistent fashion due to the dependence of $\myconst$ on $\pir(a|s)$ in \cref{eq:psi_pr}.  Further, $\myconst$ does not appear as a divisive normalization constant for general $\alpha$, which is inconvenient for practical applications \citep{lee2019tsallis, chow2018path}.

\paragraph{Conjugate Function $\frac{1}{\beta} \Omega^{*(\alpha)}_{\pi_0,\beta}(\pr)$} 
Finally, we plug this into the conjugate optimization \cref{eq:conj_opt_pi_alpha}.   Although we eventually need to obtain a function of $\pr(a,s)$ only, we write $\pir(a|s)$ in initial steps to simplify notation.
\scriptsize
\ifx\omitindices\undefined
\begin{align}
    \alphaconjn(\pr) &= \blangle \mu(s) \pir(a|s), \pr(a,s) \brangle - \frac{1}{\beta} \frac{1}{\alpha(1-\alpha)}\bigg( (1-\alpha) \sum \limits_{a,s} \mu(s) \pi_0(a|s) + \alpha \sum \limits_{a,s} \mu(s) \pir(a|s)  - \sum \limits_{a,s} \mu(s)\pir(a|s)  {\left(\frac{\pir(a|s) }{\pi_0(a|s)}\right)^{\alpha-1}} \bigg) \nonumber \\
    &= \blangle \mu(s) \pir(a|s), \pr(a,s) \brangle - \frac{1}{\beta} \frac{1}{\alpha} \sum \limits_{a,s} \mu(s) \pi_0(a|s) - \frac{1}{\beta} \frac{1}{1-\alpha} \sum \limits_{a,s} \mu(s) \pir(a|s) \label{eq:app_intermediate_alpha_derivation} \\
    &\phantom{= \blangle \mu(s) \pir(a|s), \pr(a,s) \brangle } + \frac{1}{\beta} \frac{\textcolor{black}{1}}{\alpha \textcolor{black}{(1-\alpha)}} \sum \limits_{a,s} \mu(s)\pir(a|s)  \bigg( 1+ \underbrace{\cancel{\beta(\alpha-1)}}_{-1} \big( \pr(a,s) - \myconst \big)  \bigg) \nonumber \\[1.5ex]
    &\overset{(1)}{=} \bigg(\underbrace{1- \frac{1}{\alpha}}_{\frac{\alpha-1}{\alpha}} \bigg) \blangle \mu(s) \pir(a|s), \pr(a,s) \brangle - \frac{1}{\beta} \frac{1}{\alpha}\sum \limits_{a,s}  \mu(s) \pi_0(a|s) - \bigg(  \underbrace{ \frac{1}{\beta}\frac{1}{1-\alpha} - \frac{1}{\beta}\frac{1}{\alpha(1-\alpha)}}_{-\frac{1}{\beta}\frac{1}{\alpha}} \bigg) \sum \limits_{a,s}  \mu(s) \pir(a|s) + \frac{1}{\alpha} \myconst \nonumber \\
    &\overset{(2)}{=} \frac{1}{\beta}\frac{1}{\alpha} \sum \limits_{a,s}  \mu(s) \pir(a|s) \, + \textcolor{blue}{\frac{\beta}{\beta}} \frac{\alpha-1}{\alpha} \sum \limits_{a,s} \mu(s) \pir(a|s) \cdot \pr(a,s)    \, \textcolor{blue}{\pm \frac{\beta}{\beta} \frac{\alpha-1}{\alpha} \myconst} + \frac{1}{\alpha} \myconst  - \frac{1}{\beta} \frac{1}{\alpha}\sum \limits_{a,s}  \mu(s) \pi_0(a|s)  \nonumber
\end{align}
\else
\begin{align}
    \alphaconjn(\pr) &= \sum \limits_{a,s} \mu(s) \pir(a|s) \cdot  \pr(a,s)  - \frac{1}{\beta} \frac{1}{\alpha(1-\alpha)}\bigg( (1-\alpha) \sum \limits_{a,s} \mu(s) \pi_0(a|s) + \alpha \sum \limits_{a,s} \mu(s) \pir(a|s)  - \sum \limits_{a,s} \mu(s)\pir(a|s)  {\left(\frac{\pir(a|s) }{\pi_0(a|s)}\right)^{\alpha-1}} \bigg) \nonumber \\
    &= \sum \limits_{a,s} \mu(s) \pir(a|s) \cdot \pr(a,s) - \frac{1}{\beta} \frac{1}{\alpha} \sum \limits_{a,s} \mu(s) \pi_0(a|s) - \frac{1}{\beta} \frac{1}{1-\alpha} \sum \limits_{a,s} \mu(s) \pir(a|s) \label{eq:app_intermediate_alpha_derivation} \\
    &\phantom{= \blangle \mu(s) \pir(a|s), \pr(a,s) \brangle } + \frac{1}{\beta} \frac{\textcolor{black}{1}}{\alpha \textcolor{black}{(1-\alpha)}} \sum \limits_{a,s} \mu(s)\pir(a|s)  \bigg( 1+ \underbrace{\cancel{\beta(\alpha-1)}}_{-1} \big( \pr(a,s) - \myconst \big)  \bigg) \nonumber \\[1.5ex]
    &\overset{(1)}{=} \bigg(\underbrace{1- \frac{1}{\alpha}}_{\frac{\alpha-1}{\alpha}} \bigg) \sum \limits_{a,s} \mu(s) \pir(a|s) \cdot \pr(a,s) - \frac{1}{\beta} \frac{1}{\alpha}\sum \limits_{a,s}  \mu(s) \pi_0(a|s) - \bigg(  \underbrace{ \frac{1}{\beta}\frac{1}{1-\alpha} - \frac{1}{\beta}\frac{1}{\alpha(1-\alpha)}}_{-\frac{1}{\beta}\frac{1}{\alpha}} \bigg) \sum \limits_{a,s}  \mu(s) \pir(a|s) + \frac{1}{\alpha} \myconst \nonumber \\
    &\overset{(2)}{=} \frac{1}{\beta}\frac{1}{\alpha} \sum \limits_{a,s}  \mu(s) \pir(a|s) \, + \textcolor{blue}{\frac{\beta}{\beta}} \frac{\alpha-1}{\alpha} \sum \limits_{a,s} \mu(s) \pir(a|s) \cdot \pr(a,s)    \, \textcolor{blue}{\pm \frac{\beta}{\beta} \frac{\alpha-1}{\alpha} \myconst} + \frac{1}{\alpha} \myconst  - \frac{1}{\beta} \frac{1}{\alpha}\sum \limits_{a,s}  \mu(s) \pi_0(a|s)  \nonumber
\end{align}
\fi
\normalsize
where in $(1)$ we note that $\frac{1}{\beta}\frac{1}{\alpha(1-\alpha)}\cdot \beta (\alpha-1) = -\frac{1}{\alpha}$.   In $(2)$, we add and subtract the term in blue, which will allow to factorize an additional term of $[1+\beta(\alpha-1)(\pr - \myconst)]$ and obtain a function of $\pr(a,s)$ only   
\scriptsize
\begin{align}
    \alphaconjn(\pr) &= \frac{1}{\beta}\frac{1}{\alpha} \sum \limits_{a,s}  \mu(s) \pir(a|s) \bigg( 1 + \beta (\alpha-1) \big( \pr(a,s) - \myconst \big) \bigg) + \bigg( \underbrace{\frac{\alpha-1}{\alpha}  + \frac{1}{\alpha}}_{1} \bigg)  \myconst - \frac{1}{\beta}\frac{1}{\alpha}\sum \limits_{a,s}  \mu(s) \pi_0(a|s) \nonumber \\
    &\overset{(1)}{=} \frac{1}{\beta}\frac{1}{\alpha} \sum \limits_{a,s}  \mu(s) \bigg( \pi_0(a|s) \bigg[ 1 + \beta (\alpha-1) \big( \pr(a,s) - \myconst \big) \bigg]^{\frac{\alpha}{\alpha-1}}    - 1 \bigg)   + \sum \limits_{s} \mu(s) \, \myconst \rangle \label{eq:alpha_final_conj_pi}
\end{align}
\normalsize
where in $(1)$ we have used \cref{eq:opt_pi_alpha} and $\frac{1}{\alpha-1}+1 = \frac{\alpha}{\alpha-1}$, along with $\sum_a \pi_0(a|s) = 1$.
\end{proof}
\paragraph{Confirming Conjugate Optimality Conditions}
Finally, we confirm that differentiating \cref{eq:alpha_final_conj_pi} with respect to $\pr(a,s)$ yields the conjugate condition 
\ifx\omitindices\undefined
$\pir(a|s) = \nabla \alphaconjn(\pr)$.  
\else
$\pir = \nabla \alphaconjn(\pr)$.  
\fi
Noting that $\frac{\alpha}{\alpha-1} - 1 = \frac{1}{\alpha-1}$,
\scriptsize
\begin{equation}
\resizebox{\textwidth}{!}{$
\nabla \alphaconjn(\pr) = \frac{\beta(\alpha-1)}{\beta \, \alpha} \frac{\alpha}{\alpha-1} \sum \limits_{s}  \mu(s) \sum \limits_{a} \pi_0(a|s) \big[ 1 + \beta (\alpha-1) \big( \pr(a,s) - \myconst \big) \big]^{\frac{1}{\alpha-1}} \left(  \frac{\partial \pr(a,s)}{\partial\pr(a\tick,s\tick)} - \frac{\partial \psipr}{{\partial\pr(a\tick,s\tick)}} \right) + \sum \limits_{s}  \mu(s) \frac{\partial\psipr}{{\partial \pr(a\tick,s\tick)}} $
} \nonumber
\end{equation}
\normalsize
which simplifies to $\pir(a|s) = \frac{\partial}{\partial \pr(a,s)} \alphaconjn(\pr) = \pi_0(a|s) \big[ 1 + \beta (\alpha-1) \big( \pr(a,s) - \myconst \big) \big]^{\frac{1}{\alpha-1}}$ and matches \cref{eq:opt_pi_alpha}.
\normalsize
\subsection{$\alpha$-Divergence Occupancy Regularization: $\frac{1}{\beta} \Omega^{*(\alpha)}_{\mu_0,\beta}(\pr)$ } \label{app:conj4}\label{app:mu_results}
\normalsize
The conjugate function $\frac{1}{\beta} \Omega^{*(\alpha)}_{\mu_0,\beta}(\pr)$ for $\alpha$-divergence regularization of the state-action occupancy  $\mu(a,s)$ to a reference $\mu_0(a,s)$ can be written in the following form 
\begin{align}
    \alphaconjm(\pr) &= \frac{1}{\beta}\frac{1}{\alpha}  \sum \limits_{a,s} \mu_0(a,s)\big[1+\beta(\alpha-1)\pr(a,s)\big]^{\frac{\alpha}{\alpha-1}} - \frac{1}{\beta}\frac{1}{\alpha} \label{eq:conj_mu_result} 
\end{align}
Note that this conjugate function can also be derived directly from the duality of general $f$-divergences, and matches the form of conjugate considered in \citep{belousov2019entropic, nachum2020reinforcement}.   
\begin{proof}
\textbf{Worst-Case Reward Perturbations $\pr_{\mu}(a|s)$  } 
\begin{align}
    \pr(a,s) &= \frac{1}{\beta} \frac{1}{\alpha(1-\alpha)} \nabla_{\mu} \left( (1-\alpha) \sum \limits_{a,s} \mu_0(a,s) + \alpha \sum \limits_{a,s} \mu(a,s) - \sum \limits_{a,s} \mu_0(a,s)^{1-\alpha} \mu(a,s)^{\alpha} \right) \label{eq:alpha_pr_mu} \\
    &= \frac{1}{\beta} \frac{1}{1-\alpha} - \frac{1}{\beta} \frac{1}{1-\alpha} \mu_0(a,s)^{1-\alpha} \mu(a,s)^{\alpha-1}  \nonumber
\end{align}
\textbf{Optimizing Argument $\mu_{\pr}(a,s)$. }  Solving for $\mu_{\pr}(a,s)$,
 \begin{align}
    \mu_{\pr}(a,s) &=  \mu_0(a,s) \exp_{\alpha} \{ \beta \cdot \pr(a,s) \} =\mu_0(a,s) [1 + \beta(1-\alpha)\pr(a,s)]_+^{ \frac{1}{\alpha-1}} \label{eq:opt_mu_alpha} 
\end{align}
\textbf{Conjugate Function $\frac{1}{\beta} \Omega^{*(\alpha)}_{\mu_0,\beta}(\pr)$. } 
Plugging this back into the conjugate optimization, we finally obtain
\small
\ifx\omitindices\undefined
\begin{align}
    \alphaconjm &= \blangle \mu_{\pr}(a,s), \pr(a,s) \brangle - \frac{1}{\beta} \frac{1}{\alpha(1-\alpha)}\bigg( (1-\alpha) \sum \limits_{a,s} \mu_0(a,s) + \alpha \sum \limits_{a,s} \mu_{\pr}(a,s)  - \sum \limits_{a,s} \mu_{\pr}(a,s)  \underbrace{\left(\frac{\mu_{\pr}(a,s) }{\mu_0(a,s)}\right)^{\alpha-1}}_{1 + \beta(\alpha-1) \pr(a,s)} \bigg) \nonumber \\
    &= \left(1- \frac{1}{\alpha} \right) \blangle \mu_{\pr}(a,s), \pr(a,s) \brangle - \frac{1}{\beta} \frac{1}{\alpha}\sum \limits_{a,s} \mu_0(a,s) +  \left( \frac{1}{\beta}\frac{1}{\alpha(1-\alpha)} - \frac{1}{\beta}\frac{1}{1-\alpha} \right) \sum \limits_{a,s} \mu_{\pr}(a,s) \\
    &= \text{ \scriptsize $\frac{\textcolor{blue}{\alpha -1}}{\alpha}  \blangle \mu_{0}(a,s)
    \big[1+\beta(\alpha-1)\pr(a,s)\big]^{\frac{1}{\alpha-1}}, \textcolor{blue}{\pr(a,s)} \brangle + \frac{1}{\beta} \frac{1}{\alpha} \sum \limits_{a,s} \mu_{0}(a,s)\big[1+\beta(\alpha-1)\pr(a,s)\big]^{\frac{1}{\alpha-1}} - \frac{1}{\beta} \frac{1}{\alpha} \sum \limits_{a,s} \mu_{0}(a,s)$} \nonumber \\
    &=  \sum \limits_{a,s} \mu_{0}(a,s)\big[1+\beta(\alpha-1)\pr(a,s)\big]^{\frac{1}{\alpha-1}}\cdot \frac{1}{\beta}\frac{1}{\alpha} \big( 1 + \beta \textcolor{blue}{(\alpha-1) \pr(a,s)} \big) - \frac{1}{\beta}\frac{1}{\alpha} \sum \limits_{a,s}\mu_0(a,s) \\
    &=  \frac{1}{\beta}\frac{1}{\alpha} \sum \limits_{a,s} \mu_0(a,s)\big[1+\beta(\alpha-1)\pr(a,s)\big]^{\frac{\alpha}{\alpha-1}} - \frac{1}{\beta}\frac{1}{\alpha} \sum \limits_{a,s}\mu_0(a,s)
\end{align}
\else
\begin{align}
    \alphaconjm &= \sum \limits_{a,s} \mu_{\pr}(a,s) \cdot \pr(a,s) - \frac{1}{\beta} \frac{1}{\alpha(1-\alpha)}\bigg( (1-\alpha) \sum \limits_{a,s} \mu_0(a,s) + \alpha \sum \limits_{a,s} \mu_{\pr}(a,s)  - \sum \limits_{a,s} \mu_{\pr}(a,s)  \underbrace{\left(\frac{\mu_{\pr}(a,s) }{\mu_0(a,s)}\right)^{\alpha-1}}_{= \textcolor{blue}{1 + \beta(\alpha-1) \pr(a,s)} (\cref{eq:opt_mu_alpha})} \bigg) \nonumber \\
    &= \left(1- \frac{1}{\alpha} \right) \sum \limits_{a,s} \mu_{\pr}(a,s) \cdot \pr(a,s) - \frac{1}{\beta} \frac{1}{\alpha}\sum \limits_{a,s} \mu_0(a,s) +  \left( \frac{1}{\beta}\frac{1}{\alpha(1-\alpha)} - \frac{1}{\beta}\frac{1}{1-\alpha} \right) \sum \limits_{a,s} \mu_{\pr}(a,s) \\
    &= \text{ \scriptsize $\frac{\textcolor{blue}{\alpha -1}}{\alpha}  \sum \limits_{a,s} \mu_{0}(a,s)
    \big[1+\beta(\alpha-1)\pr(a,s)\big]^{\frac{1}{\alpha-1}} \cdot \textcolor{blue}{\pr(a,s)} + \frac{1}{\beta} \frac{1}{\alpha} \sum \limits_{a,s} \mu_{0}(a,s)\big[1+\beta(\alpha-1)\pr(a,s)\big]^{\frac{1}{\alpha-1}} - \frac{1}{\beta} \frac{1}{\alpha} \sum \limits_{a,s} \mu_{0}(a,s)$} \nonumber \\
    &=  \sum \limits_{a,s} \mu_{0}(a,s)\big[1+\beta(\alpha-1)\pr(a,s)\big]^{\frac{1}{\alpha-1}}\cdot \frac{1}{\beta}\frac{1}{\alpha} \big( 1 + \beta \textcolor{blue}{(\alpha-1) \pr(a,s)} \big) - \frac{1}{\beta}\frac{1}{\alpha} \sum \limits_{a,s}\mu_0(a,s) \\
    &=  \frac{1}{\beta}\frac{1}{\alpha} \sum \limits_{a,s} \mu_0(a,s)\big[1+\beta(\alpha-1)\pr(a,s)\big]^{\frac{\alpha}{\alpha-1}} - \frac{1}{\beta}\frac{1}{\alpha} \sum \limits_{a,s}\mu_0(a,s)
\end{align}
\fi
\normalsize
where, to obtain the exponent in the last line, note that $\frac{1}{\alpha-1}+1=\frac{\alpha}{\alpha-1}$.
\end{proof}
\section{Soft Value Aggregation}\label{app:soft_value}\label{app:sv_background}
\normalsize

Soft value aggregation \citep{fox2016taming, haarnoja17a} and the regularized Bellman optimality operator \citep{neu2017unified, geist2019theory} also rely on the convex conjugate function, but with a slightly different setting than our derivations for the optimal regularized policy or reward perturbations in \myapp{conjugates}.  In particular, in each state we optimize over the policy $\pi(a|s) \in \Delta^{|\mathcal{A}|}$ using an explicit normalization constraint (\cref{eq:soft_value_app}).

We derive the regularized Bellman optimality operator from the primal objective in \cref{eq:primal_reg}.   Factorizing $\mu(a,s) = \mu(s) \pi(a|s)$, we can imagine optimizing over $\mu(s)$ and $\pi(a|s) \in \simplex$ separately,
\begin{align}
   \max \limits_{\mu(s)\rightarrow \mathcal{M}} \max \limits_{\pi(a|s) \in \simplex} \min \limits_{V(s)}  \, \, 
     (1-\gamma) \big\langle \nu_0(s), V(s) \big  \rangle \, 
     + &\, \blangle \mu(a,s), r(a,s) +  \gamma \transitionv -V(s)\brangle - \frac{1}{\beta} \Omega_{\pi_0}(\mu) . \label{eq:dual_lagrangian_app}
\end{align}
Eliminating $\mu(s)$ (by setting $d/d\mu(s) = 0$) leads to a constraint on the form of $V(s)$, since both may be viewed as enforcing the Bellman flow constraints.  
\begin{align}
    V(s)  &= \blangle \pi(a|s), \, r(s,a) + \gamma \transitionv \brangle - \frac{1}{\beta} \Omega_{\pi_0}(\pi) \label{eq:bellman_opt} . 
\end{align}
\normalsize
We define $Q(s,a) \coloneqq r(s,a) + \gamma \transitionv$ and write 
\ifx\omitindices\undefined
$V(s) = \langle \pi(a|s), \, Q(a,s) \rangle - \frac{1}{\beta} \Omega_{\pi_0}(\pi)$ moving forward.
\else
$V(s) = \langle \pi, \, Q \rangle - \frac{1}{\beta} \Omega_{\pi_0}(\pi)$ moving forward.
\fi

As an operator for iteratively updating $V(s)$, \cref{eq:bellman_opt} corresponds to the regularized Bellman operator $\mathcal{T}_{\Omega_{\pi_0}, \beta}$ and may be used to perform policy evaluation for a given $\pi(a|s)$ \citep{geist2019theory}.    
The regularized Bellman \textit{optimality} operator $\mathcal{T}^*_{\Omega_{\pi_0}, \beta}$, which can be used for value iteration or modified policy iteration \citep{geist2019theory}, arises from including the maximization over $\pi(a|s) \in \simplex$ from \cref{eq:dual_lagrangian_app}
\ifx\omitindices\undefined
\begin{align}
   V(s)  \leftarrow \frac{1}{\beta} \Omega^{*}_{\pi_0,\beta}(Q) = \max \limits_{\pi \in \Delta^{|\mathcal{A}|}} \blangle \pi(a|s), Q(a,s) \brangle - \frac{1}{\beta} \Omega_{\pi_0}(\pi) \, . \label{eq:soft_value_app} 
\end{align}  
\else
\begin{align}
   V(s)  \leftarrow \frac{1}{\beta} \Omega^{*}_{\pi_0,\beta}(Q) = \max \limits_{\pi \in \Delta^{|\mathcal{A}|}} \blangle \pi , Q \brangle - \frac{1}{\beta} \Omega_{\pi_0}(\pi) \, . \label{eq:soft_value_app} 
\end{align}  
\fi

\paragraph{Comparison of Conjugate Optimizations}
\cref{eq:soft_value_app} has the form of a conjugate optimization $\frac{1}{\beta} \Omega^{*}_{\pi_0,\beta}(Q)$ \citep{geist2019theory}.  However, in contrast to the setting of \myapp{optimal_policy} and \myapp{conjugates}, we optimize over the policy in each state, rather than the state-action occupancy $\mu(a,s)$.  Further, we must include normalization and nonnegativity constraints $\pi(a|s) \in \Delta^{|\mathcal{A}|}$, which can be enforced using Lagrange multipliers $\psi_Q(s;\beta)$ and $\lambdaplus$.   
We derive expressions for this conjugate function for the \textsc{kl} divergence in \myapp{kl_conj_norm} and $\alpha$-divergence in \myapp{alpha_conj_norm}, and plot the value $V_*(s)$ as a function of $\alpha$ and $\beta$ in \myapp{alpha_results}.

Compared with the optimization for the optimal policy in \cref{eq:conj_opt_mdp}, note that the argument of the conjugate function does not include the value function $V(s)$ in this case.   We will highlight relationship between the normalization constants $\psiqopt$, $\psipiopt$, and $V_*(s)$ in \myapp{advantage_confirm}, where $\psiqopt = V_*(s) + \psipiopt$ as in \citet{lee2019tsallis} App. D.




\subsection{KL Divergence Soft Value Aggregation: $\frac{1}{\beta} \Omega^{*}_{\pi_0,\beta}(Q)$} \label{app:kl_conj_norm}
We proceed to derive a closed form for the conjugate function of the \textsc{kl} divergence $\Omega_{\pi_0}(\pi)$ as a function of $\pi(a|s) \in \Delta^{|\mathcal{A}|}$, which we write using the $Q$-values as input 
 \scriptsize
\begin{align}
 \frac{1}{\beta} \Omega^{*}_{\pi_0,\beta}(Q) =  \max \limits_{\pi(a|s)}
 &\langle \pi, Q \rangle - \frac{1}{\beta} \left( \sum \limits_{a} \pi(a|s) \log \frac{\pi(a|s) }{\pi_0(a|s)}  + \sum \limits_{a} \pi(a|s) -  \sum \limits_{a} \pi_0(a|s) \right) - \psi_Q(s;\beta) \left(\sum \limits_{a \in \mathcal{A}} \pi(a|s) - 1 \right) + \sum \limits_{a \in \mathcal{A}} \lambdaplus  \nonumber 
\end{align}
\normalsize
\footnotesize
\begin{align}
 \implies Q(a,s)   &= \frac{1}{\beta}\log \frac{\pi(a|s) }{\pi_0(a|s)} + \psi_Q(s;\beta)  - \lambdaplus
 \hphantom{ - \frac{1}{\beta} \sum \limits_{a} \pi_0(a|s)  - \psi_Q(s;\beta) \left(\sum \limits_{a \in \mathcal{A}} \pi(a|s) - 1 \right) } \label{eq:kl_q_arg_qopt} 
\end{align}
\normalsize
\paragraph{Optimizing Argument} Solving for $\pi$ yields the optimizing argument
\small
\begin{align}
     \pi_Q(a|s)  &= \pi_0(a|s) \expof{ \beta \cdot \big( Q(a,s) - \psi_Q(s;\beta)) \big)}  \label{eq:kl_pi_arg_qopt}
\end{align}
\normalsize
where we can ignore the Lagrange multiplier for the nonnegativity constraint $\lambdaplus$ since $\exp\{ \cdot \}\geq0$ ensures $\pi_Q(a|s) \geq 0.$
We can pull the normalization constant out of the exponent to solve for
\begin{align}
    \psi_Q(s;\beta) = \frac{1}{\beta} \log \sum \limits_a \pi_0(a|s) \expof{\beta \cdot Q(a,s)} \, . \label{eq:psi_q_kl}
\end{align}
Plugging \cref{eq:kl_pi_arg_qopt} into the conjugate optimization,
\scriptsize
\begin{align}
\frac{1}{\beta} \Omega^{*}_{\pi_0,\beta}(Q) = \blangle \pi_Q, Q \brangle - \frac{1}{\beta} \bigg( \sum \limits_{a} \pi_Q(a|s) \cdot \cancel{\log} \cancel{\frac{\pi_0(a|s)}{\pi_0(a|s)}} \cancel{\exp}\big\{ \beta \cdot \big( Q(a,s) - \psi_Q(s;\beta) \big)\big\}  +  \underbrace{\sum \limits_{a} \pi_Q(a|s)}_{1} - 1 \bigg) - \psi_Q(s;\beta) \bigg(\underbrace{\sum \limits_{a \in \mathcal{A}} \pi_Q(a|s)}_{1} - 1 \bigg) \nonumber \, .
\end{align}
\normalsize
\paragraph{Conjugate Function} We finally recover the familiar log-mean-exp form for the \textsc{kl}-regularized value function 
\begin{align}
    V(s) \leftarrow \frac{1}{\beta} \Omega^{*}_{\pi_0,\beta}(Q) =  \psi_Q(s;\beta) = \frac{1}{\beta} \log \sum \limits_a \pi_0(a|s) \expof{\beta \cdot Q(a,s)}. \label{eq:kl_bellman}
\end{align}
Notice that the conjugate or value function $V(s) \leftarrow \frac{1}{\beta} \Omega^{*}_{\pi_0,\beta}(Q)$ is exactly equal to the normalization constant of the policy $\psi_Q(s;\beta)$.   We will show in \myapp{alpha_conj_norm} that this property \textit{does not hold} for general $\alpha$-divergences, with example visualizations in \myapp{advantage_confirm} \myfig{conj_by_alpha_app}.

\subsection{$\alpha$-Divergence Soft Value Aggregation: $\frac{1}{\beta} \Omega^{*}_{\pi_0,\beta}(Q)$} \label{app:alpha_conj_norm}
We now consider soft value aggregation using the $\alpha$-divergence, where in contrast to \myapp{conj3}, we perform the conjugate optimization over $\pi(a|s) \in \Delta^{|\mathcal{A}|}$ in each state, with Lagrange multipliers $\psiq$ and $\lambdaplus$ to enforce normalization and nonnegativity.
\small
\begin{align}
 \frac{1}{\beta} \Omega^{*}_{\pi_0,\beta}(Q) =  \max \limits_{\pi(a|s)} &\blangle \pi,Q \brangle - \frac{1}{\beta} \frac{1}{\alpha} \sum \limits_{a} \pi_0(a|s) - \frac{1}{\beta} \frac{1}{1-\alpha} \sum \limits_{a} \pi(a|s) + \frac{1}{\beta} \frac{1}{\alpha(1-\alpha)} \sum \limits_{a} \pi_0(a|s)^{1-\alpha} \pi(a|s)^{\alpha} \label{eq:soft_val_opt_alpha} \\
 & \hphantom{\blangle \pi,Q \brangle - \frac{1}{\beta}} - \psipiq \left(\sum \limits_{a} \pi(a|s) - 1 \right) + \sum \limits_{a} \lambdaplus \nonumber \\
 \implies Q(a,s)  &= \frac{1}{\beta}\log_{\alpha} \frac{\pi(a|s) }{\pi_0(a|s)} + \psiq   - \lambdaplus  \hphantom{ - \frac{1}{\beta} \sum \limits_{a} \pi_0(a|s)  - \psipiq \left(\sum \limits_{a \in \mathcal{A}} \pi(a|s) - 1 \right) } \label{eq:alpha_q_arg_qopt}
\end{align}
\normalsize
\paragraph{Optimizing Argument} Solving for $\pi$ yields the optimizing argument for the soft value aggregation conjugate,
\begin{align}
 \pi_Q(a|s)  &= \pi_0(a|s) \exp_{\alpha} \big\{ \beta \cdot \big( Q(a,s) + \lambdaplus - \psiq ) \big) \big\} .  \label{eq:alpha_pi_arg_qopt}
\end{align}
Unlike the case of the standard exponential, we cannot easily derive a closed-form solution for $\psi_Q(s;\beta)$.

Note that the expressions in \cref{eq:alpha_q_arg_qopt} and \cref{eq:alpha_pi_arg_qopt} are similar to the form of the worst-case reward perturbations $\prpi(a|s)$ in \cref{eq:alpha_worst_case_reward_perturb} and optimizing policy $\pir(a|s)$ in \cref{eq:opt_pi_alpha}, except for the fact that $\psiq$ arises as a Lagrange multiplier and does not have the same form as $\psipr = \frac{1}{\beta}(1-\alpha)D_{\alpha}[\pi_0:\pi]$ as in \cref{eq:pr_normalizer} and \cref{eq:psi_pr}.   We will find that $\psiq$ and $\psipr$ differ by a term of $V_*(s)$ in \myapp{advantage_confirm} (\cref{eq:value_and_normalizers}).

\paragraph{Conjugate Function} Plugging \cref{eq:alpha_pi_arg_qopt} into the conjugate optimization,  we use similar derivations as in \cref{eq:app_intermediate_alpha_derivation}-\cref{eq:alpha_final_conj_pi} to write the conjugate function, or regularized Bellman optimality operator as
\begin{align}
     V(s) \leftarrow \frac{1}{\beta} \Omega^{*(\alpha)}_{\pi_0,\beta}(Q) &=  \frac{1}{\beta}\frac{1}{\alpha}\sum_a \pi_0(a|s) \Big[ 1 + \beta (\alpha-1) \big( Q(a,s) + \lambdaplus - \psipiq \big) \Big]_+^{\frac{\alpha}{\alpha-1}}  - \frac{1}{\beta}\frac{1}{\alpha} +  \psipiq \label{eq:alpha_bellman} \\
     &= \frac{1}{\beta}\frac{1}{\alpha}\sum_a \pi_0(a|s) \exp_{\alpha} \Big\{ \beta \cdot  \big( Q(a,s) + \lambdaplus - \psipiq \big) \Big\}^{\alpha}  - \frac{1}{\beta}\frac{1}{\alpha} +  \psipiq \nonumber
\end{align}
\normalsize
\textbf{Comparison with KL Divergence Regularization} 
Note that for general $\alpha$, the conjugate or value function $V(s) = \frac{1}{\beta} \Omega^{*}_{\pi_0,\beta}(Q)$ in \cref{eq:alpha_bellman} is \textit{not} equal to the normalization constant of the policy $\psi_Q(s;\beta)$.    We discuss this further in the next section.

We also note that the form of the conjugate function is similar using two different approaches:  optimizing over $\pi$ with an explicit normalization constraint, as in \cref{eq:alpha_bellman}, or optimizing over $\mu$ with regularization of $\pi$ but no explicit normalization constraint, as in \myapp{conj3} or \mytab{conj_table}.    This is in contrast to the \textsc{kl} divergence, where the normalization constraint led to a log-mean-exp conjugate in \cref{eq:kl_q_arg_qopt} which is different from \myapp{conjugates} \cref{eq:kl_pi_conj_closed_form}.

\subsection{Relationship between Normalization Constants $\psi_{\pr_{\piopt}}$, $\psi_{Q_*}$, and Value Function $V_*(s)$}\label{app:advantage_confirm}

In this section, we analyze the relationship between the conjugate optimizations that we have considered above, either optimizing over $\mu(a,s)$ as in deriving the optimal policy, or optimizing over $\pi(a|s) \in \Delta^{|\mathcal{A}|}$ as in the regularized Bellman optimality operator or soft-value aggregation.   Using $Q(a,s) = r(a,s) + \gamma \transitionv$, 
\ifx\omitindices\undefined
\small
\begin{flalign}
 & \,\,  \textit{Optimal Policy}  \textit{ (or Worst-Case Reward Perturbations)}  \text{  (\myapp{conj3})} \label{eq:opt_pol_opt}   &\\
   & \quad  \frac{1}{\beta} \Omega_{\pi_0,\beta}^{*(\alpha)}\Big(r(a,s) + \gamma \transitionv - V(s) + \lambdaplus \Big)  = \max \limits_{\mu(a,s) \in \mathcal{F}} \blangle \mu(a,s), r(a,s) + \gamma \transitionv - V(s) + \lambdaplus  \brangle -  \frac{1}{\beta} \Omega^{(\alpha)}_{\pi_0}(\mu) \nonumber & \\
 & \,\, \textit{Soft Value Aggregation}  \text{ (\myapp{alpha_conj_norm})},  \label{eq:sv_opt}  & \\
 & \quad \,\, V(s) \leftarrow \frac{1}{\beta} \Omega_{\pi_0,\beta}^{*(\alpha)}\Big(r(a,s) + \gamma \transitionv   \Big) = \max \limits_{\pi(a|s) \in \Delta^{|\mathcal{A}|}} \blangle \mu(a,s), r(a,s) + \gamma \transitionv   \brangle - \frac{1}{\beta} \Omega^{(\alpha)}_{\pi_0}(\pi) \nonumber  & 
\end{flalign}
\normalsize
\else
\small
\begin{flalign}
 & \,\, \textit{Optimal Policy}  \textit{ (or Worst-Case Reward Perturbations)}  \text{  (\myapp{conj3})} \label{eq:opt_pol_opt}   &\\
   & \qquad \qquad  \frac{1}{\beta} \Omega_{\pi_0,\beta}^{*(\alpha)}\Big(r + \gamma \transitionvinds - V + \lambda \Big)  = \max \limits_{\mu(a,s) \in \mathcal{F}} \blangle \mu, r + \gamma \transitionvinds - V + \lambda  \brangle -  \frac{1}{\beta} \Omega^{(\alpha)}_{\pi_0}(\mu) \nonumber & \\
 & \,\, \textit{Soft Value Aggregation}  \text{ (\myapp{alpha_conj_norm})},  \label{eq:sv_opt}  & \\
 & \qquad \qquad \,\, V(s) \leftarrow \frac{1}{\beta} \Omega_{\pi_0,\beta}^{*(\alpha)}\Big(r + \gamma \transitionvinds   \Big) = \max \limits_{\pi \in \Delta^{|\mathcal{A}|}} \blangle \mu, r + \gamma \transitionvinds   \brangle - \frac{1}{\beta} \Omega^{(\alpha)}_{\pi_0}(\pi) \nonumber  & 
\end{flalign}
\normalsize
\fi
Note that the arguments differ by a term of $V(s)$.    We ignore the apparent difference in the $\lambdaplus$ term, which can be considered as an argument of the conjugate in \cref{eq:sv_opt} since a linear term of 
\ifx\omitindices\undefined
$\langle \mu(a,s), \lambdaplus \rangle$ 
\else
$\langle \mu, \lambda \rangle$ 
\fi
appears when enforcing $\pi \in \Delta^{|\mathcal{A}|}$. 
Evaluating the optimizing arguments,
\small
\begin{flalign}
    &  \,\, \textit{Optimal Policy } \textit{ (or Worst-Case Reward Perturbations)}  \text{  (\myapp{conj3}, \cref{eq:opt_pi_alpha}, \mytab{conj_table})} \nonumber   & \\
    & \qquad \qquad \pi(a|s) = \pi_0(a|s) \exp_{\alpha} \Big\{ \beta \cdot  \big( Q(a,s) + \lambdaplus - V(s) - \psipr \big) \Big\} & \label{eq:opt_pol_pol} \\
 &  \,\,  \textit{Soft Value Aggregation} \textit{  (\myapp{alpha_conj_norm}, \cref{eq:alpha_bellman})},  \nonumber  & \\
& \qquad  \qquad \pi(a|s) = \pi_0(a|s) \exp_{\alpha} \Big\{ \beta \cdot  \big( Q(a,s) + \lambdaplus  - \psiq \big) \nonumber & 
\end{flalign}
\normalsize
For the optimal $V_*(s)$ and $Q_*(a,s) = r(a,s) + \gamma \transitionvstar - V_*(s)$, the two policies match.
This can be confirmed using similar reasoning as in \citet{lee2019tsallis} App. D-E or \citep{geist2019theory} to show that iterating the regularized Bellman optimality operator leads to the optimal policy and value.   

\begin{figure*}
\begin{minipage}{\textwidth}
\vspace*{-.2cm}
\begin{subfigure}{0.24\textwidth}\includegraphics[width=\textwidth]{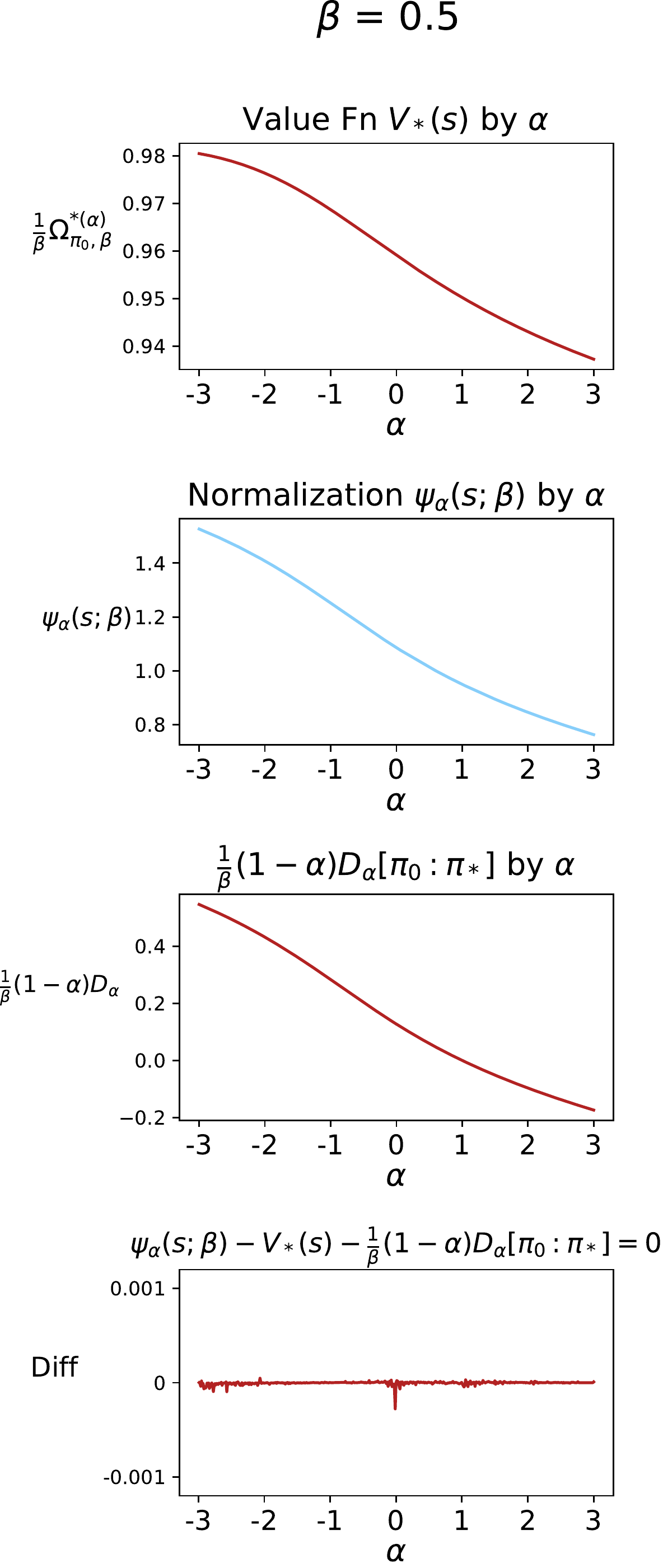}
\caption{$\beta=0.5$}\end{subfigure}
\begin{subfigure}{0.24\textwidth}\includegraphics[width=\textwidth]{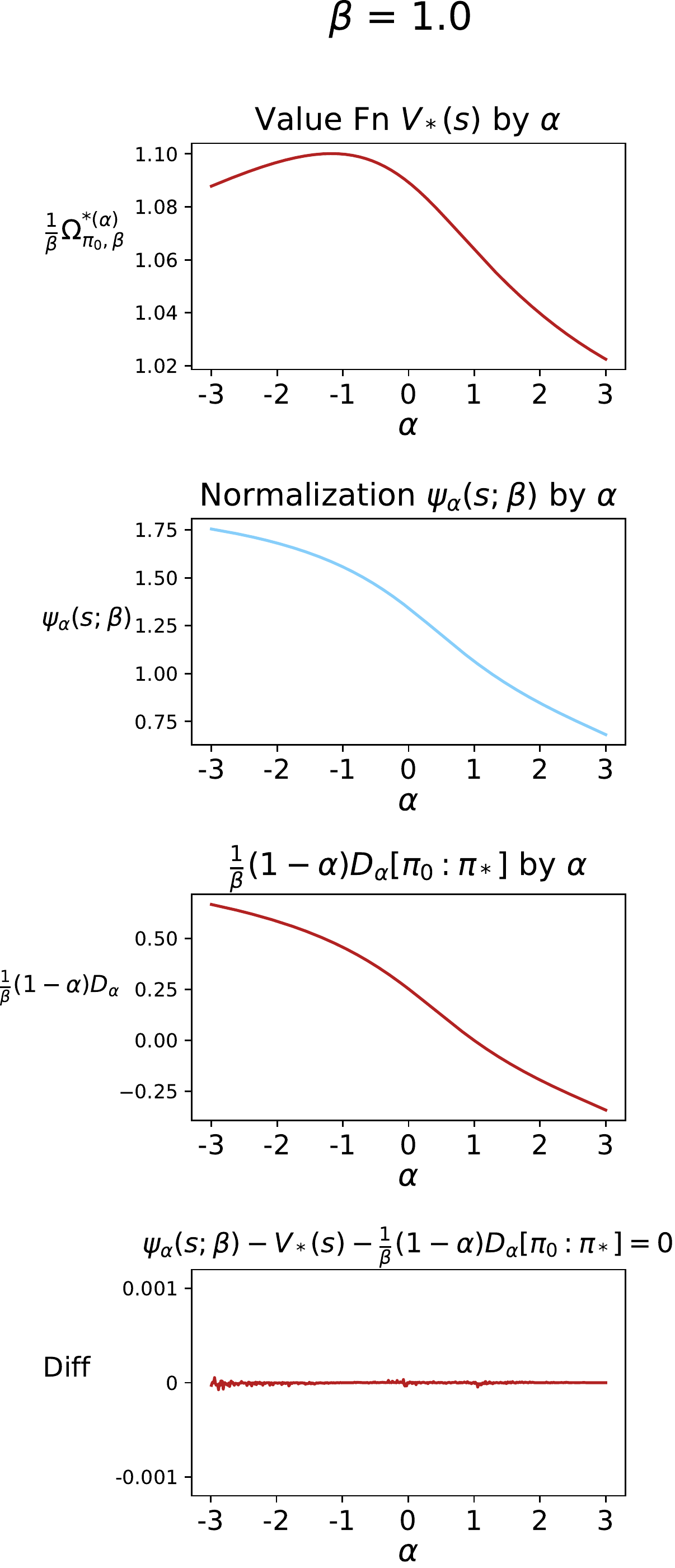}
\caption{$\beta=1$}\end{subfigure}
\begin{subfigure}{0.24\textwidth}\includegraphics[width=\textwidth]{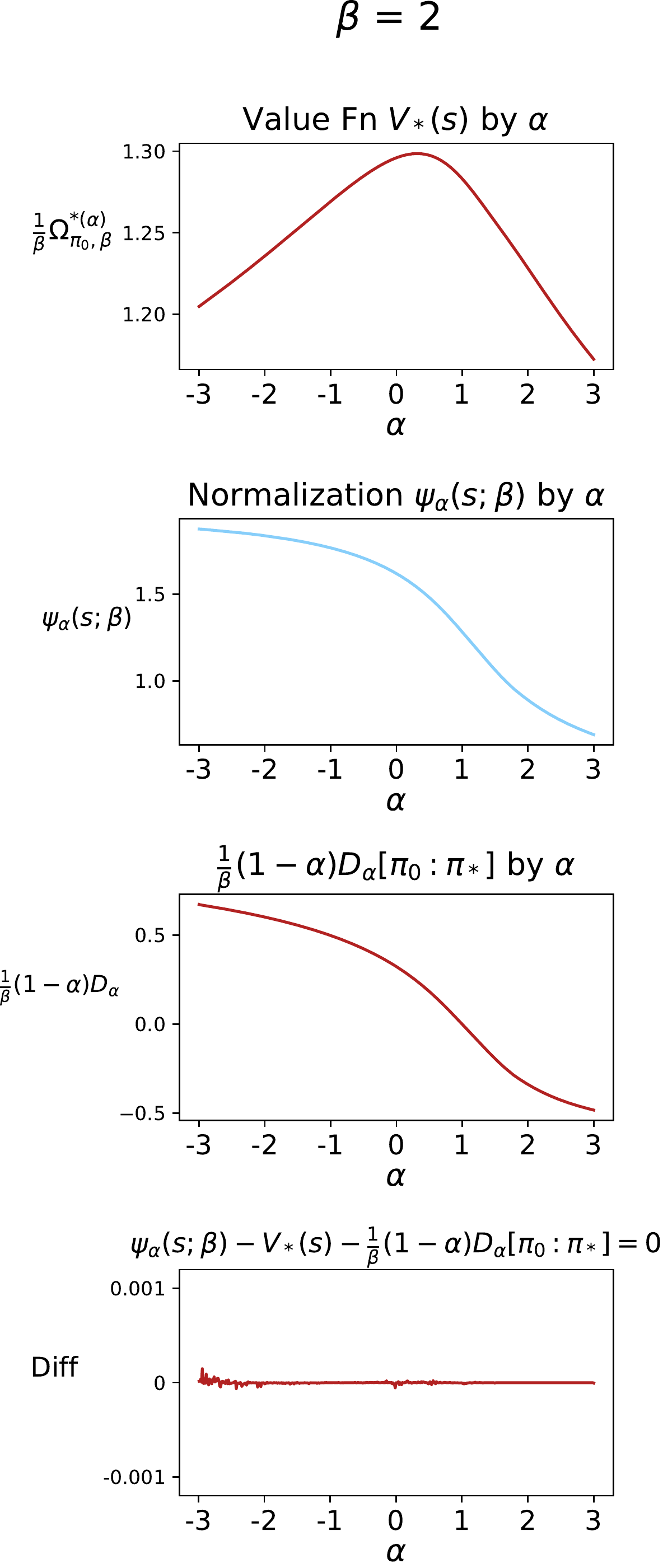}
\caption{$\beta=2$}\end{subfigure}
\begin{subfigure}{0.24\textwidth}\includegraphics[width=\textwidth]{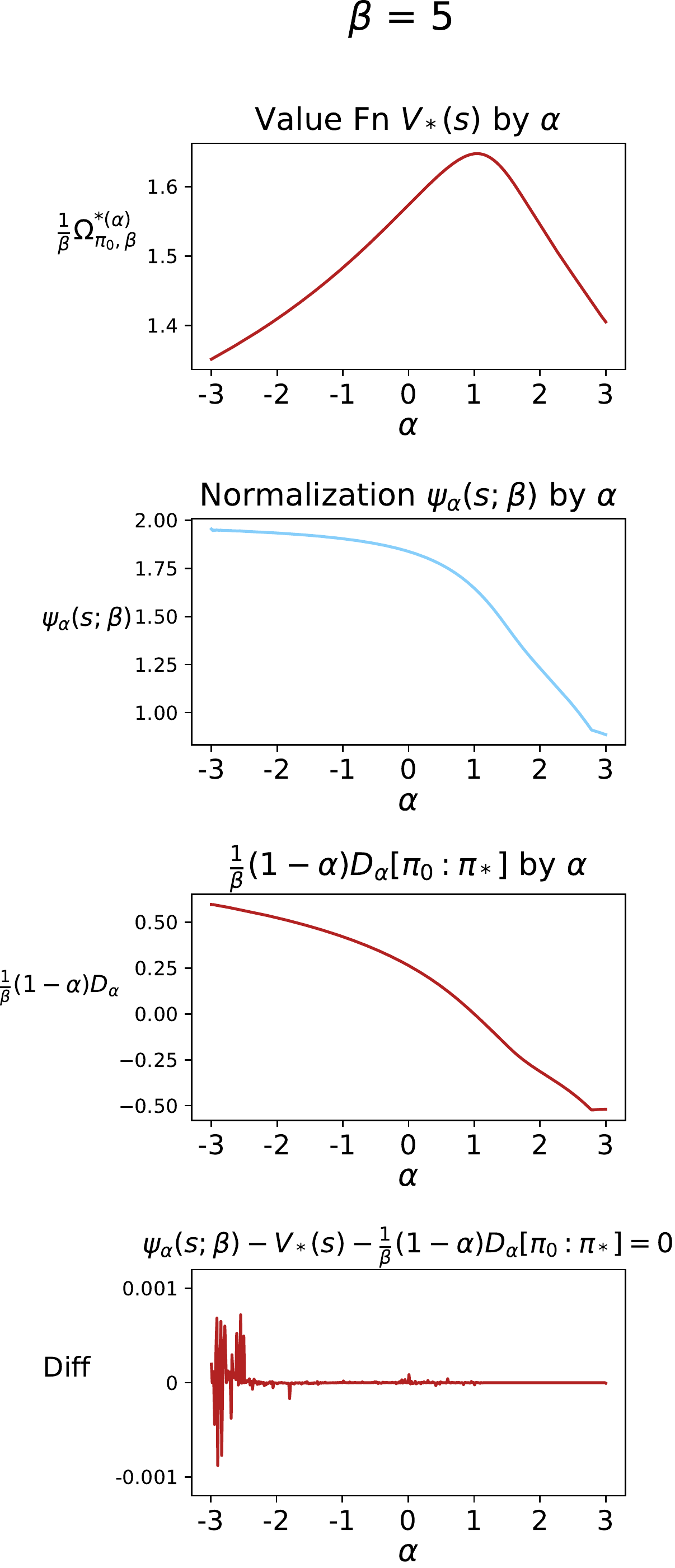}
\caption{$\beta=5$}\end{subfigure}
\caption{ Value Function $V_*(s) = \alphaconjn(Q_*)$ (first row) and Normalization Constant $\psiqopt$ (second row) as a function of $\alpha$ for various regularization strengths $\beta$.   We use the same rewards as in  \myfig{perturb_opt} and \myfig{value_agg_main} and a uniform reference.
We plot $\psipr = \frac{1}{\beta}(1-\alpha)D_{\alpha}[\pi_0:\piopt]$ in the third row, and confirm the identity $V_*(s) = \psiqopt - \psipr$ from \cref{eq:value_and_normalizers} and (\ref{eq:subtractive_relationship}) in the last row.   We find that this equality holds for all $\alpha$ up to small optimization errors on the order of $10^{-3}$.
}\label{fig:conj_by_alpha_app}
\end{minipage}
\begin{minipage}{\textwidth}
\vspace*{1cm}
\centering
\begin{subfigure}{0.4\columnwidth}\includegraphics[width=.676\textwidth]{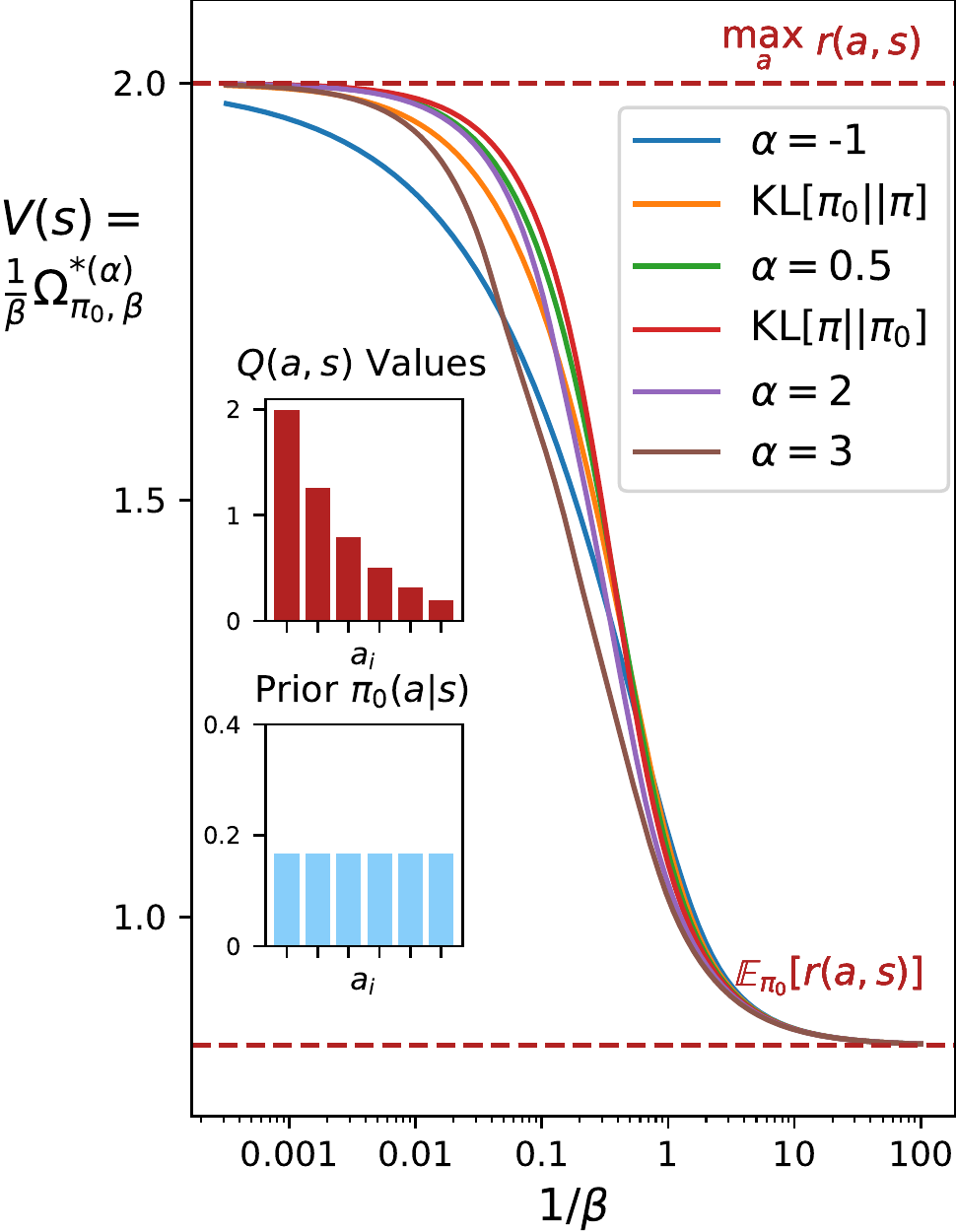}
\caption{Uniform $\pi_0(a|s)$}
\end{subfigure}\hspace*{.01\columnwidth}
\begin{subfigure}{0.4\columnwidth} 
\vspace*{-.1cm}\includegraphics[width=.676\textwidth]{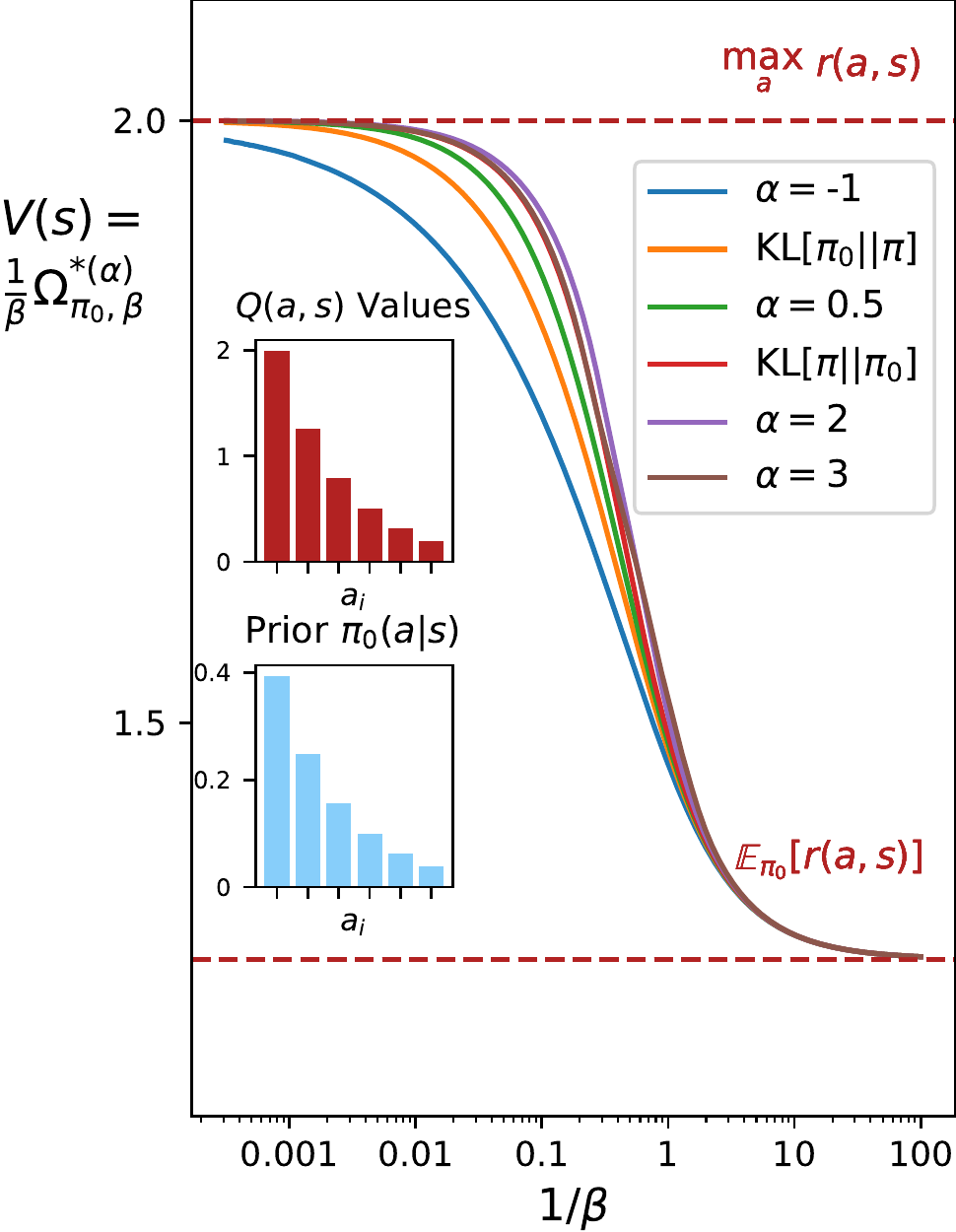}
\caption{$\pi_0(a|s) \propto r(a,s) $}
\end{subfigure}
\captionof{figure}{Value function $V(s) = \alphaconjm(Q)$ as a function of $\beta$ (x-axis) and $\alpha$ (colored lines), using $Q(a,s)$ and $\pi_0(a|s)$ from the left inset.   See \cref{eq:kl_bellman} and \cref{eq:alpha_bellman} for closed forms.}\label{fig:value_agg_main} 
\end{minipage}
\end{figure*}%

\paragraph{Relationship between $V_*(s)$ and $\psiqopt$}
This implies the condition which is the main result of this section. 
\begin{align}
     \psiqopt = V_*(s) + \psipropt \, . \label{eq:value_and_normalizers}
\end{align}
In \myfig{conj_by_alpha_app} we empirically confirm this identity and inspect how each quantity varies with $\beta$ and $\alpha$ (\myapp{alpha_results})\footnote{
Note that $\psipropt = \frac{1}{\beta}\frac{1}{\alpha}\left( \sum_a \pi_0(a|s) - \sum_a \pi_0(a|s)^{1-\alpha} \piopt(a|s)^{\alpha} \right)$ appears from differentiating $\frac{1}{\beta}\Omega_{\pi_0}^{(\alpha})(\mu)$ with respect to $\mu$ (\myapp{conj3} \cref{eq:psi_pr}).  We also write this as $\psipropt= \frac{1}{\beta}(1-\alpha) D_{\alpha}[\pi_0 : \piopt]$ for normalized $\pi_0$ and $\piopt$.} 

\cref{eq:value_and_normalizers} highlights distinct roles for the value function $V_*(s)$ and the Lagrange multiplier $\psiqopt$ enforcing normalization of $\pi(a|s)$ in soft-value aggregation ( \cref{eq:soft_val_opt_alpha} or \cref{eq:sv_opt}).   It is well known that these coincide in the case of \textsc{kl} divergence regularization, with $V_*(s) =\psiqopt$ as in \myapp{kl_conj_norm}.   We can also confirm that $\psipropt = \frac{1}{\beta}\frac{1}{\alpha}\left( \sum_a \pi_0(a|s) - \sum_a \pi_0(a|s)^{1-\alpha} \piopt(a|s)^{\alpha} \right) = 0$ vanishes for \textsc{kl} regularization ($\alpha = 1$) and normalized $\pi_0$ and the normalized optimal policy $\piopt$.   

However, in the case of $\alpha$-divergence regularization, optimization over the joint $\mu(a,s)$ in \cref{eq:opt_pol_opt} introduces the additional term $\psipropt$, which is not equal to 0 in general.

\paragraph{Relationship between Conjugate Functions}
We might also like to compare the value of the conjugate functions in \cref{eq:opt_pol_opt} and \cref{eq:sv_opt}, in particular to understand how including $V_*(s)$ as an argument and optimizing over $\pi$ versus $\mu$ affect the optima.
We write the expressions for the conjugate function in each case, highlighting the terms from \cref{eq:value_and_normalizers} in blue.  
\ifx\omitindices\undefined
\small
\begin{flalign}
&  \textit{\small Optimal Policy } \textit{ \small (or Worst-Case Reward Perturbations)}  \text{  (\myapp{conj3}, \cref{eq:alpha_pi_conj_closed_form},  \mytab{conj_table})} \nonumber  &   \\
 & \qquad  \frac{1}{\beta} \Omega_{\pi_0,\beta}^{*(\alpha)}\Big(r(a,s) + \gamma \transitionvstar - V_*(s) + \lambdaopt \Big) \label{eq:compare_optpol_conj} \\
 & \qquad \qquad = \frac{1}{\beta}\frac{1}{\alpha}\sum_a \pi_0(a|s) \exp_{\alpha} \Big\{ \beta \cdot  \big( Q_*(a,s) + \lambdaopt - V_*(s) - \psipr \big) \Big\}^{\alpha}  - \frac{1}{\beta}\frac{1}{\alpha} +  \textcolor{blue}{\psipropt} & \nonumber \\
 &   \textit{\small Soft Value Aggregation} \text{  \small (\myapp{alpha_conj_norm}, \cref{eq:alpha_bellman})},  \label{eq:compare_sv_conj} &  \\
 & \qquad \textcolor{blue}{V_*(s)} = \frac{1}{\beta}\frac{1}{\alpha}\sum_a \pi_0(a|s) \exp_{\alpha} \Big\{ \beta \cdot  \big( Q_*(a,s) + \lambdaopt - \psipiq \big) \Big\}^{\alpha}  - \frac{1}{\beta}\frac{1}{\alpha} +  \textcolor{blue}{\psipiq}  & \nonumber
\end{flalign}
\normalsize
\else
\small
\begin{flalign}
&  \textit{\small Optimal Policy } \textit{ \small (or Worst-Case Reward Perturbations)}  \text{  (\myapp{conj3}, \cref{eq:alpha_pi_conj_closed_form},  \mytab{conj_table})} \nonumber  &   \\
 & \qquad  \frac{1}{\beta} \Omega_{\pi_0,\beta}^{*(\alpha)}\Big(r + \gamma \transitionvstarinds - V_* + \lambda_* \Big) \label{eq:compare_optpol_conj} \\
 & \qquad \qquad = \frac{1}{\beta}\frac{1}{\alpha}\sum_a \pi_0(a|s) \exp_{\alpha} \Big\{ \beta \cdot  \big( Q_*(a,s) + \lambdaopt - V_*(s) - \psipr \big) \Big\}^{\alpha}  - \frac{1}{\beta}\frac{1}{\alpha} +  \textcolor{blue}{\psipropt} & \nonumber \\
 &   \textit{\small Soft Value Aggregation} \text{  \small (\myapp{alpha_conj_norm}, \cref{eq:alpha_bellman})},  \nonumber &  \\
 & \,\,\quad \textcolor{blue}{V_*(s)}  \leftarrow \frac{1}{\beta} \Omega_{\pi_0,\beta}^*\Big(r + \gamma \transitionvstarinds \Big) \label{eq:compare_sv_conj} \\
 & \qquad \qquad = \frac{1}{\beta}\frac{1}{\alpha}\sum_a \pi_0(a|s) \exp_{\alpha} \Big\{ \beta \cdot  \big( Q_*(a,s) + \lambdaopt - \psipiq \big) \Big\}^{\alpha}  - \frac{1}{\beta}\frac{1}{\alpha} +  \textcolor{blue}{\psipiq}  & \nonumber
\end{flalign}
\normalsize
\fi
Note that we have rewritten 
\ifx\omitindices\undefined
$V_*(s) \leftarrow \frac{1}{\beta} \Omega_{\pi_0,\beta}^*(r(a,s) + \gamma \transitionvstar )$ 
\else
$V_*(s) \leftarrow \frac{1}{\beta} \Omega_{\pi_0,\beta}^*(r + \gamma \transitionvstarinds )$ 
\fi
directly as $V_*(s)$.   To further simplify, note that the optimal policy matches as in \cref{eq:opt_pol_pol}, with 
\begin{align*}
\piopt(a|s) &= \pi_0(a|s) \exp_{\alpha}\{ \beta \cdot (Q_*(a,s) + \lambdaopt - \psiqopt )\} \\
&= \pi_0(a|s) \exp_{\alpha}\{ \beta \cdot (Q_*(a,s) + \lambdaopt - V_*(s) - \psipiopt )\} . \label{eq:81}
\end{align*}
\normalsize
Since $\pi_*(a|s) = \pi_0(a|s) \exp_{\alpha}\{ \cdot \}$, we can write terms of the form $\pi_0(a|s) \exp_{\alpha}\{ \cdot \}^{\alpha}$ in \cref{eq:compare_optpol_conj}-(\ref{eq:compare_sv_conj}) as $\pi_0(a|s)^{1-\alpha} \piopt(a|s)^{\alpha}$, where the exponents of $\pi_0(a|s)$ add to 1.
Finally, we use this expression to simplify the value function expression in  \cref{eq:compare_sv_conj}, eventually recovering the equality in \cref{eq:value_and_normalizers}
\ifx\omitindices\undefined
\begin{align}
  \frac{1}{\beta} \Omega_{\pi_0,\beta}^{*(\alpha)}(r(a,s) + \gamma \transitionvstar )=  \textcolor{blue}{V_*(s)} &=  \frac{1}{\beta}\frac{1}{\alpha}\sum_a \pi_0(a|s)^{1-\alpha} \piopt(a|s)^{\alpha}  - \frac{1}{\beta}\frac{1}{\alpha} +  {\psipiq} \\
    &=  \textcolor{blue}{\psipiq} - \textcolor{blue}{\psipiopt} \label{eq:subtractive_relationship}
\end{align}
\else
\begin{align}
  \frac{1}{\beta} \Omega_{\pi_0,\beta}^{*(\alpha)}\Big(r  + \gamma \transitionvstarinds \Big)=  \textcolor{blue}{V_*(s)} &=  \frac{1}{\beta}\frac{1}{\alpha}\sum_a \pi_0(a|s)^{1-\alpha} \piopt(a|s)^{\alpha}  - \frac{1}{\beta}\frac{1}{\alpha} +  {\psipiq} \\
    &=  \textcolor{blue}{\psipiq} - \textcolor{blue}{\psipiopt} \label{eq:subtractive_relationship}
\end{align}
\fi
In the second line, we use the fact that $\psipiopt = \frac{1}{\beta} \frac{1}{\alpha} \sum_{a} \pi_0(a|s)- \frac{1}{\beta} \frac{1}{\alpha}  \sum_{a} \pi_0(a|s)^{1-\alpha} \piopt(a|s)^{\alpha}$ from \cref{eq:psi_pr}.
We can use the same identity to show that the conjugate in \cref{eq:compare_optpol_conj} evaluates to zero,
\ifx\omitindices\undefined
\small
\begin{align}
    \frac{1}{\beta} \Omega_{\pi_0,\beta}^{*(\alpha)}\Big(r(a,s) + \gamma \transitionvstar - V_*(s) + \lambdaopt \Big) = \underbrace{ \frac{1}{\beta}\frac{1}{\alpha}\sum_a \pi_0(a|s)^{1-\alpha} \piopt(a|s)^{\alpha} - \frac{1}{\beta}\frac{1}{\alpha}}_{\psipiopt} + \psipropt =
    0 \label{eq:conj_zero}
\end{align}
\normalsize
\else
\begin{align}
    \frac{1}{\beta} \Omega_{\pi_0,\beta}^{*(\alpha)}\Big(r + \gamma \transitionvstarinds - V_* + \lambda_* \Big) = \underbrace{ \frac{1}{\beta}\frac{1}{\alpha}\sum_a \pi_0(a|s)^{1-\alpha} \piopt(a|s)^{\alpha} - \frac{1}{\beta}\frac{1}{\alpha}}_{\psipiopt} + \psipropt =
    0 \label{eq:conj_zero}
\end{align}
\fi
In \mylemma{conjugatezero}, we provide a more detailed proof and show that this identity also holds for suboptimal policies and their worst-case reward perturbations, $\frac{1}{\beta} \Omega_{\pi_0,\beta}^{*(\alpha)}(\prpi) = 0$, where \cref{eq:conj_zero} is a special case for $\prpiopt(a,s) = r(a,s) + \gamma \transitionvstar - V_*(s) + \lambdaopt$.

Finally, note that the condition in \cref{eq:conj_zero} implies that for the optimal $V_*(s)$, the regularized dual objective $\obj_{\Omega, \beta}^{*}(r) = 
 \min_{V, \lambda}
 \,  (1-\gamma) \langle \nu_0, V \rangle  + \frac{1}{\beta}\Omega^{*}_{\pi_0, \beta} \Big(r  +  \gamma \mathbb{E}_{a,s}^{s^{\prime}} V -V + \lambda \Big)$ in \cref{eq:dual_reg} reduces to the value function averaged over initial states, $\obj^{*}_{\Omega, \beta}(r) = (1-\gamma)\langle \nu_0, V_* \rangle$.   This is intuitive since $V_*(s)$ measures the regularized objective attained from running the optimal policy for infinite time in the discounted \gls{MDP}.

\subsection{Plotting Value Function as a Function of Regularization Parameters $\alpha, \beta$}\label{app:alpha_results}

\paragraph{Confirming Relationship between Normalization $\psipropt$, $\psiqopt$ and Value Function $V_*(s)$}
In \myfig{conj_by_alpha_app}, we plot both $V_*(s)$ and $\psi_{Q_*}(s;\beta)$ for various values of $\alpha$ (x-axis) and $\beta$ (in each panel).   
We also plot $\psipiopt = \frac{1}{\beta}(1-\alpha) D_{\alpha}[\pi_0:\piopt]$ in the third row, and confirm the identity in \cref{eq:value_and_normalizers} in the fourth row.

 As we also observe in \myfig{value_agg_main}, the soft value function or certainty equivalent $V_*(s) = \alphaconjn(Q)$ is not monotonic in $\alpha$ for this particular set of single-step rewards (the same $r(a,s)$ as in \myfig{perturb_opt} or \myfig{value_agg_main}).   Note the small scale of the $y$-axis in the top row of \myfig{conj_by_alpha_app}.

While it can be shown that $\psipr$ is convex as a function of $\beta$, we see that $\psipr$ is not necessarily convex in $\alpha$ and appears to be monotonically decreasing in $\alpha$.   Finally, we find that the identity in \cref{eq:value_and_normalizers}-(\ref{eq:subtractive_relationship}) holds empirically, with only small numerical optimization issues.

\paragraph{Value $V_*(s)$ as a Function of $\beta$, $\alpha$}
In \cref{fig:value_agg_main}, we visualize the optimal value function $V_*(s) = \frac{1}{\beta}\Omega^{*}_{\pi_0,\beta}(Q)$, for \textsc{kl} or $\alpha$-divergence regularization and different choices of regularization strength $1/\beta$.   
The choice of divergence particularly affects the aggregated value at low regularization strength, although we do not observe a clear pattern with respect to $\alpha$.\footnote{See \citet{belousov2019entropic, lee2018sparse, lee2019tsallis}, or \cref{app:alpha_results} for additional discussion of the effect of $\alpha$-divergence regularization on learned policies.}
In all cases, the value function ranges between $\max_a Q(a,s)$ for an unregularized deterministic policy as $\beta \rightarrow \infty$, and the expectation under the reference policy $\mathbb{E}_{\pi_0}[Q(a,s)]$ for strong regularization as $\beta \rightarrow 0$.   We also discuss this property in \mysec{entropy_vs_div}.

\section{Robust Set of Perturbed Rewards}\label{app:feasible_pf}\label{app:feasible}
In this section, we characterize the robust set of perturbed rewards to which a given policy $\pi(a|s)$ or $\mu(a,s)$ is robust, which also provides performance guarantees as in \cref{eq:generalization} and also describes the set of strategies available to the adversary.
For proving \myprop{feasible}, we focus our discussion on policy regularization with \textsc{kl} or $\alpha$-divergence regularization and compare with state-occupancy regularization in \myapp{mu_feasible}.

\subsection{Proof of \myprop{feasible}:  Robust Set of Perturbed Rewards for Policy Regularization} \label{app:pf_feasible}
We begin by stating two lemmas, which we will use to characterize the robust set of perturbed rewards.  All proofs are organized under paragraph headers below the statement of \myprop{feasible}.
\begin{restatable}{lemma}{conjugatezero} \label{lemma:conjugatezero}
For the worst-case reward perturbation $\prpi(a,s)$ associated with a given, normalized policy $\pi(a|s)$ 
and $\alpha$- or \textsc{kl}-divergence regularization, the conjugate function evaluates to zero,
\begin{align}
  \alphaconjmu(\prpi) = 0 \, .
\end{align}
\end{restatable}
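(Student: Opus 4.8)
The plan is to exploit the fact that, by \myprop{optimal_perturbations}, the worst-case perturbation is exactly a gradient, $\prpi = \nabla_{\mu}\,\omegamu$. The conjugate optimality conditions in \cref{eq:conj_optimality_app} then say that the very $\mu$ which generated $\prpi$ is the maximizing argument of the supremum defining $\alphaconjn(\prpi)$. Hence the supremum is attained with no slack, and
\[
\alphaconjn(\prpi) = \langle \mu, \prpi \rangle - \omegamu .
\]
This reduces the lemma to the single scalar identity $\langle \mu, \prpi \rangle = \omegamu$. I would stress that this is \emph{not} Euler's homogeneity identity — the $\alpha$-divergence is not homogeneous of degree one — so the equality is a genuine consequence of the normalization $\sum_a \pi(a|s) = \sum_a \pi_0(a|s) = 1$ assumed in the statement, not of any scaling property of $\Omega$.

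To verify $\langle \mu, \prpi \rangle = \omegamu$, I would write $\mu(a,s) = \mu(s)\pi(a|s)$ and substitute the explicit perturbation $\prpi = \frac{1}{\beta}\log_{\alpha}\tfrac{\pi}{\pi_0} + \psipr$ from \cref{eq:optimal_perturbations}, splitting the inner product into a $\log_\alpha$ contribution and a $\psipr$ contribution. Because $\pi$ is normalized, $\sum_a \pi(a|s)\,\psipr = \psipr$ pulls the (self-referential) normalization constant out of the action sum, while the deformed-logarithm term simplifies via $\pi(a|s)\log_\alpha\tfrac{\pi}{\pi_0} = \tfrac{1}{\alpha-1}\big(\pi_0^{1-\alpha}\pi^{\alpha} - \pi\big)$. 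Writing $S(s) \coloneqq \sum_a \pi_0(a|s)^{1-\alpha}\pi(a|s)^{\alpha}$ and collecting the $\tfrac{1}{\alpha-1}$ and $\tfrac{1}{\alpha}$ pieces (using $\psipr$ from \cref{eq:pr_normalizer} and $\sum_a\pi_0 = \sum_a\pi = 1$), I expect $\langle\mu,\prpi\rangle$ to reduce to $\tfrac{1}{\beta}\tfrac{1}{\alpha(\alpha-1)}\sum_s \mu(s)\,(S(s)-1)$; the regularizer $\omegamu$ collapses to the identical expression after substituting $\sum_a\pi_0 = \sum_a\pi = 1$, giving the desired cancellation.

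An equivalent route, which I would note as a cross-check, is to substitute the optimizing argument $\pir = \pi$ directly into the closed form \cref{eq:alpha_pi_conj_closed_form}, using $\pi_0\big[1+\beta(\alpha-1)(\prpi-\psipr)\big]^{\alpha/(\alpha-1)} = \pi_0^{1-\alpha}\pi^{\alpha}$ together with $\sum_a\pi_0 = 1$, whereupon the same terms cancel. The \textsc{kl} case ($\alpha\to1$) is immediate and worth stating separately: plugging $\prpi = \frac{1}{\beta}\log\tfrac{\pi}{\pi_0}$ into \cref{eq:kl_pi_conj_closed_form} gives $\sum_a \pi_0\,e^{\beta\prpi} = \sum_a \pi = 1$, so each state contributes $\tfrac{1}{\beta}(1-1)=0$. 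The only real obstacle is the bookkeeping: carefully tracking which summations run over actions and invoking the two normalization facts at exactly the right points so that the $\tfrac{1}{\alpha-1}$ and $\tfrac{1}{\alpha}$ terms combine into the common $\tfrac{1}{\alpha(\alpha-1)}$ factor, while keeping the self-referential constant $\psipr$ consistent on both sides.
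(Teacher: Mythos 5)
Your proposal is correct, but its primary route is genuinely different from the paper's. The paper's proof (\myapp{pf_feasible}) is a brute-force substitution: it plugs $\prpi(a,s) = \frac{1}{\beta}\log_{\alpha}\frac{\pi(a|s)}{\pi_0(a|s)} + \psipr$ into the closed-form conjugate \cref{eq:alpha_pi_conj_closed_form}, simplifies $\exp_{\alpha}\big\{\beta\cdot(\prpi(a,s)-\psipr)\big\}^{\alpha}$ to $\pi_0(a|s)^{1-\alpha}\pi(a|s)^{\alpha}$, and cancels against the definition of $\psipr$ in \cref{eq:pr_normalizer} --- which is exactly the route you relegate to a ``cross-check,'' and your \textsc{kl} computation coincides with the paper's verbatim. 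Your main argument instead invokes the Fenchel--Young equality: since $\prpi = \nabla_{\mu}\,\omegamu$ by \myprop{optimal_perturbations}, the supremum defining the conjugate is attained at the given $\mu$ itself, so $\alphaconjn(\prpi) = \langle \mu, \prpi\rangle - \omegamu$, and the lemma reduces to the scalar identity $\langle \mu, \prpi\rangle = \omegamu$. Your verification of that identity is correct: the per-state sums $\frac{1}{\alpha-1}(S(s)-1)$ and $\frac{1}{\alpha}(1-S(s))$ with $S(s) = \sum_a \pi_0(a|s)^{1-\alpha}\pi(a|s)^{\alpha}$ do combine into $\frac{1}{\alpha(\alpha-1)}(S(s)-1)$, matching $\omegamu$ once $\sum_a \pi_0 = \sum_a \pi = 1$ is used, and your observation that this is a consequence of normalization rather than Euler homogeneity is apt. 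Your decomposition buys two things the paper's computation obscures: it requires only the definition of the conjugate and convexity of $\Omega$, not the derived closed form, and it isolates where normalization enters --- indeed the same split explains why the lemma fails for joint $\alpha$-divergence occupancy regularization (\myapp{mu_feasible}), where $\langle \mu, \prmu\rangle$ and $\alphaomegamu$ differ by a factor of $\alpha$. The one hypothesis you use implicitly (as does the paper) is that the first-order condition characterizes the maximizer of the concave conjugate objective at an interior point, i.e.\ $\pi(a|s) > 0$ wherever the $\log_{\alpha}$ expression is invoked; a sentence acknowledging this would be worthwhile, but it is not a gap relative to the paper's own level of rigor.
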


\begin{restatable}{lemma}{increasing} \label{lemma:increasing}
The conjugate function $\alphaconjmu(\pr)$ is increasing in $\pr$.   In other words, if $\prnew(a,s) \geq \pr(a,s)$ for all $(a,s) \in \mathcal{A} \times \mathcal{S}$, then $\alphaconjmu(\prnew) \geq \alphaconjmu(\pr).$
\end{restatable}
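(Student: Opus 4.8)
The plan is to prove monotonicity directly from the variational (conjugate) definition of $\alphaconjmu$ in \cref{eq:conj_app}, exploiting the fact that the supremum defining the conjugate is taken over the \emph{nonnegative} cone $\mu \in \mudomainfn$. Since every competitor $\mu(a,s)$ entering the supremum is nonnegative, raising the argument $\pr$ pointwise can only increase each value of the inner objective, and this monotonicity then transfers to the supremum. This route has the advantage of never using the specific form of $\Omega$, so it covers the $\alpha$-divergence, its \textsc{kl} special case, and occupancy regularization simultaneously.

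Concretely, first I would fix an arbitrary $\mu \in \mudomainfn$ and combine the hypothesis $\prnew(a,s) \geq \pr(a,s)$ for all $(a,s)$ with $\mu(a,s) \geq 0$ to obtain the pointwise bound $\mu(a,s)\,\prnew(a,s) \geq \mu(a,s)\,\pr(a,s)$, hence $\langle \mu, \prnew \rangle \geq \langle \mu, \pr \rangle$ after summing over $\mathcal{A}\times\mathcal{S}$. Subtracting the common regularizer $\frac{1}{\beta}\Omega^{(\alpha)}_{\pi_0}(\mu)$ from both sides preserves the inequality, giving
\begin{align}
\langle \mu, \prnew \rangle - \tfrac{1}{\beta}\Omega^{(\alpha)}_{\pi_0}(\mu) \;\geq\; \langle \mu, \pr \rangle - \tfrac{1}{\beta}\Omega^{(\alpha)}_{\pi_0}(\mu) \qquad \forall\, \mu \in \mudomainfn.
\end{align}
The left-hand side is bounded above by its own supremum $\alphaconjmu(\prnew)$, so $\alphaconjmu(\prnew) \geq \langle \mu, \pr \rangle - \frac{1}{\beta}\Omega^{(\alpha)}_{\pi_0}(\mu)$ holds for \emph{every} $\mu \in \mudomainfn$; taking the supremum over $\mu$ on the right then yields $\alphaconjmu(\prnew) \geq \alphaconjmu(\pr)$, which is the claim. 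No attainment of the supremum is needed, since $f(\mu)\geq g(\mu)$ for all $\mu$ already forces $\sup f \geq \sup g$.

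The argument is short, and the only point requiring genuine care — the one I would flag as the ``obstacle'' — is the role of the nonnegativity restriction $\mu \in \mudomainfn$: if $\mu$ were allowed to take negative values, the inner product $\langle \mu, \pr \rangle$ would not be monotone in $\pr$ and the statement would fail. I would therefore be explicit that this is precisely the restriction being used, the same one built into the conjugate definition in \myapp{implications}. As an alternative route, one could differentiate the closed-form conjugate in \cref{eq:alpha_pi_conj_closed_form} and observe that, by the conjugate optimality condition \cref{eq:conj_optimality_app}, $\nabla_{\pr}\,\alphaconjmu(\pr) = \mur(a,s) = \mu(s)\,\pir(a|s) \geq 0$, a nonnegative gradient which again gives monotonicity; I prefer the supremum argument since it avoids the closed form and handles all divergences at once.
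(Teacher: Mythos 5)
Your proof is correct, and it is essentially the argument the paper relies on: the paper gives no derivation of its own but defers to \citet{husain2021regularized} (Lemma~3), whose proof is precisely your observation that nonnegativity of $\mu \in \mudomainfn$ makes each inner objective $\langle \mu, \pr \rangle - \frac{1}{\beta}\Omega(\mu)$ pointwise monotone in $\pr$, so the supremum inherits the monotonicity. You correctly identify the one load-bearing hypothesis (the nonnegative cone in the conjugate's domain), so your self-contained version supplies exactly what the citation delegates.
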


\feasible*
\begin{proof}
Recall the adversarial optimization in \cref{eq:pedro_form} for a fixed $\mu(a,s) = \mu(s)\pi(a|s)$ 
\ifx\omitindices\undefined
\begin{align}
    \hspace*{-.2cm}   \min \limits_{\textcolor{black}{\prinit(a,s)}} 
  \langle {\mu(a,s)}, {r(a,s) - \textcolor{\highlight}{\prinit(a,s)}} \rangle  + \alphaconj \big( \textcolor{black}{\prinit} \big), \label{eq:pedro_form_pf}
\end{align}
\else
\begin{align}
    \hspace*{-.2cm}   \min \limits_{\textcolor{black}{\prinit}} 
  \langle {\mu}, \, {r - \textcolor{\highlight}{\prinit}} \rangle  + \alphaconj \big( \textcolor{black}{\prinit} \big), \label{eq:pedro_form_pf}
\end{align}
\fi
which we would like to transform into a constrained optimization.   From \mylemma{conjugatezero}, we know that $\alphaconj(\prpi) = 0$ for the optimizing argument $\prpi$ in \cref{eq:pedro_form_pf},  but it is not clear whether this should appear as an equality or inequality constraint.   We now show that the constraint $\alphaconj(\pr) \geq 0$ changes the value of the objective, whereas the constraint $\alphaconj(\pr) \leq 0$ does not change the value of the optimization.  

\textbf{$\mathbf{\geq}$ Inequality}
First, consider the optimization 
\ifx\omitindices\undefined
$\min_{\textcolor{black}{\prinit(a,s)}} \langle {\mu(a,s)}, {r(a,s) - \textcolor{black}{\prinit(a,s)}} \rangle$ subject to $\alphaconj(\pr) \geq 0$.   
\else
$\min_{\textcolor{black}{\prinit}} \langle {\mu}, {r - \textcolor{black}{\prinit}} \rangle$ subject to $\alphaconj(\pr) \geq 0$.  
\fi
From the optimizing argument $\prpi$, consider an increase in the reward perturbations $\prnew(a,s) \geq \prpi(a,s) \, \, \forall (a,s)$ where $\exists (a,s)$  s.t. $\mu(a,s) > 0$ and $\prnew(a,s) > \prpi(a,s)$.    By \mylemma{increasing}, we have $\alphaconj(\pr) \geq \alphaconj(\prpi) = 0$.   However, the objective now satisfies $\langle \mu, r - \prnew \rangle < \langle \mu, r - \prpi \rangle$ for fixed $\mu(a,s)$, which is a contradiction since $\prpi$ provides a global minimum of the convex objective in \cref{eq:pedro_form_pf}.  

\textbf{$\mathbf{\leq}$ Inequality}
We would like to show that this constraint does not introduce a different global minimum of \cref{eq:pedro_form_pf}.
Assume there exists $\prnew(a,s)$ with $\alphaconj(\prnew) < 0$ and $\langle \mu_{\pi} , r - \prnew \rangle < \langle \mu_{\pi} , r - \prpi \rangle$ for the occupancy measure $\mu_{\pi}$ associated with the given policy $\pi$.   By convexity of $\alphaconj(\pr)$, we know that a first-order Taylor approximation around $\prpi$ everywhere underestimates the function, 
\ifx\omitindices\undefined
$\alphaconj(\prnew) \geq \alphaconj(\prpi) + \langle \prnew(a,s)  - \prpi(a,s),  \nabla \alphaconj(\prpi) \rangle$.
\else
$\alphaconj(\prnew) \geq \alphaconj(\prpi) + \langle \prnew  - \prpi,  \nabla \alphaconj(\prpi) \rangle$.
\fi
Noting that $\mu_{\pi} = \nabla \alphaconj(\prpi)$ by the conjugate optimality conditions (\cref{eq:optimal_policy_as_grad}, \myapp{implications}), we have $\alphaconj(\prnew)- \alphaconj(\prpi) \geq  \langle \mu_{\pi} , \prnew  \rangle -\langle \mu_{\pi} , \prpi \rangle$.    This now introduces a contradiction, since we have assumed both that $\alphaconj(\prnew)- \alphaconj(\prpi) < 0$, and that $\prnew(a,s)$ provides a global minimum, where $\langle \mu_{\pi} , r - \prnew \rangle < \langle \mu_{\pi} , r - \prpi \rangle$ implies $\langle \mu_{\pi} , \prnew  \rangle -\langle \mu_{\pi} , \prpi \rangle > 0.$    Thus, including the inequality constraint $\alphaconj(\prnew) \leq 0$ cannot introduce different minima.

This constraint is consistent with the constrained optimization and generalization guarantee in \cref{eq:reward_perturbation_stmt}-(\ref{eq:generalization}), where it is clear that increasing the modified reward away from the boundary of the robust set  (i.e. decreasing $\pr(a,s)$ and $\alphaconj(\pr)$) is feasible for the adversary and preserves our performance guarantee.   
See \citet{eysenbach2021maximum} A2 and A6 for alternative reasoning.
\end{proof}

\textbf{Proof of \mylemma{conjugatezero}}
For $\alpha$-divergence policy regularization and a given $\pi(a|s)$, we substitute the worst-case reward perturbations $\prpi(a,s) = \frac{1}{\beta}\log_{\alpha}\frac{\pi(a|s)}{\pi_0(a|s)} + \myconst$ (\cref{eq:optimal_perturbations} or \cref{eq:alpha_worst_case_reward_perturb}) in the conjugate function $\alphaconjmu(\prpi)$ (\cref{eq:alpha_pi_conj_closed_form} or \mytab{conj_table}).   Assuming $\sum_a \pi(a|s) = \sum_a \pi_0(a|s) = 1$, we have  
\small
\begin{align*}
    \hspace*{-.1cm} \alphaconjmu(\prpi) &= 
    \frac{1}{\beta} \frac{1}{\alpha}  \sum \limits_{a} \pi_0(a|s) \exp_{\alpha}\big\{ \beta \cdot \big(\textcolor{blue}{\prpi(a,s)} - \psipi \big) \big\}^{\alpha} - \frac{1}{\beta} \frac{1}{\alpha} +  \psipi \\
    &=  \frac{1}{\beta} \frac{1}{\alpha}  \sum \limits_{a} \pi_0(a|s) \Big[ 1+ \beta (\alpha-1) \bigg( \textcolor{blue}{ \frac{1}{\beta} \frac{1}{\alpha-1} \big( \frac{\pi(a|s)}{\pi_0(a|s)}^{\alpha-1}  -1 \big) + \psipi } - \psipi \Big) \Big]_+^{\frac{\alpha}{\alpha-1}} - \frac{1}{\beta} \frac{1}{\alpha} +   \psipi \\ 
    &=  \frac{1}{\beta} \frac{1}{\alpha}  \sum \limits_{a} \pi_0(a|s)^{1-\alpha} \pi(a|s)^{\alpha} - \frac{1}{\beta} \frac{1}{\alpha} +   \psipi  \\
    &= 0.
\end{align*}
\normalsize
In the last line, we recall that $\psipi = \frac{1}{\beta} \frac{1}{\alpha} \sum_{a} \pi_0(a|s)- \frac{1}{\beta} \frac{1}{\alpha}  \sum_{a} \pi_0(a|s)^{1-\alpha} \pi(a|s)^{\alpha}$ from \cref{eq:pr_normalizer} or (\ref{eq:psi_pr}). 

For \textsc{kl} regularization, we plug $\prpi(a,s) = \frac{1}{\beta}\log\frac{\pi(a|s)}{\pi_0(a|s)}$ (\cref{eq:kl_perturbations},(\ref{eq:mu_for_kl_pi})) into the conjugate in \cref{eq:kl_pi_conj_closed_form} 
or \mytab{conj_table},
\footnotesize
\begin{align}
    \frac{1}{\beta}\Omega^{*}_{\pi_0,\beta}(\prpi) &= \frac{1}{\beta} \sum \limits_{a} \pi_0(a|s) \expof{\beta \, \textcolor{blue}{\prpi(a,s)}} - \frac{1}{\beta} = \frac{1}{\beta} \sum \limits_{a} \pi_0(a|s) \expof{\beta  \textcolor{blue}{\frac{1}{\beta}\log\frac{\pi(a|s)}{\pi_0(a|s)}}} - \frac{1}{\beta} =  \frac{1}{\beta} \sum \limits_{a}\pi(a|s)-\frac{1}{\beta} = 0 . \nonumber
\end{align}
\normalsize
\paragraph{Proof of \mylemma{increasing}}  \textcolor{black}{See \citet{husain2021regularized} Lemma 3.}

\subsection{Robust Set for $\alpha$-Divergence under $\mu(a,s)$ Regularization} \label{app:mu_feasible}
For state-action occupancy regularization and \textsc{kl} divergence, \mylemma{conjugatezero} holds with $\frac{1}{\beta} \Omega^*_{\mu_0,\beta}(\pr_{\mu}) = 0$ for normalized $\mu(a,s)$ and $\pr_{\mu}(a,s) = \frac{1}{\beta}\log \frac{\mu(a,s)}{\mu_0(a,s)}$.
However, the reasoning in \myapp{feasible_pf} no longer holds for $\alpha$-divergence regularization to a reference $\mu_0(a,s)$. Substituting the worst-case reward perturbations (\cref{eq:pr_mu} or (\ref{eq:alpha_pr_mu})) into the conjugate function (\cref{eq:conj_mu_result} or \mytab{conj_table})
\begin{align}
    \frac{1}{\beta}\Omega^{*(\alpha)}_{\mu_0,\beta}(\prmu) &= 
    \frac{1}{\beta} \frac{1}{\alpha}  \sum \limits_{a} \mu_0(a,s) \exp_{\alpha}\big\{ \beta \cdot \textcolor{blue}{\prmu(a,s)}  \big\}^{\alpha} - \frac{1}{\beta} \frac{1}{\alpha} \label{eq:conj_mu_123} \\
    &=  \frac{1}{\beta} \frac{1}{\alpha}  \sum \limits_{a} \mu_0(a,s) \Big[ 1+ \beta (\alpha-1) \bigg( \textcolor{blue}{ \frac{1}{\beta} \frac{1}{\alpha-1} \big( \frac{\mu(a,s)}{\mu_0(a,s)}^{\alpha-1}  -1 \big)} \Big) \Big]_+^{\frac{\alpha}{\alpha-1}} - \frac{1}{\beta} \frac{1}{\alpha} \nonumber \\
    &= \frac{1}{\beta} \frac{1}{\alpha}  \sum \limits_{a} \mu_0(a,s)^{1-\alpha} \mu(a,s)^{\alpha} - \frac{1}{\beta}\frac{1}{\alpha} \nonumber
\end{align}
whose value is not equal to $0$ in general and instead is a function of the given $\mu(a,s)$.   
This may result in the original environmental reward not being part of the robust set, since substituting $\pr(a,s)= 0$ into \cref{eq:conj_mu_123} results in $ \frac{1}{\beta}\Omega^{*(\alpha)}_{\mu_0,\beta}(\pr) = 0$.   

\headerv
\subsection{Plotting the $\alpha$-Divergence Feasible Set}\label{app:feasible_plot}
\headerv
To plot the boundary of the feasible set in the single step case, for the \textsc{kl} divergence regularization in two dimensions, we can simply solve for the $\pr(a_2,s)$ which satisfies the constraint $\sum_a \pi_0(a|s) \exp\{ \beta \, \pr(a|s) \} = 1$ for a given $\pr(a_1,s)$
\begin{align}
    \pr(a_2,s) = \frac{1}{\beta} \log \frac{1}{\pi_0(a_2|s)}(1- \pi_0(a_1|s) \expof{\beta \cdot \pr(a_1,s)}) \, .
\end{align}
The interior of the feasible set contains $\pr(a_1,s)$ and $\pr(a_2,s)$ that are greater than or equal to these values.

However,  we cannot analytically solve for the feasible set boundary for general $\alpha$-divergences, since the conjugate function $\alphaconjmu(\pr)$ 
depends on the normalization constant of $\pir(a,s)$.
Instead, we perform exhaustive search over a range of $\pr(a_1, s)$ and $\pr(a_2,s)$ values.  For each pair of candidate reward perturbations, we use \textsc{cvx-py} \citep{diamond2016cvxpy} to solve the conjugate optimization and evaluate $\alphaconjmu(\pr)$.
We terminate our exhaustive search and record the boundary of the feasible set when we find that $\alphaconjn(\pr) = 0$ within appropriate precision.     

\headerv
\section{Value Form Reward Perturbations}\label{app:value_form}
\headerv

\subsection{Proof of \mythm{husain} (\citet{husain2021regularized})} \label{app:husain}
We rewrite the derivations of \citet{husain2021regularized} for our notation and setting, where $\Omega \big( \mu \big)$ represents a convex regularizer.   Starting from the regularized objective in \cref{eq:primal_reg},
 \ifx\omitindices\undefined
   \begin{align}
        \max \limits_{\mu(a,s) \in \mudomain} \obj_{\Omega, \beta}(\mu) &= \max \limits_{\mu(a,s) \in \mudomain} \,\,  \big\langle \mu(a,s), r(a,s)\rangle - \frac{1}{\beta} \Omega \big( \mu \big) ,
        \label{eq:reg_rl2} 
   \end{align}
   \else
      \begin{align}
        \max \limits_{\mu \in \mudomain} \obj_{\Omega, \beta}(\mu) &= \max \limits_{\mu \in \mudomain} \,\,  \big\langle \mu, r \rangle - \frac{1}{\beta} \Omega \big( \mu \big) ,
        \label{eq:reg_rl2} 
   \end{align}
   \fi
 note that the objective is \textit{concave}, as the sum of a linear term and the concave $-\Omega$.  Since the conjugate is an involution for convex functions, we can rewrite  $ \obj_{\Omega, \beta}(r) = -(-\obj_{\Omega, \beta}(r)) = - ((-\obj_{\Omega, \beta})^*)^* $, which yields
 \ifx\omitindices\undefined
 \begin{align}
     \obj_{\Omega, \beta}(r) &= \sup \limits_{\mu(a,s) \in \mudomain} - ((-\obj_{\Omega, \beta})^*)^* \nonumber \\
     &\overset{(1)}{=}\sup \limits_{\mu(a,s) \in \mudomain} -\bigg( \sup \limits_{r^\prime(a,s) \in \rprimedomain} \langle \mu(a,s), r^{\prime}(a,s) \rangle -  (-\obj_{\Omega, \beta})^*(r^{\prime}) \bigg) \nonumber \\[1.25ex]
     &= \sup \limits_{\mu(a,s) \in \mudomain}   \inf \limits_{r^\prime(a,s) \in \rprimedomain}  \langle \mu(a,s), -r^{\prime}(a,s) \rangle +  (-\obj_{\Omega, \beta})^*(r^{\prime})  \nonumber \\
      &\overset{(2)}{=}  \sup \limits_{\mu(a,s) \in \mudomain} \inf \limits_{r^\prime(a,s) \in \rprimedomain}  \langle \mu(a,s), r^{\prime}(a,s) \rangle +  (-\obj_{\Omega, \beta})^*(-r^{\prime})  \nonumber\\
     &\overset{(3)}{=} \sup \limits_{\mu(a,s) \in \mudomain}  \inf \limits_{r^\prime(a,s) \in \rprimedomain}  \langle \mu(a,s), r^{\prime}(a,s) \rangle +  \bigg( \sup \limits_{\mu^{\prime}} \, \langle \mu^{\prime}(a,s), -r^{\prime}(a,s) \rangle + \obj_{\Omega, \beta}(\mu^{\prime}) \bigg) \nonumber \\
     &\overset{(4)}{=}  \sup \limits_{\mu(a,s) \in \mudomain}  \inf \limits_{r^\prime(a,s) \in \rprimedomain}  \langle \mu(a,s), r^{\prime}(a,s) \rangle +  \bigg( \sup \limits_{\mu^{\prime}} \, \langle \mu^{\prime}(a,s), -r^{\prime}(a,s) \rangle + \langle \mu^{\prime}(a,s), r(a,s)\rangle - \frac{1}{\beta} \Omega(\mu^{\prime})   \bigg) \nonumber \\
     &=  \sup \limits_{\mu(a,s) \in \mudomain}  \inf \limits_{r^\prime(a,s) \in \rprimedomain}  \langle \mu(a,s), r^{\prime}(a,s) \rangle + \frac{1}{\beta}  \bigg( \sup \limits_{\mu^{\prime}}  \, \langle \mu^{\prime}(a,s), \beta \cdot \big( r(a,s) - r^{\prime}(a,s) \big)  \rangle -  \Omega(\mu^{\prime})   \bigg) \nonumber \\[1.25ex]
      &\overset{(5)}{=} \sup \limits_{\mu(a,s) \in \mudomain}  \inf \limits_{r^\prime(a,s) \in \rprimedomain}  \langle \mu(a,s), r^{\prime}(a,s) \rangle +  \frac{1}{\beta} \Omega^{*} \big( \beta \cdot ( r - r^{\prime} ) \big)   \nonumber \\
      &\overset{(6)}{=}  \inf \limits_{r^\prime(a,s) \in \rprimedomain}  \sup \limits_{\mu(a,s) \in \mudomain}  \langle \mu(a,s), r^{\prime}(a,s) \rangle +  \frac{1}{\beta}  \Omega^{*} \big( \beta \cdot ( r - r^{\prime} ) \big)  \label{eq:end}
 \end{align}
    \else
     \begin{align}
     \obj_{\Omega, \beta}(r) &= \sup \limits_{\mu \in \mudomain} - ((-\obj_{\Omega, \beta})^*)^* \nonumber \\
     &\overset{(1)}{=}\sup \limits_{\mu \in \mudomain} -\bigg( \sup \limits_{r^\prime \in \rprimedomain} \langle \mu, r^{\prime} \rangle -  (-\obj_{\Omega, \beta})^*(r^{\prime}) \bigg) \nonumber \\[1.25ex]
     &= \sup \limits_{\mu \in \mudomain}  \,\, \inf \limits_{r^\prime \in \rprimedomain}  \langle \mu, -r^{\prime} \rangle +  (-\obj_{\Omega, \beta})^*(r^{\prime})  \nonumber \\
      &\overset{(2)}{=}  \sup \limits_{\mu \in \mudomain}  \,\, \inf \limits_{r^\prime \in \rprimedomain}  \langle \mu, r^{\prime} \rangle +  (-\obj_{\Omega, \beta})^*(-r^{\prime})  \nonumber\\
     &\overset{(3)}{=} \sup \limits_{\mu \in \mudomain}  \,\,  \inf \limits_{r^\prime \in \rprimedomain}  \langle \mu, r^{\prime} \rangle +  \bigg( \sup \limits_{\mu^{\prime}} \, \langle \mu^{\prime}, -r^{\prime} \rangle + \obj_{\Omega, \beta}(\mu^{\prime}) \bigg) \nonumber \\
     &\overset{(4)}{=}  \sup \limits_{\mu \in \mudomain}  \,\, \inf \limits_{r^\prime \in \rprimedomain}  \langle \mu, r^{\prime} \rangle +  \bigg( \sup \limits_{\mu^{\prime}} \, \langle \mu^{\prime}, -r^{\prime} \rangle + \langle \mu^{\prime}, r\rangle - \frac{1}{\beta} \Omega(\mu^{\prime})   \bigg) \nonumber \\
     &=  \sup \limits_{\mu \in \mudomain}  \,\, \inf \limits_{r^\prime \in \rprimedomain}  \langle \mu, r^{\prime} \rangle + \frac{1}{\beta}  \bigg( \sup \limits_{\mu^{\prime}}  \, \langle \mu^{\prime}, \beta \cdot \big( r - r^{\prime} \big)  \rangle -  \Omega(\mu^{\prime})   \bigg) \nonumber \\[1.25ex]
      &\overset{(5)}{=} \sup \limits_{\mu \in \mudomain}  \,\, \inf \limits_{r^\prime \in \rprimedomain}  \langle \mu, r^{\prime} \rangle +  \frac{1}{\beta} \Omega^{*} \big( \beta \cdot ( r - r^{\prime} ) \big)   \nonumber \\
      &\overset{(6)}{=}  \inf \limits_{r^\prime \in \rprimedomain}  \,\, \sup \limits_{\mu \in \mudomain}  \langle \mu, r^{\prime} \rangle +  \frac{1}{\beta}  \Omega^{*} \big( \beta \cdot ( r - r^{\prime} ) \big)  \label{eq:end}
 \end{align}
    \fi
 where $(1)$ applies the definition of the conjugate of $(-\obj_{\Omega, \beta})^*$, $(2)$ reparameterizes the optimization in terms of $r^{\prime} \rightarrow -r^{\prime}$, $(3)$ is the conjugate for $(-\obj_{\Omega, \beta})$, and $(4)$ uses the definition of the regularized RL objective for occupancy measure $\mu^{\prime}(a,s)$ with the reward $r(a,s)$.  Finally, $(5)$ recognizes the inner maximization as the conjugate function for a modified reward and $(6)$ swaps the order of $\inf$ and $\sup$ assuming the problem is feasible.   
 
Note that \cref{eq:end} is a standard unregularized \textsc{rl} problem with modified reward $\mr(a,s)$.  As in \mysec{prelim}, introducing Lagrange multipliers $V(s)$ to enforce the flow constraints and $\lambdaplus$ for the nonnegativity constraint,
 \ifx\omitindices\undefined
\small
\begin{align}
 \obj_{\Omega, \beta}(r)&=   
\inf \limits_{r^\prime(a,s)}   \inf \limits_{V(s), \lambdaplus} \sup \limits_{\mu(a,s)}
 \blangle \mu(a,s), r^{\prime}(a,s) + \gamma \transitionv - V(s) + \lambdaplus \brangle  \label{eq:husain_lagr} \\
 &\phantom{\inf \limits_{r^\prime(a,s)}  \sup \limits_{\mu(a,s)} \inf \limits_{V(s), \lambdaplus} \blangle } +  \frac{1}{\beta}  \Omega^{*} \big( \beta \cdot ( r - r^{\prime} ) \big) + (1-\gamma) \blangle \nu_0(s), V(s) \brangle \nonumber
\end{align}
\normalsize
\else
\begin{align}
 \obj_{\Omega, \beta}(r)&=   
\inf \limits_{r^\prime}  \,  \inf \limits_{V, \lambda} \, \sup \limits_{\mu} \,\,
 \blangle \mu, r^{\prime} + \gamma \transitionvinds - V + \lambda \brangle  +  \frac{1}{\beta}  \Omega^{*} \big( \beta \cdot ( r - r^{\prime} ) \big) + (1-\gamma) \blangle \nu_0, V\brangle  \label{eq:husain_lagr}
\end{align}
\normalsize
\fi
Now, eliminating $\mu(a,s)$ yields the condition 
\begin{align}
     r^{\prime}(a,s) + \gamma \transitionv - V(s) + \lambdaplus = 0 \quad \implies \quad 
     V(s) = r^{\prime}(a,s) +  \gamma \transitionv + \lambdaplus \, .
     \label{eq:husain_result}
\end{align}
Letting $\prv(a,s) = r(a,s) - r^{\prime}(a,s)$, we can consider \cref{eq:husain_result} as a constraint and rewrite \cref{eq:husain_lagr} as
 \ifx\omitindices\undefined
\begin{align}
   \obj_{\Omega, \beta}(r)=   
\inf \limits_{\prv(a,s)} \inf \limits_{V(s), \lambdaplus} &(1-\gamma) \blangle \nu_0(s), V(s) \brangle + \frac{1}{\beta}  \Omega^{*} \big( \beta \cdot \prv(a,s)  \big) 
  \label{eq:husain_constrained} \\
 \text{subj. to } V(s) &= r(a,s) +  \gamma \transitionv - \prv(a,s) + \lambdaplus \nonumber
\end{align}
\else
\begin{align}
   \obj_{\Omega, \beta}(r)=   
\inf \limits_{\prv} \inf \limits_{V, \lambda} &\, \, (1-\gamma) \blangle \nu_0, V \brangle + \frac{1}{\beta}  \Omega^{*} \big( \beta \cdot \prv  \big) 
  \label{eq:husain_constrained} \\
 \text{subj. to } V(s) &= r(a,s) +  \gamma \transitionv - \prv(a,s) + \lambdaplus \nonumber
\end{align}
\fi
which matches \cref{thm:husain}.   See \citet{husain2021regularized} for additional detail.




See \myapp{advantage_pf} for the proof of \myprop{advantage}, which equates the form of $\prv(a,s)$ and $\prpi(a,s)$ at optimality in the regularized \gls{MDP}.   

\subsection{Path Consistency (Comparison with \citet{nachum2017bridging,chow2018path})}\label{app:indifference_all}
We have seen in \mysec{value_form} and \myapp{indifference_all} that the path consistency conditions arise from the \textsc{kkt} conditions.   For \textsc{kl} divergence regularization, \citet{nachum2017bridging} observe the optimal policy $\piopt(a|s)$ and value $V_*(s)$ satisfy
\begin{align}
    r(a,s) + \gamma \transitionvstar - \frac{1}{\beta} \log \frac{\piopt(a|s)}{\pi_0(a|s)}  =  V_*(s) \, , \label{eq:path_kl}
\end{align}
where the Lagrange multiplier $\lambdaopt$ is not necessary since the $\piopt(a|s) > 0$ unless $\pi_0(a|s) = 0$ or the rewards or values are infinite.   This matches our condition in \cref{eq:kkt_path0}, where we can also recognize $\prpiopt(a,s) = \frac{1}{\beta} \log \frac{\piopt(a|s)}{\pi_0(a|s)} = r(a,s) + \gamma \transitionvstar -V_*(s) - \lambdaopt$ as the identity from \myprop{advantage}.
\citet{nachum2017bridging} use \cref{eq:path_kl}
to derive a learning objective, with the squared error $\mathbb{E}_{a, s, s\tick}\big[ \big( r(a,s) + \gamma \transitionv - \frac{1}{\beta} \log \frac{\pi(a|s)}{\pi_0(a|s)}  - V(s) \big)^2 \big] $ used as a loss for learning $\pi(a|s)$ and $V(s)$ (or simply $Q(a,s)$, \citet{nachum2017bridging} Sec. 5.1) using function approximation.


Similarly, \citet{chow2018path} consider a (scaled) Tsallis entropy regularizer, $\frac{1}{\beta}\Omega(\pi) = \frac{1}{\beta}\frac{1}{\alpha(\alpha-1)}(\sum_a \pi(a|s) - \sum_a \pi(a|s)^{\alpha})$.   For $\alpha = 2$, the optimal policy and value function satisfy
\begin{align}
r(a,s) + \gamma \transitionvstar  + \lambdaplus +  \frac{1}{\beta} \frac{1}{\alpha(\alpha-1)}  - \frac{1}{\beta} \frac{1}{\alpha-1} \pi(a|s)^{\alpha-1} &= V_*(s)  + \Lambda(s)  \label{eq:path1}
\end{align}
where $\Lambda(s)$ is a Lagrange multiplier whose value is learned in \citet{chow2018path}.
However, inspecting the proof of Theorem 3 in \citet{chow2018path}, we see that this multiplier is obtained via the identity $\Lambda(s) := \psiqopt - V_*(s) = \psipiopt$ (see 
\cref{eq:pr_normalizer}, \myapp{advantage_confirm}).   
Our notation in \cref{eq:path1} differs from \citet{chow2018path} in that we use $\frac{1}{\beta}$ as the regularization strength (compared with their $\alpha$).  We have also written \cref{eq:path1} to explicitly include the constant factors appearing in the $\alpha$-divergence.

In generalizing the path consistency equations,  we will consider the $\alpha$-divergence, with $\Omega(\pi) = \frac{1}{\alpha(\alpha-1)}((1-\alpha) \sum_a \pi_0(a|s) + \alpha \sum_a \pi(a|s) - \sum_a \pi_0(a|s)^{1-\alpha} \pi(a|s)^{\alpha})$.   Note that this includes an additional $\alpha$ factor which multiplies the $\sum_a \pi(a|s)$ term, compared to the Tsallis entropy considered in \citet{chow2018path}.  
In particular, this scaling will change the $\frac{1}{\beta} \frac{1}{\alpha(\alpha-1)}$ additive constant term in \cref{eq:path1}, to a term of $\frac{1}{\beta} \frac{1}{\alpha-1}$.  

Our expression for $\alpha$-divergence path consistency, derived using the identity $r(a,s) +  \gamma \transitionvstar  + \lambdaopt - \frac{1}{\beta} \log_{\alpha} \frac{\pi_*(a|s)}{\pi_0(a|s)} = V_*(s) + \psipiopt$ in \cref{eq:kkt_path0}, becomes
\begin{align}
 r(a,s) + \gamma \transitionvstar + \lambdaplus - \frac{1}{\beta} \log_{\alpha}\frac{\pi_*(a|s)}{\pi_0(a|s)}  &=  V_*(s) + \psipiopt \label{eq:path2}
\end{align}
where we have rearranged terms from \cref{eq:path} in the main text to compare with \cref{eq:path1}.   Note that we can recognize $\prpi(a,s) = \frac{1}{\beta} \log_{\alpha}\frac{\pi_*(a|s)}{\pi_0(a|s)} + \psipiopt$ and we substitute $\Lambda(s) := \psiqopt - V_*(s) = \psipiopt$ compared to \cref{eq:path1}.

\subsection{Indifference Condition}\label{app:indifference}
In the single step setting with \textsc{kl} divergence regularization, \citet{ortega2014adversarial} argue that the perturbed reward for the optimal policy is constant with respect to actions
\begin{align}
r(a) - \prpiopt(a) = c \, \, \, \forall a \in \mathcal{A} \, , \label{eq:single_step_indifference}
\end{align}
when $\prpiopt(a)$ are obtained using the optimal policy $\pi_*(a|s)$.
\citet{ortega2014adversarial} highlight that this is a well-known property of Nash equilibria in game theory where, for the optimal policy and worst-case adversarial perturbations, the agent obtains equal perturbed reward for each action and  thus is indifferent between them.

In the sequential setting, we can consider $Q(a,s) = r(a,s) + \gamma \transitionv$ as the analogue of the single-step reward or utility function.   Using our advantage function interpretation for the optimal policy in \myprop{advantage}, we directly obtain an indifference condition for the sequential setting
\begin{align}
Q_*(a,s) - \prpiopt(a,s) = V_*(s) \, \, \,
\end{align}
for actions with $\lambdaopt=0$ and nonzero probability under $\piopt(a|s)$.
Observe that $V_*(s)$ is a constant with respect to $a \in \mathcal{A}$ for a given state $s \in \mathcal{S}$.  
Recall that our notation for $V_*(s)$ omits its dependence on $\beta$ and $\alpha$.
This indifference condition indeed holds for arbitrary regularization strengths $\beta$ and choices of $\alpha$-divergence, since our proof of the advantage function interpretation in \myapp{advantage_pf} is general.
Finally, we emphasize that the indifference condition holds only for the optimal policy with a given reward $r(a,s)$ (see \myfig{perturb_opt}).

\headerv
\begin{figure*}[t]
\vspace*{-.3cm}
\begin{subfigure}{.2\textwidth}
\includegraphics[width=\textwidth]{figs/pyco_perturb/pyco_world.pdf}\vspace*{.2cm}
\caption{Environment \\
($r(a,s) = -1$ for water, \\
$r(a,s) = 5$ for goal)}\label{fig:pyco_env2}
\end{subfigure}
\begin{subfigure}{.26\textwidth}
\includegraphics[width=\textwidth]{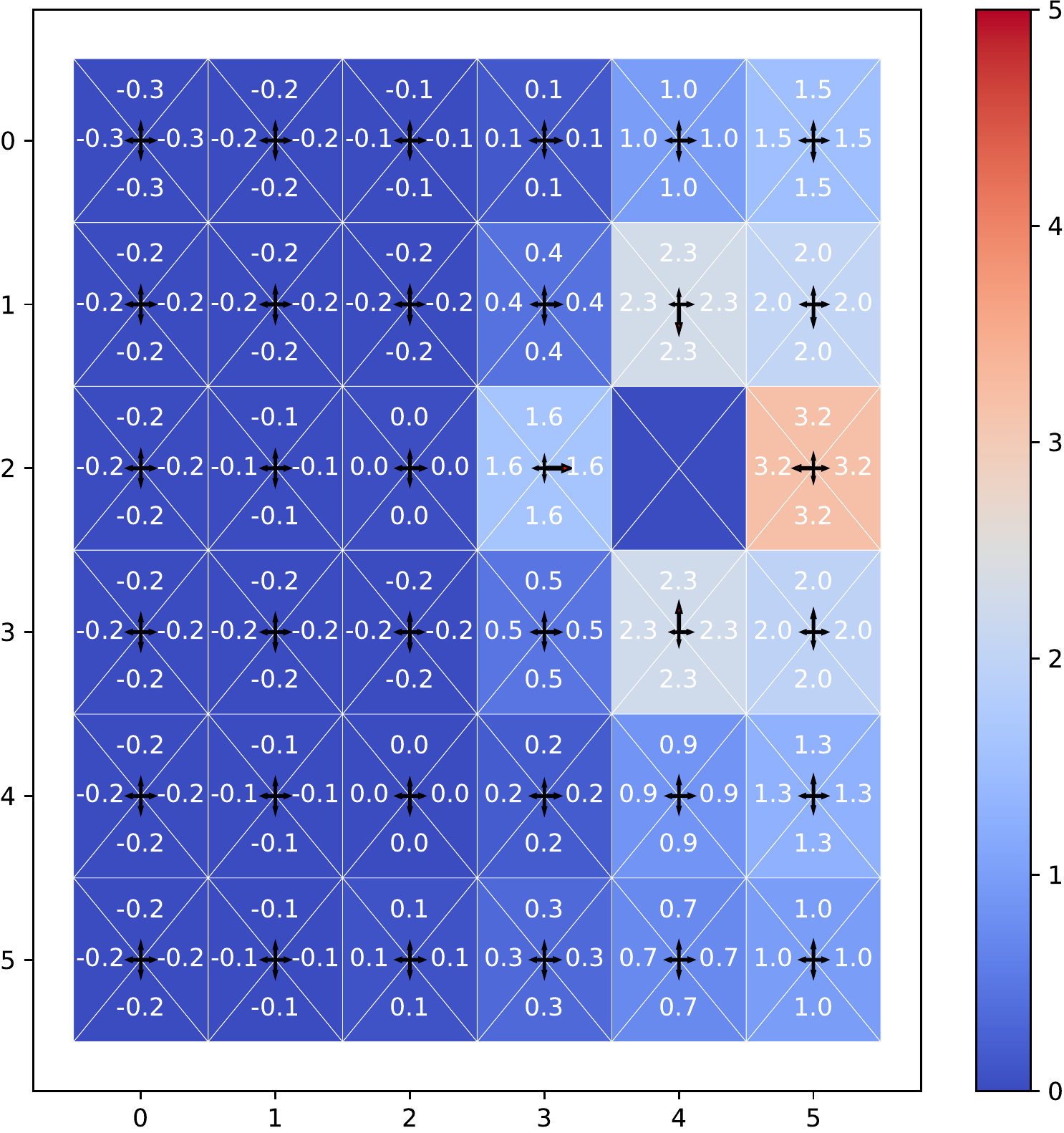}
\caption{$\beta = 0.2$ (High Reg.)}\label{fig:pyco_beta02}
\end{subfigure}
\begin{subfigure}{.26\textwidth}
\includegraphics[width=\textwidth]{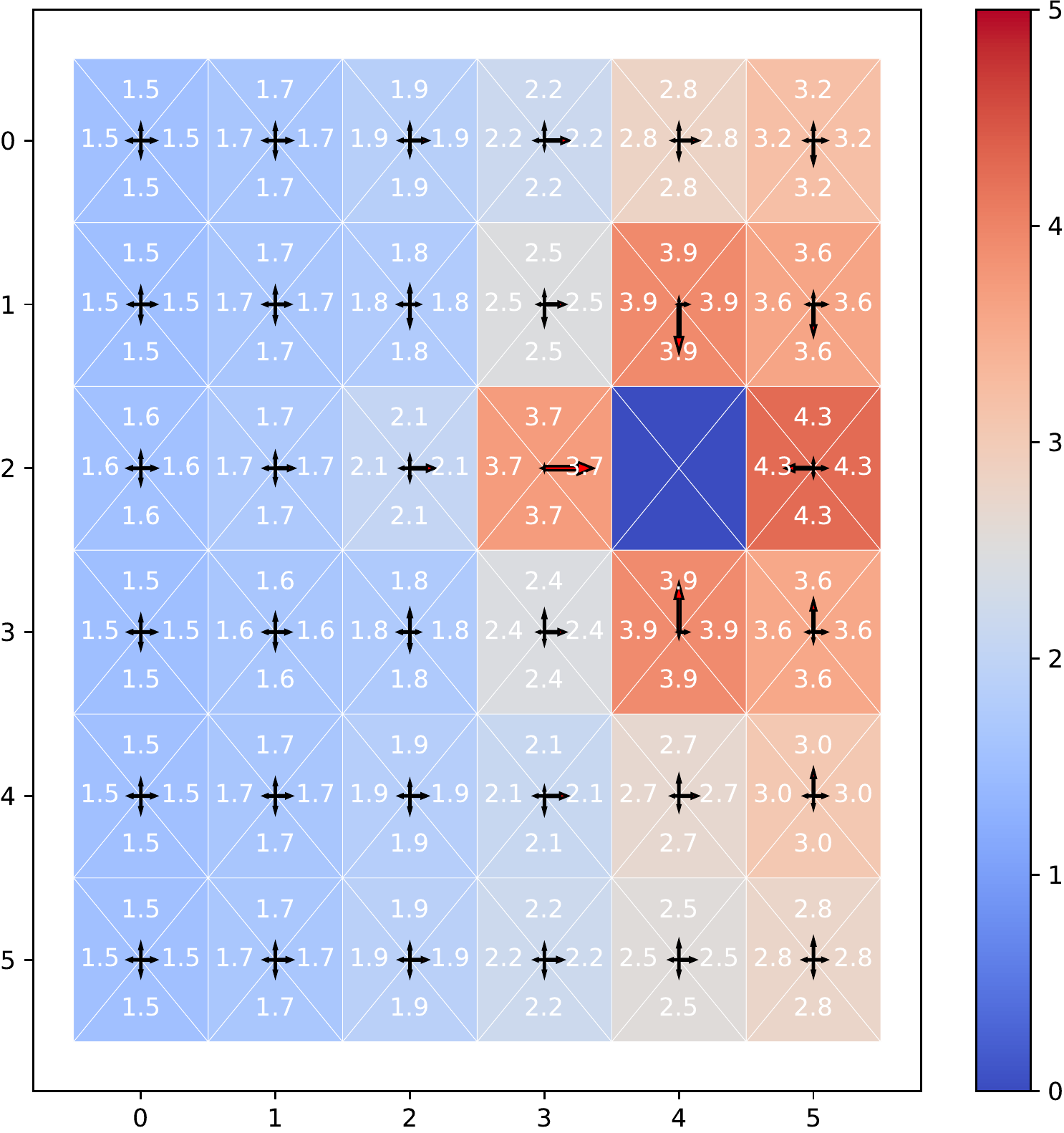}
\caption{$\beta = 1$}\label{fig:pyco_beta1}
\end{subfigure}
\begin{subfigure}{.26\textwidth}
\includegraphics[width=\textwidth]{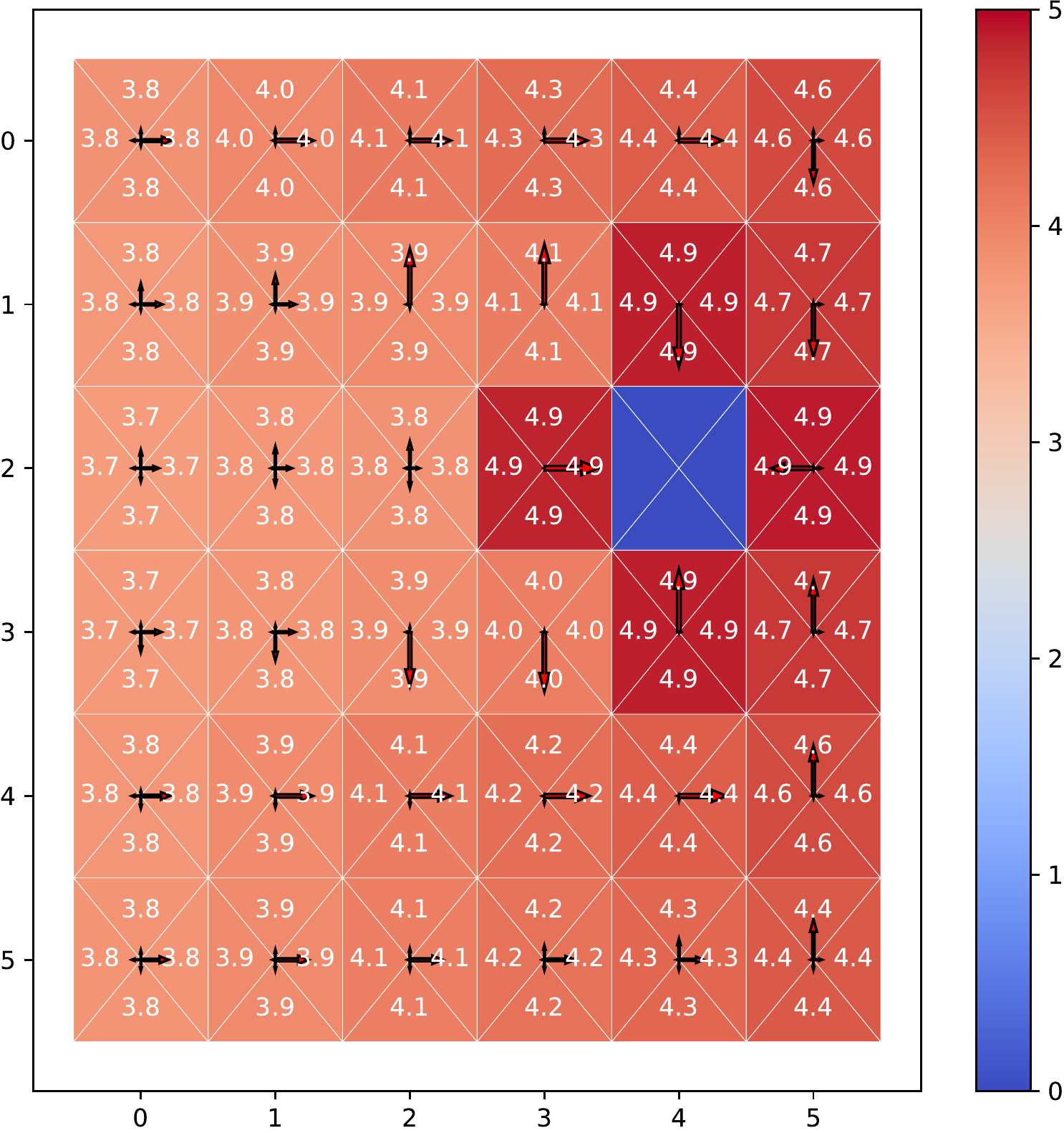}
\caption{$\beta = 10$ (Low Reg.)}\label{fig:pyco_beta02}
\end{subfigure}
\caption{\label{fig:pyco_indifference} \textbf{Confirming Optimality of the Policy.}  We show the perturbed rewards $\mr(a,s) = Q(a,s) - \pr(a,s)$ for policies trained with \textsc{kl} divergence regularization.  The indifference condition holds in all cases, with $\mr(a,s) = c(s)$ for each state-action pair, with $c(s) = V(s)$ for \textsc{kl} regularization.  This confirms that the policy is optimal \citep{ortega2014adversarial, nachum2017bridging}.}
\end{figure*}

\subsection{Confirming Optimality using Path Consistency and Indifference}\label{app:optimal_policy_confirm}
In \cref{fig:pyco}, we plotted the regularized policies and worst-case reward perturbations for various regularziation strength $\beta$ and \textsc{kl} divergence regularization.  In \myfig{pyco_indifference}, we now seek to certify the optimality of each policy using the path consistency or indifference conditions.  In particular, we confirm the following equality holds
\begin{align}
r(a,s) + \gamma \transitionv - \frac{1}{\beta} \log \frac{\pi(a|s)}{\pi_0(a|s)}  = V(s) \qquad \forall (a,s) \in \mathcal{A} \times \mathcal{S} \label{eq:confirming_path}
\end{align}
which aligns with the path consistency condition in \citep{nachum2017bridging}.  Compared with \cref{eq:path} or \cref{eq:path2}, \cref{eq:confirming_path} uses the fact that $\lambdaplus=0$ and $\psipr=0$ in the case of \textsc{kl} divergence regularization.
This equality also confirms the indifference condition since the right hand side $V_*(s)$ is a constant with respect to actions.  Finally, we can recognize our advantage function interpretation $Q_*(a,s) - \prpiopt(a,s) = V_*(s)$ in \cref{eq:confirming_path}, by substituting $Q_*(a,s) = r(a,s) + \gamma \transitionvstar$  and $\prpiopt(a,s) = \frac{1}{\beta} \log  \frac{\piopt(a|s)}{\pi_0(a|s)}$ for \textsc{kl} divergence regularization.

In \myfig{pyco_indifference}, we plot $r(a,s) + \gamma \transitionv - \frac{1}{\beta} \log \frac{\pi(a|s)}{\pi_0(a|s)} = Q(a,s) - \prpi(a,s)$ for each state-action pair to confirm that it yields a constant value and conclude that the policy and values are optimal.   Note that this constant is different across states based on the soft value function $V_*(s)$, which also depends on the regularization strength.    




\headerv
\section{Comparing Entropy and Divergence Regularization}\label{app:entropyvsdivergence}
\headerv
In this section, we provide proofs and discussion to support our observations in \cref{sec:entropy_vs_div} on the benefits of divergence regularization over entropy regularization.

\headerv
\subsection{Tsallis Entropy and $\alpha$-Divergence}\label{app:tsallis}
\headerv
To show a relationship between the Tsallis entropy and the $\alpha$-divergence, we first recall the definition of the $q$-exponential function $\log_q$ \citep{tsallis2009introduction}.   We also define $\log_{\alpha}(u)$, with $\alpha = 2-q$, so that our use of $\log_{\alpha}(u)$ matches \citet{lee2019tsallis} Eq. (5)
\begin{align}
  \log_{q}(u) = \frac{1}{1-q} \bigg( u^{1-q} - 1\bigg) \qquad   \log_{\alpha}(u) \coloneqq  \log_{2-q}(u) = \frac{1}{\alpha -1} \bigg( u^{\alpha-1} - 1\bigg)
\end{align}
The Tsallis entropy of order $q$ \citep{tsallis2009introduction, naudts2011generalised} can be expressed using either $\log_q$ or $\log_{\alpha}$
\begin{align}
    H^T_{q}[\pi(a)] = \frac{1}{q-1} \bigg( 1 -  \sum \limits_{a \in \mathcal{A}}  \pi(a)^{q} \bigg) =&  \sum \limits_{a \in \mathcal{A}} \pi(a) \log_q\big( \frac{1}{\pi(a)} \big)\label{eq:tsallis1} \\
    =   - &\sum \limits_{a \in \mathcal{A}} \pi(a) \cdot \log_{2-q} \big(\pi(a) \big) \label{eq:tsallis3}
\end{align}
\cref{eq:tsallis1} and \cref{eq:tsallis3} mirror the two equivalent ways of writing the Shannon entropy for $q=1$.  In particular, we have $q=2-q$ and $H_1[\pi(a)] = \sum \pi(a) \log \frac{1}{\pi(a)} =-\sum \pi(a) \log \pi(a)$.   See \citet{naudts2011generalised} Ch. 7 for discussion of these two forms of the deformed logarithm.

To connect the Tsallis entropy and the $\alpha$-divergence in 
\cref{eq:alpha_def}, we can consider a uniform reference measure $\pi_0(a) = 1 \,\, \forall a$.   For normalized $\sum_a \pi(a) = 1$,
\small
\begin{align}
    D_{\alpha}[\pi_0(a):\pi(a)] &= \frac{1}{\alpha(1-\alpha)} \left((1-\alpha) \sum \limits_{a \in \mathcal{A}} \pi_0(a) + \alpha \sum \limits_{a \in \mathcal{A}} \pi(a) - \sum \limits_{a \in \mathcal{A}} \pi_0(a)^{1-\alpha} \pi(a|s)^{\alpha} \right) \\
    &= \frac{1}{\alpha(1-\alpha)} \left( \alpha \textcolor{blue}{+(1-\alpha)} -\sum \limits_{a \in \mathcal{A}} \pi(a|s)^{\alpha} \right)  + \frac{1}{\alpha(1-\alpha)} \left( (1-\alpha) \sum \limits_{a \in \mathcal{A}} \pi_0(a) \textcolor{blue}{-(1-\alpha)} \right) \\
    &= - \frac{1}{\alpha} H^T_{\alpha}[\pi(a)] + c
\end{align}
\normalsize
which recovers the negative Tsallis entropy of order $\alpha$, up to an multiplicative factor $\frac{1}{\alpha}$ and additive constant.   
Note that including this constant factor via $\alpha$-divergence regularization allows us to avoid an inconvenient $\frac{1}{\alpha}$  factor in optimal policy solutions (\cref{eq:optimal_policy}) compared with Eq. 8 and 10 of \citet{lee2019tsallis}.

\headerv
\subsection{Non-Positivity of $\prpi(a,s)$ for Entropy Regularization}\label{app:entropy_nonnegative}
\headerv
We first derive the worst-case reward perturbations for entropy regularization, before analyzing the sign of these reward perturbations for various values of $\alpha$ in \myprop{nonpositive}.  
In particular, we have $\propt(a,s) \leq 0$ for entropy regularization with $0 < \alpha \leq 1$, which includes Shannon entropy regularization at $\alpha = 1$.  This implies that the modified reward $\mropt(a,s) \geq r(a,s)$ for all $(a,s)$.


\begin{restatable}{lemma}{increasing}
\label{lemma:ent_reg}
The worst-case reward perturbations for Tsallis entropy regularization correspond to 
\begin{align}
    \propt(a,s) = \frac{1}{\beta} \log_{\alpha} \pi(a|s) + \frac{1}{\beta} \frac{1}{\alpha} \big(1 - \sum \limits_{a \in \mathcal{A}} \pi(a|s)^{\alpha} \big) \,. \label{eq:ptwise_perturb_ent_app}
\end{align}
with limiting behavior of $\propt(a,s) = \frac{1}{\beta} \log \pi(a|s)$ for Shannon entropy regularization as $\alpha \rightarrow 1$.
\end{restatable}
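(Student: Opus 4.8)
The plan is to invoke \myprop{optimal_perturbations}, which states that the worst-case perturbation equals the gradient of the regularizer, $\propt(a,s) = \nabla_{\mu}\tfrac{1}{\beta}\Omega(\mu)$, and to evaluate this gradient for the Tsallis entropy regularizer. Following \myapp{tsallis}, I would take $\Omega$ to be the (scaled) negative Tsallis entropy, which agrees with $\alpha$-divergence regularization against a uniform reference up to the factor $1/\alpha$ and an additive constant, and write it as a function of the joint occupancy through $\pi(a|s)=\mu(a,s)/\mu(s)$ and $\sum_s \mu(s)=\sum_{a,s}\mu(a,s)$:
\begin{align}
\tfrac{1}{\beta}\Omega(\mu) = \tfrac{1}{\beta}\tfrac{1}{\alpha(1-\alpha)}\Big(\sum_{a,s}\mu(a,s) - \sum_{a,s}\mu(s)^{1-\alpha}\mu(a,s)^{\alpha}\Big), \nonumber
\end{align}
where I have used $\mu(s)\pi(a|s)^{\alpha}=\mu(s)^{1-\alpha}\mu(a,s)^{\alpha}$.

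The core computation mirrors \myapp{conj3}: I would differentiate with respect to a fixed $\mu(a,s)$, and the main obstacle is the chain-rule bookkeeping through the marginal $\mu(s)$, which itself depends on $\mu(a,s)$. Using $\partial\mu(s')/\partial\mu(a,s)=\delta_{s'=s}$ and $\partial\mu(a',s')/\partial\mu(a,s)=\delta_{(a',s')=(a,s)}$, differentiating the factor $\mu(s')^{1-\alpha}$ contributes the summed (normalization) term $\tfrac{1}{\alpha}\sum_{a'}\pi(a'|s)^{\alpha}$, while differentiating $\mu(a',s')^{\alpha}$ contributes the pointwise term $\tfrac{1}{1-\alpha}\pi(a|s)^{\alpha-1}$. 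Collecting the pieces gives
\begin{align}
\propt(a,s) = \tfrac{1}{\beta}\Big(\tfrac{1}{\alpha(1-\alpha)} - \tfrac{1}{1-\alpha}\pi(a|s)^{\alpha-1} - \tfrac{1}{\alpha}\sum_{a'}\pi(a'|s)^{\alpha}\Big). \nonumber
\end{align}
Getting the constants right across the $1/\alpha$ scaling is the delicate part, and is exactly what distinguishes the entropy case (additive constant $1$) from the divergence case (additive constant $\sum_a \pi_0(a|s)$, equal to $|\mathcal{A}|$ for a uniform reference).

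To finish, I would regroup the first two terms into a deformed logarithm. Using the definition $\log_{\alpha}(u)=\tfrac{1}{\alpha-1}(u^{\alpha-1}-1)$ from \cref{eq:qexp} together with the identity $\tfrac{1}{\alpha(1-\alpha)}=\tfrac{1}{1-\alpha}+\tfrac{1}{\alpha}$, one obtains $\tfrac{1}{\alpha(1-\alpha)}-\tfrac{1}{1-\alpha}\pi(a|s)^{\alpha-1}=\log_{\alpha}\pi(a|s)+\tfrac{1}{\alpha}$, so that $\propt(a,s)=\tfrac{1}{\beta}\log_{\alpha}\pi(a|s)+\tfrac{1}{\beta}\tfrac{1}{\alpha}\big(1-\sum_a\pi(a|s)^{\alpha}\big)$, as claimed. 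For the Shannon limit $\alpha\to1$ I would use $\log_{\alpha}\pi(a|s)\to\log\pi(a|s)$ and note that the second term vanishes since $\sum_a\pi(a|s)^{\alpha}\to\sum_a\pi(a|s)=1$ for a normalized policy, recovering $\propt(a,s)=\tfrac{1}{\beta}\log\pi(a|s)$.
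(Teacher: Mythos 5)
Your proposal is correct and takes essentially the same route as the paper's proof in \myapp{entropy_nonnegative}: both invoke \myprop{optimal_perturbations} and differentiate the $\tfrac{1}{\alpha}$-scaled negative Tsallis entropy through the marginal $\mu(s)$, using exactly the chain-rule bookkeeping of \myapp{conj3}. The only difference is that you carry out that differentiation explicitly where the paper defers to \cref{eq:pr_equals_grad}--(\ref{eq:worst_case_reward_perturb}); your collected constants, the regrouping via $\log_{\alpha}$, and the $\alpha \rightarrow 1$ limit all check out.
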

\begin{proof}
We can write the Tsallis entropy using an additional constant $k$, with $k=\frac{1}{\alpha}$ mirroring the $\alpha$-divergence
\begin{align}
     H^T_{\alpha}[\pi] = \frac{k}{\alpha-1} \bigg( \sum \limits_{a \in \mathcal{A}} \pi(a|s) - \sum \limits_{a \in \mathcal{A}} \pi(a|s)^{\alpha} \bigg) \, .  \label{eq:tsallis2} 
\end{align}
Note that we use \textit{negative} Tsallis entropy for regularization since the entropy is concave.   Thus, the worst case reward perturbations correspond to the condition $\propt(a,s)= \nabla \frac{1}{\beta}\Omega^{(H_{\alpha})}_{\pi_0}(\mu) = - \nabla \mathbb{E}_{\mu(s)}\big[ H^T_{\alpha}[\pi]\big]$. Differentiating with respect to $\mu(a,s)$ using similar steps as in \myapp{conj3} \cref{eq:pr_equals_grad}-\eqref{eq:worst_case_reward_perturb}, we obtain
\begin{align}
\propt(a,s) =  k \cdot \frac{1}{\beta}  \alpha\,  \log_{\alpha} \pi(a|s) +    k \cdot \frac{1}{\beta} (\alpha-1) H^T_{\alpha}[\pi] 
\end{align}
For $k = \frac{1}{\alpha}$ and $(\alpha-1)H^T_{\alpha}[\pi] = ( \sum_a \pi(a|s) - \sum_a \pi(a|s)^{\alpha} )$, we obtain \cref{eq:ptwise_perturb_ent_app}.
\end{proof}

\begin{restatable}{proposition}{nonpositive}
\label{prop:nonpositive}
For $0 < \alpha \leq 1$ and $\beta > 0$,
the worst-case reward perturbations with Tsallis entropy regularization from \mylem{ent_reg} are non-positive, with $\propt(a,s) \leq 0$.  This implies that the entropy-regularized policy is robust to only pointwise reward increases for these values of $\alpha$.
\end{restatable}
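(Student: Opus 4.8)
The plan is to start directly from the closed form for the worst-case perturbations established in \mylem{ent_reg}, namely $\propt(a,s) = \frac{1}{\beta}\log_{\alpha} \pi(a|s) + \frac{1}{\beta}\frac{1}{\alpha}\big(1 - \sum_{a} \pi(a|s)^{\alpha}\big)$, and to show that \emph{each} of the two summands is individually non-positive under the hypotheses $0 < \alpha \le 1$ and $\beta > 0$. Since $\beta > 0$ only rescales, the sign is governed by the bracketed quantities, so I would factor out $1/\beta$ and bound the two terms separately.

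For the first term I would substitute the definition of the deformed logarithm from \cref{eq:qexp}, $\log_{\alpha} u = \frac{1}{\alpha-1}(u^{\alpha-1}-1)$, evaluated at $u = \pi(a|s) \in [0,1]$. Because $\alpha - 1 \le 0$ and $\pi(a|s) \le 1$, the power satisfies $\pi(a|s)^{\alpha-1} \ge 1$, so $\pi(a|s)^{\alpha-1} - 1 \ge 0$; dividing by the non-positive factor $\alpha - 1$ then yields $\log_{\alpha} \pi(a|s) \le 0$. The degenerate case $\pi(a|s) = 0$ gives $-\infty$, which is still consistent with the claimed inequality.

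For the second term the key elementary fact is $x^{\alpha} \ge x$ for all $x \in [0,1]$ and $0 < \alpha \le 1$, which follows from $x^{\alpha-1} = x^{-(1-\alpha)} \ge 1$. Summing over actions and using normalization $\sum_{a} \pi(a|s) = 1$ gives $\sum_{a} \pi(a|s)^{\alpha} \ge 1$, hence $1 - \sum_{a} \pi(a|s)^{\alpha} \le 0$; since $1/\alpha > 0$ the whole term is non-positive. Adding the two non-positive contributions gives $\propt(a,s) \le 0$, and the Shannon case $\alpha = 1$ follows as the limit in which the second term vanishes and $\frac{1}{\beta}\log \pi(a|s) \le 0$. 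Finally, since the modified reward obeys $\mropt(a,s) = r(a,s) - \propt(a,s)$, non-positivity of $\propt$ immediately yields $\mropt(a,s) \ge r(a,s)$, i.e.\ robustness only to pointwise reward increases.

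The genuinely delicate point is essentially bookkeeping: the only care required is tracking the two sign reversals induced by $\alpha - 1 \le 0$ (one inside the deformed logarithm, one inside the power inequality) and treating the $\alpha = 1$ boundary through the limiting Shannon expression. No deeper argument is needed, because the two summands turn out to be independently non-positive rather than requiring a compensating combination.
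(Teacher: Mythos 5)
Your proposal is correct and follows essentially the same route as the paper's proof: decompose $\propt(a,s)$ into the deformed-logarithm term and the $\frac{1}{\beta}\frac{1}{\alpha}\big(1-\sum_a \pi(a|s)^{\alpha}\big)$ term, and show each is separately non-positive using $\pi(a|s)^{\alpha-1}\ge 1$ and $\pi(a|s)^{\alpha}\ge\pi(a|s)$ on $[0,1]$ for $0<\alpha\le 1$. The only cosmetic difference is that the paper certifies $\log_{\alpha}\pi(a|s)\le 0$ via the integral representation $\int_1^{\pi(a|s)} u^{\alpha-2}\,du$ (valid for any $\alpha$), whereas you track the sign of $\frac{1}{\alpha-1}\big(\pi(a|s)^{\alpha-1}-1\big)$ directly, which is equivalent on the stated range.
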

\headerv
\begin{proof}
We first show $\log_{\alpha} \pi(a|s) \leq 0 $ for $0 < \pi(a|s) \leq 1$ and any $\alpha$.  Note that we may write 
\begin{align}
    \log_{\alpha} \pi(a|s) = \int_1^{\pi(a|s)} u^{\alpha-2} du = \frac{1}{\alpha-1}u^{\alpha-1}\bigg|^{\pi(a|s)}_1 = \frac{1}{\alpha-1}(\pi(a|s)^{\alpha-1}-1) \, .
\end{align}
Since $u^{\alpha-2}$ is a non-negative function for $0 \leq \pi(a|s) \leq 1$, then $\int_1^{\pi(a|s)}  u^{\alpha-2} du \leq 0$.   

To analyze the second term, consider $0 < \alpha \leq 1$.  We know that $\pi(a|s)^{\alpha} \geq \pi(a|s)$ for $0 < \pi(a|s)\leq 1$, so that $\sum_a \pi(a|s)^{\alpha} \geq \sum_a \pi(a|s) = 1$.  Thus, we have $\sum_a \pi(a|s) - \sum_a \pi(a|s)^{\alpha} \leq 0$ and $\alpha >0$ implies $ \frac{1}{\alpha}( \sum_a \pi(a|s) - \sum_a \pi(a|s)^{\alpha}) \leq 0$.  Since both terms are non-positive, we have $\propt(a,s) \leq 0$ for $0 < \alpha \leq 1$ as desired.

However, for  $\alpha > 1$ or $\alpha < 0$, we cannot guarantee the reward perturbations are non-positive.   Writing the second term in \cref{eq:ptwise_perturb_ent_app} as $\frac{\alpha-1}{\alpha} H^T_{\alpha}[\pi]$, we first observe that that $H^T_{\alpha}[\pi] \geq 0$.   Now, $\alpha > 1$ or $\alpha < 0$ implies that  $\frac{\alpha-1}{\alpha} > 0$, so that the second term is non-negative,   compared to the first term, which is non-positive.   
\end{proof}

\newcommand{\dualvarq}{Q}
\headerv
\subsection{Bounding the Conjugate Function}\label{app:qmax}
\textbf{Conjugate for a Fixed $\alpha,\beta$:}
We follow similar derivations as \citet{lee2019tsallis} to bound the value function for general $\alpha$-divergence regularization instead of entropy regularization.    We are interested in providing a bound for $\alphaconjn(\dualvarq)$ with fixed $\alpha,\beta$.
To upper bound the conjugate, consider the optimum over each term separately
\ifx\omitindices\undefined
\small
\begin{align*}
    \alphaconjn(\dualvarq) &= \max \limits_{\pi} \blangle \pi(a|s), \dualvarq(a,s) \brangle - \frac{1}{\beta} D_{\alpha}[\pi_0(a|s) || \pi(a|s)] \\
    &\leq   \max \limits_{\pi} \blangle \pi(a|s), \dualvarq(a,s) \brangle - \min \limits_{\pi} \frac{1}{\beta} D_{\alpha}[\pi_0(a|s) || \pi(a|s)] \\
    &=  \max \limits_{\pi} \blangle \pi(a|s), \dualvarq(a,s) \brangle  - 0 \\ 
    &= \dualvarq(a_{\max},s) \,.
\end{align*}
\normalsize
\else
\small
\begin{align*}
    \alphaconjn(\dualvarq) &= \max \limits_{\pi \in \Delta^{|\mathcal{A}|}} \blangle \pi, \dualvarq \brangle - \frac{1}{\beta} D_{\alpha}[\pi_0 : \pi] \\
    &\leq   \max \limits_{\pi \in \Delta^{|\mathcal{A}|}} \blangle \pi, \dualvarq \brangle - \min \limits_{\pi} \frac{1}{\beta} D_{\alpha}[\pi_0 : \pi] \\
    &=  \max \limits_{\pi \in \Delta^{|\mathcal{A}|}} \blangle \pi, \dualvarq \brangle  - 0 \\ 
    &= \dualvarq(a_{\max},s) \,.
\end{align*}
\normalsize
\fi
where we let $a_{\max}:= \argmax_{a} \dualvarq(a,s)$.  

We can also lower bound the conjugate function in terms of $\max_a \dualvarq(a,s)$. Noting that any policy $\pi(a|s)$ provides a lower bound on the value of the maximization objective, we consider $\pi_{\max}(a|s) = \delta( a = a_{\max})$.
For evaluating $\pi_{\max}(a|s)^{\alpha}$, we assume $0^{\alpha}=0$ for $\alpha > 0$ and undefined otherwise.   Thus, we restrict our attention to $\alpha > 0$ to derive the lower bound
\ifx\omitindices\undefined
\small
\begin{align*}
    \alphaconjn(\dualvarq) &= \max \limits_{\pi} \blangle \pi(a|s), \dualvarq(a,s) \brangle - \frac{1}{\beta} D_{\alpha}[\pi_0(a|s) || \pi(a|s)] \\
    &\geq    \blangle \pi_{\max}(a|s), \dualvarq(a,s) \brangle - \frac{1}{\beta} D_{\alpha}[\pi_0(a|s) || \pi_{\max}(a|s)] \\
    &\overset{(1)}{=}  \dualvarq(a_{\max},s) - \frac{1}{\beta} \Big(\frac{1}{\alpha(1-\alpha)} -  \frac{1}{\alpha(1-\alpha)} \pi_0(a_{\max}|s)^{1-\alpha} 1^{\alpha}\Big)\\
    &= \dualvarq(a_{\max},s) + \frac{1}{\beta} \frac{1}{\alpha} \frac{1}{1-\alpha}\left( \pi_0(a_{\max}|s)^{1-\alpha} - 1 \right)  \\
    &= \dualvarq(a_{\max},s) + \frac{1}{\beta} \frac{1}{\alpha} \log_{2-\alpha} \pi_0(a_{\max}|s).
\end{align*}
\normalsize
\else
\small
\begin{align*}
    \alphaconjn(\dualvarq) &= \max \limits_{\pi \in \Delta^{|\mathcal{A}|}} \blangle \pi, \dualvarq \brangle - \frac{1}{\beta} D_{\alpha}[\pi_0 : \pi] \\
    &\geq    \blangle \pi_{\max}, \dualvarq \brangle - \frac{1}{\beta} D_{\alpha}[\pi_0 : \pi_{\max}] \\
    &\overset{(1)}{=}  \dualvarq(a_{\max},s) - \frac{1}{\beta} \Big(\frac{1}{\alpha(1-\alpha)} -  \frac{1}{\alpha(1-\alpha)} \pi_0(a_{\max}|s)^{1-\alpha} 1^{\alpha}\Big)\\
    &= \dualvarq(a_{\max},s) + \frac{1}{\beta} \frac{1}{\alpha} \frac{1}{1-\alpha}\left( \pi_0(a_{\max}|s)^{1-\alpha} - 1 \right)  \\
    &= \dualvarq(a_{\max},s) + \frac{1}{\beta} \frac{1}{\alpha} \log_{2-\alpha} \pi_0(a_{\max}|s).
\end{align*}
\normalsize
\fi
where $(1)$ uses $\pi_{\max}(a|s) = \delta( a = a_{\max})$ and simplifies terms in the $\alpha$-divergence. 
One can confirm that $\frac{1}{\alpha} \log_{2-\alpha} \pi_0(a_{\max}|s) = \frac{1}{\alpha}\frac{1}{1-\alpha}(\pi_0(a_{\max}|s)^{1-\alpha}-1) \leq 0$ for $\alpha > 0$.
Combining these bounds, we can write
\begin{align}
  \text{ For $\alpha > 0$, } \qquad \dualvarq(a_{\max},s) + \frac{1}{\beta}\frac{1}{\alpha} \log_{2-\alpha} \pi_0(a_{\max}|s) \, \, \leq   \, \, \alphaconjn(\dualvarq) \, \, \leq   \, \,  \dualvarq(a_{\max},s). 
\end{align}
\textbf{Conjugate as a Function of $\beta$:}
Finally, we can analyze the conjugate as a function of $\beta$.   As $\beta \rightarrow 0$ and $1/\beta \rightarrow \infty$, the divergence regularization forces $\pi(a|s) = \pi_0(a|s)$ for any $\alpha$ and the conjugate $\alphaconjn(\dualvarq) = \langle \pi(a|s), \dualvarq(a,s) \rangle - \frac{1}{\beta}\Omega(\pi) \rightarrow \langle \pi(a|s), \dualvarq(a,s) \rangle$.   
As $\beta \rightarrow \infty$ and $1/\beta \rightarrow 0$, the unregularized objective yields a deterministic optimal policy with $\pi(a|s) = \max_a \dualvarq(a,s)$.  In this case, the conjugate $\alphaconjn(\dualvarq)\rightarrow \max_a \dualvarq(a,s)$.  Thus, treating the conjugate as a function of $\beta$, we obtain
\begin{align}
\mathbb{E}_{\pi_0(a|s)}\left[ \dualvarq(a,s) \right] \leq \alphaconjn(\dualvarq) \leq \max_a \dualvarq(a,s)    
\end{align}

For negative values of $\beta$, the optimization becomes a minimization, with the optimal solution approaching a deterministic policy with $\pi(a|s) = \min_a \dualvarq(a,s)$ as $\beta \rightarrow -\infty$,  $1/\beta \rightarrow 0$.

\headerv
\subsection{Feasible Set for Entropy Regularization}\label{app:eysenbach}
\headerv
In this section, we compare feasible sets derived from entropy regularization (\cref{fig:ela}) with those derived from the \textsc{kl} divergence (\cref{fig:feasible_set_main}, \cref{fig:elb},\cref{fig:elc}), and argue that entropy regularization should be analyzed as a special case of the \textsc{kl} divergence to avoid misleading conclusions.

In their App. A8, \citet{eysenbach2021maximum} make the surprising conclusion that policies with \textit{higher} entropy regularization are \textit{less} robust, as they lead to smaller feasible or robust sets than for lower regularization.   
This can be seen in the robust set plot for entropy regularization in \cref{fig:ela}, where higher $\beta$ indicates lower regularization strength ($1/\beta$).
The left panels in \citet{eysenbach2021maximum} Fig. 2 or Fig. 9 match our \cref{fig:ela}.  See their App. A8 for additional discussion.

To translate from Shannon entropy to \textsc{kl} divergence regularization, we include an additional additive constant of $\frac{1}{\beta} \log |\mathcal{A}|$ corresponding to the (scaled) entropy of the uniform distribution, with $ - \frac{1}{\beta} D_{KL}[\pi:\pi_0] = \frac{1}{\beta} H(\pi) - \frac{1}{\beta} \log |\mathcal{A}|$.   We obtain \cref{fig:elb} by shifting each curve in \cref{fig:ela} by this scaled constant
.
This highlights the delicate dependence on the feasible set on the exact form of the objective function, as the constant shifts the robust reward set by $(r^{\prime}(a_1), r^{\prime}(a_2)) \leftarrow (r^{\prime}(a_1) - \frac{1}{\beta}\log 2, r^{\prime}(a_2)-\frac{1}{\beta}\log 2)$.   For \textsc{kl} divergence regularization, the feasible set now includes the original reward function.  As expected, we can see that policies with higher regularization strength are \textit{more} robust with \textit{larger} feasible sets.

An alternative approach to correct the constraint set is to include the uniform reference distribution as in \myprop{feasible} and \cref{eq:kl_feasible}, so that we calculate $\sum_a \pi_0(a|s) \expof{\beta \cdot \pr} \leq 1$.   Similarly, the constraint in \citet{eysenbach2021maximum} can be modified to have $\sum_a \expof{\beta \cdot \pr} \leq |\mathcal{A}|$ in the case of entropy regularization.

\begin{figure}[t]
\vspace*{-.35cm}
\centering
\small
\includegraphics[width=.4\textwidth]{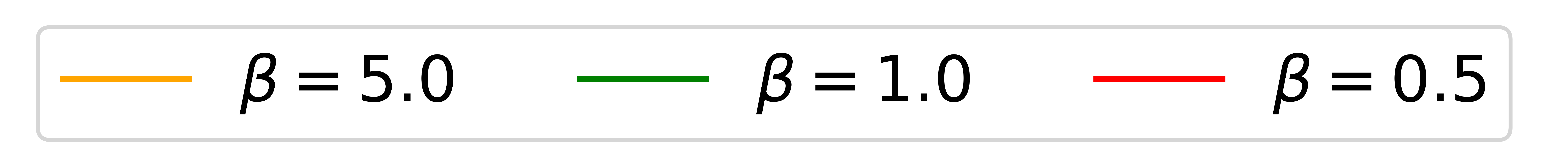} \\
\begin{subfigure}{0.32\textwidth}\includegraphics[width=\textwidth]{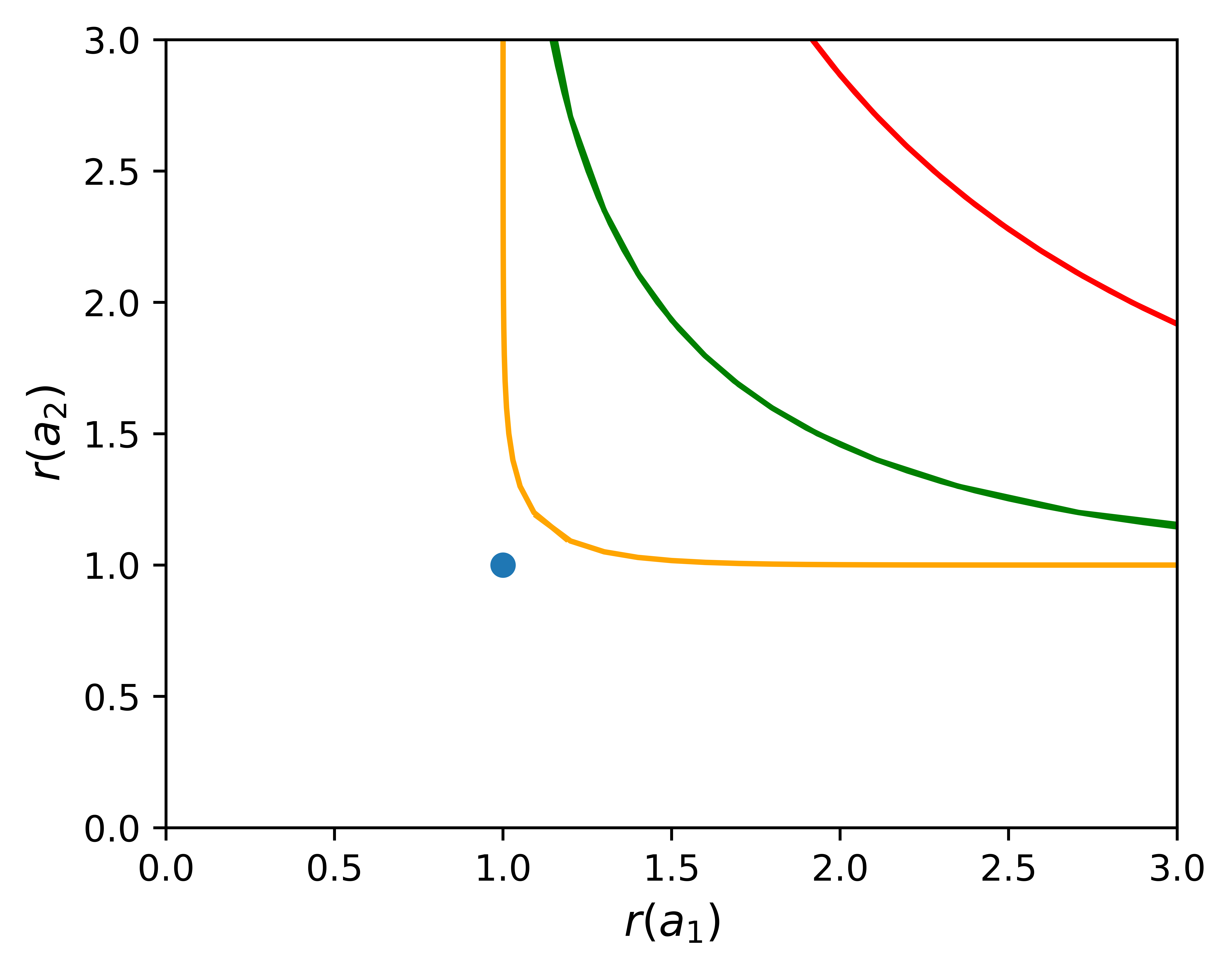}
\caption{\small Fig. 2 or Fig. 9 (left) of \\ \citet{eysenbach2021maximum} \\[1.25ex] Constraint: $\sum_a
\exp\{\beta  \Delta r(a)\} \leq 1$}\label{fig:ela}\end{subfigure} 
\begin{subfigure}{0.32\textwidth}\vspace*{-.3cm} \includegraphics[width=\textwidth]{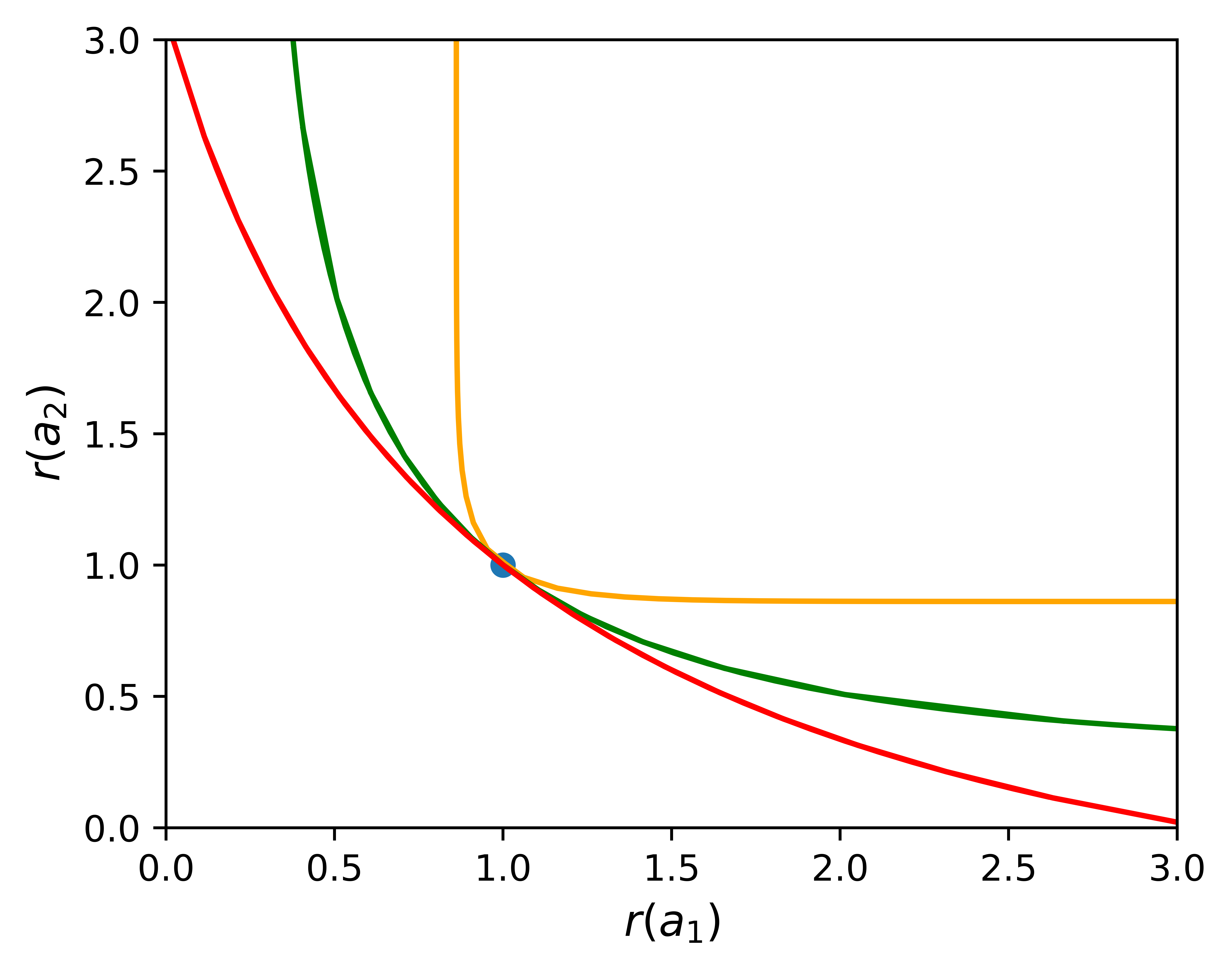}
\caption{\small Shifted $r^{\prime}= r - \Delta r \textcolor{blue}{- \ln(2)}$ \\[1.25ex] Constraint: $\sum_a
\exp\{\beta  \Delta r(a)\} \leq 1$}\label{fig:elb}\end{subfigure}
\begin{subfigure}{0.33\textwidth}\vspace*{-.2cm} \includegraphics[width=\textwidth]{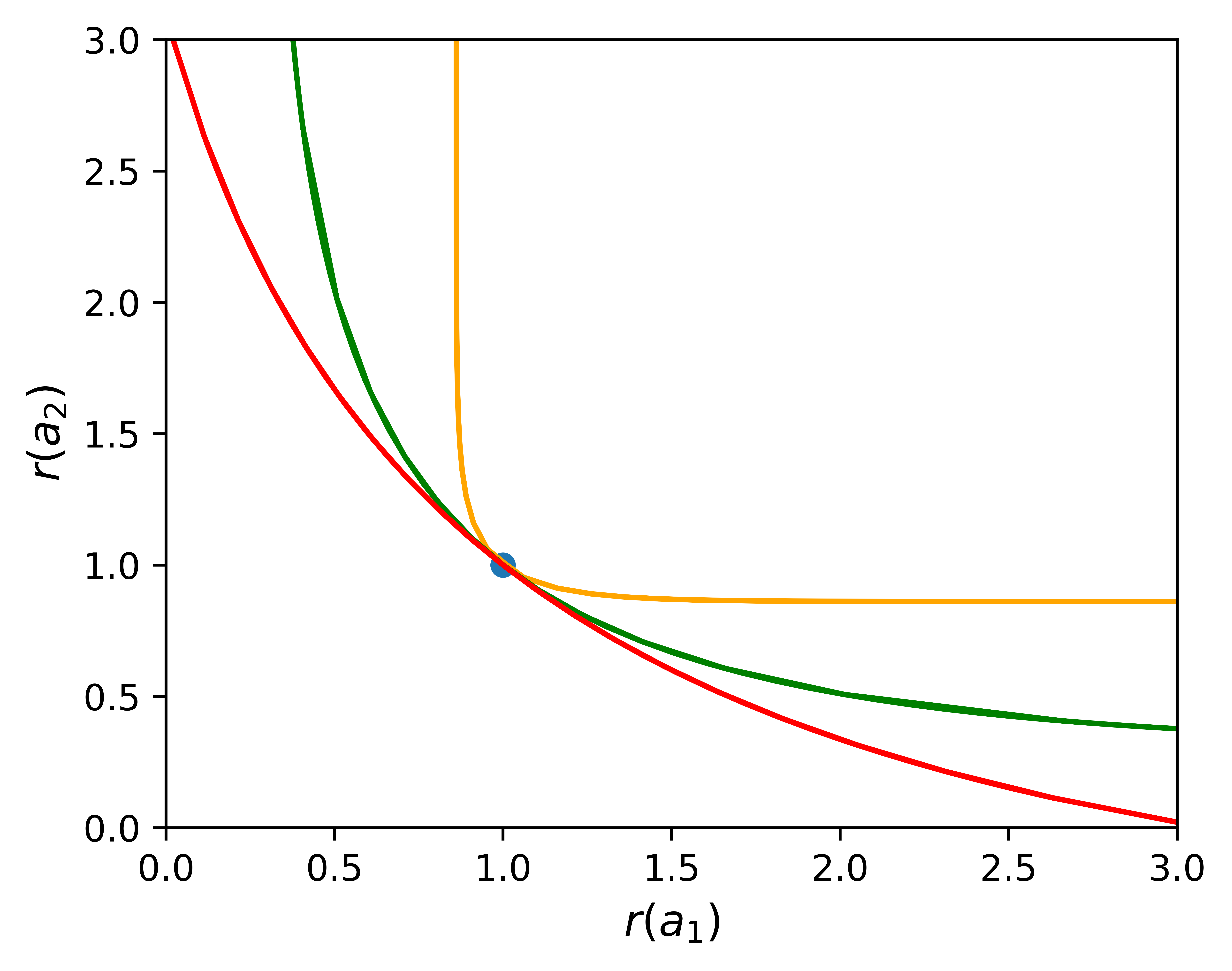}
\caption{ With prior $\pi_0(a|s) = u(a) = \frac{1}{2}$ \\[1.25ex] \text{\small Constraint}$\small \sum \limits_a \textcolor{blue}{\pi_0(a)} \exp\{\beta \Delta r(a)\} \leq 1$ }\label{fig:elc}\end{subfigure}
\normalsize
\caption{ Analyzing the feasible set for entropy regularization (a) versus divergence regularization (b,c).
}\label{fig:eysenbachlevine}
\end{figure}

Our modifications to the feasible set constraints clarify interpretations of how changing regularization strength affects robustness.
We do not give detailed consideration to the other question briefly posed in \citet{eysenbach2021maximum} App. A8, of `if a reward function $r^{\prime}$
is included in a robust set, what other reward functions are included in that robust set?', whose solution is visualized in \citet{eysenbach2021maximum} Fig. 9 (right panel) without detailed derivations.   However,  this plot matches our \cref{fig:elb} and \ref{fig:elc}.   This suggests that 
the constraint arising from \textsc{kl} divergence regularization with strength $\beta$,
$\sum \pi_0(a|s) \expof{\beta \cdot \pr(a,s)} = 1$,
is sufficient to define both the robust set for a given reward function and to characterize the other reward functions to which an optimal policy is robust.
The key observation is that the original reward function is included in the robust set when explicitly including the reference distribution $\pi_0(a|s)$ as in divergence regularization.



\section{Worked Example for Deterministic Regularized Policy ($\alpha = 2$, $\beta = 10$)} \label{app:deterministic_alpha}
We consider the single-step example in \mysec{visualizations_feasible} \myfig{feasible_set_main} or \myapp{additional_results} \myfig{feasible_set_uniform_app}-\ref{fig:feasible_set_nonuniform_app}, with a two-dimensional action space, optimal state-action value estimates, $Q_*(a,s) = r(a,s) = \{ 1.1, 0.8\}$, and uniform prior $\pi_0(a|s)$.   

The case of policy regularization with $\alpha = 2$ and $\beta= 10$ is particularly interesting, since the optimal policy is deterministic with $\pi_*(a_1|s) = 1$. \footnote{We use $\alpha=2$ instead of $\alpha=3$ in \myfig{feasible_set_main} for simplicity of calculations.   See \myapp{additional_results} \myfig{feasible_set_uniform_app} for $\alpha=2$ robust set plots.}
First, we solve for the optimal policy for $Q_*(a,s) = r(a,s)$ as in \myapp{alpha_conj_norm},
\ifx\omitindices\undefined
\small
\begin{align*}
    \frac{1}{\beta} \Omega^{*}_{\pi_0,\beta}(Q_*) &= \max \limits_{\pi \in \Delta^{|\mathcal{A}|}} \blangle \pi(a|s), Q_*(a,s) \brangle - \frac{1}{\beta} \Omega^{(\alpha)}_{\pi_0}(\pi) - \psipiq \left(\sum \limits_{a} \pi(a|s) - 1 \right) + \sum \limits_{a} \lambdaplus \, \\
    \implies  \piopt(a|s) &= \pi_0(a|s) \Big[ 1 + \beta (\alpha-1) \big( Q_*(a,s)  + \lambdaopt \underbrace{ - V_*(s) - \psipropt }_{=\psiqopt \text{ (see \myapp{advantage_confirm} )} } \big) \Big]^{\frac{1}{\alpha-1}} .
\end{align*}
\normalsize
\else
\small
\begin{align*}
    \frac{1}{\beta} \Omega^{*}_{\pi_0,\beta}(Q_*) &= \max \limits_{\pi \in \Delta^{|\mathcal{A}|}} \blangle \pi, Q_* \brangle - \frac{1}{\beta} \Omega^{(\alpha)}_{\pi_0}(\pi) - \psipiq \left(\sum \limits_{a} \pi(a|s) - 1 \right) + \sum \limits_{a} \lambdaplus \, \\
    \implies  \piopt(a|s) &= \pi_0(a|s) \Big[ 1 + \beta (\alpha-1) \big( Q_*(a,s)  + \lambdaopt \underbrace{ - V_*(s) - \psipropt }_{=\psiqopt \text{ (see \myapp{advantage_confirm} )} } \big) \Big]^{\frac{1}{\alpha-1}} .
\end{align*}
\normalsize
\fi
where for $\alpha =2$, we obtain $\piopt(a|s) = \pi_0(a|s) \big[ 1 + \beta \big( Q_*(a,s)  + \lambdaopt - V_*(s) - \psipropt  \big) \big]$.

Using \textsc{cvx-py} \citep{diamond2016cvxpy} to solve this optimization for $\alpha = 2$, $\beta=10$, $\pi_0(a|s) =\frac{1}{2} \, \forall a$, and the given $Q_*(a,s)$, we obtain
\begin{alignat}{3}
    Q_*(a_1, s) &= 1.1 \qquad \qquad & Q_{*}(a_2, s) &= 0.8 \qquad \qquad & \lambda_*(a_1,s)&=0 \nonumber \\
     \piopt(a_1 |s)&= 1 \qquad  \qquad & \piopt(a_2 |s) &= 0 \qquad \qquad & \lambda_*(a_2,s)&= 0.1 \nonumber \\ 
     V_*(s) &= 1.05 \qquad \qquad & \psipiopt &= -0.05 \qquad \qquad & \psiqopt &= 1.0 \, . \label{eq:eg_vals}
\end{alignat}
Our first observation is that, although the policy is deterministic with $\piopt(a_1 |s)= 1$, the value function $V_*(s)=1.05$ is not equal to $\max_a Q_*(a, s)=1.1$ as it would be in the case of an unregularized policy.   Instead, we still need to subtract the $\alpha$-divergence regularization term, which is nonzero.   With $\alpha = 2$ and $1-\alpha = -1$, we have
\small
\begin{align*}
    V_*(s) = \langle \piopt(a|s), Q_*(a,s) \rangle - \underbrace{\frac{1}{\beta} \frac{1}{\alpha} \frac{1}{1-\alpha} \left( 1- \sum_a \pi_0(a|s)^{1-\alpha} \piopt(a|s)^{\alpha} \right)}_{\frac{1}{\beta} D_{\alpha}[\pi_0 : \piopt]} &= 1.1 - \frac{1}{10}\frac{1}{2}\frac{1}{-1}\left( 1 - .5^{-1}\cdot 1^{2} - .5^{-1}\cdot 0^{2} \right)\\
    &= 1.1 + .05\cdot(1-2) = 1.05
\end{align*}
\normalsize
Recall that for normalized $\pi_0$, $\pi$, we have $\psipiopt = -\frac{1}{\beta}(1-\alpha) D_{\alpha}[\pi_0(a|s) : \piopt(a|s)] = -0.05$, so that we can confirm \cref{eq:eg_vals} for $\alpha=2$.

Finally, we confirm the result of \myprop{advantage} by calculating the reward perturbations in two different ways.  For $a_1$, we have
\small
\begin{align*}
    \prpiopt(a_1,s) &= \frac{1}{\beta}\frac{1}{\alpha-1}\left( \frac{\piopt(a_1|s)}{\pi_0(a_1|s)}^{\alpha-1} -1 \right) + \psipiopt= \frac{1}{10}\frac{1}{1}\left(\frac{1}{.5}^{1} -1 \right)  - .05 = .05 \\
    &= Q_*(a_1,s) - V_*(s) + \lambda_*(a_1,s) = 1.1 - 1.05 + 0= .05 , 
\end{align*}
\normalsize
and for $a_2$,
\small
\begin{align*}
    \prpiopt(a_2,s) &= \frac{1}{\beta}\frac{1}{\alpha-1}\left( \frac{\piopt(a_2|s)}{\pi_0(a_2|s)}^{\alpha-1} -1 \right) + \psipiopt= \frac{1}{10}\frac{1}{1}\left(\frac{0}{.5}^{1} -1 \right)  - .05 = -.15 \\
    &= Q_*(a_2,s) - V_*(s) + \lambda_*(a_2,s) = 0.8 - 1.05 + 0.1 = -.15 
\end{align*}
\normalsize
so that we have $\prpiopt(a_1,s) = 0.05$ and $\prpiopt(a_2,s) = -0.15$.  

We can observe that the indifference condition \textit{does not} hold, since $Q_*(a_1,s) - \prpiopt(a_1,s) = 1.1 - 0.05 = 1.05$ does not match $Q_*(a_2,s) - \prpiopt(a_2,s) = 0.8 - (-0.15) = 0.95$.

However, adding the Lagrange multiplier $\lambda_*(a_2,s) = 0.1$ accounts for the difference in these values.   This allows us to confirm the path consistency condition (\cref{eq:path}), 
\small
\begin{align}
 \underbrace{r(a,s) + \gamma \transitionvstar \vphantom{\frac{1}{\beta}}}_{Q_*(a,s)} - \underbrace{\frac{1}{\beta} \log_{\alpha}\frac{\pi_*(a|s)}{\pi_0(a|s)} - \psipiopt}_{\prpiopt(a,s)} =  V_*(s) - \lambdaopt \qquad \forall (a,s) \in \mathcal{A} \times \mathcal{S}
\end{align}
\normalsize
with $Q_*(a_1,s) - \prpiopt(a_1,s) - V_*(s) + \lambda_*(a_1,s) = 1.1 - 0.05 - 1.05 + 0 = 0$ and 
$Q_*(a_2,s) - \prpiopt(a_2,s) - V_*(s) + \lambda_*(a_2,s) = 0.8 - (-0.15) - 1.05 + 0.1 = 0$.

\section{Additional Feasible Set Plots} \label{app:additional_results}\label{app:additional_feasible}

In \myfig{feasible_set_uniform_app} and \ref{fig:feasible_set_nonuniform_app}, we provide additional feasible set plots for the $\alpha$-divergence with $\alpha \in \{-1, 1, 2, 3\}$ and $\beta \in \{ 0.1, 1.0, 5, 10 \}$ with $r(a,s) = [1.1, 0.8]$.   As in \myfig{feasible_set_main}, we show the feasible set corresponding to the single-step optimal policy for $Q_*(a,s) = r(a,s)$ for various regularization schemes.   Since each policy is optimal, we can confirm the indifference condition for the \textsc{kl} divergence, with $Q_*(a,s) - \prpiopt(a,s) = V_*(s)$ constant across actions and equal to the soft value function, certainty equivalent, or conjugate function $V_*(s) = \alphaconjn(Q)$.   
When the indifference condition holds, we can obtain the ratio of action probabilities in the regularized policy by taking the slope of the tangent line to the feasible set boundary at $\mrpiopt(a,s)$, as in \mysec{visualizations_feasible}.

However, indifference does not hold in cases where the optimal policy sets $\piopt(a_2|s) =0$, which occurs for ($\alpha = 2, \beta=10$), $(\alpha = 3, \beta \in \{5,10\}$) for a uniform reference policy and additionally for ($\alpha =2, \beta = 5$) with the nonuniform reference in \myfig{feasible_set_nonuniform_app}.  In these cases, we cannot ignore the Lagrange multiplier in \cref{eq:path}, $\mr_{\piopt}(a,s) = Q_*(a,s) - \prpiopt(a,s) = V_*(s) - \lambda_*(a,s)$, and $\lambda_*(a_2,s) > 0$ results in a different perturbed reward $\mr_{\piopt}(a_1, s) \neq \mr_{\piopt}(a_2,s)$.

For $\alpha=-1$ and low regularization strength ($\beta=10$), we observe a wider feasible set boundary than for \textsc{kl} divergence regularization.   For $\alpha=2$ and $\alpha=3$, the boundary is more restricted and the worst-case reward perturbations become notably smaller when the policy is deterministic.  For example, we can compare $\beta=5$ versus $\beta=10$ for $\alpha = 2$.  However, as in \myfig{value_agg_main}, we do not observe notable differences in the robust sets at lower regularization strengths based on the choice of $\alpha$-divergence.

\newcommand{\sidefeasible}{.08}
\newcommand{\mainfeasiblefour}{.187}
\begin{figure*}[!t]
\vspace*{-1.6cm}
\centering
\begin{center}
\[\arraycolsep=.01\textwidth
\begin{array}{ccccc} 
 \begin{subfigure}{\sidefeasible\textwidth}  \phantom{ \includegraphics[width=\textwidth]
 {figs/feasible/uniform_prior.png}} \end{subfigure} & \begin{subfigure}{\mainfeasiblefour\textwidth}\includegraphics[width=\textwidth, trim= 0 0 0 0, clip]
{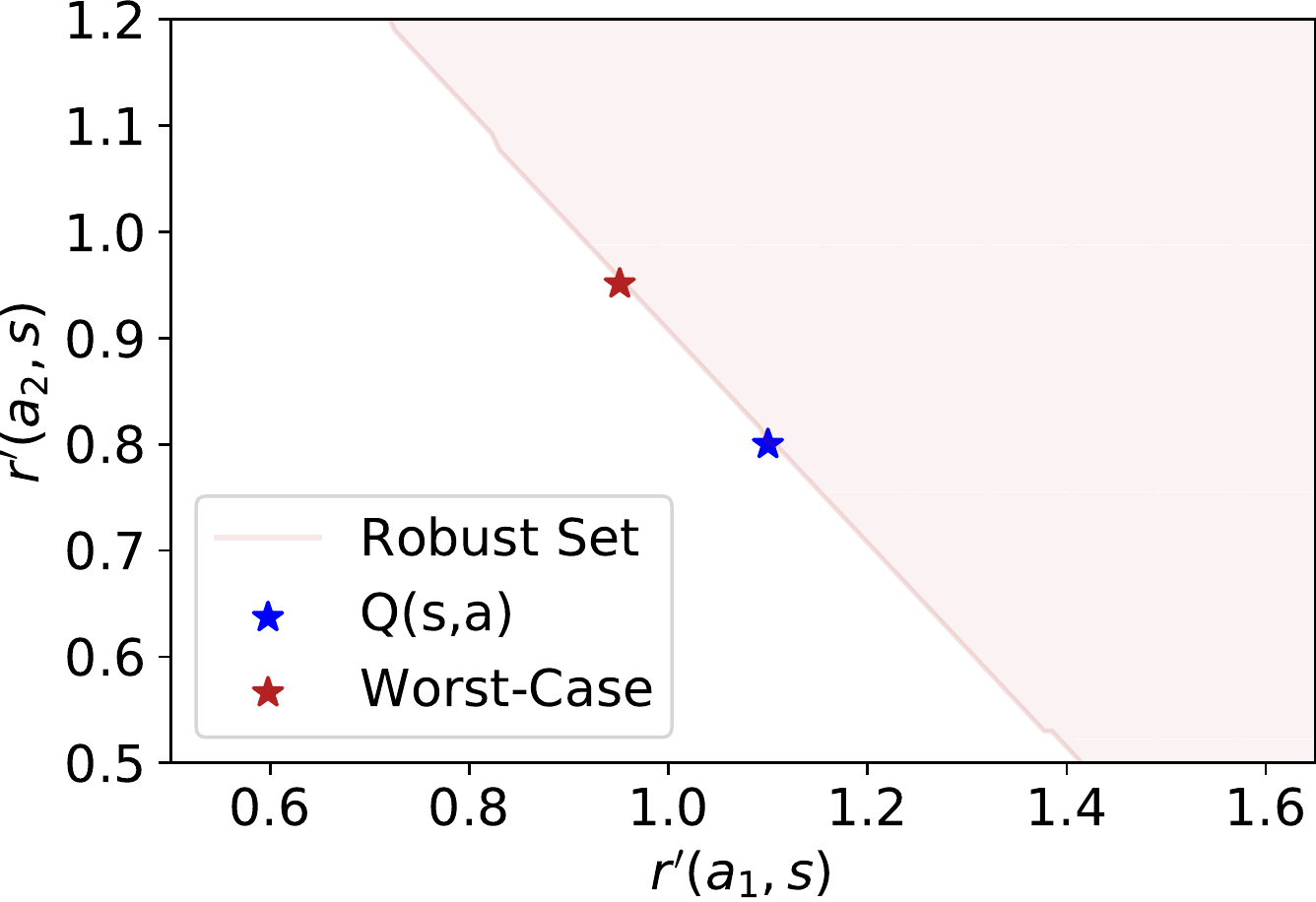}
\caption{ \centering \small $\alpha = -1, \beta = 0.1$}\end{subfigure}&
\begin{subfigure}{\mainfeasiblefour\textwidth}\includegraphics[width=\textwidth, trim= 0 0 0 0, clip]
{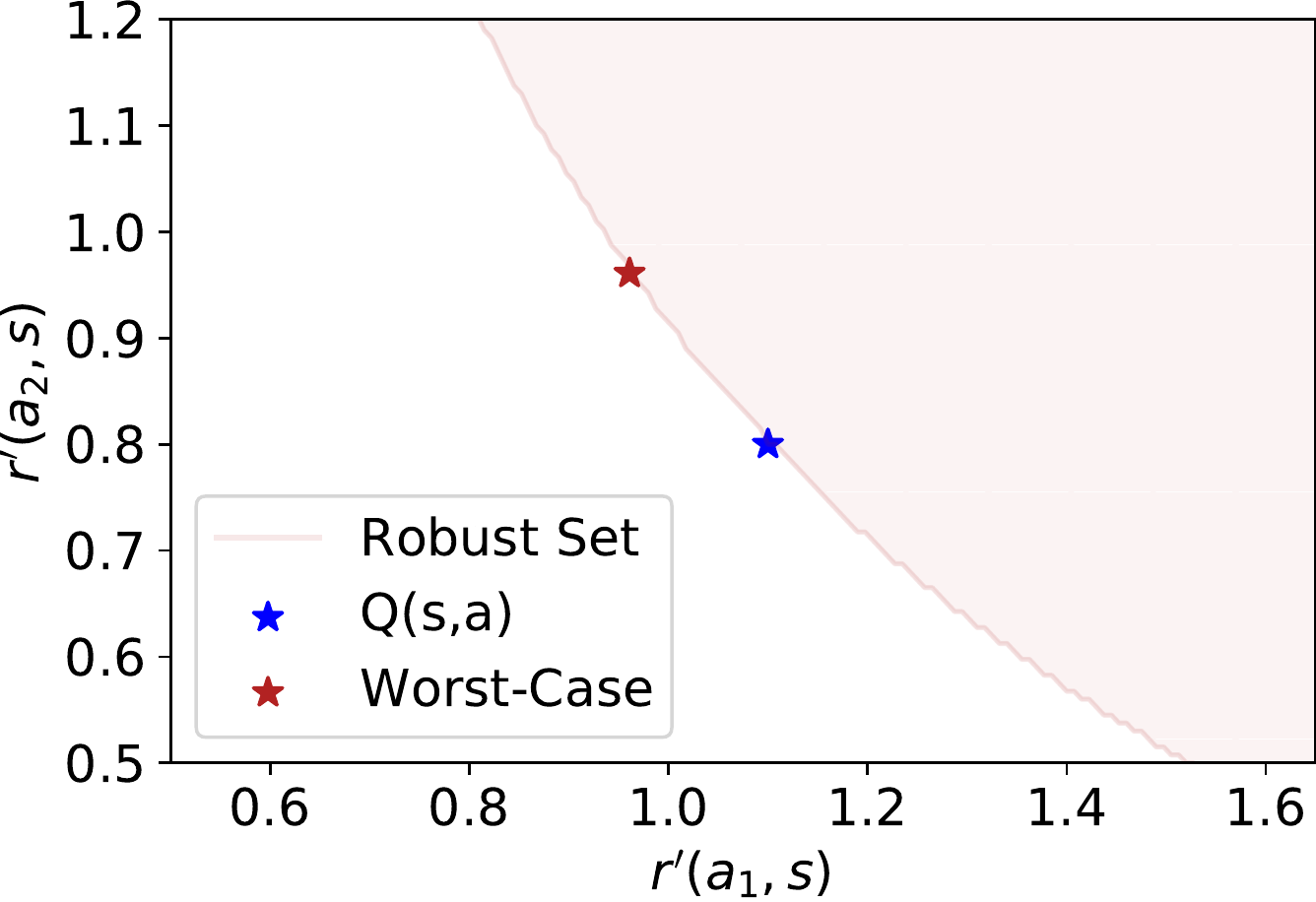}
\caption{ \centering \small $\alpha = -1, \beta = 1$}\end{subfigure} &
\begin{subfigure}{\mainfeasiblefour\textwidth}\includegraphics[width=\textwidth, trim= 0 0 0 0, clip]
{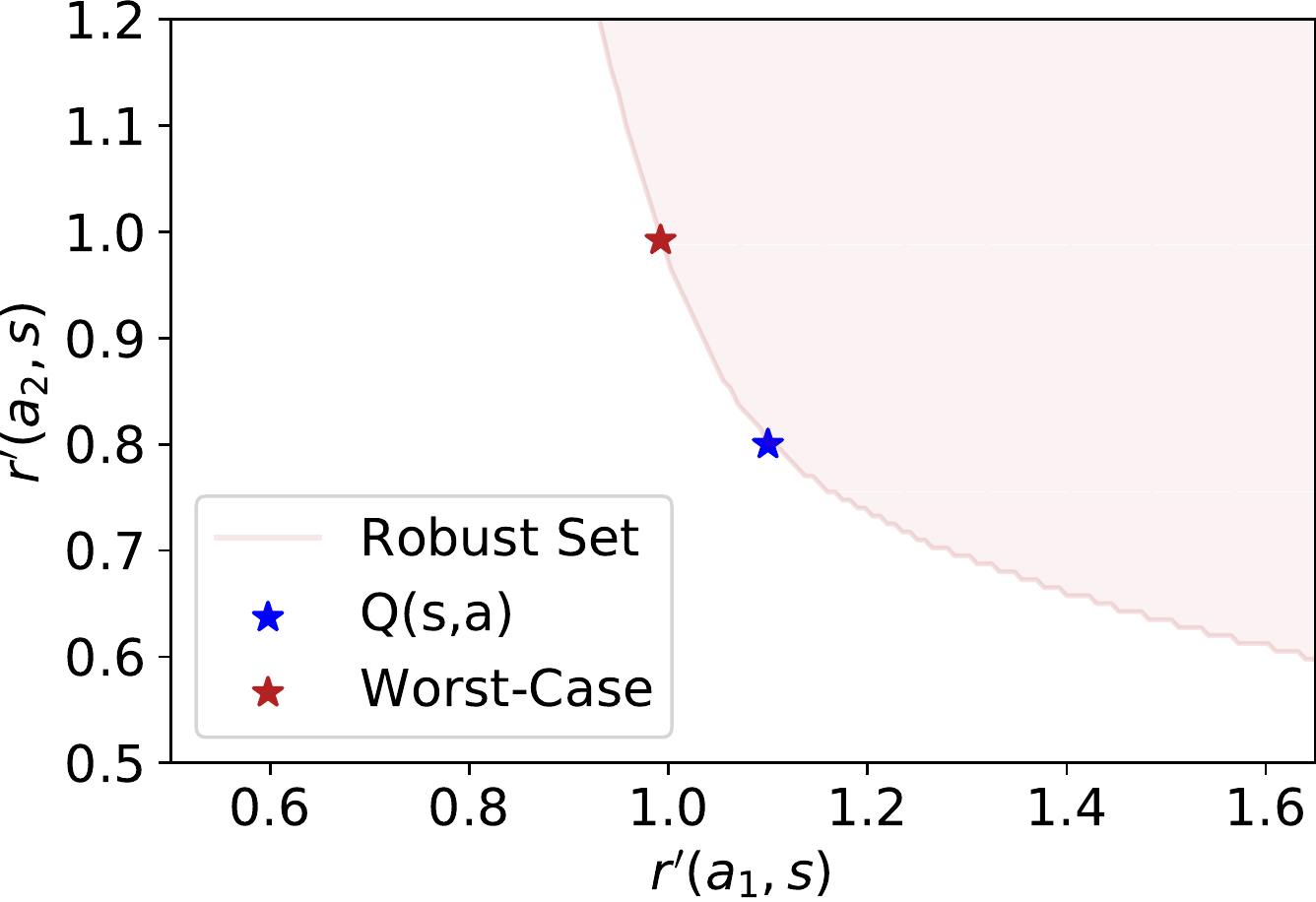}
\caption{ \centering \small $\alpha = -1, \beta = 5$}\end{subfigure} &
\begin{subfigure}{\mainfeasiblefour\textwidth}\includegraphics[width=\textwidth, trim= 0 0 0 0, clip]
{figs/feasible/alpha-1_no_lambda/alpha_neg1_beta10uniform.pdf}
\caption{ \centering \small $\alpha = -1, \beta = 10$}\end{subfigure} \\
 \begin{subfigure}{\sidefeasible\textwidth}  \includegraphics[width=0.9\textwidth]
 {figs/feasible/uniform_prior.png} \end{subfigure} & 
\begin{subfigure}{\mainfeasiblefour\textwidth}\includegraphics[width=\textwidth, trim= 0 0 0 0, clip]
{figs/feasible/kl/beta01alpha1uniform.pdf}
\caption{ \centering \small $D_{KL}, \beta = 0.1$} \end{subfigure} & 
\begin{subfigure}{\mainfeasiblefour\textwidth}\includegraphics[width=\textwidth, trim= 0 0 0 0, clip]{figs/feasible/kl/beta1alpha1uniform.pdf}
\caption{\centering \small $D_{KL}, \beta = 1$}\end{subfigure} &
\begin{subfigure}{\mainfeasiblefour\textwidth}\includegraphics[width=\textwidth, trim= 0 0 0 0, clip]
{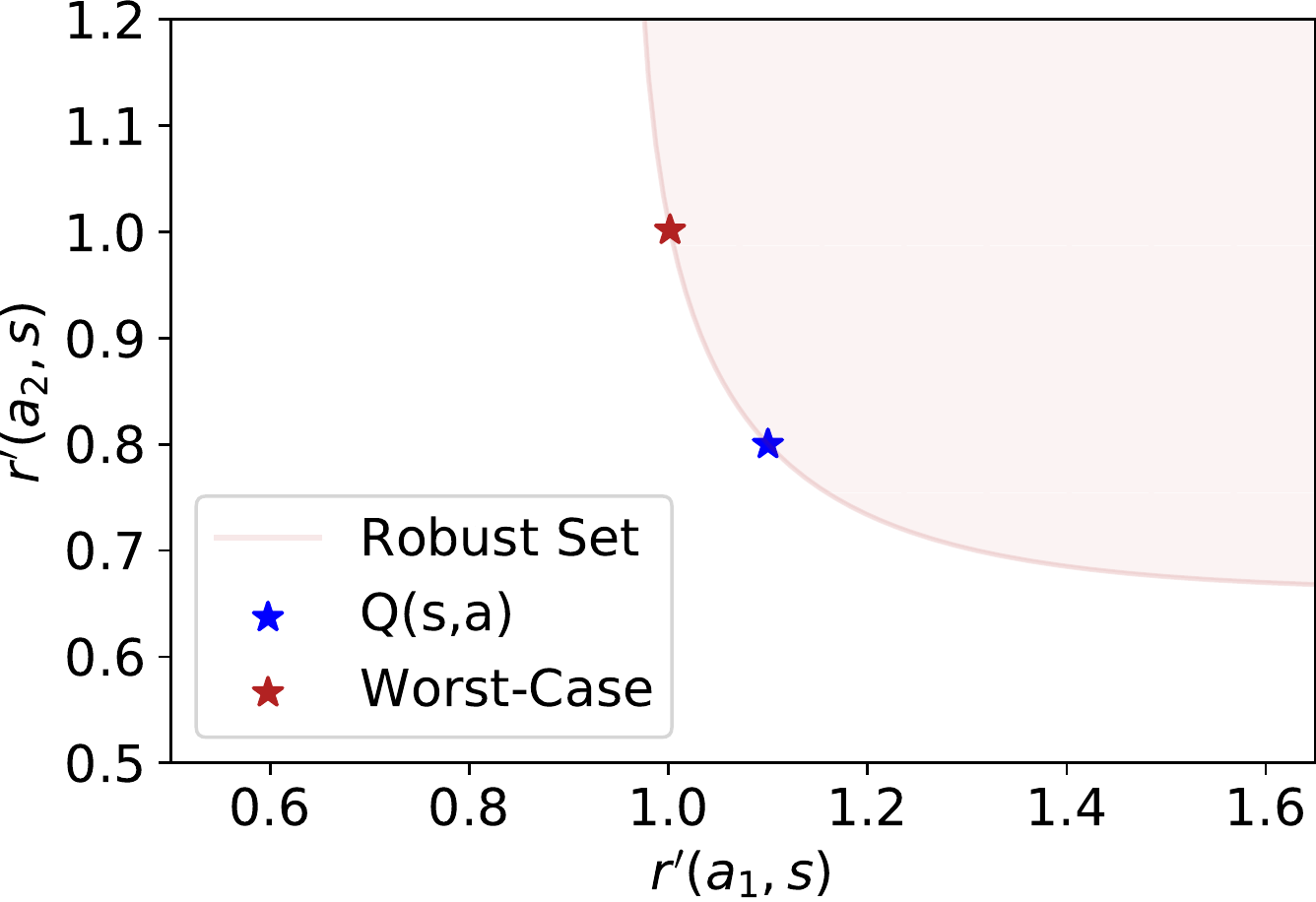}
\caption{ \centering \small $D_{KL}, \beta = 5$}\end{subfigure} &
\begin{subfigure}{\mainfeasiblefour\textwidth}\includegraphics[width=\textwidth, trim= 0 0 0 0, clip]
{figs/feasible/kl/beta10alpha1uniform.pdf}
\caption{ \centering \small $D_{KL}, \beta = 10$}\end{subfigure} \\
 \begin{subfigure}{\sidefeasible\textwidth} \phantom{ \includegraphics[width=\textwidth]
 {figs/feasible/uniform_prior.png}} \end{subfigure} & \begin{subfigure}{\mainfeasiblefour\textwidth}\includegraphics[width=\textwidth, trim= 0 0 0 0, clip]
{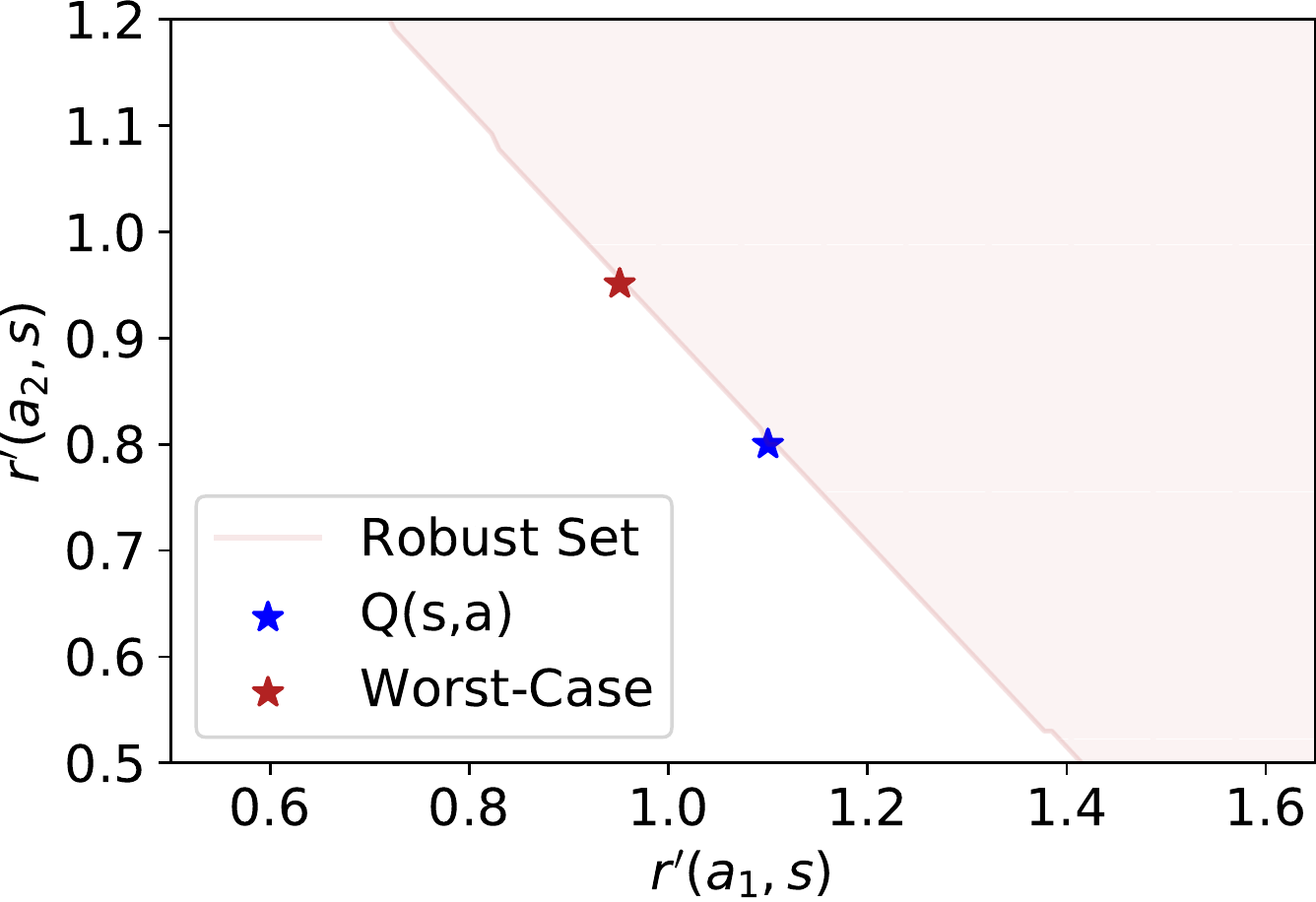}
\caption{ \centering \small $\alpha = 2, \beta = 0.1$}\end{subfigure}&
\begin{subfigure}{\mainfeasiblefour\textwidth}\includegraphics[width=\textwidth, trim= 0 0 0 0, clip]
{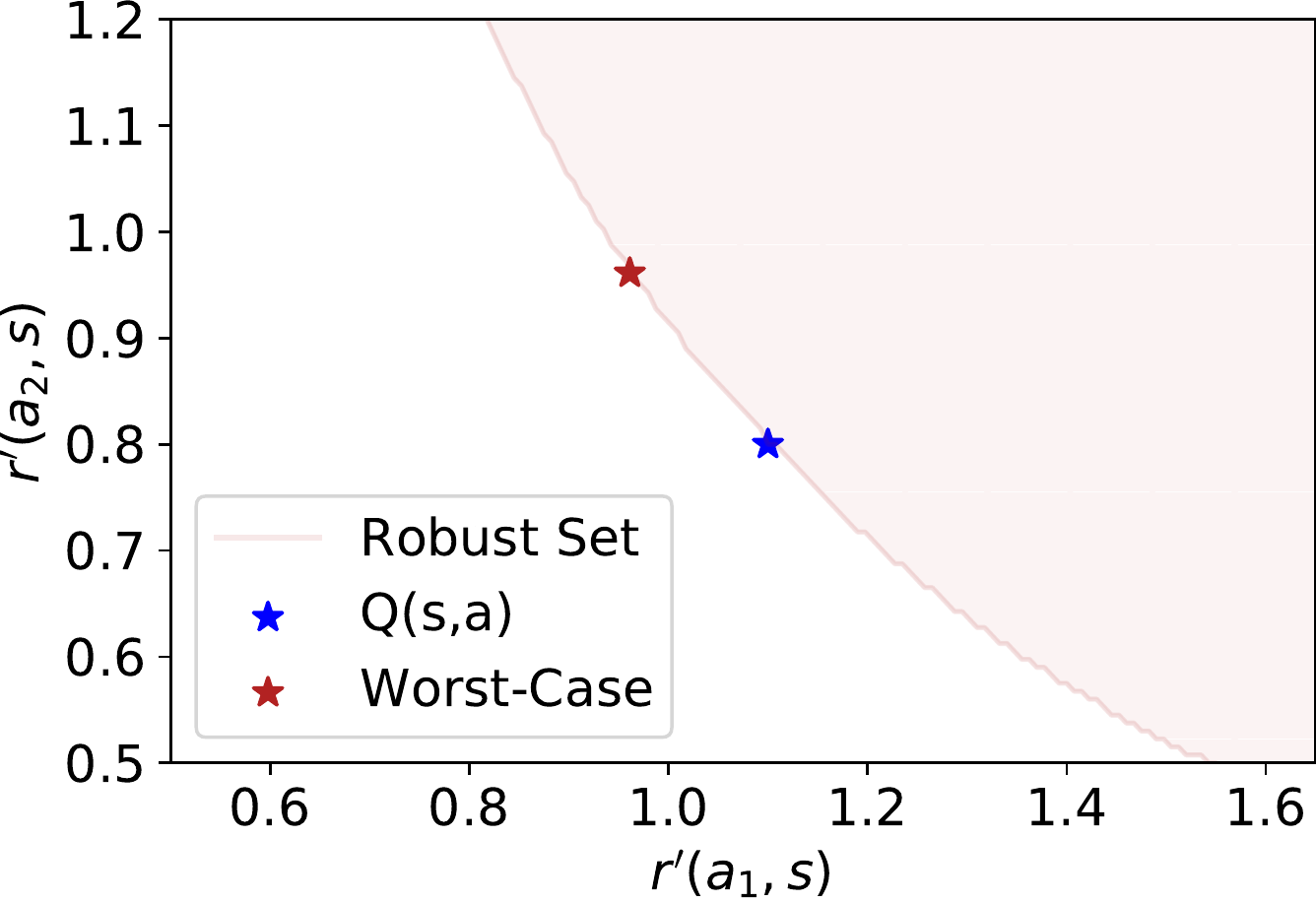}
\caption{ \centering \small $\alpha = 2, \beta = 1$}\end{subfigure}&
\begin{subfigure}{\mainfeasiblefour\textwidth}\includegraphics[width=\textwidth, trim= 0 0 0 0, clip]
{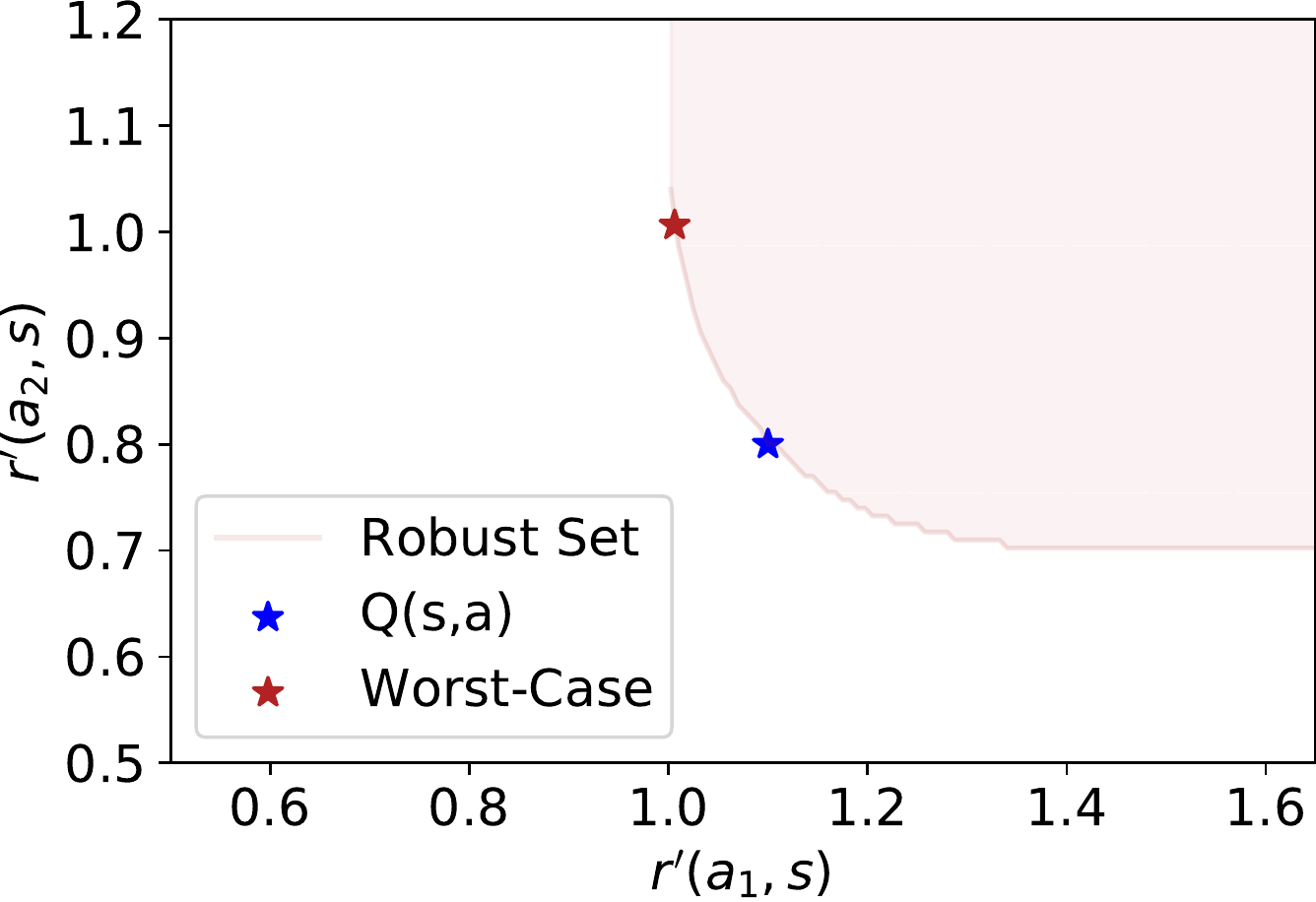}
\caption{ \centering \small $\alpha = 2, \beta = 5$}\end{subfigure} &
\begin{subfigure}{\mainfeasiblefour\textwidth}\includegraphics[width=\textwidth, trim= 0 0 0 0, clip]
{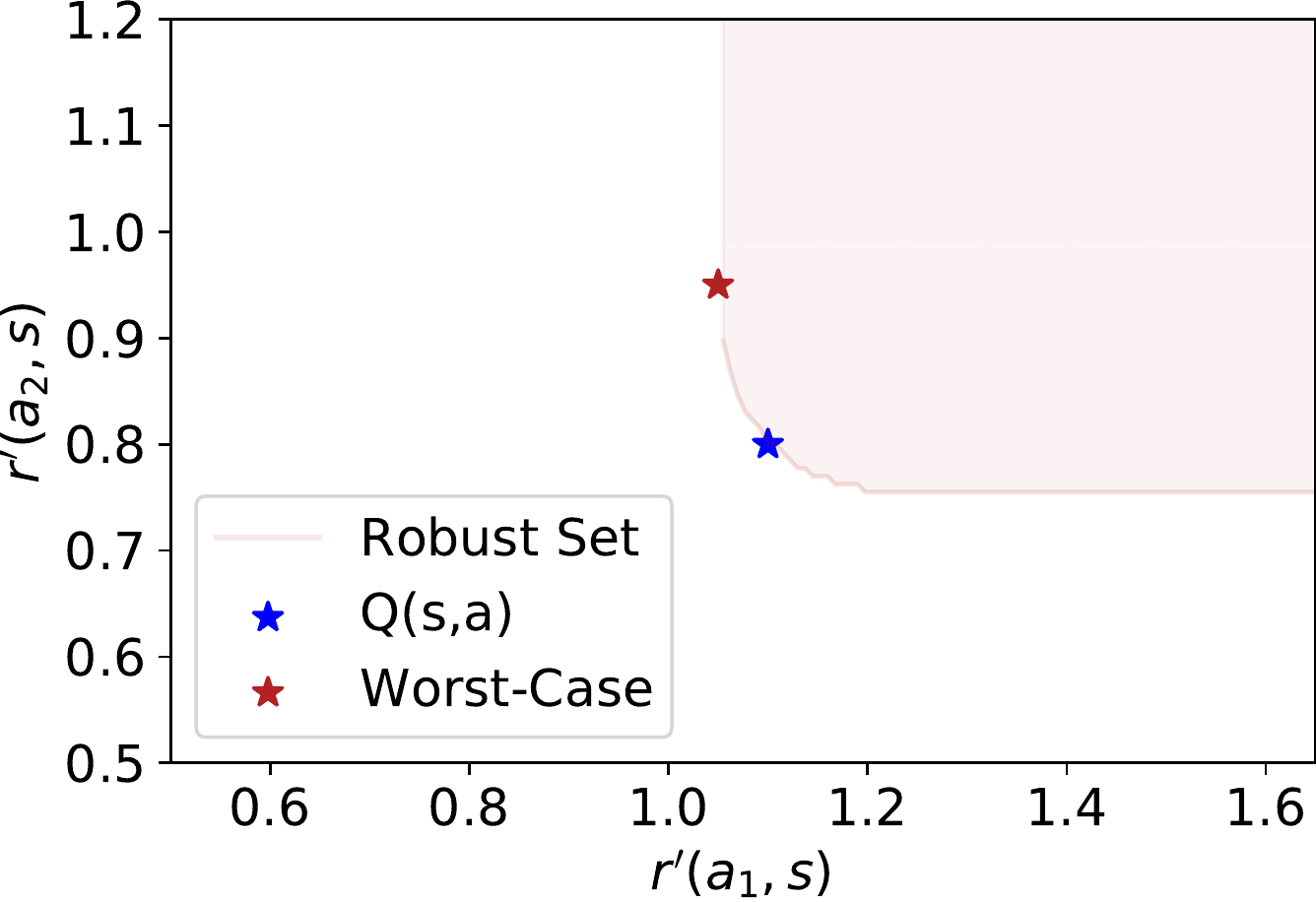}
\caption{ \centering \small $\alpha = 2, \beta = 10$}\end{subfigure} \\
\begin{subfigure}{\sidefeasible\textwidth} \phantom{ \includegraphics[width=\textwidth]
 {figs/feasible/uniform_prior.png}} \end{subfigure} & \begin{subfigure}{\mainfeasiblefour\textwidth}\includegraphics[width=\textwidth, trim= 0 0 0 0, clip]
{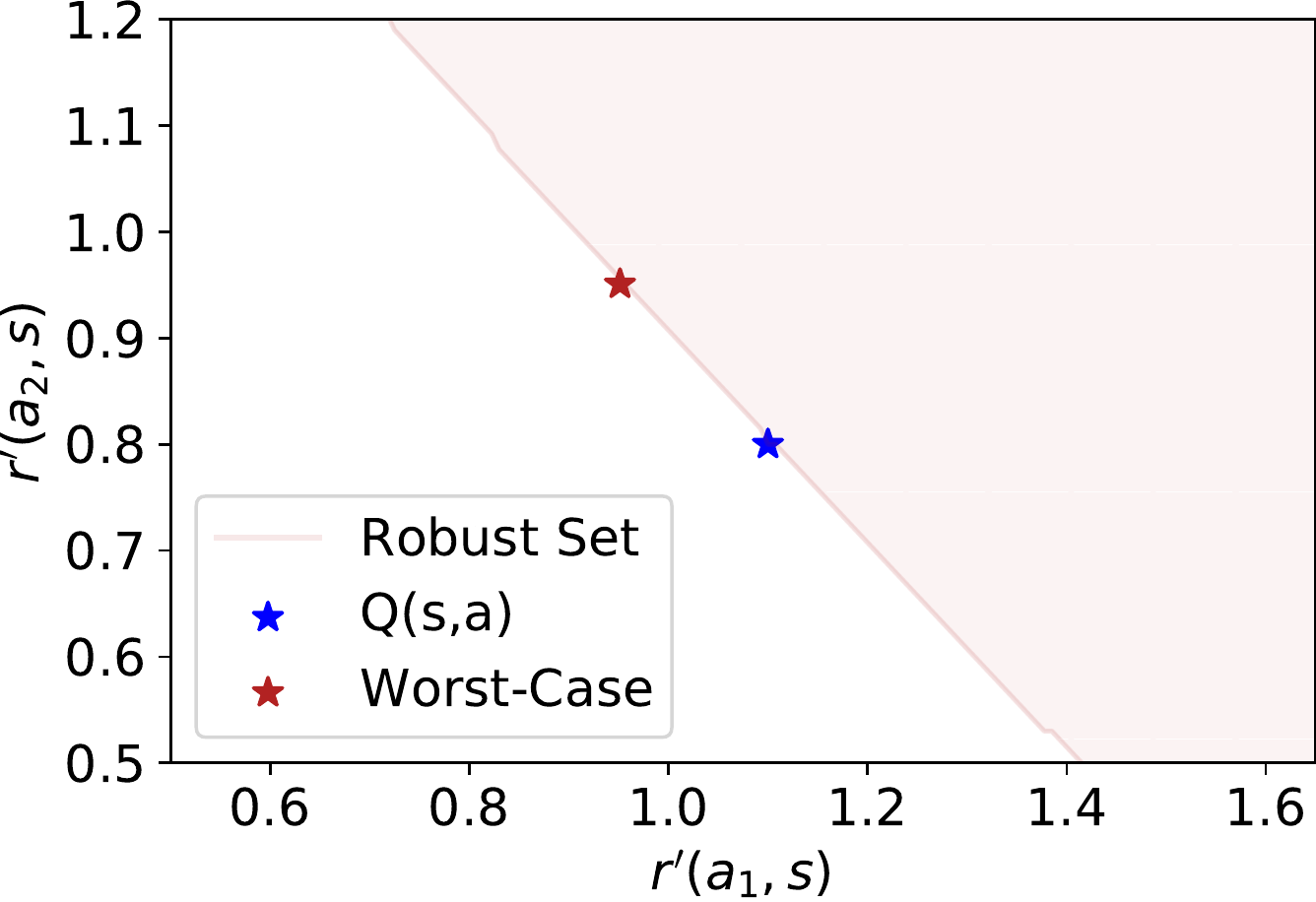}
\caption{\centering \small $\alpha = 3, \beta = 0.1$}\end{subfigure}&
\begin{subfigure}{\mainfeasiblefour\textwidth}\includegraphics[width=\textwidth, trim= 0 0 0 0, clip]
{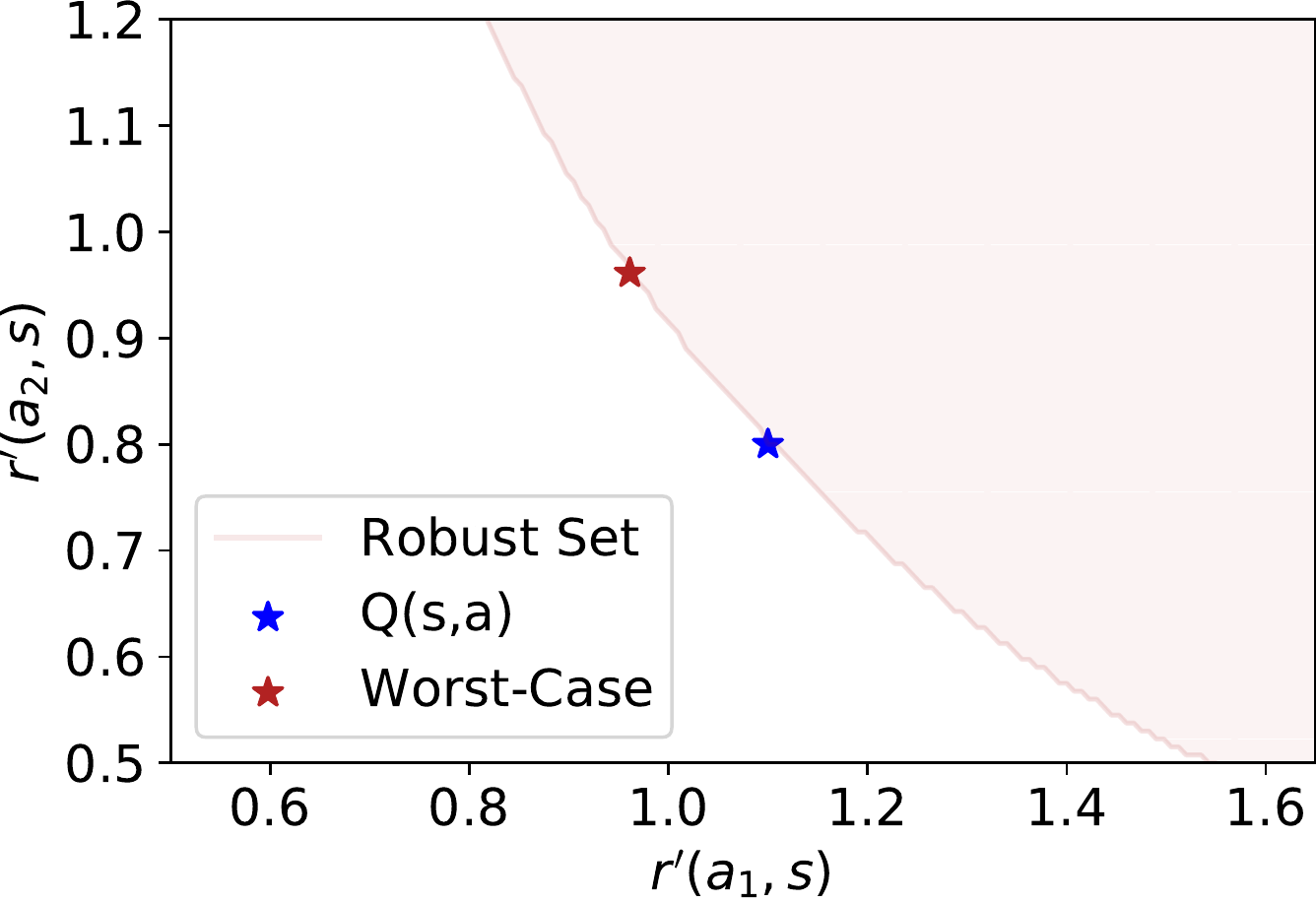}
\caption{ \centering \small $\alpha = 3, \beta = 1$}\end{subfigure}&
\begin{subfigure}{\mainfeasiblefour\textwidth}\includegraphics[width=\textwidth, trim= 0 0 0 0, clip]
{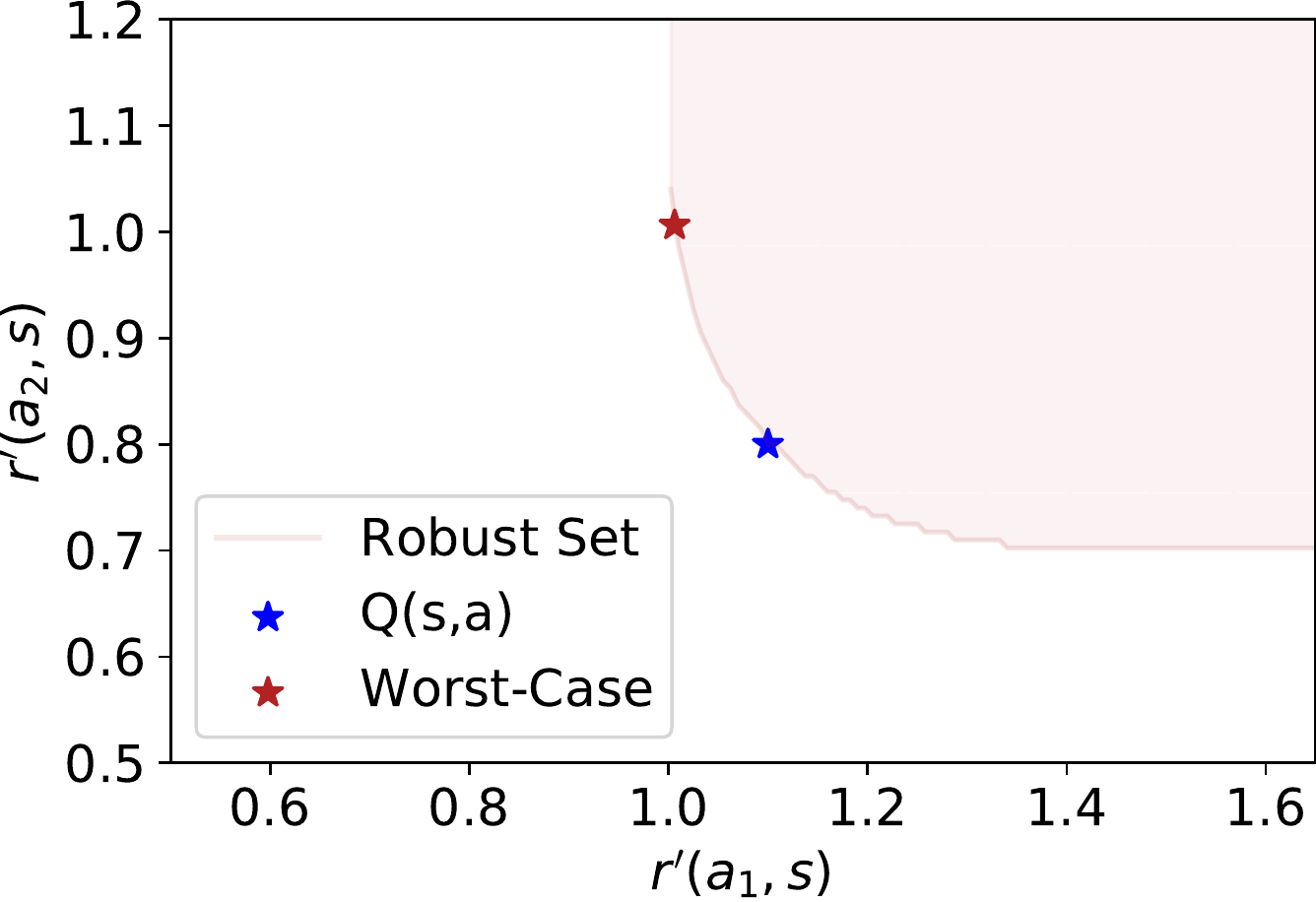}
\caption{ \centering \small $\alpha = 3, \beta = 5$}\end{subfigure}&
\begin{subfigure}{\mainfeasiblefour\textwidth}\includegraphics[width=\textwidth, trim= 0 0 0 0, clip]
{figs/feasible/alpha3_no_lambda/alpha3_beta10uniform.pdf}
\caption{ \centering \small $\alpha = 3, \beta = 10$}\end{subfigure}
\\ 
\end{array}
\] 
\end{center}
\vspace*{-.35cm}
\caption{Reference distribution $\pi_0=(\frac{1}{2},\frac{1}{2})$. See caption of \myfig{feasible_set_nonuniform_app}.
}\label{fig:feasible_set_uniform_app}
\vspace*{.15cm}
\hrule
\vspace*{-.3cm}
\centering
\begin{center}
\[\arraycolsep=.01\textwidth
\begin{array}{ccccc} 
 \begin{subfigure}{\sidefeasible\textwidth}  \phantom{ \includegraphics[width=\textwidth]
 {figs/feasible/nonuniform_prior.png}} \end{subfigure} & \begin{subfigure}{\mainfeasiblefour\textwidth}\includegraphics[width=\textwidth, trim= 0 0 0 0, clip]
{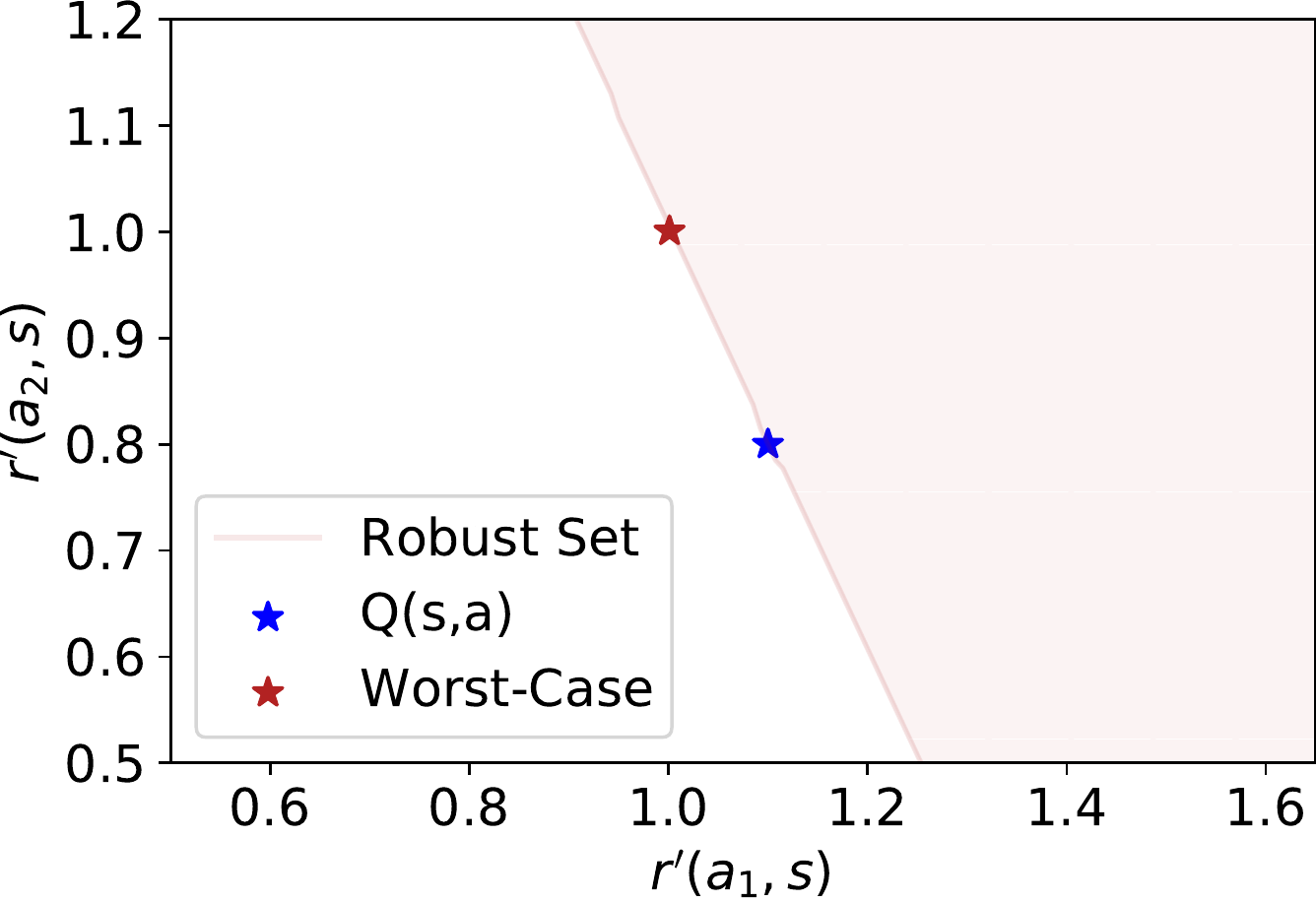}
\caption{ \centering \small $\alpha = -1, \beta = 0.1$}\end{subfigure}&
\begin{subfigure}{\mainfeasiblefour\textwidth}\includegraphics[width=\textwidth, trim= 0 0 0 0, clip]
{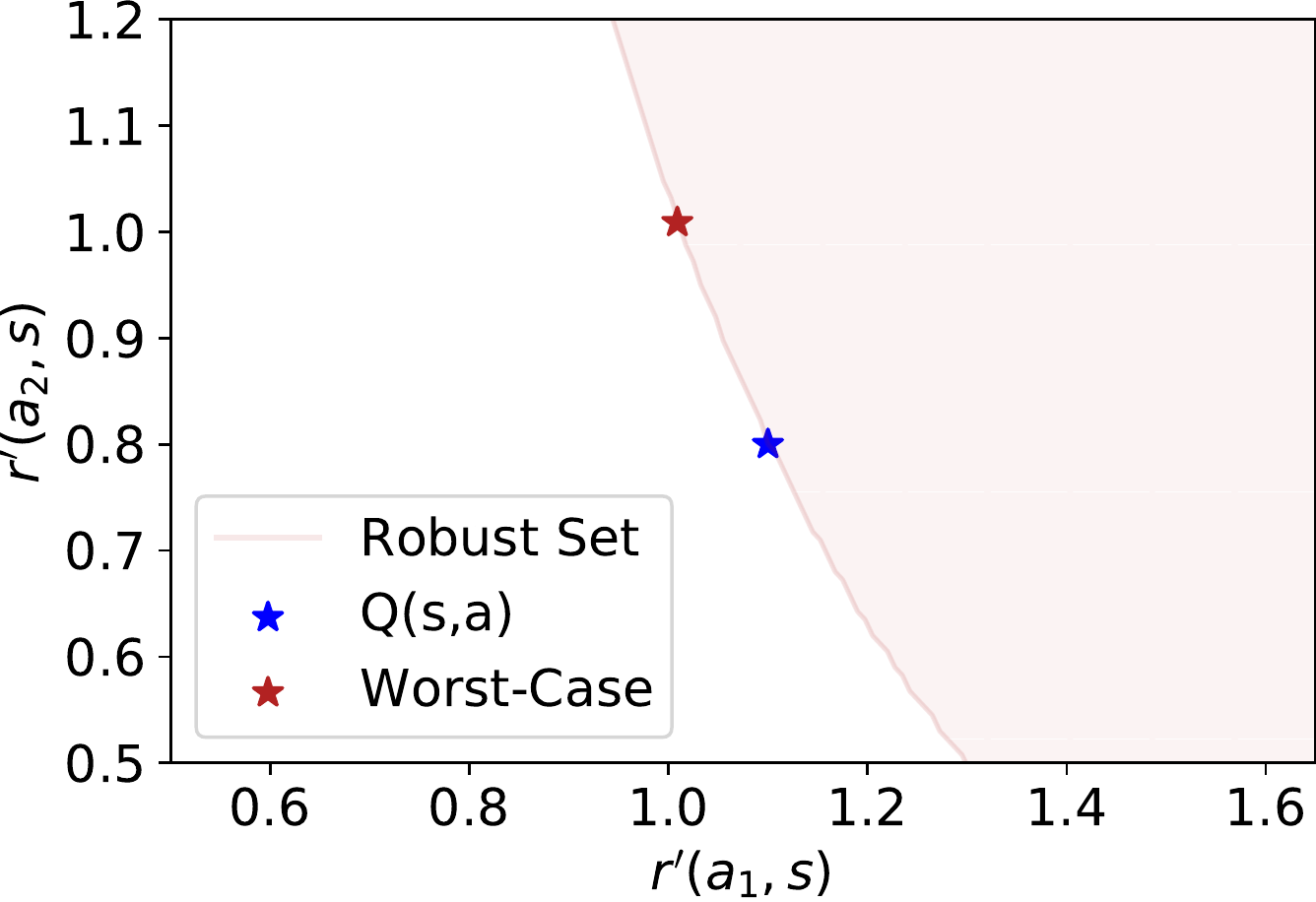}
\caption{ \centering \small $\alpha = -1, \beta = 1$}\end{subfigure}  &
\begin{subfigure}{\mainfeasiblefour\textwidth}\includegraphics[width=\textwidth, trim= 0 0 0 0, clip]
{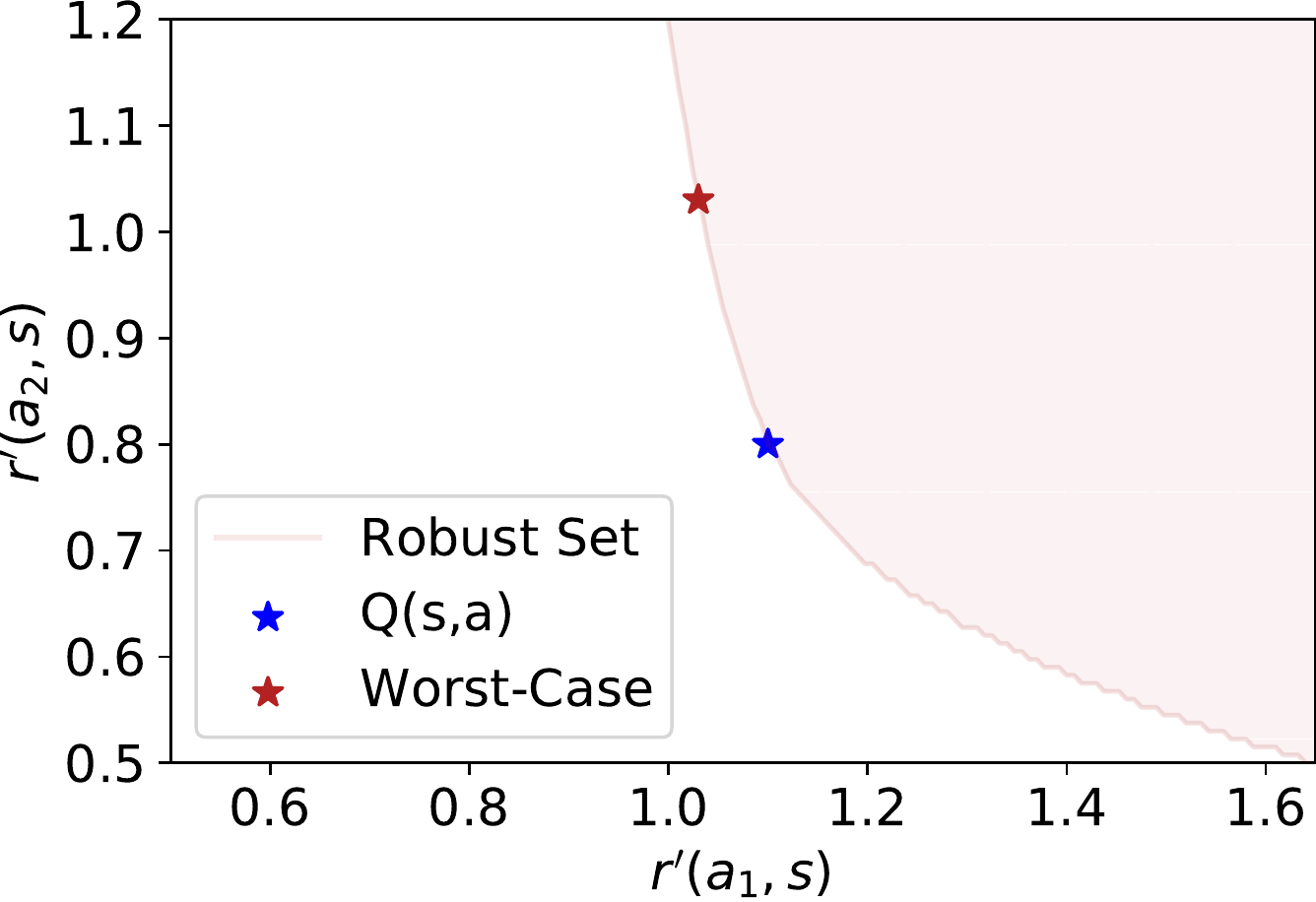}
\caption{ \centering \small $\alpha = -1, \beta = 5$}\end{subfigure} &
\begin{subfigure}{\mainfeasiblefour\textwidth}\includegraphics[width=\textwidth, trim= 0 0 0 0, clip]
{figs/feasible/alpha-1_no_lambda/alpha_neg1_beta10nonuniform.pdf}
\caption{ \centering \small $\alpha = -1, \beta = 10$}\end{subfigure} \\
 \begin{subfigure}{\sidefeasible\textwidth}  \includegraphics[width=0.9\textwidth]
 {figs/feasible/nonuniform_prior.png} \end{subfigure} & 
\begin{subfigure}{\mainfeasiblefour\textwidth}\includegraphics[width=\textwidth, trim= 0 0 0 0, clip]
{figs/feasible/kl/beta01alpha1nonuniform.pdf}
\caption{ \centering \small $D_{KL}, \beta = 0.1$} \end{subfigure} & 
\begin{subfigure}{\mainfeasiblefour\textwidth}\includegraphics[width=\textwidth, trim= 0 0 0 0, clip]{figs/feasible/kl/beta1alpha1nonuniform.pdf}
\caption{ \centering  $D_{KL}, \beta = 1$}\end{subfigure}& 
\begin{subfigure}{\mainfeasiblefour\textwidth}\includegraphics[width=\textwidth, trim= 0 0 0 0, clip]{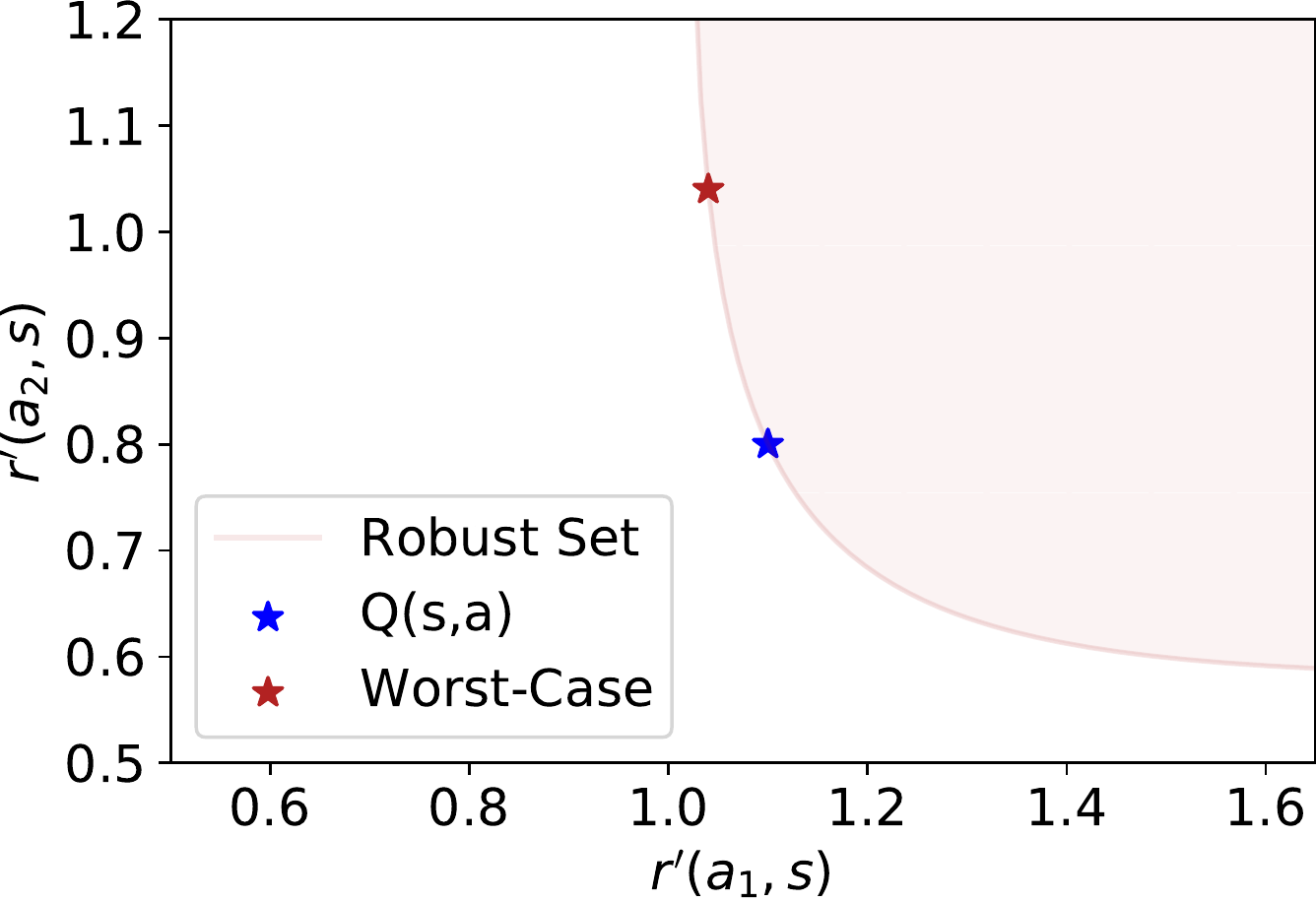}
\caption{ \centering  $D_{KL}, \beta = 5$}\end{subfigure} &
\begin{subfigure}{\mainfeasiblefour\textwidth}\includegraphics[width=\textwidth, trim= 5 0 5 0, clip]
{figs/feasible/kl/beta10alpha1nonuniform.pdf}
\caption{ \centering \small $D_{KL}, \beta = 10$}\end{subfigure} \\
 \begin{subfigure}{\sidefeasible\textwidth} \phantom{ \includegraphics[width=\textwidth]
 {figs/feasible/nonuniform_prior.png}} \end{subfigure} & \begin{subfigure}{\mainfeasiblefour\textwidth}\includegraphics[width=\textwidth, trim= 0 0 0 0, clip]
{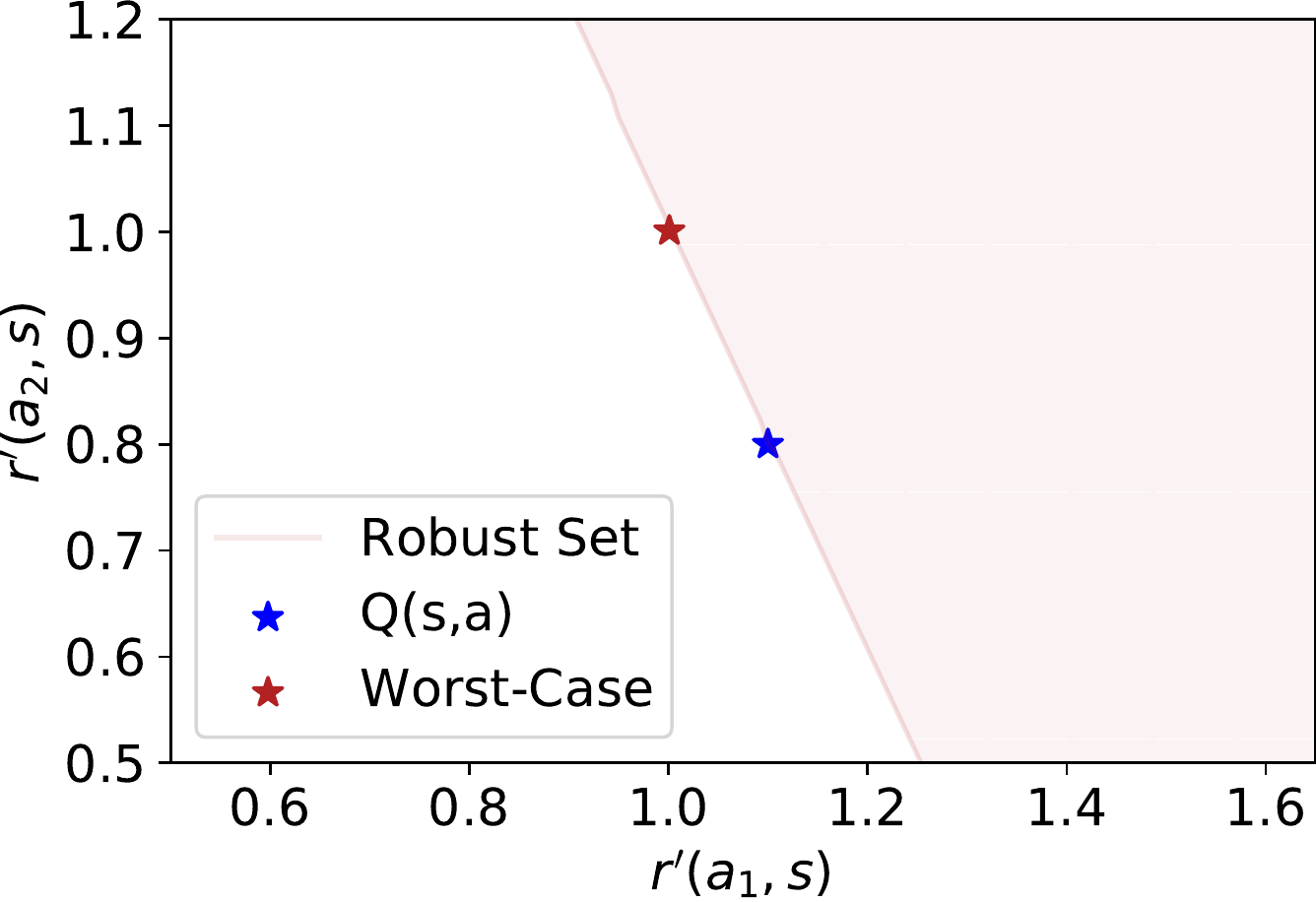}
\caption{ \centering \small $\alpha = 2, \beta = 0.1$}\end{subfigure}&
\begin{subfigure}{\mainfeasiblefour\textwidth}\includegraphics[width=\textwidth, trim= 0 0 0 0, clip]
{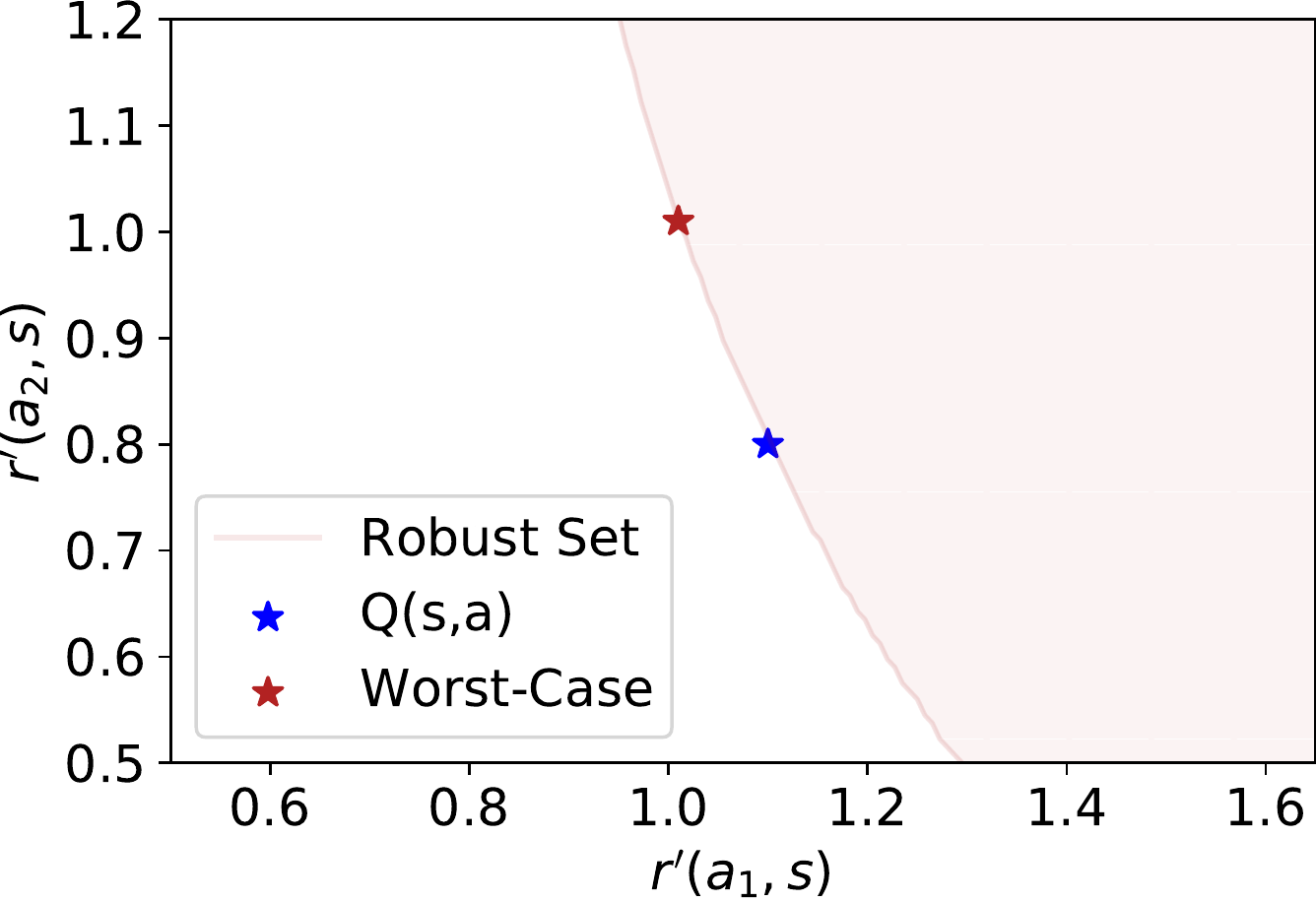}
\caption{ \centering \small $\alpha = 2, \beta = 1$}\end{subfigure} &
\begin{subfigure}{\mainfeasiblefour\textwidth}\includegraphics[width=\textwidth, trim= 0 0 0 0, clip]
{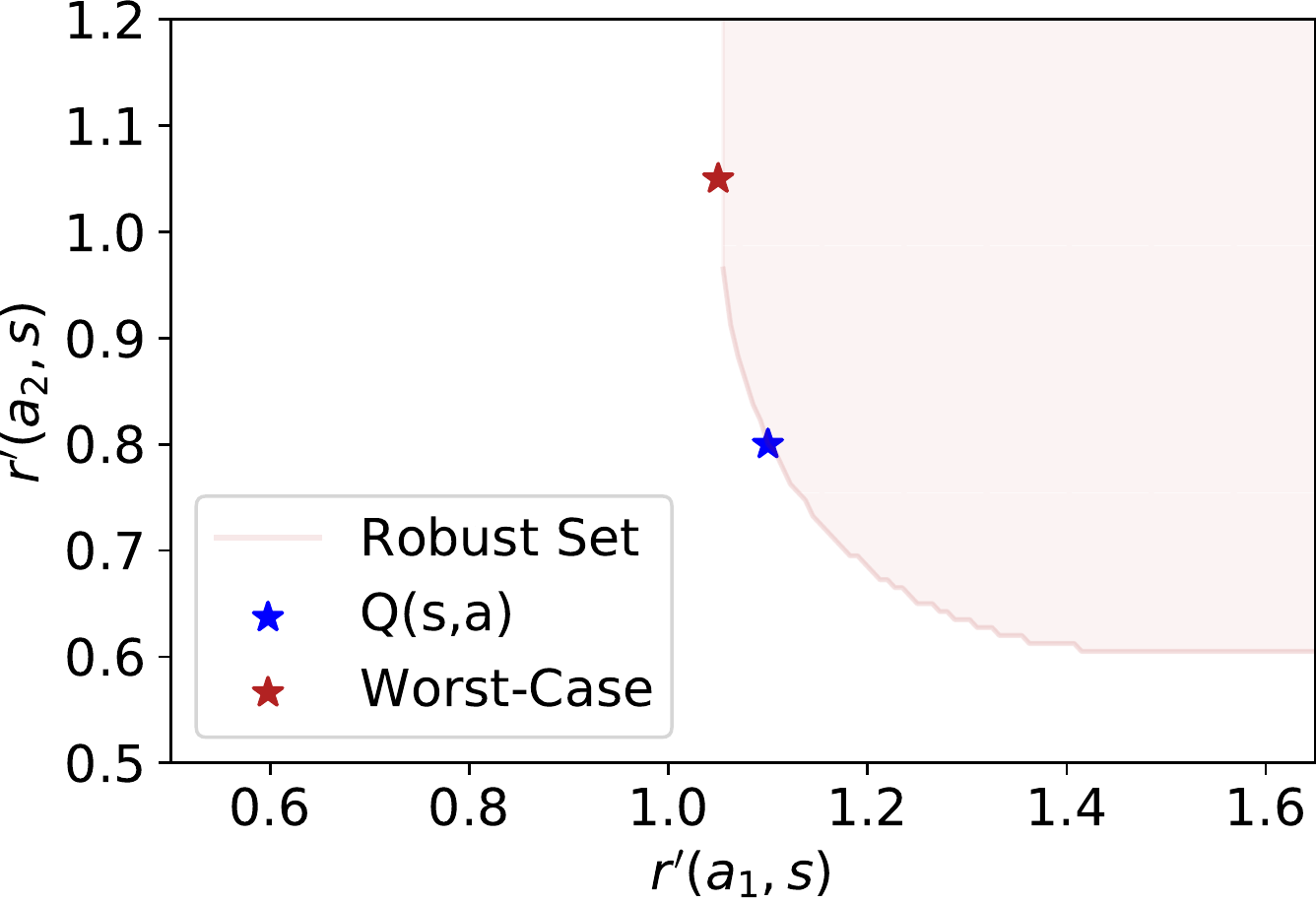}
\caption{ \centering \small $\alpha = 2, \beta = 5$}\end{subfigure} &
\begin{subfigure}{\mainfeasiblefour\textwidth}\includegraphics[width=\textwidth, trim= 0 0 0 0, clip]
{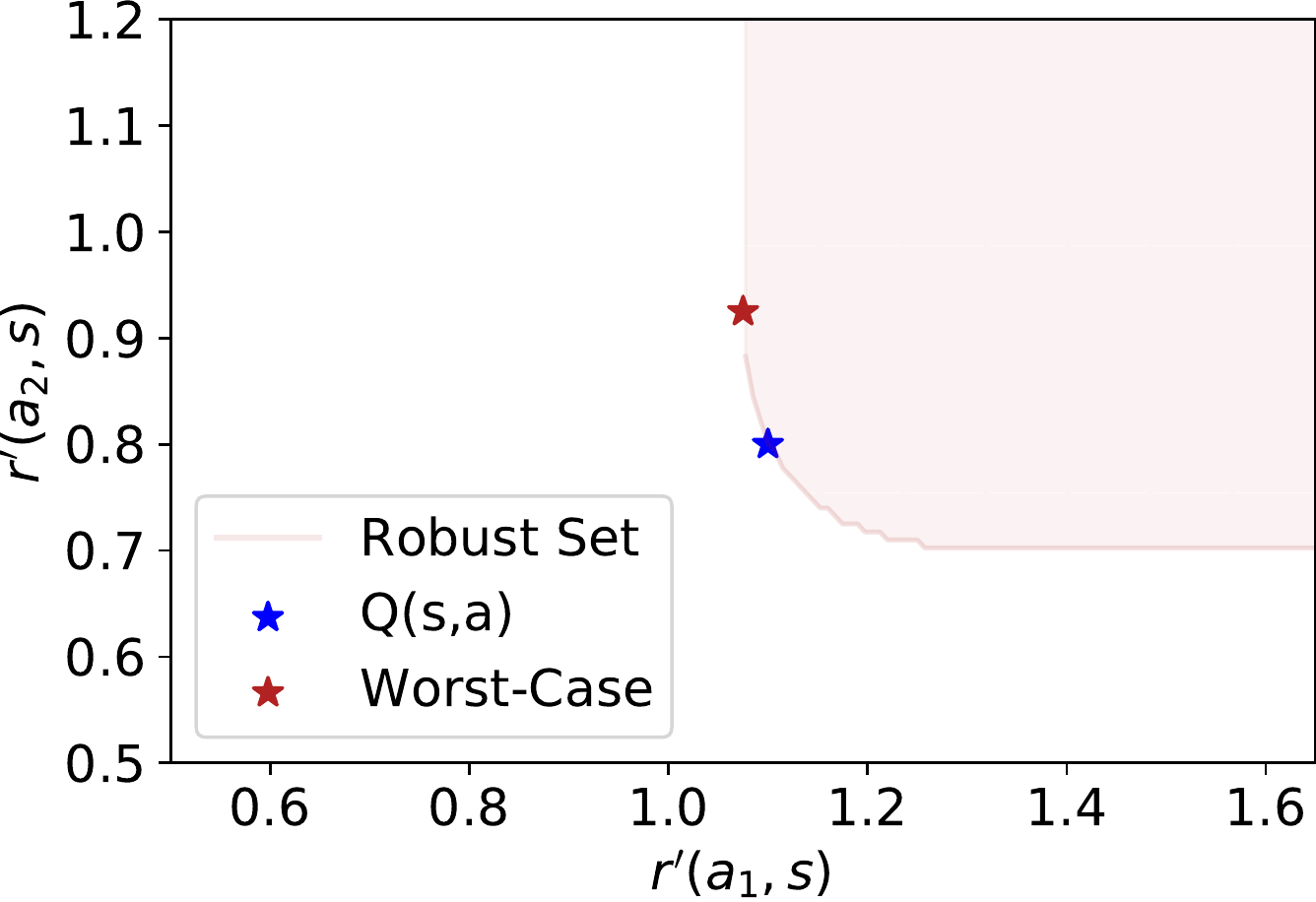}
\caption{ \centering \small $\alpha = 2, \beta = 10$}\end{subfigure} \\
\begin{subfigure}{\sidefeasible\textwidth} \phantom{ \includegraphics[width=\textwidth]
 {figs/feasible/nonuniform_prior.png}} \end{subfigure} & \begin{subfigure}{\mainfeasiblefour\textwidth}\includegraphics[width=\textwidth, trim= 0 0 0 0, clip]
{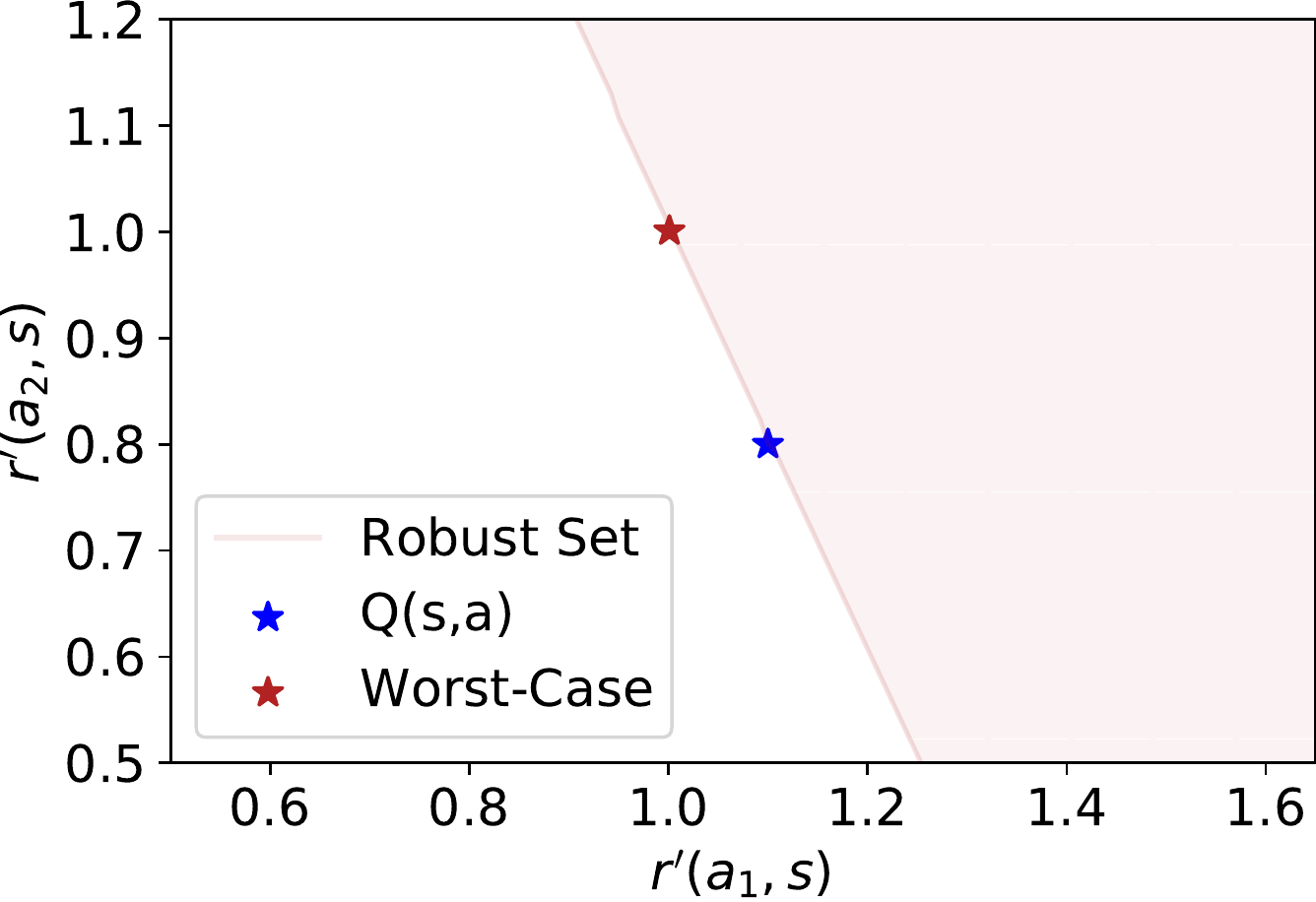}
\caption{ \centering \small $\alpha = 3, \beta = 0.1$}\end{subfigure}&
\begin{subfigure}{\mainfeasiblefour\textwidth}\includegraphics[width=\textwidth, trim= 0 0 0 0, clip]
{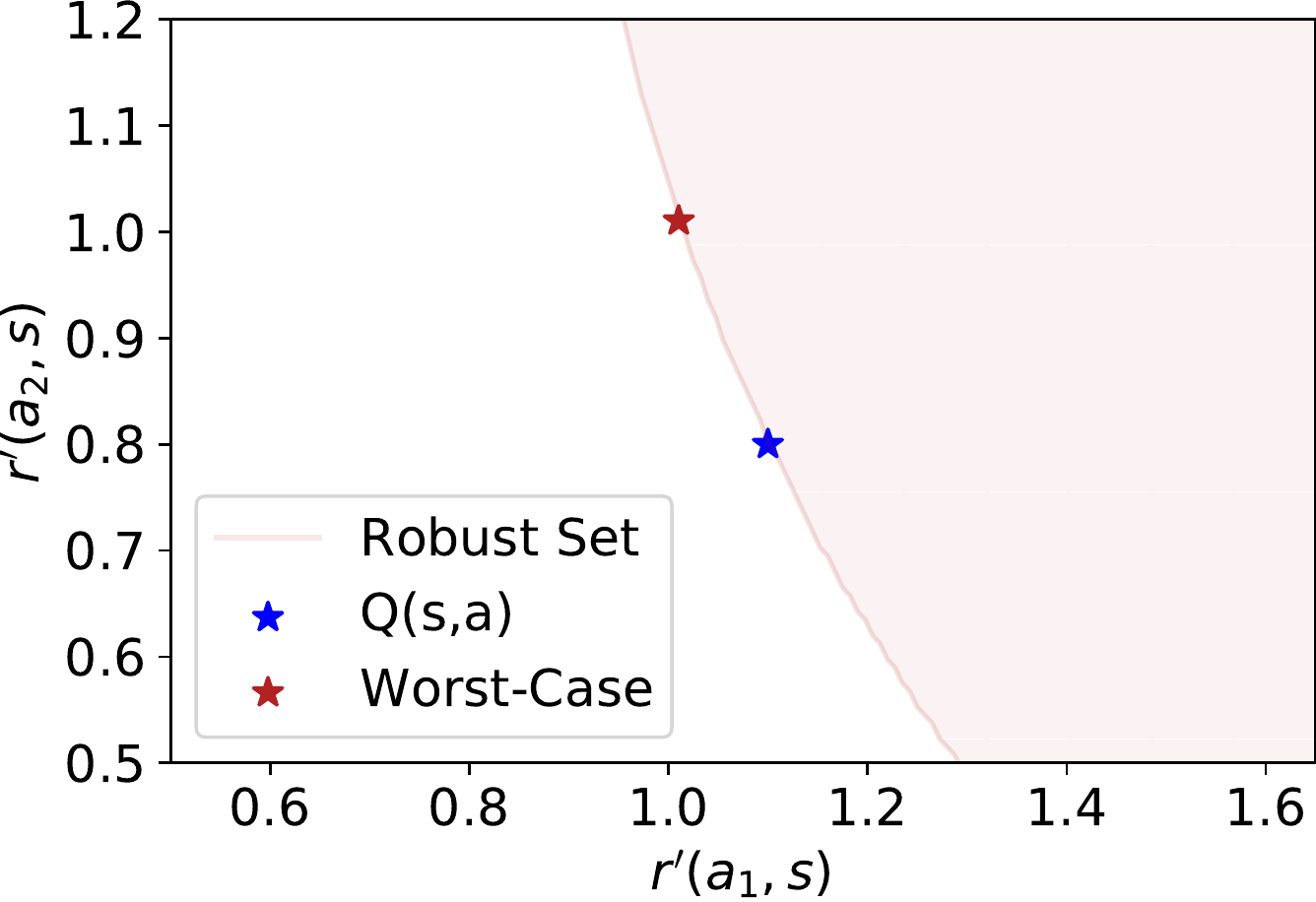}
\caption{ \centering \small $\alpha = 3, \beta = 1$}\end{subfigure}&
\begin{subfigure}{\mainfeasiblefour\textwidth}\includegraphics[width=\textwidth, trim= 0 0 0 0, clip]
{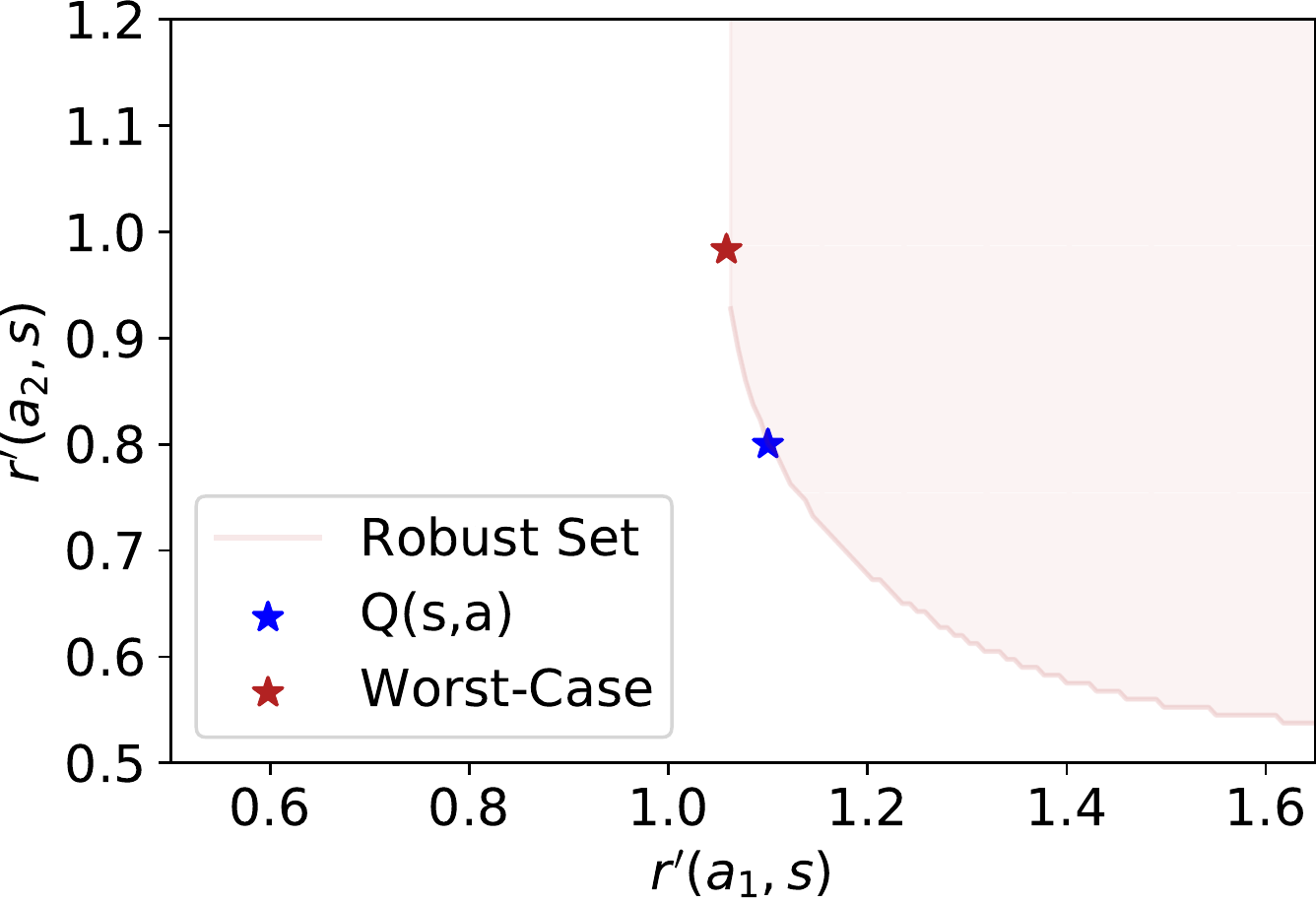}
\caption{ \centering \small $\alpha = 3, \beta = 5$}\end{subfigure}&
\begin{subfigure}{\mainfeasiblefour\textwidth}\includegraphics[width=\textwidth, trim= 0 0 0 0, clip]
{figs/feasible/alpha3_no_lambda/alpha3_beta10nonuniform.pdf}
\caption{\centering \small $\alpha = 3, \beta = 10$}\end{subfigure}
\\ 
\end{array}
\] 
\end{center}
\vspace*{-.45cm}
\caption{Reference distribution $\pi_0 = (\frac{2}{3}, \frac{1}{3})$.  Feasible Set (red region) of perturbed rewards available to the adversary, for \textsc{kl} ($\alpha=1$) and $\alpha$-divergence ($\alpha=\{-1, 2, 3\}$) regularization, various $\beta$, and fixed $Q_*(a,s) = r(a,s)$ values (blue star).  We consider the optimal $\pi_*(a|s)$ with regularization parameters $\alpha, \beta, \pi_0$ and the given $Q$-values.   Red star indicates worst-case perturbed reward $\mrpiopt = r - \prpiopt$ for optimal policy.
}\label{fig:feasible_set_nonuniform_app}
\end{figure*}





\end{document}